\DeclareSymbolFont{cyrletters}{OT2}{wncyr}{m}{n}
\DeclareMathSymbol{\Sha}{\mathalpha}{cyrletters}{"58}
\DeclareMathSymbol{\sha}{\mathalpha}{cyrletters}{"78}
\newtheorem{theorem}{Theorem}
\newtheorem{lemma}{Lemma}
\newtheorem{proposition}{Proposition}
\newtheorem{definition}{Definition}
\newtheorem{corollary}{Corollary}
\title{Theoretical Analysis of Hierarchical Language Recognition and Generation by Transformers without Positional Encoding}
\author{
    Daichi Hayakawa \quad Issei Sato\\
    The University of Tokyo \\
    \texttt{\{hayakawadaichi001, sato\}@g.ecc.u-tokyo.ac.jp} 
}
\begin{document}
\twocolumn[\begin{@twocolumnfalse}
\maketitle
\begin{abstract}
In this study, we provide constructive proof that Transformers can recognize and generate hierarchical language efficiently with respect to model size, even without the need for a specific positional encoding.
Specifically, we show that causal masking and a starting token enable Transformers to compute positional information and depth within hierarchical structures.
We demonstrate that Transformers without positional encoding can generate hierarchical languages. 
Furthermore, we suggest that explicit positional encoding might have a detrimental effect on generalization with respect to sequence length.
\end{abstract}
\end{@twocolumnfalse}]

\section{Introduction}\label{sec:introduction}
Transformer-based models have achieved significant success in natural language processing. The empirical success of Transformers has drawn attention to the theoretical understanding of the problem classes that Transformers can solve. Although natural languages and programming languages possess hierarchical structures, \citet{tran-etal-2018-importance}, \citet{petty2021transformersgeneralizelinearly}, and \citet{mueller-etal-2022-coloring} stated that Transformers have different inductive biases than humans and often face difficulties processing such hierarchical languages. On the other hand, \citet{murty-etal-2023-grokking} empirically showed that Transformers have the potential to learn hierarchical structures through grokking.

To investigate the expressive capacity of Transformers, several studies (\citealp{hahn-2020-theoretical}, \citealp{bhattamishra-etal-2020-ability}, \citealp{ebrahimi-etal-2020-self}, \citealp{yao-etal-2021-self}, \citealp{chiang-cholak-2022-overcoming}, and \citealp{NEURIPS2023_79ba1b82}) have formulated such challenges as recognition/generation tasks of formal languages such as the parity language, the $\texttt{Dyck}_k$ language, and the $\texttt{Shuffle-Dyck}_k$ language.
The parity language is a language over an alphabet consisting of only two characters, $0$ and $1$. A string belongs to the parity language when it has an odd number of $1$s. 
While the parity language is quite simple, it has a Kleene closure that characterizes regular languages.
In contrast, the $\texttt{Dyck}_k$ language is a language over an alphabet consisting of $k$ types of brackets. Intuitively, it includes properly balanced strings. The $\texttt{Shuffle-Dyck}_k$ language is a shuffle of multiple $\texttt{Dyck}_1$ languages defined over different bracket-pairs. For example, regarding $\texttt{Dyck}_2$ and $\texttt{Shuffle-Dyck}_2$ over $\{\texttt{``(", ``)", ``[", ``]"}\}$, $\texttt{``[]()"}$ and $\texttt{``([()])"}$ belong to $\texttt{Dyck}_2$, while $\texttt{``[(])"}$ and $\texttt{``([(]))}"$ belong to $\texttt{Shuffle-Dyck}_2$ not to $\texttt{Dyck}_2$. Despite their simplicity, these languages are important because they provide a simplified framework for investigating the ability to comprehend hierarchical structures, which are found in both natural and programming languages, as well as the capability to process these structures in parallel.

\citet{hahn-2020-theoretical} pointed out that Lipschitz-bounded Transformers cannot solve recognition and generation tasks of Dyck languages for arbitrary lengths, implying that Transformers do not have the ability to grasp hierarchical structures. However, \citet{yao-etal-2021-self} provided a proof that Transformer with specific absolute positional encoding can generate $\texttt{Dyck}_{k}$ and recognize $\texttt{Dyck}_{k, D}$, where $\texttt{Dyck}_{k, D}$ is a subset of $\texttt{Dyck}_{k}$ but the maximum nesting depth is bounded to $D$. Furthermore, \citet{NEURIPS2023_79ba1b82} provided an existence proof that a $2$-layer $O(k^2D^2)$-width Transformer can process $\texttt{Dyck}_{k, D}$. 
These theoretical results raise the question: \begin{quote} Why can Transformers with smaller widths and without specific absolute positional encoding experimentally perform well on processing the $\texttt{Dyck}_k$ language?\end{quote}

\begin{table*}
  \centering
  \begin{tabular}{llccc}
    \hline 
    \textbf{Method} & \textbf{Language} & \textbf{Width} & \textbf{Positional Encoding} \\
    \hline \hline
    \textbf{Recognition task} \\
    \hline
    \citet{bhattamishra-etal-2020-ability}  & $\texttt{Shuffle-Dyck}_{k}$     & $O(k)$        & None \\
    \citet{yao-etal-2021-self}              & $\texttt{Dyck}_{k, D}$          & $O(\log k)$   & $i/n$ \\
    Ours                                    & $\texttt{Dyck}_{k}$             & $O(\log k)$   & None \\
                                            & $\texttt{Shuffle-Dyck}_{k}$       & $O(\log k)$   & None \\
    \hline
    \textbf{Generation task} \\
    \hline
    \citet{yao-etal-2021-self}              & $\texttt{Dyck}_{k, D}$             & $O(\log k)$   & $i/n$ \\
        & $\texttt{Dyck}_{k}$             & $O(\log k)$   & $i/n, i/n^3, n$ \\
    \citet{NEURIPS2023_79ba1b82}             & $\texttt{Dyck}_{k, D}$      & $O(k^2D^2)$   & None \\
    Ours                                    & $\texttt{Dyck}_{k}$         & $O(\log k)$   & None \\
                                            & $\texttt{Shuffle-Dyck}_{k}$      & $O(k)$        & None \\
    \hline
  \end{tabular}
  \caption{Comparison of proposed method to previous studies. Note that \textbf{Width} represents the width of the Transformer block, excluding the width of the heads for each task. Since a task-specific head is a mapping from $\mathbb{R}^{d_{\mathrm{model}}}$ to $\mathbb{R}$ for a recognition task and to $\mathbb{R}^K$ for a generation task, each has an  $\Omega({d_{\mathrm{model}}})$-width and an $\Omega(\max({d_{\mathrm{model}}}, K))$-width head, respectively. In all cases in this table, the width of a task-specific head is $O({d_{\mathrm{model}}})$ for a recognition task and $O(\max({d_{\mathrm{model}}}, K))$ for a generation task.}
  \label{tab: diff of results}
\end{table*}

In contrast to the approaches of \citet{bhattamishra-etal-2020-ability}, \citet{yao-etal-2021-self}, and \citet{NEURIPS2023_79ba1b82}, our theoretical analysis offers two advantages: (i) it reduces the linear or super-linear dependency of the number of bracket types $k$ and the maximum depth $D$ on the network width, and (ii) it does not rely on specific absolute positional encoding. Table \ref{tab: diff of results} outlines these differences and highlights the strengths of our approach in comparison.

Our contributions are summarized as follows.
\begin{enumerate}
\item We provide constructive proofs that with a starting token, causal Transformers with a constant number of layers and $O(\log k)$ width have the ability to recognize the $\texttt{Dyck}_k$ and $\texttt{Shuffle-Dyck}_k$ languages and to generate the $\texttt{Dyck}_k$ language. Moreover, we also present a proof that those with a constant number of layers and $O(k)$ width have the ability to generate the $\texttt{Shuffle-Dyck}_k$ language. Note that the network is followed by a fully-connected layer whose output dimension is $\mathbb{R}$ for a recognition task and $K$ for a generation task, where $\mathbb{R}^K$ is the vocabulary size.

\item We give a constructive proof that Transformers can still create a signal that can serve similarly to a starting token by only leveraging causal masking under an additional assumption.
\end{enumerate}

\section{Related Work}\label{sec:related work}

Since the emergence of Transformer \citep{NIPS2017_3f5ee243}, a wide range of theoretical analyses have been conducted on its expressive capacity. Some of these analyses have focused on formal language recognition and generation tasks, particularly for the $\texttt{Dyck}_k$ language and the $\texttt{Shuffle-Dyck}_k$ language.

\citet{bhattamishra-etal-2020-ability} theoretically showed that a Transformer with a width of $O(k)$ can recognize the $\texttt{Shuffle-Dyck}_k$ language. In addition, \citet{yao-etal-2021-self} provided a constructive proof that by using specific absolute positional encoding $i/n$, where $n$ is the maximum length of the input string, and $i$ is the position of characters, a $(D+1)$-layer causal Transformer can recognize the $\texttt{Dyck}_{k, D}$ language. \citet{yao-etal-2021-self} also proved that using positional encoding $i/n, i/n^3, i$, a $2$-layer causal Transformer can generate the $\texttt{Dyck}_{k}$ language. Furthermore, \citet{NEURIPS2023_79ba1b82} proved that a $2$-layer Transformer network with a width of $O(k^2D^2)$ can generate $\texttt{Dyck}_{k, D}$.

\section{Preliminaries}\label{sec:Preliminaries}

\subsection{Dyck Languages}\label{subsec: preliminaries/Dyck Languages}

The $\texttt{Dyck}_k$ language is a context-free language over an alphabet consisting solely of $k$ types of bracket pairs $\{\langle_t, \rangle_t\}_{t=1}^k$ and includes strings with correctly nested brackets.

Despite its simplicity, \citet{CHOMSKY1959118} showed that any context-free language can be expressed as a homomorphism of the intersection of the Dyck language and a regular language, suggesting that the Dyck language has an essence of context-free languages. Therefore, we aim to analyze the recognition and generation capacity of Transformers with respect to $\texttt{Dyck}_k$ and its variant, $\texttt{Shuffle-Dyck}_k$.

In this paper, we consider languages with two special tokens, $\texttt{<bos>}$ and $\texttt{<eos>}$, which stand for $``\text{beginning-of-sentence}"$ and $``\text{end-of-sentence}"$ respectively. 
In language models, $\texttt{<bos>}$ is typically inserted at the start, and $\texttt{<eos>}$ is used as a signal to stop generating output. Therefore, we define the $\texttt{Dyck}_k$ and  $\texttt{Shuffle-Dyck}_k$ languages for language models as follows:

\begin{definition}[$\texttt{Dyck}_k$ language for language models]\label{def: dyck_k for language models}
The $\texttt{Dyck}_k$ language for language models is a context-free language over an alphabet $\Sigma = \{\langle_t, \rangle_t\}_{t=1}^k \cup \{\texttt{<bos>}, \texttt{<eos>}\}$. The following context-free grammar generates $\texttt{Dyck}_k$ language:
\begin{align}
S &\rightarrow \texttt{<bos>} \, X \, \texttt{<eos>} ,\\
X &\rightarrow \varepsilon \, \mid \, \langle_1 \, X \, \rangle_1\,  X \, \mid \, \cdots \, \mid \langle_k \, X \, \rangle_k\,  X,
\end{align}
where $S$ and $\varepsilon$ are the starting symbol and empty string, respectively.
\end{definition}

\begin{definition}[$\texttt{Shuffle-Dyck}_k$ language for language models (informal)]\label{def: shuffle dyck_k for language models}
The $\texttt{Shuffle-Dyck}_k$ language for language models is defined as follows:
\begin{equation}
\left\{\texttt{<bos>}w\texttt{<eos>} | w \in \texttt{Shuffle-Dyck}_k\right\},
\end{equation}
where $\texttt{Shuffle-Dyck}_k$ is a language over an alphabet $\Sigma = \{\langle_t, \rangle_t\}_{t=1}^k$ and is defined as a shuffle of $k$ multiple $\texttt{Dyck}_1$ --- $\texttt{Dyck}_1^1, \cdots, \texttt{Dyck}_1^k$ ---, where $\texttt{Dyck}_1^t$ is the $\texttt{Dyck}_1$ language over an alphabet $\{\langle_t, \rangle_t\}$. 
A formal definition of $\texttt{Shuffle-Dyck}_k$ is provided in Appendix \ref{app: preliminaries, shuffle-dyck}.

\end{definition}

For example, $``\langle_1 \langle_2 \rangle_1 \rangle_2"$ does not belong to $\texttt{Dyck}_2$ but to $\texttt{Shuffle-Dyck}_2$. $\texttt{Shuffle-Dyck}_k$ can be recognized by $k$-counter machines; thus, this language provides insights into the ability to process $k$ hierarchical structures in parallel.

We also define a prefix for languages and the depth of a prefix in the $\texttt{Dyck}_k$ language as follows:

\begin{definition}[Prefix for language]\label{def: prefix for language}
A string $w \in \Sigma^*$ is a prefix for language $\mathcal{L}$ if there exists $u \in \Sigma^*$ such that $wu \in \mathcal{L}$. In addition, denote by $w \in \operatorname{Pre}(\mathcal{L})$ that $w$ is a prefix for $\mathcal{L}$. 
\end{definition}

Hereafter, denote an input string of length $n+1$ by $w_{0:n}$ and the prefix of length $i+1$ by $w_{0:i}$.

\begin{definition}[Depth of string]\label{def: depth of prefixes}
The depth of a prefix $w_{0:i} (=\texttt{<bos>} w_{1:i})$ in  $\texttt{Dyck}_k$ is defined as follows: 
\begin{equation}
\operatorname{d}(w_{0:i})=\#_{\langle}(w_{0:i}) - \#_{\rangle}(w_{0:i}) ,
\end{equation}
where $\#_\langle(w_{0:i})$ and $\#_\rangle(w_{0:i})$ represent the number of open brackets and closed brackets in $w_{0:i}$, respectively. Here, the differences in bracket types are ignored.
\end{definition}

Note that for any prefix $w_{0:i}$ for $\texttt{Dyck}_k$, the following three statements hold: (i) $w_{0:i}\,\langle$ is always a prefix, (ii) if $\operatorname{d}(w_{0:i}) = 0$, $w_{0:i}\,\rangle$ cannot be a prefix, and (iii) if $\operatorname{d}(w_{0:i}) \geq 1$, there exists only one type $t_{\mathrm{valid}}$ such that $w_{0:i} \,\rangle_{t_{\mathrm{valid}}}$ is a prefix. With respect to (iii), although such a closed bracket depends on $w_{0:i}$, denote it by $\rangle_{t_\mathrm{valid}}$ in an abusive manner. In addition, there can be more than one $\rangle_{t_\mathrm{valid}}$ in $\texttt{Shuffle-Dyck}_k$.

\subsection{Transformer architecture}\label{subsec: preliminaries/Transformer Architecture}
Transformer architecture takes an input string of length $n$ and converts each character into a $d_\mathrm{model}$-dimensional vector. Then, by applying Transformer blocks $\left(\mathbb{R}^{n\times d_{\mathrm{model}}} \rightarrow \mathbb{R}^{n\times d_{\mathrm{model}}}\right)$ for multiple times, an output of dimension $\mathbb{R}^{n\times d_{\mathrm{model}}}$ is obtained. Since $n$ is not fixed, we represent a Transformer as $\mathcal{T}: \Sigma^* \rightarrow \mathbb{R}^{*\times {d_{\mathrm{model}}}}$.

In this paper, we largely follow the Transformer architecture adopted in \citet{yao-etal-2021-self}; namely, we consider a Transformer architecture composed of multiple single-head Transformer blocks, each of which incorporates a self-attention layer and a feed-forward network layer. The major differences of the architecture adopted in \citet{yao-etal-2021-self} from the model proposed by \citet{NIPS2017_3f5ee243} are (i) \citet{yao-etal-2021-self} uses single-head attention instead of multi-head attention and (ii) \citet{yao-etal-2021-self} incorporates layer normalization \citep{ba2016layernormalization} right after the first linear transformation in the feed-forward network layer instead of after the attention layer and feed-forward network layer.

We adopt the architecture in \citet{yao-etal-2021-self} with a slight modification: we replace the standard layer normalization \citep{ba2016layernormalization} with the RMS layer normalization \citep{NEURIPS2019_1e8a1942}.  \citet{NEURIPS2019_1e8a1942} empirically showed that the RMS layer normalization reduces the training time compared to the conventional layer normalization while maintaining the same performance. The RMS layer normalization has been adopted in recent models such as Llama \citep{touvron2023llamaopenefficientfoundation} and Llama 2 \citep{touvron2023llama2openfoundation}. The details of the Transformer architecture are provided in Appendix \ref{app: preliminaries, transformer architecture}.

\subsection{Language recognition and generation}\label{subsec: preliminaries/Language Recognition and Generation}
In this paper, we mainly focus on two tasks: language recognition and generation. Here, we define language recognition and generation by Transformers. For each task, a fully-connected layer follows the network, and the output dimension is $\mathbb{R}$ for recognition tasks and $\mathbb{R}^K$ for generation tasks, which we call the recognizer head and generator head, respectively.

\begin{definition}[Language recognition by Transformers]\label{def:Language recognition by transformers}
A Transformer $\mathcal{T}: \Sigma^* \rightarrow \mathbb{R}^{*\times {d_{\mathrm{model}}}}$ recognizes a language $\mathcal{L}\subseteq \Sigma^*$ if there exists a fully-connected layer $f_{\mathrm{rec}} : \mathbb{R}^{d_{\mathrm{model}}} \rightarrow \mathbb{R}$ such that 
\begin{equation}
\operatorname{sgn}(f_{\mathrm{rec}}(\mathcal{T}(w_{0:n})_n)) = 
\begin{cases}
1 & \text{ if } w_{0:n} \in \mathcal{L} \\
-1 & \text{ if } w_{0:n} \notin \mathcal{L}
\end{cases},
\end{equation}
where $\operatorname{sgn}(\cdot)$ is a sign function.
\end{definition}

It is impossible to define language generation by Transformers by simply setting a threshold on the output probability of each string in a language because formal languages are typically infinite string sets.
Therefore, we first define language generation process and then define language generation by Transformers. This approach is similar to the methods in \citet{yao-etal-2021-self}, \citet{NEURIPS2023_79ba1b82} and \citet{svete-cotterell-2024-transformers}. Specifically, we define language generation process using the conditional categorical distribution as follows.

\begin{definition}[Language generation process]\label{def: language generation process}
A language generation process over an alphabet $\Sigma$ is a categorical distribution over $\Sigma$ conditioned by a string $w_{0:i}$. Specifically, denote the language generation process of a language $\mathcal{L}$ by $p_{\mathcal{L}}(w_{i+1} \mid w_{0:i})$.
\end{definition}

Note that language generation processes are well-defined: the following proposition holds. 
\begin{proposition}\label{prop:existence of language process}
For any language $\mathcal{L} \subset \Sigma^*$ over a finite alphabet $\Sigma$ and any probability distribution $p$ over $\mathcal{L}$, there exists a language generation process that produces the given probability distribution $p$. In other words, there exists a language generation process $p_\mathcal{L}(w_{i+1} \mid \texttt{<bos>} w_{1:i})$ such that for any string $w_{1:n} \in \mathcal{L}$, 
\begin{equation}
p(w_{1:n}) = p_\mathcal{L}(\texttt{<bos>} w_{1:n}\texttt{<eos>}),
\end{equation}
where
\begin{equation}
\begin{aligned}
p_\mathcal{L}&(\texttt{<bos>} w_{1:n}\texttt{<eos>}) \\
&= p_\mathcal{L}(\texttt{<bos>}) \\
& \quad \cdot \left(\prod_{i=1}^n p_\mathcal{L}(w_{i}\mid \texttt{<bos>} w_{1:i-1})\right) \\
& \quad \cdot p_\mathcal{L}(\texttt{<eos>}\mid \texttt{<bos>} w_{1:n}).
\end{aligned}
\end{equation}
\begin{proof}
The proof is provided in Appendix \ref{app: proof of prop 1}.
\end{proof}
\end{proposition}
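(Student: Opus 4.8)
The plan is to realize the target distribution $p$ by the standard chain-rule (autoregressive) factorization, defining each conditional as a ratio of prefix masses. Write $\tau(w_{1:n}) = \texttt{<bos>}\, w_{1}\cdots w_{n}\,\texttt{<eos>}$ for the full token sequence associated with a string $w_{1:n}\in\mathcal{L}$, and for any finite token sequence $v$ over $\Sigma$ define the prefix mass
\[
M(v) \;=\; \sum_{\substack{s\in\mathcal{L}\\ v \text{ is a prefix of } \tau(s)}} p(s).
\]
In particular $M(\varepsilon)=\sum_{s\in\mathcal{L}}p(s)=1$. I would then define the generation process by
\[
p_\mathcal{L}(a\mid v)\;=\;\frac{M(va)}{M(v)}\qquad (a\in\Sigma)
\]
whenever $M(v)>0$, and arbitrarily (say, uniformly) when $M(v)=0$, since such a prefix is never produced and contributes nothing to any factorization. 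Consistently, $p_\mathcal{L}(\texttt{<bos>}) = M(\texttt{<bos>})/M(\varepsilon) = 1$.

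First I would check that each $p_\mathcal{L}(\cdot\mid v)$ is a genuine categorical distribution on $\Sigma$. Non-negativity is immediate, so the crux is the normalization identity $\sum_{a\in\Sigma}M(va)=M(v)$ for every prefix $v$ that does not already end in $\texttt{<eos>}$. This follows by partitioning the strings counted in $M(v)$ according to the token that immediately follows $v$ inside $\tau(s)$: because every $\tau(s)$ terminates with $\texttt{<eos>}$ while $v$ does not, no $\tau(s)$ can equal $v$, so each contributing string has a well-defined next token and is counted in exactly one summand $M(va)$.

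Next I would establish the factorization by telescoping. Setting $v_0=\texttt{<bos>}$, $v_i=\texttt{<bos>}\,w_{1:i}$ and $v_{n+1}=\texttt{<bos>}\,w_{1:n}\,\texttt{<eos>}$, the product in the statement becomes
\[
\frac{M(v_0)}{M(\varepsilon)}\cdot\prod_{i=1}^{n}\frac{M(v_i)}{M(v_{i-1})}\cdot\frac{M(v_{n+1})}{M(v_n)}=\frac{M(v_{n+1})}{M(\varepsilon)}=M(v_{n+1}),
\]
and $M(v_{n+1})=p(w_{1:n})$ because $v_{n+1}=\tau(w_{1:n})$ is a complete sequence, so the only string whose token sequence has $v_{n+1}$ as a prefix is $w_{1:n}$ itself.

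The main obstacle --- really the only nontrivial point --- is that this factorization is available at all: it relies on the set $\{\tau(s):s\in\mathcal{L}\}$ being prefix-free, which is exactly what appending $\texttt{<eos>}$ guarantees. If some string of $\mathcal{L}$ were a prefix of another, a bare prefix $\texttt{<bos>}\,w_{1:i}$ could simultaneously be a complete string and an initial segment of a longer one; the decomposition $M(v)=\sum_a M(va)$ would then pick up an extra terminal term and the conditionals would fail to normalize. Thus the role of $\texttt{<eos>}$ is precisely to turn the ``stop'' decision into an explicit token, after which the argument is routine bookkeeping. I would close by noting that the construction needs only the finiteness of $\Sigma$, which guarantees that each $p_\mathcal{L}(\cdot\mid v)$ has finite support and is a well-defined categorical distribution.
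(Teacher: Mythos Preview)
Your proposal is correct and follows essentially the same approach as the paper: define each conditional as a ratio of prefix masses (the paper writes this as $P'(\operatorname{Cyl}(w_{1:i+1}))/P'(\operatorname{Cyl}(w_{1:i}))$, with a separate clause $P'(\{w_{1:i}\})/P'(\operatorname{Cyl}(w_{1:i}))$ for the $\texttt{<eos>}$ case), then telescope. Your version is slightly more streamlined in that working with the augmented sequences $\tau(s)$ lets you treat $\texttt{<eos>}$ as just another symbol rather than a separate case, and you explicitly verify the normalization $\sum_a M(va)=M(v)$ and the role of prefix-freeness, which the paper leaves implicit; conversely, the paper wraps the same computation in measure-theoretic language (complete extension of a probability space, cylinder sets) that is not really needed here.
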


Then, we define the language generation by Transformers. We largely follow the definition in \citet{yao-etal-2021-self}, which defines it as whether the probability $p(w_i\mid w_{1:i-1})$ exceeds a certain threshold for any string $w_{1:n} \in \mathcal{L}$ and $i \in [n] (= \{1, \cdots, n\})$. However, we make this definition more stringent: we assume the existence of a true distribution and define it as the ability to output this distribution. This is because one of the most important properties of language models is the ability to generate diverse but natural sentences by assigning appropriate probability to consistent sequences. This approach is similar to \citet{NEURIPS2023_79ba1b82} and \citet{svete-cotterell-2024-transformers}.

In general, Transformer-based language models transform the last token output with a fully-connected layer $f_{\mathrm{gen}} : \mathbb{R}^{d_{\mathrm{model}}} \rightarrow \mathbb{R}^K$. Then, the vector is converted into a probability vector with softmax function $: \mathbb{R}^K \rightarrow \Delta^{K-1}$, where $\Delta^{K-1}\left(\subset \mathbb{R}^K\right)$ is a probability simplex. Here, the softmax function transforms each element into a value in the range of $(0, 1)$, which makes it impossible to represent a probability of $0$ or $1$ exactly. Therefore, we define the realization of the language generation process by Transformers as the ability to approximate the language generation process with arbitrary precision as follows.

\begin{definition}[Realization of language generation process by Transformers]
A Transformer $\mathcal{T}: \Sigma^* \rightarrow \mathbb{R}^{*\times d_{\mathrm{model}}}$ realizes a language generation process $p_\mathcal{L}(w_{i+1} \mid w_{0:i})$ if for any $\epsilon>0$ there exists a fully-connected layer $f_{\mathrm{gen}} : \mathbb{R}^{d_{\mathrm{model}}} \rightarrow \mathbb{R}^K$ such that if $p_\mathcal{L}(w_{0:i}) > 0$ then 
\begin{equation}
\operatorname{TV}\left(p_\mathcal{T}(w_{i+1} \,|\, w_{0:i}),  p_\mathcal{L}(w_{i+1}\, |\, w_{0:i})\right) < \epsilon, 
\end{equation}
where $p_\mathcal{T}(w_{i+1} \,|\, w_{0:i})$ is the categorical distribution based on the output of Transformer and $\operatorname{TV}(\cdot, \cdot)$ is the total variation distance. Specifically,
\begin{equation}
p_\mathcal{T}(w_{i+1} \,|\, w_{0:i}) = \mathbb{S}(f_{\mathrm{gen}}(\mathcal{T}(w_{0:i})_i)),
\end{equation}
where $\mathbb{S}(\cdot)$ is a softmax function and the total variation distance between two $K$-dimensional categorical distributions $p = (p_1, \cdots, p_K)$ and $p^\prime = (p_1^\prime, \cdots, p_K^\prime)$ is expressed as follows:
\begin{equation}
\operatorname{TV}(p, p^\prime) = \frac{1}{2}\sum_{l=1}^K \left|p_l - p^\prime_l\right|
\end{equation}
\end{definition}

Next, we define the language generation process of $\texttt{Dyck}_k$. Note that this definition generalizes the definition in \citet{hewitt-etal-2020-rnns} and \citet{NEURIPS2023_79ba1b82}: they treat all types of brackets in a symmetric way, while we slightly generalize the approach to be able to assign different probabilities.

\begin{definition}[$\texttt{Dyck}_k$ language generation process]\label{def: dyck_k language generation process}
A language generation process $p(w_{i+1} \mid w_{0:i})$ over an alphabet $\Sigma = \{\langle_t, \rangle_t\}_{t=1}^k \cup \{\texttt{<bos>}, \texttt{<eos>}\}$ is called the $\texttt{Dyck}_k$ language generation process if
\begin{align}
&p(w_0 = ``\texttt{<bos>}" \mid \varepsilon) = 1 , \\
&\begin{aligned}
& p(w_{i+1} \mid w_{0:i} ) \\
&\quad =  \begin{cases}
p_{0}(w_{i+1}) & \text{if } \operatorname{d}(w_{0:i}) = 0 \\
p_{1}(w_{i+1}) & \text{if } \operatorname{d}(w_{0:i}) \geq 1 \\
\end{cases},
\end{aligned}
\end{align}
where 
\begin{equation}
\begin{aligned}
&p_{0}(w_{i+1}) = 
\begin{cases}
r \pi_t & \text{if } w_{i+1} = ``\langle_t" \\
1-r & \text{if } w_{i+1} = ``\texttt{<eos>}" \\
0 & \text{otherwise}
\end{cases}, \\
&p_{1}(w_{i+1}) = 
\begin{cases}
q \pi_t & \text{if } w_{i+1} = ``\langle_t" \\
1-q & \text{if } w_{i+1} = ``\rangle_{t_{\mathrm{valid}}}" \\
0 & \text{otherwise}
\end{cases} ,
\end{aligned}
\end{equation}
$q, r \in (0, 1)$, $\boldsymbol{\pi} \in \Delta^{k-1}$.

Hereafter, we explicitly write the Dyck language generation process parameterized by $q, r, \boldsymbol{\pi}$ as $p_{\texttt{Dyck}_k}(\cdot; q, r, \boldsymbol{\pi})$.
\end{definition}

Note that the $\texttt{Dyck}_k$ language generation process defined above corresponds appropriately with the $\texttt{Dyck}_k$ language as described below.

\begin{proposition}\label{prop: correspondence of dyck_k language and language generation process}
For any length $n$ and $\texttt{Dyck}_k$ language generation process $p_{\texttt{Dyck}_k}(\cdot; q, r, \boldsymbol{\pi})$, there exists $\epsilon_n$ such that if $\boldsymbol{\pi} > 0$ then 
\begin{equation}
\begin{aligned}
&p_{\texttt{Dyck}_k}(\texttt{<bos>} w_{1:n} \texttt{<eos>};q, r, \boldsymbol{\pi}) \\
&\begin{cases}
\geq \epsilon_n & \text{if } w_{1:n} \in \texttt{Dyck}_k \\
= 0 & \text{if } w_{1:n} \notin \texttt{Dyck}_k \\
\end{cases}
\end{aligned}
\end{equation}
holds. 

\begin{proof}
The Proof is provided in Appendix \ref{app: proof of prop 2}.
\end{proof}
\end{proposition}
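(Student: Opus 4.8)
The plan is to split the statement into its two cases --- the ``$=0$'' case for non-members and the ``$\geq \epsilon_n$'' case for members --- and to handle each by expanding the factorization of $p_{\texttt{Dyck}_k}(\texttt{<bos>} w_{1:n}\texttt{<eos>})$ supplied by Proposition~\ref{prop:existence of language process}. Since $p_{\texttt{Dyck}_k}(\texttt{<bos>}) = 1$, the joint probability is just the product of the $n$ body transitions $p(w_i \mid \texttt{<bos>} w_{1:i-1})$ and the single terminal factor $p(\texttt{<eos>} \mid \texttt{<bos>} w_{1:n})$, so controlling each factor controls the product.

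First I would treat the non-membership case. The key observation is that $w_{1:n} \in \texttt{Dyck}_k$ can fail in exactly one of two ways. Either $w_{1:n}$ is not even a prefix for $\texttt{Dyck}_k$, in which case there is a smallest index $i$ at which the transition from the prefix $w_{0:i-1}$ to $w_i$ violates the prefix conditions recorded after Definition~\ref{def: depth of prefixes} (a closing bracket placed when $\operatorname{d}=0$, or a closing bracket of the wrong type when $\operatorname{d}\geq 1$); in each such situation the relevant branch of $p_0$ or $p_1$ assigns $w_i$ probability $0$, killing the product. Or else $w_{1:n}$ is a legal prefix but ends unbalanced with $\operatorname{d}(w_{0:n}) \geq 1$, in which case the terminal factor is $p_1(\texttt{<eos>}) = 0$ because $\texttt{<eos>}$ lies outside the support of $p_1$. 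Either way the product vanishes, giving the ``$=0$'' case.

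Next I would establish the lower bound for members. Here the hypothesis $\boldsymbol{\pi} > 0$ together with $q, r \in (0,1)$ guarantees that every factor is strictly positive: an opening bracket $\langle_t$ contributes $r\pi_t$ or $q\pi_t$, a valid closing bracket contributes $1-q$, and the terminal $\texttt{<eos>}$ --- reached at depth $0$ since $w_{1:n}\in\texttt{Dyck}_k$ --- contributes $1-r$. Setting $\pi_{\min} := \min_t \pi_t > 0$ and $\delta := \min\{r\pi_{\min},\, q\pi_{\min},\, 1-q,\, 1-r\} > 0$, each of the $n+1$ factors is at least $\delta$, so the product is at least $\delta^{\,n+1}$. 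Choosing $\epsilon_n := \delta^{\,n+1} > 0$, which depends only on $n$ and the fixed parameters $q, r, \boldsymbol{\pi}$ (not on the particular word), completes this case.

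I expect the only delicate point to be the non-membership case: one must verify that \emph{every} way of failing to be a $\texttt{Dyck}_k$ word is detected, i.e.\ that a ``local'' failure (a misplaced or mistyped closing bracket) is caught by a zero transition, while a ``global'' failure (a string that is a legal prefix yet never closes off) is caught by the zero $\texttt{<eos>}$ probability. This dichotomy rests precisely on the three prefix properties (i)--(iii) stated after Definition~\ref{def: depth of prefixes}, which certify that at depth $0$ no closing bracket is admissible and at depth $\geq 1$ exactly one valid closing type is. Once that dichotomy is in place, both directions follow by direct inspection of the piecewise definitions of $p_0$ and $p_1$.
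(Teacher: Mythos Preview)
Your proposal is correct and matches the paper's proof almost exactly: both factor the joint probability, handle non-membership via the same ``local failure vs.\ unbalanced prefix'' dichotomy, and obtain the membership lower bound by bounding each transition below by a minimum of the parameters. The only cosmetic difference is bookkeeping---you fold $1-r$ into $\delta$ and take $\epsilon_n=\delta^{\,n+1}$, whereas the paper sets $\epsilon_n=(1-r)\cdot(\min\{r,\,1-q,\,q\pi_{\min}\})^n$.
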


We also define the language generation process of $\texttt{Shuffle-Dyck}_k$ in a similar way. The details are provided in Appendix \ref{app: preliminaries, shuffle-dyck}.

\section{Theoretical Results}\label{sec:Theoretical Results}
In this section, we show our theoretical results. 

\begin{theorem}[Transformers with starting token, $\texttt{Dyck}_k$ recognition]\label{theorem: transformers with bos recognize dyck_k}

For all $k$, there exists a 5-layer $O(\log k)$-width causal Transformer without positional encoding that recognizes the $\texttt{Dyck}_k$ language. Each layer incorporates both the residual connection and the layer normalization.
This network is followed by a fully-connected layer and a sign function to output an acceptance signal.

\begin{proof}[Proof sketch]
A Transformer network that recognizes the $\texttt{Dyck}_k$ language can be constructed by performing the following operations in each layer. Note that $w_{0:i}$ corresponds to $\texttt{<bos>} w_{1:i}$. 
First, we compute positional and depth information using the BOS token. Then, using the information, we check whether the following two conditions are simultaneously satisfied: (i) $w_{1:i}$ is a prefix of the $\texttt{Dyck}_k$ language and (ii) the depth of $w_{0:i}$ is $0$.

\begin{description}
    \item[First layer] creates pseudo positional encoding $(\cos\phi(i), \sin\phi(i))$ at position $i$, where $\phi(i) = \tan^{-1}(i/\exp(a))$ and $a$ is an attention score on $\texttt{<bos>}$.
    
    \item[Second and third layers] count depth $\operatorname{d}(w_{0:i})$ and $\operatorname{d}(w_{0:i})+1$, respectively. This is because the depth of the closed bracket is smaller by $1$ than the corresponding open bracket. For instance, the depths calculated for $``\langle_1\rangle_1"$ are $1$ for $``\langle_1"$ and $0$ for $``\rangle_1"$. These computations are achieved by constructing a value matrix that outputs $1$ for open brackets and $-1$ for closed brackets in a specific dimension. 

    \item[Fourth layer] makes each closed bracket assign attention to the nearest depth-matched open bracket, using the positional and depth information calculated in the first, second, and third layers. Then the following  propositional variable $Q(w_{0:i})$ is computed: 
    \begin{equation}
    \begin{aligned}
    &Q(w_{0:i}) =\begin{cases}
        \texttt{True} & \text{if } w_{1:i} \in\operatorname{Pre}(\texttt{Dyck}_k) \\
        \texttt{False} & \text{otherwise}
    \end{cases}
    \end{aligned}.
    \end{equation}
    Note that $Q(w_{0:i})$ is guaranteed to return the correct value only when $i=0$ or $w_{1:{i-1}}$ is a prefix for $\texttt{Dyck}_k$. 
    
    \item[Fifth layer] calculates (i) whether $w_{1:n}$ is a prefix for $\texttt{Dyck}_k$ with $\bigwedge_{i=1}^n Q(w_{0:i})$ and (ii) whether $\operatorname{d}(w_{0:i}) = 0$ or not.
\end{description}

The subsequent fully-connected layer determines whether the string $w_{1:n}$ belongs to $\texttt{Dyck}_k$ by examining whether the two conditions calculated in the fifth layer are simultaneously satisfied.

The full proof is provided in Appendix \ref{app: dyck recognition with bos}.
\end{proof}
\end{theorem}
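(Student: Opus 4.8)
The plan is to reduce membership in $\texttt{Dyck}_k$ to local certificates that a causal Transformer can evaluate at every position and then aggregate. Recall that $w_{1:n}\in\texttt{Dyck}_k$ if and only if (a) every prefix satisfies $\operatorname{d}(w_{0:i})\ge 0$, (b) $\operatorname{d}(w_{0:n})=0$, and (c) each closing bracket has the same type as the unique open bracket it matches, namely the nearest earlier open bracket sitting one depth level above it. I would arrange the five layers so that the first three produce the geometric data needed to locate matches, the fourth turns this data into a per-position Boolean $Q(w_{0:i})$ flagging a local violation, and the fifth aggregates. Throughout I lean on the fact that recognition only requires the correct sign of $f_{\mathrm{rec}}(\mathcal{T}(w_{0:n})_n)$, so an aggregated signal whose magnitude decays with $n$ is acceptable as long as it never crosses $0$; the RMS layer normalization is precisely the tool that renormalizes such a decaying-but-sign-correct signal back to constant scale.

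First I would use the starting token to synthesize position from attention alone. Under causal masking, position $i$ attends to exactly $i+1$ tokens, one of which is $\texttt{<bos>}$; giving $\texttt{<bos>}$ a constant query--key score $a$ and every other token score $0$ makes the attention weight on $\texttt{<bos>}$ equal to $e^a/(e^a+i)$, a strictly monotone function of $i$. Routing this through the value and feed-forward maps yields the bounded, injective encoding $(\cos\phi(i),\sin\phi(i))$ with $\phi(i)=\tan^{-1}(i/e^a)$. The second and third layers then average a value vector equal to $+1$ on open brackets, $-1$ on closing brackets, and $0$ on $\texttt{<bos>}$; uniform causal attention returns $\operatorname{d}(w_{0:i})/(i+1)$, which, combined with the position read-out, supplies $\operatorname{d}(w_{0:i})$ and $\operatorname{d}(w_{0:i})+1$ in a form usable downstream (the shifted copy is needed because a matched open/close pair differs by one depth level).

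The fourth layer is the crux and where I expect the main difficulty. Each closing bracket at position $i$ must attend to its matching open bracket, i.e. the nearest $j<i$ whose post-token depth equals $\operatorname{d}(w_{0:i})+1$. I would design query--key scores maximized exactly at such $j$ by combining a depth-equality term with a recency term built from $\phi(\cdot)$ that breaks ties toward the largest admissible $j$. Reading the retrieved open bracket's type, stored in binary in $\lceil\log_2 k\rceil$ coordinates as a $\{\pm1\}$ vector, and comparing it against the closing bracket's own type via an inner product lets a feed-forward network emit $Q(w_{0:i})$; this is also where the $O(\log k)$ width enters, since binary codes replace one-hot codes. The obstacle is twofold: softmax attention is soft, so I must argue the score gap between the true match and every competitor stays large enough that the retrieved type is read with the correct sign, and this must persist as depth and length grow with no length-dependent encoding. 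I would address it by quantifying the minimal score gap, pushing each retrieved quantity strictly onto the correct side of its decision boundary, and invoking layer normalization to restore scale.

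Finally, the fifth layer aggregates. I would store $Q(w_{0:i})$ as a coordinate that is $0$ when the local check passes and a fixed positive value when it fails, alongside the indicator that $\operatorname{d}(w_{0:n})\neq 0$. Uniform causal attention at the last position then returns a quantity proportional to the number of violated prefixes divided by $n+1$: it is exactly $0$ precisely when $\bigwedge_{i=1}^n Q(w_{0:i})$ holds and strictly positive otherwise. Combining this with the final-depth check, using RMS layer normalization to lift the possibly $O(1/n)$ positive signal back to constant magnitude, and passing the result through $f_{\mathrm{rec}}$ and $\operatorname{sgn}$ yields $+1$ exactly on $\texttt{Dyck}_k$. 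The remaining routine work is to verify that each feed-forward block realizes the stated threshold maps within $O(\log k)$ width and that the claimed attention patterns are implementable, which I would relegate to the appendix.
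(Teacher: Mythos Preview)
Your proposal tracks the paper's construction closely through layers one to four: position from BOS-weighted attention, two depth passes, and a depth-match-plus-recency attention in layer four that retrieves the paired open bracket's binary type code and computes $Q(w_{0:i})$, with the softmax-gap argument handled just as you describe.

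The one substantive divergence is layer five. The paper does \emph{not} average the certificates; it builds a second hardmax-style attention whose key score is essentially $-C\,q_j$ plus a BOS offset, so the final query attends to $\texttt{<bos>}$ (value $0$) when every $q_j>0$ and to some violating position (value $1$) otherwise, yielding a genuinely binary flag $q_{\le i}\in\{0,1\}$. Your averaging instead produces a flag in $\{0\}\cup[c/(n{+}1),c]$, and your appeal to RMS normalization to restore constant scale is the delicate point: normalizing a vector whose only nonzero coordinate is this average fails in the accept case (RMS is zero), while including any $\Theta(1)$ reference coordinate leaves the average at $O(1/n)$ after normalization. The repair is to pair the average with a $\Theta(1/n)$ reference already carried in the residual stream---e.g.\ $\exp(a)/(\exp(a)+i)$ from layer one---so that RMS normalization of just those two coordinates gives a constant-scale indicator when at least one prefix fails and exactly $0$ otherwise; the depth check $\sin\theta(\operatorname{d}_n)$ rides through on the residual and the classifier sums the two. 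This does work, but you should make the pairing explicit rather than assert that RMS ``lifts'' the signal, since the naive reading of that claim is false.
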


\begin{theorem}[Transformers with starting token, $\texttt{Dyck}_k$ generation]\label{theorem: transformers with bos generate dyck}
For all $k$, there exists a $3$-layer $O(\log k)$-width causal Transformer network without positional encoding that generates the $\texttt{Dyck}_k$ language. Each layer incorporates both the residual connection and the layer normalization.
This network is followed by a fully-connected layer and softmax layer to output the probability distribution. 

\begin{proof}[Proof sketch]
A Transformer network that generates the $\texttt{Dyck}_k$ language can be constructed by performing the following operations in each layer. The first and second layers do the same operations as those used in Theorem \ref{theorem: transformers with bos recognize dyck_k}.

\begin{description}
    \item[First layer] creates pseudo positional encoding $(\cos\phi(i), \sin\phi(i))$.
    
    \item[Second layer] counts depth $\operatorname{d}(w_{0:i})$.

    \item[Third layer] fetches a valid closed bracket if one exists; otherwise, a zero vector is fetched. This operation is achieved by placing attention on the largest $j$ among $\{0\} \cup \{j \mid \operatorname{d}(w_{0:j}) = \operatorname{d}(w_{0:i})\}$.
    
\end{description}

Then, the subsequent fully-connected layer and softmax operation output the next token distribution using the vector calculated in the third layer.

The full proof is provided in Appendix \ref{app: dyck language generation with bos}.
\end{proof}
\end{theorem}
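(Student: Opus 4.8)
The plan is to reduce the generation claim to a purely representational one: construct a fixed $3$-layer, $O(\log k)$-width causal Transformer $\mathcal{T}$ whose last-token output $\mathcal{T}(w_{0:i})_i$ encodes, for every prefix $w_{0:i}$ with $p_{\texttt{Dyck}_k}(w_{0:i})>0$, just two things --- a regime bit $b=[\operatorname{d}(w_{0:i})\geq 1]$, and a binary code $c(t_{\mathrm{valid}})\in\{\pm1\}^{\lceil\log_2 k\rceil}$ of the type of the unique valid closing bracket (meaningful only when $b=1$). Given such a representation, a single affine head $f_{\mathrm{gen}}$ followed by softmax suffices, and this is where the $\epsilon$-dependence lives. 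I would set the logit of $\langle_t$ to $\log\pi_t$ plus a regime-dependent offset ($\log r$ or $\log q$, affine in $b$), the logit of $\texttt{<eos>}$ to $\log(1-r)$ in the depth-$0$ regime and to a large negative constant otherwise, and the logit of $\rangle_t$ to $\log(1-q)+M\bigl(\langle c(t),\text{fetched}\rangle-\lceil\log_2 k\rceil\bigr)$, gated to be negligible when $b=0$. Because distinct binary codes satisfy $\langle c(t),c(t_{\mathrm{valid}})\rangle\leq \lceil\log_2 k\rceil-2$ while $\langle c(t_{\mathrm{valid}}),c(t_{\mathrm{valid}})\rangle=\lceil\log_2 k\rceil$, a sufficiently large $M=M(\epsilon)$ drives the wrong close types to probability near $0$ while pinning $\rangle_{t_{\mathrm{valid}}}$ at $\log(1-q)$; this is the mechanism that converts ``correct support'' into $\operatorname{TV}<\epsilon$, and it explains why the head may depend on $\epsilon$ whereas $\mathcal{T}$ need not.

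The three layers produce this representation as in the sketch. Layer $1$ uses causal uniform attention together with the unique $\texttt{<bos>}$ token to manufacture a positional surrogate: the attention weight on $\texttt{<bos>}$ is $1/(i+1)$, from which the feed-forward block forms $(\cos\phi(i),\sin\phi(i))$ with $\phi(i)=\tan^{-1}(i/\exp(a))$ strictly monotone in $i$. Layer $2$ recovers $\operatorname{d}(w_{0:i})$ via a value matrix emitting $+1$ on open and $-1$ on closed brackets; uniform causal attention yields $\operatorname{d}(w_{0:i})/(i+1)$, which I rescale using the positional surrogate, and a ReLU threshold extracts the regime bit $b$. Layer $3$ performs the matching step: the query at $i$ should attend to $j^\star$, the largest $j\leq i$ that is an \emph{open} bracket with $\operatorname{d}(w_{0:j})=\operatorname{d}(w_{0:i})$. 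One checks easily that this $j^\star$ is exactly the current top-of-stack bracket (the open/open-vs-close distinction is essential: e.g.\ for $\texttt{<bos>}\langle_1\langle_2\rangle_2$ the plain largest depth-$1$ position is the closing $\rangle_2$, while $j^\star$ is $\langle_1$), so its type is $t_{\mathrm{valid}}$. I engineer the attention score to add a large bonus when the key's depth equals the query's depth and the key is open, plus a term strictly increasing in $\phi(j)$ so that recency breaks ties among valid keys; the value at each key carries its $\lceil\log_2 k\rceil$-bit code, which is copied into the residual stream.

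The main obstacle is executing layer $3$ \emph{uniformly in the sequence length}. Since $\mathcal{T}$ is fixed before $\epsilon$ and softmax attention is never exactly hard, I must show that the fetched vector stays within a fixed sup-norm ball of $c(t_{\mathrm{valid}})$ for every valid prefix of \emph{every} length, small enough that the margin argument above still forces the correct bracket after amplification by $M$. The danger is concrete: a prefix such as $(\langle_1\rangle_1)^m\langle_2$ creates $\Theta(m)$ distractor open brackets at the same depth, and the per-position angular spacing of $\phi$ shrinks with $i$, so the aggregate leaked attention mass need not vanish a priori. My approach is to bound this leakage by combining the depth-and-open gating --- which suppresses the score of every non-matching key below that of $j^\star$ by a fixed additive margin independent of length --- with monotonicity in $\phi$, and to sum the resulting geometric-type tail over all earlier positions to get a length-uniform bound that is absorbed into the decode margin. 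If a residual length dependence survives, the fallback is to sharpen the gating to the single most-recent upward crossing to level $\operatorname{d}(w_{0:i})$ (equivalently, the positions with $\operatorname{d}(w_{0:j})=\operatorname{d}(w_{0:i})$ and $\operatorname{d}(w_{0:j-1})=\operatorname{d}(w_{0:i})-1$), reducing the competition to an essentially two-way comparison whose margin does not degrade with length. Once the fetched code is pinned down uniformly, the head construction above realizes $p_{\texttt{Dyck}_k}(\cdot;q,r,\boldsymbol{\pi})$ within any $\epsilon$, closing the argument.
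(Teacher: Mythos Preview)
Your plan is essentially the paper's construction: the same three layers (positional surrogate from $\texttt{<bos>}$, depth via $\pm1$ values, attention to the nearest open bracket at the current depth) and the same affine head whose scalable constant---the paper's $C_0^{\mathrm{gen}}$ is your $M(\epsilon)$---drives the close-bracket logit $-C_0^{\mathrm{gen}}\bigl(\lceil\log_2 k\rceil-\mathbf{t}_t^\top\tilde{\mathbf{t}}\bigr)$ to separate $t_{\mathrm{valid}}$ from the rest via exactly your inner-product margin. On the softmax-versus-hardmax issue you flag, the paper simply treats the third-layer attention as hard and refers (for the analogous recognition layer) to a separate appendix showing that a ReLU feed-forward block can recover a $\{-1,0,1\}$-valued target exactly whenever the attention weight on it exceeds $2/3$; so its remedy lives inside the Transformer block rather than in the head.

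One caution on your fallback: the condition ``$\operatorname{d}(w_{0:j})=\operatorname{d}(w_{0:i})$ and $\operatorname{d}(w_{0:j-1})=\operatorname{d}(w_{0:i})-1$'' is exactly ``$w_j$ is an open bracket at depth $\operatorname{d}(w_{0:i})$'', which is the competitor set you already have---in $(\langle_1\rangle_1)^m\langle_2\langle_3\rangle_3$ there are $m{+}1$ such upward crossings, not one---so it does not collapse the comparison to two-way.
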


\begin{proposition}[Transformers with starting token, $\texttt{Shuffle-Dyck}_k$ recognition]\label{proposition: transformers with bos recognize shuffle dyck_k}
For all $k$, there exists a $3$-layer $O(\log k)$-width causal Transformer without positional encoding that recognizes the $\texttt{Shuffle-Dyck}_k$ language. Each layer incorporates both the residual connection and the layer normalization. 
This network is followed by a fully-connected layer and a sign function to output an acceptance signal.
\begin{proof}
The proof is provided in Appendix \ref{app:proof of shuffle dyck recognition}.
\end{proof}
\end{proposition}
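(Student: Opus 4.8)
The plan is to first translate membership in $\texttt{Shuffle-Dyck}_k$ into conditions that can be verified position by position. Writing $\operatorname{d}_t(w_{0:i}) = \#_{\langle_t}(w_{0:i}) - \#_{\rangle_t}(w_{0:i})$ for the per-type depth, a string $w_{1:n}$ lies in $\texttt{Shuffle-Dyck}_k$ exactly when (A) $\operatorname{d}_t(w_{0:i}) \ge 0$ for every type $t$ and every prefix (equivalently, every closing bracket $\rangle_t$ has an available open of its own type), and (B) $\operatorname{d}_t(w_{0:n}) = 0$ for every $t$. The first fact I would exploit is that, once (A) holds, all per-type depths at the final position are nonnegative and sum to the total depth $\operatorname{d}(w_{0:n}) = \sum_t \operatorname{d}_t(w_{0:n})$; hence (B) is equivalent to the single scalar condition $\operatorname{d}(w_{0:n}) = 0$. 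This collapses the global balance check to a total-depth computation, which can be done exactly with uniform attention as in Theorems \ref{theorem: transformers with bos recognize dyck_k} and \ref{theorem: transformers with bos generate dyck}, and leaves only the per-type prefix condition (A) to handle.

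For the construction I would reuse the opening layers of the earlier proofs: one layer manufactures the pseudo positional encoding $(\cos\phi(i), \sin\phi(i))$ from attention on the $\texttt{<bos>}$ token, and the total depth $\operatorname{d}(w_{0:i})$ is accumulated by a value matrix emitting $+1$ on open and $-1$ on closed brackets under uniform attention, rescaled to an integer via the positional encoding. The new ingredient --- and the source of the $O(\log k)$ width rather than the $O(k)$ counter construction of \citet{bhattamishra-etal-2020-ability} --- is to additionally carry an $O(\log k)$-bit encoding $b(t) \in \{\pm 1\}^{O(\log k)}$ of each bracket's type, so that type identity occupies $\log k$ rather than $k$ coordinates. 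I would then devote the remaining layer to a type-selective attention: scaling the inner product $\langle b(t_j), b(t_i)\rangle$ (which equals $\log k$ when the types agree and is at most $\log k - 2$ otherwise) by a large multiplier concentrates the softmax on same-type positions, so that at a closing bracket $\rangle_t$ at position $i$ the attention returns the sign of the type-restricted net count, revealing whether $\operatorname{d}_t(w_{0:i}) \ge 0$, i.e. whether this close is matched. A final uniform-attention pass then sums the resulting per-position violation indicators, and the recognizer head outputs $+1$ precisely when that sum is zero (condition (A)) and $\operatorname{d}(w_{0:n})$ vanishes (condition (B)).

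The main obstacle is realizing this type-selective step exactly and uniformly in the sequence length with fixed weights. A softmax over scaled type codes does not cleanly isolate type $t$: the $O(i)$ positions of other types each contribute a small but nonzero weight, and because the type-restricted signal $\operatorname{d}_t$ can be as small as $\pm 1$ while the number of other-type positions grows without bound, a fixed temperature cannot by itself guarantee the correct sign at every length. Resolving this is where the real work lies --- the attention score must combine the $O(\log k)$-bit type agreement with a position-dependent term drawn from the pseudo positional encoding so that the cross-type contribution stays bounded independently of $n$ (for instance, decaying with distance) while the same-type evidence is still aggregated faithfully. Because recognition demands an exact accept/reject decision, the separation must be provable for all lengths simultaneously rather than only in a bounded-length or large-temperature limit; balancing the type multiplier against the positional term so that neither overwhelms the other, at only $O(\log k)$ width, is the delicate core of the argument, after which the aggregation of per-position flags and the composition with the total-depth test are routine.
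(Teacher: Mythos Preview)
Your decomposition into per-type prefix nonnegativity plus vanishing total depth, and your central mechanism --- type-selective attention scored by the inner product of $\pm 1$ type codes in $O(\log k)$ dimensions --- are exactly what the paper does. The final uniform-attention pass that averages per-position violation indicators and the combination with the total-depth sign in the head also match. So the overall route is the same.

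The one piece you are missing, and which dissolves much of what you call the ``delicate core'', is how the paper reads out the per-type depth. You propose to extract the sign of $\operatorname{d}_t(w_{0:i})$ directly from a same-type average and then, to control cross-type leakage, to mix a positional term into the attention score. The paper does neither. Instead it adds $\texttt{<bos>}$ to the high-score set, with score $C\lceil\log_2 k\rceil + a$ sitting just above the same-type score $C\lceil\log_2 k\rceil$; the value vector carries $(s_j,o_j)$, so the two attention-output coordinates are approximately $\exp(a)/(\exp(a)+n_t)$ and $\operatorname{d}_t(w_{0:i})/(\exp(a)+n_t)$. The RMS layer normalization then cancels the shared denominator and returns $\sin\theta(\operatorname{d}_t)$, exactly mirroring the total-depth layer --- the per-type depth computation is literally the depth computation with type-restricted keys. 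A ReLU on $-\sin\theta(\operatorname{d}_t)$ gives the violation indicator. No integer is ever recovered, and no positional term enters the score.

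As for the residual leakage from the $O(n)$ cross-type tokens at weight $\exp(-2C)$: the paper does not address it with anything beyond the ``sufficiently large constant'' plus feed-forward recovery argument it already uses for every hardmax-like step in the $\texttt{Dyck}_k$ proofs (Appendix~\ref{app: validity of treating softmax as hardmax}). Your worry is legitimate, but the paper's construction is deliberately simpler than the positional-decay scheme you sketch, at the cost of leaning on that same large-constant regime.
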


\begin{proposition}[Transformers with starting token, $\texttt{Shuffle-Dyck}_k$ generation]\label{proposition: transformers with bos generate shuffle dyck_k}
For all $k$, there exists a $3$-layer $O(k)$-width causal Transformer without positional encoding that generates the $\texttt{Shuffle-Dyck}_k$ language. Each layer incorporates both the residual connection and the layer normalization. 
This network is followed by a fully-connected layer and softmax layer to output the probability distribution. 

\begin{proof}
The proof is provided in Appendix \ref{app: proof of shuffle dyck generation}.
\end{proof}
\end{proposition}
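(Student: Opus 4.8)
The plan is to mirror the three-layer template used for $\texttt{Dyck}_k$ generation in Theorem~\ref{theorem: transformers with bos generate dyck}, but to replace the single global stack-depth counter by $k$ independent per-type counters, which is precisely what forces the width to grow from $O(\log k)$ to $O(k)$. Recall that a prefix $w_{0:i}$ is produced by the $\texttt{Shuffle-Dyck}_k$ process with positive probability exactly when each type-$t$ projection is a valid $\texttt{Dyck}_1$ prefix, and that the admissible continuations are: any open bracket $\langle_t$; any close bracket $\rangle_t$ whose per-type depth $d_t(w_{0:i}) := \#_{\langle_t}(w_{0:i}) - \#_{\rangle_t}(w_{0:i})$ is at least $1$; and $\texttt{<eos>}$ exactly when $d_t(w_{0:i}) = 0$ for all $t$. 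Thus it suffices for the network to expose, at position $i$, the $k$ booleans $b_t := \mathbb{1}[d_t(w_{0:i}) \geq 1]$ together with the global flag $\mathbb{1}[\sum_t b_t = 0]$, after which a fully-connected head followed by softmax can realize the target distribution of Appendix~\ref{app: preliminaries, shuffle-dyck}.

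\textbf{First layer:} reuse verbatim the construction of Theorem~\ref{theorem: transformers with bos recognize dyck_k}, placing attention on the BOS token to build the pseudo-positional encoding $(\cos\phi(i), \sin\phi(i))$ with $\phi(i) = \tan^{-1}(i/\exp(a))$, from which the position $i$ is recoverable. \textbf{Second layer:} instead of a scalar depth I would use a value matrix that emits $+e_t$ on $\langle_t$, $-e_t$ on $\rangle_t$, and $0$ on BOS, where $e_t$ is the $t$-th coordinate vector; uniform causal attention over positions $0,\dots,i$ then returns the vector $\tfrac{1}{i+1}(d_1(w_{0:i}),\dots,d_k(w_{0:i}))$ in $k$ dedicated coordinates. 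Combining this vector with the positional information carried from the first layer --- exactly as the single-counter case recovers $\operatorname{d}(w_{0:i})$ --- lets the depth of every type be recovered simultaneously, and this parallel $k$-dimensional counting is the source of the $O(k)$ width. \textbf{Third layer:} apply the feed-forward network coordinate-wise to threshold each recovered per-type depth into the boolean $b_t$, and take a thresholded linear combination $\sum_t b_t$ to produce the global emptiness flag governing $\texttt{<eos>}$.

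For the generator head I would then set the logit of each admissible token to a large constant $M$ times its prescribed log-probability under the $\texttt{Shuffle-Dyck}_k$ generation process, and the logit of each inadmissible token (a close bracket $\rangle_t$ with $b_t = 0$, or $\texttt{<eos>}$ when some $b_t = 1$) to $-M$. Since the $b_t$ and the emptiness flag are exact $\{0,1\}$ quantities, choosing $M$ large enough as a function of $\epsilon$ drives the softmax output to within total variation $\epsilon$ of the target, as required by the realization definition, with $f_{\mathrm{gen}}$ fixed once $\epsilon$ is chosen.

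I expect the main obstacle to be the second step: extracting exact per-type depths uniformly in the sequence length. Uniform attention delivers only $d_t/(i+1)$, whose nonzero values $1/(i+1), 2/(i+1),\dots$ shrink with $i$, so no fixed threshold separates $d_t = 0$ from $d_t \geq 1$ across all lengths, and the head cannot rescale adaptively. The resolution is to perform the rescaling and thresholding inside the \emph{fixed} transformer by combining the averaged counter vector with the recovered position $i$, reusing the exact-counting mechanism already established for the single counter in Theorems~\ref{theorem: transformers with bos recognize dyck_k} and~\ref{theorem: transformers with bos generate dyck}, so that each $b_t$ emerges with an $i$-independent margin. A secondary subtlety, absent in the $\texttt{Dyck}_k$ case, is that several close brackets can be simultaneously valid, so the head must softmax over a support of size up to $2k+1$ rather than fetch a unique valid bracket; checking that the per-type logits assemble into the correct categorical distribution is routine once the $b_t$ are available.
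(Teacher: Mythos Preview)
Your plan diverges from the paper's route. The paper abandons both the binary type encoding and the positional-encoding layer of Theorems~\ref{theorem: transformers with bos recognize dyck_k}--\ref{theorem: transformers with bos generate dyck}: it switches to a signed one-hot embedding ($\langle_t\mapsto +e_t$, $\rangle_t\mapsto -e_t$, which is where the $O(k)$ width enters), so that one block of uniform attention already yields $\frac{1}{i+1}\sum_j \mathbf{t}_j = \frac{1}{i+1}(d_1,\dots,d_k)$ alongside the raw $\mathbf{t}_i$. The FFN then feeds $\bigl(-\tfrac{1}{i+1}\sum_j\mathbf{t}_j,\;-\tfrac{1}{i+1}\sum_j\mathbf{t}_j,\;\mathbf{t}_i\bigr)$ through RMS layer normalization scaled by $1/\epsilon$ with an offset of $\mathbf{1}$ on the second block, and $W_2$ combines the ReLU outputs into the piecewise-linear step $-[-x_t]_+ + [1-x_t]_+$, producing the indicators $\mathbb{I}[d_t\le 0]$ directly. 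No $(\cos\phi(i),\sin\phi(i))$ is ever built; the generator head and softmax are then essentially as you sketch. So what buys the paper its simplicity is using the unit-norm current embedding $\mathbf{t}_i$ as the anchor in the layer norm rather than a separately computed position signal.

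There is, however, a real gap in your resolution of the length-dependence you correctly flag. The single-counter mechanism of Theorems~\ref{theorem: transformers with bos recognize dyck_k}--\ref{theorem: transformers with bos generate dyck} works because the FFN layer norm sees only (signed copies of) $\tfrac{\exp(a)}{\exp(a)+i}$ and $\tfrac{d}{\exp(a)+i}$, so the normalizer is $\sqrt{\exp(2a)+d^2}/(\exp(a)+i)$, the factor $\exp(a)+i$ cancels, and one is left with $\sin\theta(d)=d/\sqrt{\exp(2a)+d^2}$ and its length-free margin $\sin\theta(1)$. If you push $k$ counters through the \emph{same} RMS layer norm, the denominator becomes $\sqrt{\exp(2a)+\sum_{t'}d_{t'}^2}$, and the $t$-th coordinate $d_t/\sqrt{\exp(2a)+\sum_{t'}d_{t'}^2}$ is not bounded away from zero when $d_t=1$ but other depths are large; no fixed threshold separates $d_t=0$ from $d_t\ge 1$. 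Since the architecture gives you one shared layer norm per block, the single-counter trick does not parallelize across $k$ types in three blocks the way your sketch assumes, and you have not supplied an alternative device that isolates each $(d_t,\exp(a))$ pair. This is the step that would fail as written.
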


\begin{proposition}\label{proposition: Networks with sub-polynomial width, cannot generate Shuffle Dyck}
There is no network whose width grows strictly slower than $k/\log k$ that generates $\texttt{Shuffle-Dyck}_k$; that is, if 
\begin{equation}
\lim_{k\rightarrow\infty} \frac{d_{\mathrm{model}}}{k/ \log k} = 0
\end{equation}
holds, then there exists $k_0$ such that for any $k \geq k_0$, networks with $d_\mathrm{model}$-width cannot generate $\texttt{Shuffle-Dyck}_k$. For example, $\sqrt{k}$ grows strictly slower than $k/\log k$.

\begin{proof}
The proof is provided in Appendix \ref{app: proof of prop 3}.
\end{proof}
\end{proposition}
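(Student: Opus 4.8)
The plan is to prove the contrapositive by a counting argument: generating $\texttt{Shuffle-Dyck}_k$ forces a single affine head to realize all $2^k$ possible ``set of currently-valid closing brackets,'' whereas a width-$d_{\mathrm{model}}$ representation followed by a fixed affine map can only produce polynomially-in-$k$-to-the-$d_{\mathrm{model}}$ many such sets. First I would fix a $\texttt{Shuffle-Dyck}_k$ generation process with all parameters strictly positive, and for each subset $S \subseteq [k]$ form the prefix $u_S = \texttt{<bos>}$ followed by one opening bracket $\langle_t$ for every $t \in S$ (in any order). Under the process the $t$-th counter is positive exactly when $t \in S$, so among closing brackets the valid continuations of $u_S$ are precisely $\{\rangle_t : t \in S\}$, while every opening bracket (in particular $\langle_1$) stays valid. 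Thus the true process assigns probability $0$ to $\rangle_t$ when $t \notin S$ and probability bounded below by a constant $\delta_k > 0$ --- the minimum positive continuation probability over the finitely many prefixes $\{u_S\}$ --- when $t \in S$, and likewise $p_\mathcal{L}(\langle_1 \mid u_S) \ge \delta_k$ for all $S$.

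Next I would linearize the softmax condition. Fix $\epsilon < \delta_k/3$ and let $f_{\mathrm{gen}}(h) = Wh + b$ be the head realizing the process to accuracy $\epsilon$, write $h_S = \mathcal{T}(u_S)_{|u_S|} \in \mathbb{R}^{d_{\mathrm{model}}}$, and denote logits $\ell_v(h) = w_v \cdot h + b_v$. The key observation is that the log-ratio against the always-valid reference token $\langle_1$, namely $g_t(h) := \ell_{\rangle_t}(h) - \ell_{\langle_1}(h) = (w_{\rangle_t} - w_{\langle_1})\cdot h + (b_{\rangle_t} - b_{\langle_1})$, is affine in $h$ and satisfies $\exp(g_t(h)) = p_\mathcal{T}(\rangle_t \mid \cdot)/p_\mathcal{T}(\langle_1 \mid \cdot)$. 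The $\epsilon$-accuracy then yields a uniform gap: for $t \in S$ this ratio is at least $c_{\mathrm{hi}} := (\delta_k - \epsilon)/1 > 0$, while for $t \notin S$ it is at most $\epsilon/(\delta_k - \epsilon) < c_{\mathrm{hi}}$. Consequently there is a single threshold $\theta$ (any value with $\log\!\big(\epsilon/(\delta_k-\epsilon)\big) < \theta < \log c_{\mathrm{hi}}$) such that $S = \{t \in [k] : g_t(h_S) > \theta\}$, i.e.\ each subset $S$ is read off as a sign pattern of the $k$ fixed affine functionals $g_1,\dots,g_k$ evaluated at the point $h_S$.

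Finally I would count. As $h$ ranges over $\mathbb{R}^{d_{\mathrm{model}}}$, the distinct subsets $\{t : g_t(h) > \theta\}$ correspond to the regions cut out by the $k$ hyperplanes $\{g_t = \theta\}$, whose number is at most $\sum_{j=0}^{d_{\mathrm{model}}}\binom{k}{j} \le (d_{\mathrm{model}}+1)\,k^{d_{\mathrm{model}}}$ by the standard hyperplane-arrangement bound. Since all $2^k$ subsets must be realized by the points $\{h_S\}$, we get $2^k \le (d_{\mathrm{model}}+1)\,k^{d_{\mathrm{model}}}$, and taking base-$2$ logarithms gives $k \le \log_2(d_{\mathrm{model}}+1) + d_{\mathrm{model}}\log_2 k$. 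If $d_{\mathrm{model}} = o(k/\log k)$, the right-hand side is $o(k)$, contradicting the inequality for all sufficiently large $k$; this is exactly the asserted threshold.

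The \textbf{main obstacle} is the linearization step: the softmax is a joint, nonlinear function of all logits, so ``$\rangle_t$ is assigned non-negligible probability'' is not a priori a linear condition on $h$. Comparing each closing logit against the fixed always-valid reference $\langle_1$ is what collapses it to an affine functional $g_t$, and establishing the \emph{single} threshold $\theta$ that works for every $S$ simultaneously relies on the finiteness of the prefix family $\{u_S\}$ for fixed $k$, which produces one uniform gap constant $\delta_k$ and lets a single $\epsilon$ suffice. I would also verify against the appendix's formal definition of the $\texttt{Shuffle-Dyck}_k$ generation process that each subset $S$ is realizable with positive probability and that opening brackets remain valid throughout; granting these, the arrangement count is routine.
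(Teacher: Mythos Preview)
Your proof is correct and follows the same high-level route as the paper---a hyperplane-arrangement count over the family of $2^k$ prefixes indexed by subsets $S\subseteq[k]$---but your linearization step is cleaner. The paper compares closing-bracket logits \emph{pairwise}, producing $\binom{k}{2}$ hyperplanes $\{h:(\mathbf{w}_{t}-\mathbf{w}_{t'})^\top h=0\}$, and then argues via a chain of Stirling-type estimates (its Lemmas~3--7) that at least $\binom{k}{\lfloor k/2\rfloor}\ge \lfloor\sqrt{2}^{\,k}\rfloor$ of the induced regions must be mutually disjoint, which already exceeds $G\bigl(d_{\mathrm{model}},\binom{k}{2}\bigr)$ when $d_{\mathrm{model}}=o(k/\log k)$. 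Your trick of comparing every closing-bracket logit to the \emph{always-valid} reference $\langle_1$ reduces this to only $k$ affine functionals sharing a single threshold, so all $2^k$ sign patterns must appear and the crude bound $2^k\le \sum_{j\le d_{\mathrm{model}}}\binom{k}{j}\le (d_{\mathrm{model}}{+}1)\,k^{d_{\mathrm{model}}}$ yields the conclusion in one line, bypassing the paper's lemma chain. You are also more explicit than the paper about reconciling the argument with the $\epsilon$-TV definition of realization (picking $\epsilon$ after fixing the uniform positive-probability floor $\delta_k$ over the finite prefix family).

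One cosmetic point: the particular choice $\epsilon<\delta_k/3$ does not by itself force the claimed gap $\epsilon/(\delta_k-\epsilon)<\delta_k-\epsilon$ needed for a common threshold $\theta$ to exist (that inequality needs roughly $\epsilon\lesssim\delta_k^{2}$, not $\delta_k$). Since the definition of realization lets you take $\epsilon$ arbitrarily small, choosing e.g.\ $\epsilon<\delta_k^{2}/4$ fixes this and the rest of the argument goes through unchanged.
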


Next, we show that even without $\texttt{<bos>}$, Transformers can recognize and generate $\texttt{Dyck}_k$ languages under certain conditions. The following proposition states that under relatively weak conditions, Transformers can generate a signal that serves a similar role to $\texttt{<bos>}$ in Theorems \ref{theorem: transformers with bos recognize dyck_k}, \ref{theorem: transformers with bos generate dyck}.

\begin{proposition}\label{proposition: create pseudo starting signal without bos}
Assume that there exists a linear subspace such that the embeddings are distinct from each other and have a constant $2$-norm. Then, there exists a Transformer block without a starting token that creates a pseudo starting signal $\hat{s}_i$ for any string $w_{1:n}$ whose first two tokens are different, where
\begin{equation}
\hat{s}_i = \begin{cases}
1 & \text{if } i = 1 \\
0 & \text{otherwise }
\end{cases}.
\end{equation}

Specifically, this block transforms the constants-padded vector $\hat{\mathbf{x}}_i$ as follows:
\begin{equation}
\hat{\mathbf{x}}_{i}  = 
\begin{bmatrix}
\mathbf{x}_{i} \\
\vdots \\
0 
\end{bmatrix} \mapsto \begin{bmatrix}
\mathbf{x}_{i} \\
\vdots \\
\hat{s}_i
\end{bmatrix}.
\end{equation}
\begin{proof}
The proof is provided in Appendix \ref{app: proof of lemma create pseudo starting signal without bos}.
\end{proof}
\end{proposition}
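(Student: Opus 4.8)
The plan is to implement the whole construction inside a single attention-plus-feed-forward block, exploiting the one structural feature that distinguishes position $1$ under causal masking: it is the only position whose receptive field is a singleton. Concretely, I would set all query--key scores to a common constant so that, after the causal softmax, position $i$ attends \emph{uniformly} to positions $1,\dots,i$, and I would let the value map send each token to (the negative of) its constant-norm embedding $\mathbf{e}_{w}$ living in the subspace guaranteed by the hypothesis. The attention output at position $i$ is then $-\bar{\mathbf{e}}_i$, where $\bar{\mathbf{e}}_i=\tfrac1i\sum_{j=1}^i \mathbf{e}_{w_j}$, and the residual connection makes the vector $\mathbf{u}_i:=\mathbf{e}_{w_i}-\bar{\mathbf{e}}_i$ available in a fresh block of coordinates (of some dimension $d'=O(\log k)$).

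The key elementary fact to establish next is that $\mathbf{u}_i=\mathbf{0}$ if and only if $i=1$, and this is exactly where both hypotheses are used. For $i=1$ the average equals $\mathbf{e}_{w_1}$, so $\mathbf{u}_1=\mathbf{0}$. For $i=2$ we have $\mathbf{u}_2=\tfrac12(\mathbf{e}_{w_1}-\mathbf{e}_{w_2})\neq\mathbf{0}$, since $w_1\neq w_2$ and the embeddings are distinct. For $i\ge 3$, $\mathbf{u}_i=\mathbf{0}$ would force $\mathbf{e}_{w_i}$ to equal the average of the first $i-1$ embeddings; but those $i-1\ge 2$ points lie on the sphere of radius $\|\mathbf{e}_{\cdot}\|$ and include the two distinct points $\mathbf{e}_{w_1}\neq\mathbf{e}_{w_2}$, so by strict convexity their average has norm strictly below that radius, whereas $\mathbf{e}_{w_i}$ lies exactly on the sphere --- a contradiction. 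Hence $\mathbf{u}_i$ vanishes precisely at the first position.

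The remaining and genuinely delicate step is to turn ``$\mathbf{u}_i=\mathbf{0}$ versus $\mathbf{u}_i\neq\mathbf{0}$'' into the exact $\{0,1\}$ signal, because $\|\mathbf{u}_i\|$ can be made arbitrarily small for $i\ge 2$ (a single deviating second token diluted over a long prefix drives $\|\mathbf{u}_i\|\to 0$), so no continuous threshold on a magnitude can separate the two cases uniformly in $n$. This is exactly what the regularizer-free RMS normalization resolves: being scale-invariant (homogeneous of degree zero under positive scaling), it sends every nonzero vector to one of fixed norm while fixing $\mathbf{0}$, converting a shrinking gap into a uniform one. I would therefore feed $[\mathbf{u}_i;-\mathbf{u}_i]$ through the first linear map of the feed-forward layer, normalize, and apply the ReLU; summing the result through the second linear map yields $\sqrt{d'}\,\|\mathbf{u}_i\|_1/\|\mathbf{u}_i\|_2$, which equals $0$ when $i=1$ and lies in the interval $[\sqrt{d'},d']$ otherwise. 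Composing with the affine map $t\mapsto 1-t/\sqrt{d'}$ and a final clipping (a ReLU that can be absorbed into the downstream nonlinearity) writes exactly $\hat{s}_i$ into the last coordinate, as required.

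The main obstacle is precisely this last point: achieving \emph{exact}, length-uniform separation even though the pre-normalization signal $\mathbf{u}_i$ degenerates to $\mathbf{0}$ in the limit. The plan hinges on the scale-invariance of RMS normalization --- and on feeding both $\mathbf{u}_i$ and $-\mathbf{u}_i$ before the ReLU, so that the post-activation mass is an $\ell_1$ quantity rather than a sign-cancelling linear functional that could itself vanish on a norm-$\sqrt{d'}$ vector. Verifying that the normalization is regularizer-free on this subspace, and that the two linear maps and the ReLU compose to the stated affine-of-$\ell_1$ expression, is the crux; the coordinate bookkeeping that keeps $\mathbf{x}_i$ intact while appending $\hat{s}_i$ is then routine.
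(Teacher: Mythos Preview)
Your attention step and the characterization $\mathbf{u}_i=\mathbf{0}\Leftrightarrow i=1$ coincide with the paper's proof: uniform causal attention forms the running average, the residual exposes $\mathbf{u}_i=\mathbf{x}_i-\bar{\mathbf{x}}_i$, and the constant-norm hypothesis yields the equivalence (the paper argues via Cauchy--Schwarz on $\langle\mathbf{x}_i,\bar{\mathbf{x}}_i\rangle$; your strict-convexity phrasing is equivalent).

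The feed-forward construction, however, differs from the paper's and has a gap. You feed only $[\mathbf{u}_i;-\mathbf{u}_i]$ to the normalization and rely on $\operatorname{LN}_{\mathrm{RMS}}$ ``fixing $\mathbf{0}$'' at $i=1$; under the paper's definition this is $\boldsymbol\gamma\odot(\mathbf{0}/\operatorname{RMS}(\mathbf{0}))+\boldsymbol\beta$, a genuine $0/0$. Adding a regularizer would define the $i=1$ case but destroy the scale-invariance you invoke for $i\ge 2$: once $\|\mathbf{u}_i\|$ is comparable to the regularizer the normalized output shrinks, and your uniform lower bound $\sqrt{d'}\le\sqrt{d'}\,\|\mathbf{u}_i\|_1/\|\mathbf{u}_i\|_2$ fails. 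The paper's remedy is to append a constant coordinate instead: with $W_1\mathbf{h}_i=[\mathbf{u}_i;\mathbf{0};1;0]$ one has $\|W_1\mathbf{h}_i\|_2=\sqrt{\|\mathbf{u}_i\|_2^2+1}\ge 1$, with equality iff $i=1$, so the RMS denominator never vanishes and the normalized value of that constant coordinate is exactly $1$ at $i=1$ and strictly below $1$ otherwise. Subtracting $1-\epsilon$ via $\boldsymbol\beta$, applying the ReLU, and scaling by $1/\epsilon$ then writes $\hat s_i$ inside the single block, with no $0/0$ and no need to defer a clipping step downstream.
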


By leveraging Proposition \ref{proposition: create pseudo starting signal without bos}, we also show that Transformers without $\texttt{<bos>}$ can recognize and generate under the assumption that there exists a subspace of the input representation with a constant $2$-norm.

\begin{corollary}[Transformers without starting token, $\texttt{Dyck}_k$ probabilistic recognition]\label{corollary: transformers without bos recognize dyck_k}
Assume the same condition as in Proposition \ref{proposition: create pseudo starting signal without bos}. There exists a $9$-layer causal Transformer without a starting token that recognizes the $\texttt{Dyck}_k$ language with probability at least $1-1/k$.
\begin{proof}
The proof is provided in Appendix \ref{app: dyck recognition without bos}.
\end{proof}
\end{corollary}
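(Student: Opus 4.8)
The plan is to reduce this corollary to the with-$\texttt{<bos>}$ recognizer of Theorem~\ref{theorem: transformers with bos recognize dyck_k} by manufacturing, inside the network, a surrogate for the starting token. Concretely, I would first invoke Proposition~\ref{proposition: create pseudo starting signal without bos} to append a pseudo starting signal $\hat{s}_i$, which flags position $1$ (and no other) whenever the first two tokens $w_1, w_2$ differ; this signal is the only place where the construction can go wrong, and the ``$1-1/k$'' in the statement will come precisely from bounding the probability that $w_1 = w_2$. Once $\hat{s}_i$ is available, the idea is to let position $1$ play the anchor role that $\texttt{<bos>}$ (at position $0$) played in Theorem~\ref{theorem: transformers with bos recognize dyck_k}: wherever that proof attends to the starting token to read off position and to seed the depth counter, the new network attends to the position marked by $\hat{s}_i$ instead.

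After the signal is in place, I would replay the five-layer pipeline of Theorem~\ref{theorem: transformers with bos recognize dyck_k} --- pseudo positional encoding $(\cos\phi(i), \sin\phi(i))$, the two depth counters $\operatorname{d}(w_{0:i})$ and $\operatorname{d}(w_{0:i})+1$, the matched-open-bracket attention realizing $Q(w_{0:i})$, and the final conjunction together with the depth-zero test --- but re-expressed relative to the surrogate anchor. The total of nine layers then accounts for the layer(s) that Proposition~\ref{proposition: create pseudo starting signal without bos} consumes to build $\hat{s}_i$, a small number of bridging layers that convert the in-sequence marker into the attention anchor and re-seed the positional and depth computations, and the five recognition layers inherited from Theorem~\ref{theorem: transformers with bos recognize dyck_k}. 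The subsequent fully-connected layer and sign function are unchanged.

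For the probability bound I would argue that the construction is \emph{exact} on every input with $w_1 \neq w_2$: there the pseudo signal is correct, so the surrogate anchor behaves identically to $\texttt{<bos>}$, members are accepted and non-members rejected. The only inputs that can be misclassified are those whose first two tokens coincide. Drawing a string from the $\texttt{Dyck}_k$ language generation process of Definition~\ref{def: dyck_k language generation process} with symmetric $\boldsymbol{\pi}$ (that is, $\pi_t = 1/k$), the first token is $\langle_t$ with probability $r\pi_t$, after which the depth is $1$ and the second token equals that same $\langle_t$ with probability $q\pi_t$; hence
\begin{equation}
\Pr[w_1 = w_2] = rq\sum_{t=1}^k \pi_t^2 = \frac{rq}{k} \le \frac{1}{k},
\end{equation}
since $q, r \in (0,1)$. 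Therefore the recognizer is correct with probability at least $1 - 1/k$, as claimed.

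The step I expect to be the main obstacle is the middle one: faithfully transplanting a construction designed around a \emph{dedicated} $\texttt{<bos>}$ token at position $0$ onto an \emph{in-sequence} marker at position $1$. Unlike $\texttt{<bos>}$, the marked token is itself a genuine open bracket that contributes to $\operatorname{d}(w_{0:i})$, and there is no position-$0$ slot to serve as a neutral attention sink; so I must re-seed the depth counters and the positional encoding $\phi(i) = \tan^{-1}(i/\exp(a))$ off the pseudo anchor without introducing an off-by-one error in either the position index or the depth, and verify that the matched-bracket attention of the fourth layer still isolates the nearest depth-matched open bracket. Getting these alignments exactly right --- rather than the probability computation, which is routine once the failure event is identified --- is where the real work lies, and it is what forces the extra layers beyond the five of Theorem~\ref{theorem: transformers with bos recognize dyck_k}.
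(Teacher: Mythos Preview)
Your plan is correct and matches the paper's approach: build $\hat{s}_i$ via Proposition~\ref{proposition: create pseudo starting signal without bos}, replay the Theorem~\ref{theorem: transformers with bos recognize dyck_k} pipeline against the in-sequence anchor, and bound the failure probability by $\Pr[w_1=w_2]=rq/k\le 1/k$ under uniform $\boldsymbol{\pi}$ (your computation here is actually cleaner than the paper's one-line remark).

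Where the paper is more concrete is exactly at the obstacle you flag. Rather than treating position~$1$ as a drop-in for $\texttt{<bos>}$, the paper's device is a \emph{shift-by-one}: at every position $i$ it explicitly reconstructs the with-BOS representation $\mathbf{x}_{i-1}$ (fetching $(\mathbf{t}_{i_\mathrm{prev}},o_{i_\mathrm{prev}})$ by nearest-previous attention, then using $\hat{s}_i$ to overwrite position~$1$'s ``previous'' slot with the synthetic $\texttt{<bos>}$ embedding). Thereafter the attention layers let the \emph{query} at $i$ read the current token while the \emph{keys/values} read the shifted $\mathbf{x}_{i-1}$; since the Theorem~\ref{theorem: transformers with bos recognize dyck_k} construction never needs the query to attend to itself, this neutralizes the off-by-one in both $\phi$ and the depth counter. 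The resulting nine layers are not ``five plus three bridges'' but rather: one for $\hat{s}_i$; two for $\phi(i{-}1)$ and $\phi(i)$; one to assemble $\mathbf{x}_{i-1}$; two for $\operatorname{d}(w_{0:i-1})$ and $\operatorname{d}(w_{0:i}){+}1$; two for $\operatorname{q}_{i-1}$ and $\operatorname{q}_i$; and one for the final conjunction $Q(w_{0:i})\wedge\bigwedge_{j\le i-1}Q(w_{0:j})\wedge(\operatorname{d}=0)$. Several Theorem~\ref{theorem: transformers with bos recognize dyck_k} stages appear \emph{twice} (for $i$ and for $i{-}1$) rather than once, which is what your layer accounting should anticipate.
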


\begin{corollary}[Transformers without starting token, $\texttt{Dyck}_k$ subset generation]\label{corollary: transformers without bos generate dyck_k}
Assume the same condition as in Proposition \ref{proposition: create pseudo starting signal without bos}. There exists a $7$-layer causal Transformer without a starting token that can generate a subset of $\texttt{Dyck}_k$ where the first two characters are different; that is, the Transformer can generate all possible subsequent sequences when there is an input string whose first two characters are different.
\begin{proof}
The proof is provided in Appendix \ref{app: dyck language generation without bos}.
\end{proof}
    
\end{corollary}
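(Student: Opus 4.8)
The plan is to reduce the statement to Theorem~\ref{theorem: transformers with bos generate dyck} by having the pseudo starting signal of Proposition~\ref{proposition: create pseudo starting signal without bos} assume the role played there by the $\texttt{<bos>}$ token. First I would prepend the block guaranteed by Proposition~\ref{proposition: create pseudo starting signal without bos}: under the hypothesis that the first two tokens differ, it appends to each embedding a coordinate $\hat{s}_i$ equal to $1$ at $i=1$ and $0$ otherwise. Because this coordinate singles out the first position exactly as $\texttt{<bos>}$ singles out position $0$ in the generation construction, position $1$ can serve as the anchor for every attention-based computation that follows. The cost of this reduction is the restriction to input strings whose first two characters differ, which is precisely the subset of $\texttt{Dyck}_k$ named in the statement.

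With $\hat{s}_i$ in hand, I would rebuild the three functional layers of Theorem~\ref{theorem: transformers with bos generate dyck}. In that proof the pseudo positional encoding $(\cos\phi(i),\sin\phi(i))$ with $\phi(i)=\tan^{-1}(i/\exp(a))$ arises by attending with score $a$ to the distinguished token; here I would instead route $\hat{s}_i$ into the attention logits so that the position carrying $\hat{s}_i=1$ receives a large score, and under causal masking its normalized weight again depends monotonically on $i$, yielding an analogous encoding anchored at position $1$ rather than $0$. The depth-counting layer then transfers essentially unchanged, since $\operatorname{d}(w_{1:i})$ coincides with $\operatorname{d}(w_{0:i})$ (the $\texttt{<bos>}$ token never contributed to the depth), and the valid-closed-bracket-fetching layer likewise carries over: a value matrix emitting $+1$ on open and $-1$ on closed brackets, combined with attention onto the largest depth-matched index, recovers the correct bracket or a zero vector as before. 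Finally, the same feed-forward readout and softmax produce $p_{\texttt{Dyck}_k}(\cdot;q,r,\boldsymbol{\pi})$.

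The main obstacle I anticipate lies in the two attention steps, because position $1$ is no longer a token of a distinguished type but an ordinary bracket whose embedding merely carries $\hat{s}_i=1$. For the positional encoding I must verify that attention isolates position $1$ using only the coordinate $\hat{s}_i$ and that the resulting weight is strictly monotone in $i$, so that distinct positions receive distinct encodings; this injectivity is exactly where the distinctness of the first two tokens inherited from Proposition~\ref{proposition: create pseudo starting signal without bos} is indispensable. For the fetching layer I must reproduce the fallback that returned a zero vector when no depth-matched position exists---a behavior previously supplied by the neutral $\texttt{<bos>}$ token---by designing the value matrix so that reading the anchor position through the $\hat{s}_i$ coordinate yields a zero in the fetched dimensions despite $w_1$ being a genuine bracket. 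The extra layers beyond the three of Theorem~\ref{theorem: transformers with bos generate dyck}, bringing the total to seven, are consumed by the Proposition~\ref{proposition: create pseudo starting signal without bos} block and by the bookkeeping needed to align $\hat{s}_i$ with these attention mechanisms; once they are in place, correctness of the generated distribution follows from that of Theorem~\ref{theorem: transformers with bos generate dyck}.
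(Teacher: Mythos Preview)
Your high-level plan---prepend the block from Proposition~\ref{proposition: create pseudo starting signal without bos} and then rerun the three functional layers of Theorem~\ref{theorem: transformers with bos generate dyck}---is the right starting point, but the claim that ``the depth-counting layer then transfers essentially unchanged'' hides a real gap. In the original construction the depth is extracted by attending with weight $\exp(a)/(\exp(a)+i)$ to $\texttt{<bos>}$ and weight $1/(\exp(a)+i)$ to each bracket, and the anchor contributes \emph{nothing} to the sum because $o_0=0$. If you replace the anchor by position $1$ via $\hat{s}_1$, that position is a genuine open bracket with $o_1=1$; attention now gives it weight $\exp(a)/(\exp(a)+(i-1))$ while later brackets get $1/(\exp(a)+(i-1))$, and the weighted sum becomes $(\exp(a)\,o_1+\sum_{j\ge 2}o_j)/(\exp(a)+(i-1))$, which is not proportional to $\operatorname{d}(w_{1:i})$. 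After layer normalization you no longer obtain the clean $(\cos\theta(\operatorname{d}_i),\sin\theta(\operatorname{d}_i))$ pair needed for matching. That $\operatorname{d}(w_{0:i})=\operatorname{d}(w_{1:i})$ as \emph{values} does not mean the attention \emph{computation} transfers; the anchor's nonzero $o_1$ contaminates it. The two obstacles you list are downstream of this more basic one.

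The paper sidesteps this not by patching the value matrix at the anchor but by a shift-by-one trick: it spends a layer to store at each position $i$ a reconstructed copy of $\mathbf{x}_{i-1}$, with position $1$ holding a synthetic BOS manufactured from $\hat{s}_1$. Thereafter queries read the current $\mathbf{x}_i$ while keys and values read the stored $\mathbf{x}_{i-1}$, so attending over $1,\ldots,i$ is literally attending over the BOS-augmented range $0,\ldots,i-1$, and the Theorem~\ref{theorem: transformers with bos generate dyck} layers apply without adaptation. Implementing the shift requires both $\phi(i-1)$ and $\phi(i)$, then the reconstruction itself, then both $\operatorname{d}(w_{0:i-1})$ and $\operatorname{d}(w_{0:i})$; this is precisely where the seven layers come from (one for Proposition~\ref{proposition: create pseudo starting signal without bos}, two positional encodings, one shift, two depths, one fetch). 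Your ``bookkeeping'' layers are left unspecified, whereas the paper's count is forced by this concrete mechanism.
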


\section{Experiments}\label{sec:experiments}
The constructive proofs in the previous section show that single-head Transformers with a starting token have the ability to recognize and generate $\texttt{Dyck}_k$ and $\texttt{Shuffle-Dyck}_k$, and that even without a starting token, Transformers can recognize and generate $\texttt{Dyck}_k$. In this section, we examined the theoretical results by conducting experiments on the generation ability for $\texttt{Dyck}_k$ with/without a starting token (Theorem \ref{theorem: transformers with bos generate dyck} and Corollary \ref{corollary: transformers without bos generate dyck_k}). We also investigated the generation ability on $\texttt{Shuffle-Dyck}_k$ (Proposition \ref{proposition: transformers with bos generate shuffle dyck_k}). 

In addition, we empirically investigated the effect of the layer normalization position on model performance using natural language datasets because the Transformer architecture used in this paper differs from common architectures regarding the layer normalization position.

\subsection{Evaluation on $\texttt{Dyck}_k$ and $\texttt{Shuffle-Dyck}_k$}

\begin{figure*}[t]
  \includegraphics[width=0.44\linewidth]{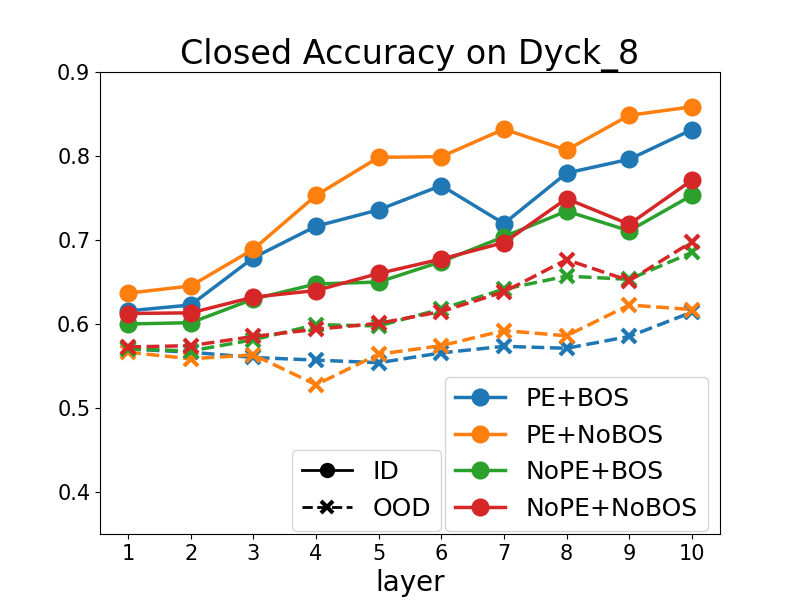}
  \hfill
  \includegraphics[width=0.44\linewidth]{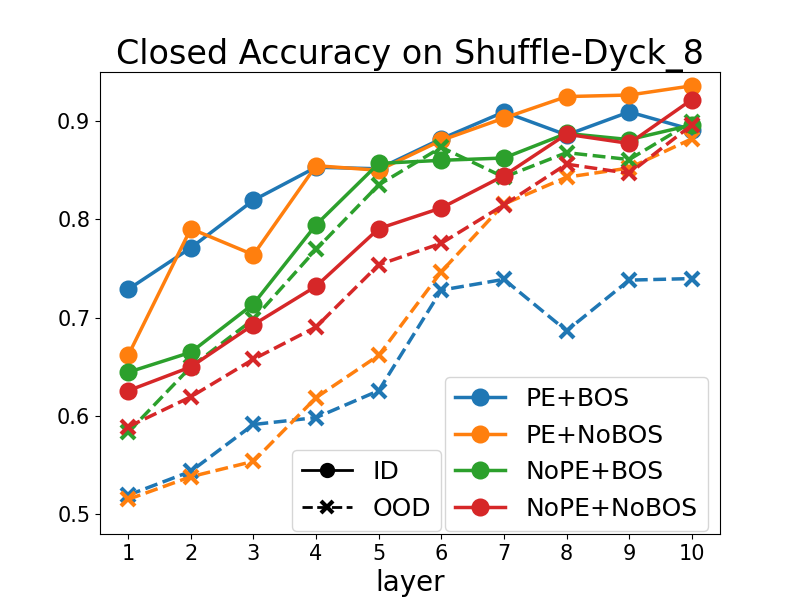}
  \caption{(Left) Test accuracy of generating the correct closed brackets on $\texttt{Dyck}_8$. (Right) Test accuracy of generating the correct closed bracket on $\texttt{Shuffle-Dyck}_8$. The solid lines represent the results for in-distribution data ($n \leq 700$), while the dashed lines represent the results for out-of-distribution data ($700 < n \leq 840$). In both experiments, results are averaged over $5$ runs with different random seeds.}
  \label{fig:closed_acc_dyck_8}
\end{figure*}

Our constructive proofs show that Transformers are capable of recognizing and generating $\texttt{Dyck}_k$ and $\texttt{Shuffle-Dyck}_k$.
In this section, we experimentally investigated whether such networks can actually be learned. Here, we provide a brief explanation of the experimental setup and the results for $\texttt{Dyck}_8$ and $\texttt{Shuffle-Dyck}_8$, while the detailed explanation and other results are provided in Appendix \ref{app: experiments}.

\paragraph{Setup}
Following \citet{yao-etal-2021-self}, we generated training and validation sets with the maximum input length of $700$ according to a language generation process. We trained Transformers with causal masking by having them solve a next-token prediction task. 

We compared four types of models: (i) with positional encoding and a starting token ($\texttt{PE+BOS}$), (ii) with positional encoding but without a starting token ($\texttt{PE+NoBOS}$), (iii) without positional encoding but with a starting token ($\texttt{NoPE+BOS}$), and (iv) without positional encoding and a starting token ($\texttt{NoPE+NoBOS}$). We reported the average accuracy of generating correct closed brackets, which is described below, separately for in-distribution (ID) data ($n \leq 700$) and out-of-distribution (OOD) data ($700 < n \leq 840$).

\paragraph{Metric}
Following \citet{hewitt-etal-2020-rnns}, \citet{yao-etal-2021-self}, we report the conditioned probability to output the correct closed bracket(s): 
\begin{equation}
\operatorname{Acc}_{\mathrm{closed}} = \mathbb{E} \left[ p(\rangle_{t_\mathrm{valid}} \mid \, \rangle_\cdot)\right],
\end{equation}
where 
\begin{equation}
\begin{aligned}
&p(\rangle_{t_\mathrm{valid}} \mid \, \rangle_\cdot) \\
&=\begin{cases}
\begin{aligned}\frac{p(\rangle_{t_\mathrm{valid}})}{\sum_{t=1}^k p(\rangle_t)}\end{aligned} & \text{for } \texttt{Dyck}_k\\
\begin{aligned} \frac{\sum_{t_{\mathrm{valid}}} p(\rangle_{t_\mathrm{valid}})}{\sum_{t=1}^k p(\rangle_t)}\end{aligned}\vphantom{\frac{\vdots}{\vdots}}& \text{for } \texttt{Shuffle-Dyck}_k
\end{cases}.
\end{aligned}
\end{equation}
This metric indicates how accurately the models can generate the sequence.

Figure \ref{fig:closed_acc_dyck_8} shows the test accuracy of generating the correct closed bracket on $\texttt{Dyck}_8$ and $\texttt{Shuffle-Dyck}_8$, while the results for other values of $k$ are provided in Appendix \ref{app: experiments}.

\subsection{Evaluation on natural language datasets}\label{subsec:Experiment/Evaluation on natural language datasets} 
In the previous section, we derived theoretical results using the architecture that differs from common ones with respect to the position of the layer normalization. In this section, we experimentally investigated the performance differences arising from the positions of the layer normalization.

Generally, there are two types regarding the position of the layer normalization used in Transformer architectures. One is $\texttt{Post-LN}$, used in models such as GPT \citep{radford2018improving}, and the other is $\texttt{Pre-LN}$, used in models such as GPT-2 \citep{radford2019language}, GPT-3 \citep{NEURIPS2020_1457c0d6}, Llama \citep{touvron2023llamaopenefficientfoundation}, and Llama 2 \citep{touvron2023llama2openfoundation}.
Specifically, in $\texttt{Post-LN}$, the layer normalization is applied after the attention layer and feed-forward network layer, whereas in $\texttt{Pre-LN}$, it is applied before these layers. In contrast, the architecture used in our proofs incorporates the layer normalization after the first linear transformation in the feed-forward network layer.

We investigated how the positions of the layer normalization affect the model performance using the two natural language datasets, WikiText-103 \citep{merity2016pointer} and OpenWebText \citep{Gokaslan2019OpenWeb}.

\begin{table}
    \begin{tabular}{c|cc}
        \textbf{Architecture} & \textbf{WikiText-103}  & \textbf{OpenWebText} \\ 
        \hline 
        $\texttt{Post-LN}$      & 19.11 & 20.82 \\
        $\texttt{Pre-LN}$       & 19.44 & 20.83 \\
        $\texttt{No-LN}$      & 21.25 & 22.72 \\
        $\texttt{FFN-LN}$     & 19.17 & 21.32 
    \end{tabular}
    \caption{Test perplexity on two natural language datasets with different positions of the layer normalization. }
    \label{tab: test perplexity}
\end{table}

In addition to the two positions, $\texttt{Post-LN}$ and $\texttt{Pre-LN}$, we considered two other settings, $\texttt{No-LN}$ and $\texttt{FFN-LN}$, where $\texttt{No-LN}$ represents an architecture without the layer normalization and $\texttt{FFN-LN}$ represents an architecture that incorporates the layer normalization right after the first linear transformation in feed-forward network layer. We trained four 124M models ($\texttt{Post-LN}$, $\texttt{Pre-LN}$, $\texttt{No-LN}$, and $\texttt{FFN-LN}$) once each from scratch. The test perplexities of the models that achieve the best validation losses are described in Table \ref{tab: test perplexity}. The Appendix \ref{app: experiments} provides detailed information about the training process and other results.

\section{Discussion}\label{sec:discussion}

\subsection{Experiments on $\texttt{Dyck}_8$ and $\texttt{Shuffle-Dyck}_8$ }
From the results in Figure \ref{fig:closed_acc_dyck_8}, $\texttt{PE}$ lets models achieve higher accuracy on ID data compared to $\texttt{NoPE}$. However, the performance drops significantly on OOD data. On the other hand, for $\texttt{NoPE}$, the performance on OOD data drops slightly compared to that on ID data. This suggests that $\texttt{NoPE}$ might let models obtain a better inductive bias with respect to capturing hierarchical structure and generalizing with respect to sequence length. In addition, we did not observe a noticeable difference between $\texttt{BOS}$ and $\texttt{NoBOS}$. This correlates with Corollary \ref{corollary: transformers without bos generate dyck_k}.

\subsection{Experiments on natural language datasets}
Here, we discuss the optimal position of the layer normalization. \citet{wang-etal-2019-learning-deep} and \citet{pmlr-v119-xiong20b} showed that $\texttt{Pre-LN}$ leads to stable training and training time reduction compared to $\texttt{Post-LN}$, while \citet{nguyen-salazar-2019-transformers} and \citet{mao-etal-2023-exploring} demonstrated that under certain conditions, such as machine translation, $\texttt{Post-LN}$ outperforms $\texttt{Pre-LN}$.
Furthermore, \citet{shleifer2021normformerimprovedtransformerpretraining} demonstrated that incorporating the layer normalization before the second linear layer of the feed-forward network layer can effectively mitigate gradient explosion and vanishing, which are commonly observed issues in both $\texttt{Pre-LN}$ and $\texttt{Post-LN}$ setups. 

In this way, although the optimal position remains unclear, we conclude that our modified architecture is competitive to $\texttt{Pre-LN}$ and $\texttt{Post-LN}$ because the architecture used in our proof effectively benefits from the layer normalization in the experiments on WikiText-103 and OpenWebText.
Further discussion on the layer normalization position is provided in Appendix \ref{app: further discussion}.

\section{Conclusion}\label{sec: conclusion}
We theoretically showed that Transformers can efficiently process hierarchical languages. Our theoretical and empirical results might alleviate the existing concern that Transformers, unlike RNNs and LSTMs, often face difficulties in capturing hierarchical structures.

\subsection*{Limitations}
We adopt the layer normalization position that differs from the commonly used positions, but it remains unclear whether this specific position is essential for our proofs. We also assume real numbers with infinite precision, occasionally involving operations with large real values, which leads to a question as to whether it is possible to realize such operations with finite-bit floating point representation. This issue is particularly important in light of recent trends towards quantization for reducing model sizes, where $16$-bit or even $4$ or $8$-bit floating-point representations are frequently used.

In addition, we trained $124$M models using two natural language datasets and empirically demonstrated the validity of the architecture we adopted. However, it remains unclear whether the adopted architecture is competitive with $\texttt{Pre-LN}$ and $\texttt{Post-LN}$ when applied to larger models or different datasets.

\subsection*{Ethics Statement}
This paper consists solely of theoretical results and supporting experiments. While we conducted experiments using natural language datasets, we have presented only sufficiently aggregated results. To the best of our knowledge, there are no ethical concerns or potential risks associated with this study.


\clearpage
\appendix
\setlength{\arraycolsep}{3.5pt}
\allowdisplaybreaks

\section{Additional Related Work}
Since the emergence of Transformer \citep{NIPS2017_3f5ee243}, a wide range of theoretical analyses have been conducted on its expressive capacity. Some of these analyses focus on language recognition and generation tasks. These analyses can be broadly classified into two categories: (i) studies on the expressive capacity using circuit complexity and (ii) studies on the expressive power by examining specific languages.

\subsection{Theoretical analyses based on circuit complexity}

There have been studies that try to identify the language classes that Transformers can process from the perspective of circuit complexity. 
\citet{hao-etal-2022-formal} established the relationship between unique hard attention Transformers (UHAT) and circuits and showed that UHAT can only recognize languages in the circuit class $\texttt{AC}^0$. $\texttt{AC}^0$ is a circuit class that circuits consisting of constant depth and polynomial size $\texttt{AND}$ and $\texttt{OR}$ gates belong to.
In addition, \citet{barcelo2024logical} showed that UHAT cannot recognize all languages in $\texttt{AC}^0$.
Furthermore, \citet{hao-etal-2022-formal}, \citet{merrill-etal-2022-saturated}, \citet{merrill-sabharwal-2023-parallelism}, \citet{strobl2023averagehardattentiontransformersconstantdepth}, and \citet{barcelo2024logical} provided theoretical results on saturated attention, or average hard attention (AHAT), which extends hardmax attention to be able to refer more than one token. \citet{hao-etal-2022-formal} showed that AHAT has strictly higher expressive power compared to UHAT. In addition, \citet{merrill-etal-2022-saturated} provided a proof that AHAT can only recognize languages in the circuit class $\texttt{TC}^0$, where $\texttt{TC}^0$ is an extended circuit class of $\texttt{AC}^0$ by adding majority gates to $\texttt{AND}$ and $\texttt{OR}$ gates. \citet{barcelo2024logical} showed that AHAT can recognize languages within the linear temporal logic extended to require counting. Furthermore, \citet{merrill-sabharwal-2023-parallelism} showed that log-precision Transformers can only recognize the languages within the class of uniform $\texttt{TC}^0$. Furthermore, \citet{strobl2023averagehardattentiontransformersconstantdepth} showed that AHAT can also recognize the languages within the class of uniform $\texttt{TC}^0$.

\subsection{Theoretical analyses on specific languages}

On the other hand, some studies have focused on specific languages to examine the expressive power of Transformers,  particularly for the parity language within regular languages, the $\texttt{Dyck}_k$ language within context-free languages, and the $\texttt{Shuffle-Dyck}_k$ language. \citet{hahn-2020-theoretical} pointed out that Lipschitz-continuous Transformers cannot solve the parity task, $\texttt{Dyck}_1$, and $\texttt{Dyck}_2$ for arbitrary lengths. This is because when one character out of an input string of length $n$ is changed, the change in the output decays at $O(1/n)$, indicating that the performance of Transformers with restricted a Lipschitz constant approaches random guessing as the input length increases. Meanwhile, \citet{yao-etal-2021-self} and \citet{chiang-cholak-2022-overcoming} showed that the theoretical limitations presented by \citet{hahn-2020-theoretical} can be overcome by incorporating layer normalization because the Lipschitz constant of layer normalization can be $O(n)$. \citet{chiang-cholak-2022-overcoming} also showed that Transformers with layer normalization can solve the PARITY task by incorporating task-specific positional encoding $i/n, (-1)^i$.
Furthermore, \citet{bhattamishra-etal-2020-ability} theoretically showed that $O(k)$-width Transformers can recognize the $\texttt{Shuffle-Dyck}_k$ language \footnote{Intuitively, the $\texttt{Shuffle-Dyck}_k$ language is a set of strings composed of $k$ types of brackets, where all of the $k$ types of substrings are well-balanced. For instance, $\texttt{``([)]"}$ belongs to $\texttt{Shuffle-Dyck}_2$ not to $\texttt{Dyck}_2$. }, suggesting that  $O(k)$-width Transformers can process $k$ hierarchical structures in parallel.

In addition, there have also been studies that focus on how Transformers handle such hierarchical structures. \citet{ebrahimi-etal-2020-self} focused on the Dyck language and demonstrated that the stack states appear in the attention patterns, suggesting that the self-attention networks learn hierarchical structures within the attention layers. However, \citet{NEURIPS2023_79ba1b82} indicated that such attention patterns cannot be fully reliable. Moreover, \citet{NEURIPS2023_79ba1b82} provided a proof that a two-layer Transformer network with a width of $O(k^2D^2)$ can recognize $\texttt{Dyck}_k$. Furthermore, \citet{yao-etal-2021-self} provided a constructive proof that by using specific absolute positional encoding $i/n$, $3$-layer causal Transformers can recognize the $\texttt{Dyck}_{k, D}$ language. \citet{yao-etal-2021-self} also showed that $2$-layer causal Transformers with absolute positional encoding $i/n, i/n^3, n$ can generate the $\texttt{Dyck}_{k}$ language.

\subsection{Analyses on the role of uninformative tokens}
Moreover, there have been studies focusing on the importance of uninformative tokens --- the BOS token in GPT \citep{radford2018improving} and the CLS and SEP tokens in BERT \citep{devlin-etal-2019-bert}. \citet{clark-etal-2019-bert}, \citet{devlin-etal-2019-bert}, and \citet{kovaleva-etal-2019-revealing} observed that BERTs place relatively large attention on the CLS and SEP tokens. \citet{clark-etal-2019-bert} speculated that this phenomenon is for achieving $``\text{no-operations}"$. In addition, \citet{nye2021workscratchpadsintermediatecomputation} observed that in algorithmic tasks, special tokens such as the CLS token serve as scratchpads, contributing to performance improvement.

In contrast, although the BOS token cannot refer to other tokens under causal masking, \citet{ebrahimi-etal-2020-self} empirically showed that the presence of a starting token significantly improves the performance in recognizing the Dyck language. In addition, \citet{weiss2021thinking} showed that with a starting token, it is possible to determine how many tokens each head focuses on. Moreover, \citet{NEURIPS2023_4e85362c} showed that with the BOS token, Transformers can create specific absolute and relative positional encoding. Furthermore, \citet{xiao2024efficient} demonstrated that by slightly modifying the Transformer architecture with a fixed window size so that every token can refer to a starting token, Transformers perform significantly better. 
In light of these theoretical and empirical results, it has become evident that even tokens that do not have meaning independently are significant to enhance the performance of Transformers.

\section{Detailed Preliminaries}\label{app: preliminaries}
We provide preliminaries for the proofs in the following sections and detailed definitions that are omitted due to the lack of space.

\subsection{$\texttt{Shuffle-Dyck}_k$}\label{app: preliminaries, shuffle-dyck}

Following \citet{suzgun-etal-2019-lstm}, before defining the $\texttt{Shuffle-Dyck}_k$ language, we first define the shuffling operation over two strings $\sha :\Sigma^* \times \Sigma^* \rightarrow 2^{\Sigma^*}$ as follows:
\begin{align}
& u_1\sha\varepsilon=\varepsilon\sha u_1=\{u_1\}, \\
& \beta_1 u_1 \sha \beta_2 u_2 = \beta_1(u_1 \sha \beta_2 u_2) \cup \beta_2(\beta_1 u_1 \sha u_2)
\end{align}
for any $\beta_1, \beta_2 \in \Sigma$ and $u_1, u_2 \in \Sigma^*$. For instance, 
\begin{equation}
\begin{aligned}
\langle_1\rangle_1 \sha \langle_2\rangle_2 = \{&\langle_1\rangle_1\langle_2\rangle_2, \langle_1\langle_2\rangle_1\rangle_2,\langle_1\langle_2\rangle_2\rangle_1, \\
&\langle_2\rangle_2\langle_1\rangle_1, \langle_2\langle_1\rangle_2\rangle_1,\langle_2\langle_1\rangle_1\rangle_2\}.
\end{aligned}
\end{equation}

Moreover, we define the shuffling operation over $k$ strings $u_1, \cdots, u_k \in \Sigma^*$ and over $k$ languages $\mathcal{L}_1, \cdots, \mathcal{L}_k \subset \Sigma^*$ as follows:
\begin{align}
&\overset{k}{\underset{t=1}{\Sha}} u_t = \bigcup_{u \in {\Sha}_{t=1}^{k-1} u_t} u_k \sha u, \\
&\overset{k}{\underset{t=1}{\Sha}} \mathcal{L}_t = \bigcup_{u_1 \in \mathcal{L}_1, \cdots, u_k \in \mathcal{L}_k} \overset{k}{\underset{t=1}{\Sha}} u_t,
\end{align}
where $\overset{1}{\underset{t=1}{\Sha}} u_t = \{ u_1 \}$.

\begin{definition}[$\texttt{Shuffle-Dyck}_k$ language for language models]\label{def: shuffle dyck_k for language models formal}
The $\texttt{Shuffle-Dyck}_k$ language for language models is a language over an alphabet $\Sigma = \{\langle_t, \rangle_t\}_{t=1}^k \cup \{\texttt{<bos>}, \texttt{<eos>}\}$. 

Given $k$ $\texttt{Dyck}_1$ --- $\texttt{Dyck}_1^1, \cdots, \texttt{Dyck}_1^k$, where $\texttt{Dyck}_1^t$ is the $\texttt{Dyck}_1$ language over an alphabet $\{\langle_t, \rangle_t\}$ ---, the $\texttt{Shuffle-Dyck}_k$ language for language models is defined as follows:

\begin{equation}
\left\{\texttt{<bos>} w \texttt{<eos>}\,\left|\, w \in \overset{k}{\underset{t=1}{\Sha}} \texttt{Dyck}_1^t\right.\right\}
\end{equation}

Intuitively, the $\texttt{Shuffle-Dyck}_k$ language is a mixture of the $k$ $\texttt{Dyck}_1$ languages, and the ability to process the $\texttt{Shuffle-Dyck}_k$ language suggests that $k$ hierarchical structures can be processed in parallel. Figure \ref{fig: shuffle-dyck example} shows an example string that belongs to $\texttt{Shuffle-Dyck}_3$.
\end{definition}

\begin{definition}[$\texttt{Shuffle-Dyck}_k$ language generation process]\label{def: shuffle-dyck_k language generation process}
A language generation process $p(w_{i+1} \mid w_{0:i})$ over an alphabet $\Sigma = \{\langle_t, \rangle_t\}_{t=1}^k \cup \{\texttt{<bos>}, \texttt{<eos>}\}$ is called the $\texttt{Shuffle-Dyck}_k$ language generation process if
\begin{align}
&p(w_0 = ``\texttt{<bos>}" \mid \varepsilon) = 1 , \\
&\begin{aligned}
& p(w_{i+1} \mid w_{0:i} ) \\
& =\begin{cases}
p_{0}(w_{i+1}) & \text{if } \forall t \in [k]. \operatorname{d}(w_{0:i} \mid  t) = 0\\
p_{1}(w_{i+1}) & \text{otherwise }
\end{cases},
\end{aligned}
\end{align}
where $\operatorname{d}(w_{0:i} |  t)$ represents the depth of the substring of type $t$ extracted from $w_{0:i}$, and $p_{0}(w_{i+1}),p_{1}(w_{i+1})$ are defined as follows:
\begin{align}
&p_{0}(w_{i+1}) = 
\begin{cases}
r \pi_t & \text{if } w_{i+1} = ``\langle_t" \\
1-r & \text{if } w_{i+1} = ``\texttt{<eos>}" \\
0 & \text{otherwise}
\end{cases}, \\
&p_{1}(w_{i+1}) = 
\begin{cases}
\frac{q\pi_t}{Z} & \text{if } w_{i+1} = ``\langle_t" \\
\frac{(1-q)\overline{\pi}_t}{Z} & \begin{aligned}
&\text{if } w_{i+1} = ``\rangle_t" \\
&\,  \wedge\, \operatorname{d}(w_{0:i} \mid  t) > 0 \end{aligned} \\
0 & \text{otherwise}
\end{cases} ,
\end{align}
where $\boldsymbol{\pi}, \boldsymbol{\overline{\pi}} \in \Delta^{k-1}$, and 
\begin{equation}
Z = \sum_{t'=1}^k q \pi_{t'} + \sum_{t'\in\{t \mid  \operatorname{d}(w_{0:i}\mid t) > 0\}} (1-q)\overline{\pi}_{t'}.
\end{equation}

Hereafter, we explicitly write the $\texttt{Shuffle-Dyck}_k$ language generation process parameterized by $q, r, \boldsymbol{\pi}, \boldsymbol{\overline{\pi}}$ as $p_{\texttt{Shuffle-Dyck}_k}(\cdot; q, r, \boldsymbol{\pi}, \boldsymbol{\overline{\pi}})$.
\end{definition}

\begin{figure}
    \includegraphics[width=\linewidth]{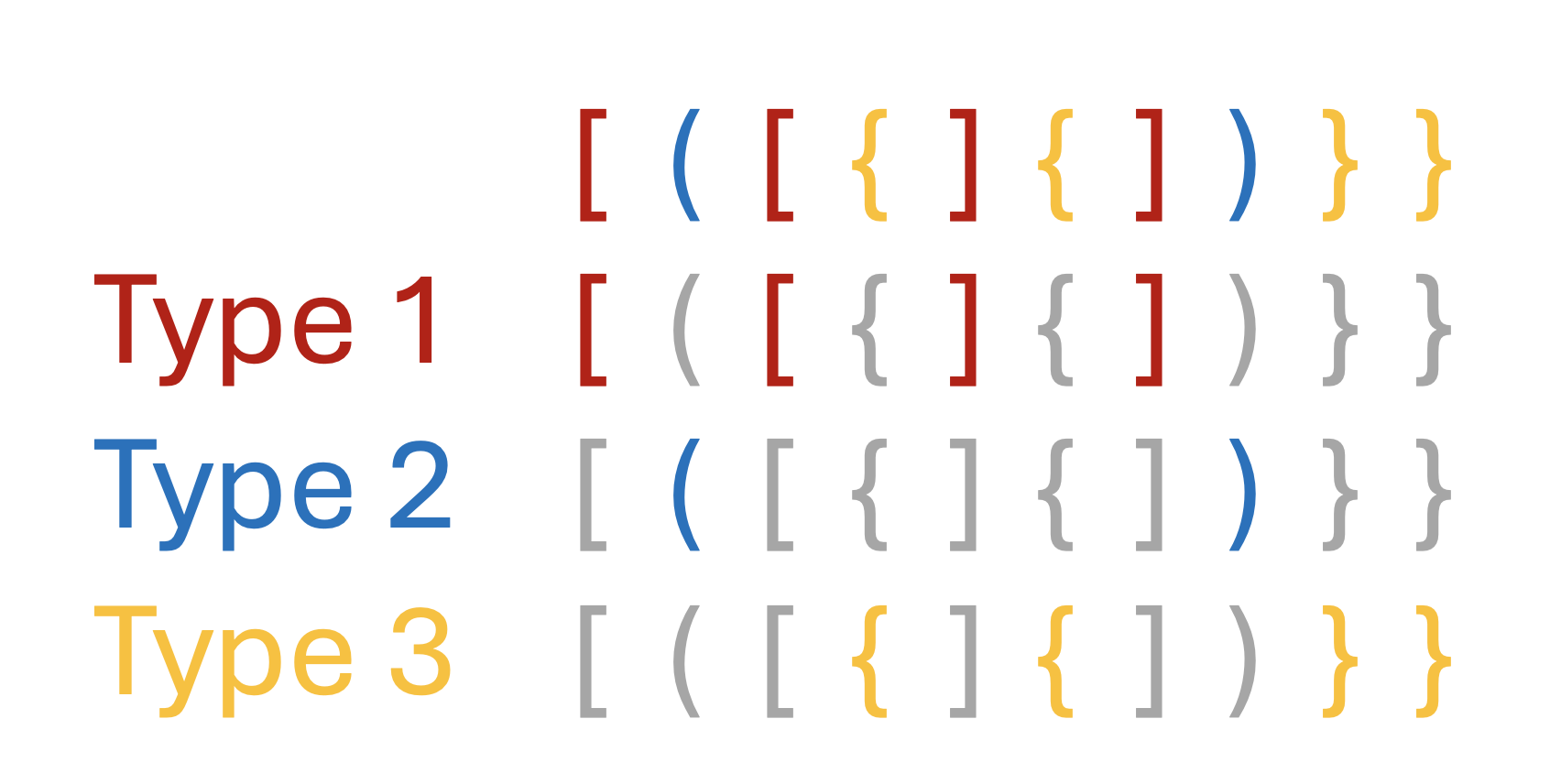}
    \caption{An example of string that belongs to $\texttt{Shuffle-Dyck}_3$ not to $\texttt{Dyck}_3$. Each substring of type $t \in \{1, 2, 3\}$ is properly balanced.}
    \label{fig: shuffle-dyck example}
\end{figure}

\subsection{Transformer Architecture}\label{app: preliminaries, transformer architecture}

We largely follow the Transformer architecture adopted in \citet{yao-etal-2021-self}; that is, we consider Transformer architecture composed of multiple Transformer blocks, each of which incorporates a self-attention layer and a feed-forward network layer. 

Let $L$ be the number of Transformer blocks, $d_{\mathrm{model}}$ be the dimension of the embedding vectors and hidden representations, $\Sigma$ be the vocabulary set, and $K$ be the vocabulary size.

Given an input string $w_{0:n} (= \texttt{<bos>} w_{1:n} )\in \Sigma^*$, which we identify with the sequence of one-hot vectors $\begin{bmatrix}\mathbf{e}_{w_0} & \cdots & \mathbf{e}_{w_n}\end{bmatrix} \in \mathbb{R}^{K\times (n+1)}$, the architecture process the string as follows:

\begin{align}
& \mathbf{x}_i^{(1)} = W_{\mathrm{emb}}\mathbf{e}_{w_i} + \mathbf{p}_i, \\
& \mathbf{h}_{i}^{(\ell)} = \operatorname{Att}\left(W_Q^{(\ell)}\mathbf{x}_{i}^{(\ell)}, W_K^{(\ell)}\mathbf{x}_{0:i}^{(\ell)}, W_V^{(\ell)}\mathbf{x}_{0:i}^{(\ell)}\right), \\
& \mathbf{x}_{i}^{(\ell+1)} = \operatorname{FFN}\left(\mathbf{h}_{i}^{(\ell)}; W_1^{(\ell)}, W_2^{(\ell)}, \boldsymbol{\beta}^{(\ell)}, \boldsymbol{\gamma}^{(\ell)}\right),
\end{align}
where 
\begin{itemize}
    \item $\mathbf{x}^{(\ell)}_i\in \mathbb{R}^{d_{\mathrm{model}}}$ is the $i$-th input representation to the $\ell$-th layer,
    \item $W_{\mathrm{emb}}\in\mathbb{R}^{d_{\mathrm{model}}\times K}$ is a linear embedding function,
    \item $\mathbf{p}_i \in \mathbb{R}^{d_{\mathrm{model}}}$ is the positional encoding at the position $i$,
    \item $\operatorname{Att}(\cdot)$ is an attention layer, which is parameterized by three matrices $W_Q^{(\ell)}, W_K^{(\ell)}, W_V^{(\ell)} \in \mathbb{R}^{{d_{\mathrm{model}}}\times {d_{\mathrm{model}}}}$,
    \item $\operatorname{FFN}(\cdot)$ is a feed-forward network layer, which is parameterized by two matrices $W_1^{(\ell)}, W_2^{(\ell)} \in \mathbb{R}^{{d_{\mathrm{model}}} \times {d_{\mathrm{model}}}}$ and $\boldsymbol{\beta}^{(\ell)}, \boldsymbol{\gamma}^{(\ell)} \in \mathbb{R}^{d_{\mathrm{model}}}$. 
\end{itemize}

Next, we describe the details of the attention and feed-forward network layers.

\subsubsection*{Attention layer}\label{subsubsec: preliminaries/Transformer Architecture/Attention layer}
We consider attention layers with causal masking and the residual connection \citep{he2015deepresiduallearningimage}. 
Specifically, the input sequence of length $n+1$ --- $\mathbf{x}_0^{(\ell)}, \cdots, \mathbf{x}_n^{(\ell)}$ --- is first processed with three token-wise linear transformations $W_Q^{(\ell)}, W_K^{(\ell)},W_V^{(\ell)}$, which create $3(n+1)$ vectors $\left\{W_Q^{(\ell)} \mathbf{x}_{i}^{(\ell)}, W_K^{(\ell)} \mathbf{x}_{i}^{(\ell)}, W_V^{(\ell)} \mathbf{x}_{i}^{(\ell)}\right\}_{i=0}^n$. Then, the $i$-th output $\mathbf{h}_i^{(\ell)}$ is calculated as follows:
\begin{align}
&\begin{aligned}
\boldsymbol{\alpha}_i^{(\ell)} &= \mathbb{S}\left(\left\langle W_Q^{(\ell)}\mathbf{x}_i^{(\ell)}, W_K^{(\ell)}\mathbf{x}_0^{(\ell)} \right\rangle, \right.\\
& \left. \quad \qquad \cdots, \left\langle W_Q^{(\ell)}\mathbf{x}_i^{(\ell)}, W_K^{(\ell)}\mathbf{x}_i^{(\ell)} \right\rangle\right),
\end{aligned} \\
&\begin{aligned} 
\mathbf{a}_i^{(\ell)}&= \sum_{j=0}^i \alpha_{i, j}^{(\ell)} W_V^{(\ell)}\mathbf{x}_j^{(\ell)} ,
\end{aligned} \\
&\begin{aligned}\mathbf{h}_i^{(\ell)} &= \operatorname{Att}\left(W_Q^{(\ell)}\mathbf{x}_{i}^{(\ell)}, W_K^{(\ell)}\mathbf{x}_{0:i}^{(\ell)}, W_V^{(\ell)}\mathbf{x}_{0:i}^{(\ell))}\right)
\\
& = \mathbf{x}_i^{(\ell)} + \mathbf{a}_i^{(\ell)}, 
\end{aligned}\end{align}
where $\langle \cdot, \cdot \rangle$ is a dot-product and $\mathbb{S}(\cdot)$ is a softmax operation.

\subsubsection*{Feed-forward network layer}\label{subsubsec: preliminaries/Transformer Architecture/Feed-forward network layer}

A feed-forward network layer is a token-wise transformation that maps $\mathbf{h}_i \mapsto \operatorname{FFN}(\mathbf{h}_i)$. In this paper, we implement a feed-forward network as two linear transformations with the ReLU activations. We adopt the residual connection \citep{he2015deepresiduallearningimage} and the RMS layer normalization \citep{NEURIPS2019_1e8a1942}. This architecture largely follows that proposed in \citet{yao-etal-2021-self} with a slight modification: we replace the standard layer normalization \citep{ba2016layernormalization} with the RMS layer normalization \citep{NEURIPS2019_1e8a1942}. Specifically, the feed-forward network transforms the vector $\mathbf{h}_i^{(\ell)}$ as follows:
\begin{equation}
\begin{aligned}
&\operatorname{FFN}\left(\mathbf{h}_{i}^{(\ell)}; W_1^{(\ell)}, W_2^{(\ell)},\boldsymbol{\beta}^{(\ell)}, \boldsymbol{\gamma}^{(\ell)}\right) \\
&\,\, = \mathbf{h}_{i}^{(\ell)} + W_2^{(\ell)} \left[\left( \operatorname{LN}_\mathrm{RMS} \left(W_1^{(\ell)} \mathbf{h}_{i}^{(\ell)}\right)\right)\right]_+,
\end{aligned}
\end{equation}
where $[\cdot]_{+}$ is a ReLU activation and $\operatorname{LN}_\mathrm{RMS}(\cdot)$ is the RMS layer normalization \citep{NEURIPS2019_1e8a1942} parameterized by $\boldsymbol{\beta}^{(\ell)}, \boldsymbol{\gamma}^{(\ell)} \in \mathbb{R}^{d_\mathrm{model}}$. \citet{NEURIPS2019_1e8a1942} empirically showed that the RMS layer normalization reduces the training time compared to the conventional layer normalization while maintaining their performances.
Specifically,
\begin{equation}
\operatorname{LN}_\mathrm{RMS}(\mathbf{y}) = \boldsymbol{\gamma}^{(\ell)} \odot \frac{\mathbf{y}}{\operatorname{RMS}(\mathbf{y})}+\boldsymbol{\beta}^{(\ell)},
\end{equation}
where $\odot$ is an element-wise multiplication and 
\begin{equation}
\operatorname{RMS}(\mathbf{y}) = \sqrt{\frac{1}{{d_{\mathrm{model}}}}\sum_{d=1}^{d_{\mathrm{model}}} y_d^2}.
\end{equation}
The RMS layer normalization has been adopted in recent models such as Llama \citep{touvron2023llamaopenefficientfoundation} and Llama 2 \citep{touvron2023llama2openfoundation}.

\section{Notation}
The notations used in this paper are summarized in Table \ref{tab: table of notations}.
\begin{table*}
  \centering
  \begin{tabular}{ll}
    \hline
    \textbf{Variable} & \textbf{Definition} \\
    \hline
    $k$ & Number of bracket types\\
    $\texttt{Dyck}_k \, /\, \texttt{Shuffle-Dyck}_k$     & The Dyck / Shuffle-Dyck languages with $k$ types of bracket pairs \\
    $D$     & Maximum depth of the Dyck language \\
    $\texttt{Dyck}_{k, D}$     & The $\texttt{Dyck}_k$ language with bounded depth $D$ \\
    $\Sigma \, / \, K$ & Vocaburary set / Vocaburary size\\
    $t$     & Bracket type \\
    $``\langle_t"\, / \,``\rangle_t"$ & Open / Closed bracket of type $t$ \\
    $``\texttt{<bos>}" \, / \, ``\texttt{<eos>}"\,/\, \varepsilon$ & BOS / EOS token / Empty string \\
    $\sha\,/\,\Sha$ & Shuffling operation over two strings / multiple strings or languages \\
    $n \, / \, n_\mathrm{max}$     & Length of input string / Maximum length of the training dataset\\
    $i, j$     & Index of position \\
    $w_{0:i} (=\texttt{<bos>} w_{1:i})$   & Prefix of string $w_{0:n}$ with a length of $i+1$\\
    $\mathcal{L}$ & Language \\
    $\operatorname{d}(\cdot)$ & Depth function \\
    $p_\mathcal{L}(\cdot)$ & Language generation process of language $\mathcal{L}$ (Definition \ref{def: language generation process}) \\
    $q,r,\boldsymbol{\pi}\,/\, q,r,\boldsymbol{\pi},\boldsymbol{\overline{\pi}}$ & Parameters of the $\texttt{Dyck}_k$ / $\texttt{Shuffle-Dyck}_k$ language generation process \\
    $L$     & Number of Transformer blocks \\
    $d_{\mathrm{model}}$     & Dimension of token representation \\
    $W_{\mathrm{emb}}$ & Token embedding matrix \\
    $\mathbf{p}_i$  & Positional encoding at position $i$ \\
    $\mathbf{x}_i^{(\ell)}$ & Input vector to the $\ell$-th layer at position $i$ \\
    $\mathbf{h}_i^{(\ell)} \left(=\mathbf{x}_i^{(\ell)} + \mathbf{a}_i^{(\ell)}\right)$ & Output vector of the $\ell$-th attention layer at position $i$ \\
    $\mathbf{t}_i\,/\, o_i\,/\, s_i$ & Bracket-type embedding / Openness of bracket / Starting signal \\
    $\hat{s}$ & Pseudo starting signal \\
    $\operatorname{Att}(\cdot) \,/\, \operatorname{FFN}(\cdot)$ & Self-attention layer / Feed-forward network layer\\
    $W_Q^{(\ell)} \, / \, W_K^{(\ell)} \,/  \,W_V^{(\ell)} $ & Query / key / value matrices that parameterize $\operatorname{Att}(\cdot)$ in $\ell$-th layer\\
    $W_1^{(\ell)} \, / \, W_2^{(\ell)} $ & Weights of the first / second linear transformation in $\operatorname{FFN}(\cdot)$ in $\ell$-th layer\\
    $\boldsymbol{\alpha}_i^{(\ell)}$ & Attention weights of query at position $i$ in $\ell$-th attention layer \\
    $\operatorname{LN}(\cdot)\, /\, \operatorname{LN}_{\mathrm{RMS}}(\cdot)$ & The layer normalization / The RMS layer normalization\\
    $\operatorname{RMS}(\cdot)$ & Root mean square \\
    $\boldsymbol{\beta}^{(\ell)}, \boldsymbol{\gamma}^{(\ell)}$ & Parameters of the RMS layer normalization in $\ell$-th attention layer\\
    $\langle \cdot, \cdot \rangle\,/\,\mathbb{S}(\cdot)\,/\,\Delta^{K-1}$ &  Dot product / Softmax function / $(K{-1})$-dimensional probability simplex \\
    $\mathcal{T}$ & Transformer $\Sigma^* \rightarrow \mathbb{R}^{* \times d_\mathrm{model}}$, where $*$ represents an arbitrary length. \\
    $f_{\mathrm{rec}}\,/\,f_{\mathrm{gen}}$ & Recognizer head $\mathbb{R}^{d_\mathrm{model}} \rightarrow \mathbb{R}$ / Generator head $\mathbb{R}^{d_\mathrm{model}} \rightarrow \mathbb{R}^{K}$ \\
    $\operatorname{sgn}(\cdot)$ & Sign function $\mathbb{R} \rightarrow \{1, -1\}$ \\
    $p$ & Probability distribution over strings \\
    $(\Sigma^*, \mathcal{F}, P) \, /\, (\Sigma^*, \mathcal{F}', P') $ & Probability space / Complete extension of $(\Sigma^*, \mathcal{F}', P')$ \\
    $\{w_{1:n}\}$ & Singleton set of a string $w_{1:n}$ \\
    $a$ & attention score on a starting token $\texttt{<bos>}$ \\
    $\phi(\cdot)\, / \, \theta(\cdot)$  &  Function that converts position / depth to the angle \\
    $Q(w_{0:i})$ & Propositional variable that indicates $w_{0:i}$ is a prefix for the $\texttt{Dyck}_k$ language \\
    $\operatorname{q}(w_{0:i})$ & Variable associated with the propositional variable $Q(w_{0:i})$ \\
    $\epsilon$ & Small value \\
    $I$ & Identity matrix \\
    \hline
  \end{tabular}
  \caption{Table of notations. }
  \label{tab: table of notations}
\end{table*}

\section{Vector Representation}\label{app: vector representation}
We define the vector representation that is used in the following sections. Specifically, the vector representation of the alphabet $\Sigma = \{\langle_i, \rangle_i\}_{i=1}^k \cup \{\texttt{<bos>}, \texttt{<eos>}\} $ takes the following form:

\begin{equation}
\begin{aligned}
&\mathbf{x}_i \left(\in \mathbb{R}^{d_{\mathrm{model}}}\right)\\
&=\begin{bmatrix}
\mathbf{t}_i \\
o_i \\
s_i \\
1 \\
\mathbf{0}
\end{bmatrix}
\begin{matrix*}[l]
\} \lceil \log_2 k \rceil \text{ dim.} \\
\} 1 \text{ dim.}\\
\} 1 \text{ dim.}\\
\} 1 \text{ dim.}\\
\} (d_{\mathrm{model}} - \lceil \log_2 k \rceil  - 3) \text{ dim.}
\end{matrix*}.
\end{aligned}
\end{equation}

where 
\begin{itemize}
    \item $\mathbf{t}_i\in\{-1, 1\}^{\lceil \log_2 k \rceil}$ represents a bracket-type embedding. Here, bracket types are encoded by $\pm 1$ binary encoding; that is, for $k=4$, $4$ types are encoded into $\begin{bmatrix}-1 \\ -1\end{bmatrix}, \begin{bmatrix}-1 \\ 1\end{bmatrix}, \begin{bmatrix}1 \\ -1\end{bmatrix}, \begin{bmatrix}1 \\1\end{bmatrix}$. Note that the $\mathbf{t}_i$ of the two special tokens $\{``\texttt{<bos>}", ``\texttt{<eos>}"\}$ are defined as zero vectors.
    
    \item $o_i \in \{-1, 0, 1\}$ represents the openness: $o_i = 1$ for open brackets, $o_i = -1$ for closed brackets, and $o_i = 0$ for two special tokens $``\texttt{<bos>}"$ and $``\texttt{<eos>}"$.
    
    \item $s_i \in \{0, 1\}$ is a starting signal that indicates whether the token is the starting token $``\texttt{<bos>}"$ or not. This value is set to $1$ for $``\texttt{<bos>}"$ and $0$ for the other tokens.
    \item $\mathbf{0} \in \mathbb{R}^{{d_{\mathrm{model}}}-\lceil \log_2 k \rceil - 3}$ denotes a zero vector. These dimensions are used as a memory and a scratchpad.
\end{itemize}

These vector representations are implemented with the following embedding matrix:

\begin{equation}
\begin{aligned}
&W_{\mathrm{emb}} \left(\in \mathbb{R}^{{d_{\mathrm{model}}}\times K}\right)\\
&= 
\begin{matrix}
\begin{bmatrix}
\makebox[1em]{$\mathbf{t}_1$} & \makebox[1.4em]{$\cdots$} & \makebox[1em]{$\mathbf{t}_k$} & \makebox[1.3em]{$\mathbf{t}_1$} & \makebox[1.4em]{$\cdots$} & \makebox[1.3em]{$\mathbf{t}_k$} & \makebox[2.5em]{$\mathbf{0}$} & \makebox[2.5em]{$\mathbf{0}$}\\
1 & \cdots & 1 & -1 & \cdots & -1 & 0 & 0\\
0 & \cdots & 0 & 0 & \cdots & 0 & 1 & 0\\
1 & \cdots & 1 & 1 & \cdots & 1 & 1 & 1\\
\mathbf{0} & \cdots & \mathbf{0} & \mathbf{0} & \cdots & \mathbf{0} & \mathbf{0} & \mathbf{0}
\end{bmatrix} \\
\begin{matrix}
\uparrow & & \uparrow & \uparrow & & \uparrow & \uparrow & \uparrow  \\
\makebox[1em]{$\langle_1$} & \makebox[1.4em]{$\cdots$} & \makebox[1em]{$\langle_k$} & \makebox[1.3em]{$\rangle_1$} & \makebox[1.4em]{$\cdots$} & \makebox[1.3em]{$\rangle_k$}  & \makebox[2.5em]{$\texttt{<bos>}$} & \makebox[2.5em]{$\texttt{<eos>}$}
\end{matrix}
\end{matrix} .
\end{aligned}
\end{equation}

\setcounter{proposition}{0}
\section{Proof of Proposition \ref{prop:existence of language process}}\label{app: proof of prop 1}
\begin{proposition}[Restated]
For any language $\mathcal{L}$ over a finite alphabet and any probability distribution $p$ over $\mathcal{L}$, there exists a language generation process that produces the given probability distribution $p$. In other words, there exists a language generation process $p_\mathcal{L}(w_{i+1} \mid \texttt{<bos>} w_{1:i})$ such that for any string $w_{1:n} \in \mathcal{L}$, 
\begin{equation}
p(w_{1:n}) = p_\mathcal{L}(\texttt{<bos>} w_{1:n}\texttt{<eos>}),
\end{equation}
where
\begin{equation}
\begin{aligned}
p_\mathcal{L}&(\texttt{<bos>} w_{1:n}\texttt{<eos>}) \\
&= p_\mathcal{L}(\texttt{<bos>}) \\
& \quad \cdot \left(\prod_{i=1}^n p_\mathcal{L}(w_{i}\mid \texttt{<bos>} w_{1:i-1})\right) \\
& \quad \cdot p_\mathcal{L}(\texttt{<eos>}\mid \texttt{<bos>} w_{1:n}).
\end{aligned}
\end{equation}

\begin{proof}
We introduce a probability space to handle probabilities over the countably infinite set $\Sigma^*$. Given an alphabet $\Sigma$ and a probability space $(\Sigma^*, \mathcal{F}, P)$ over $\Sigma^*$, we can assume that for any $w_{1:n} \in \Sigma^*$ such that $p(w_{1:n}) > 0$, the singleton set $\{w_{1:n}\}$ belongs to $\mathcal{F}$. Here, there exists a unique minimal complete extension of the probability space $(\Sigma^*, \mathcal{F}', P')$, where for any string $w_{1:n} \in \Sigma^{*}$, the singleton set $\{w_{1:n}\} \in \mathcal{F}'$, indicating that $\mathcal{F}' = 2^{\Sigma^*}$. Therefore, any subset in $\Sigma^*$ is $\mathcal{F}'$-measurable.

Next, we define $\operatorname{Cyl}(w_{1:n})$ for a string $w_{1:n} \in \Sigma^*$ as follows:
\begin{align}
&\operatorname{Cyl}(w_{1:n}) = \{w^\prime_{1:n^\prime} \mid n' \geq n \wedge w^\prime_{1:n} = w_{1:n}\}.
\end{align}
Intuitively, $\operatorname{Cyl}(w_{1:n})$ is a string set whose elements have $w_{1:n}$ as a prefix.
Since $\{w_{1:n}\}$ and $ \operatorname{Cyl}(w_{1:n})$ are $\mathcal{F}'$-measurable, we can calculate the probability measure $P'(\{w_{1:n}\})$ and $ P'(\operatorname{Cyl}(w_{1:n}))$.

Then, the language generation process defined below corresponds to the probability distribution $p$.
\begin{equation}
\begin{aligned}
&p_\mathcal{L}(\texttt{<bos>}\mid\varepsilon) = 1, \\
&p_\mathcal{L}(w_{i+1}\mid \texttt{<bos>} w_{1:i}) \\
&= 
\begin{cases}
p_\mathcal{L}^{\mathrm{pos}} & \text{if }P'(\operatorname{Cyl}(w_{1:i})) > 0 \\
p_\mathcal{L}^{\mathrm{null}} & \text{otherwise}
\end{cases},
\end{aligned}
\end{equation}
where
\begin{align}
&\begin{aligned}
&p_\mathcal{L}^{\mathrm{pos}}(w_{i+1}\mid \texttt{<bos>} w_{1:i}) \\
&=\begin{cases}
\begin{aligned}\frac{P'(\{w_{1:i}\})}{P'(\operatorname{Cyl}(w_{1:i}))} \end{aligned}& \text{if }w_{i+1} = \texttt{<eos>} \\
\begin{aligned}\frac{P'(\operatorname{Cyl}(w_{1:i+1}))}{P'(\operatorname{Cyl}(w_{1:i}))}\end{aligned} & \text{otherwise }
\end{cases}, 
\end{aligned} \\
&\begin{aligned}
&p_\mathcal{L}^{\mathrm{null}}(w_{i+1}\mid \texttt{<bos>} w_{1:i}) \\
&=\begin{cases}
1 & \text{if }w_{i+1} = \texttt{<eos>} \\
0 & \text{otherwise }
\end{cases}. 
\end{aligned} \\
\end{align}

This is because, for any $w_{1:n} \in \Sigma^*$ such that $p(w_{1:n}) > 0$, 
\begin{equation}
\begin{aligned}
&p_\mathcal{L}(\texttt{<bos>}w_{1:n}\texttt{<eos>}) \\
&= p_\mathcal{L}(\texttt{<bos>}) \\
& \quad \cdot \prod_{i=1}^n p_\mathcal{L}(w_{i}\mid \texttt{<bos>} w_{1:i-1}) \\
& \quad \cdot p_\mathcal{L}(\texttt{<eos>}\mid \texttt{<bos>} w_{1:n}) \\
&= \prod_{i=1}^n \frac{P'(\operatorname{Cyl}(w_{1:i}))}{P'(\operatorname{Cyl}(w_{1:i-1}))} \\
& \quad \cdot \frac{P'(\{w_{1:n}\})}{P'(\operatorname{Cyl}(w_{1:n}))} \\
&= \frac{P'(\{w_{1:n}\})}{P'(\operatorname{Cyl}(\varepsilon))} \\
&= P'(\{w_{1:n}\}) = p(w_{1:n}).
\end{aligned}
\end{equation}
\end{proof}
\end{proposition}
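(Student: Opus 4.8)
The plan is to realize the given distribution $p$ by the usual autoregressive chain rule, in which each conditional probability is read off as a ratio of \emph{prefix masses}. Because $\mathcal{L}$ may be countably infinite, I would first install a probability space on $\Sigma^{*}$ so that these masses are genuine measures rather than possibly ill-defined infinite sums.

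Concretely, I would take a probability space $(\Sigma^{*}, \mathcal{F}, P)$ with $P(\{w_{1:n}\}) = p(w_{1:n})$ for every string carrying positive mass, and pass to its completion so that the $\sigma$-algebra is all of $2^{\Sigma^{*}}$; then every singleton and every cylinder set
\[
\operatorname{Cyl}(w_{1:i}) = \{\, w'_{1:n'} : n' \geq i,\ w'_{1:i} = w_{1:i} \,\}
\]
is measurable. Intuitively $\operatorname{Cyl}(w_{1:i})$ collects all strings extending the prefix $w_{1:i}$, and $P(\operatorname{Cyl}(w_{1:i}))$ is the total probability mass surviving past that prefix.

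I would then define the language generation process, on prefixes of positive mass, by
\[
p_{\mathcal{L}}(w_{i+1} \mid \texttt{<bos>} w_{1:i}) = \frac{P(\operatorname{Cyl}(w_{1:i}\,w_{i+1}))}{P(\operatorname{Cyl}(w_{1:i}))}, \qquad p_{\mathcal{L}}(\texttt{<eos>} \mid \texttt{<bos>} w_{1:i}) = \frac{P(\{w_{1:i}\})}{P(\operatorname{Cyl}(w_{1:i}))},
\]
and declare $\texttt{<eos>}$ to be emitted deterministically on any prefix of zero mass (these never occur along a positive-probability string, so the choice is immaterial). That this is a valid categorical distribution follows from the disjoint decomposition $\operatorname{Cyl}(w_{1:i}) = \{w_{1:i}\} \sqcup \bigsqcup_{\sigma} \operatorname{Cyl}(w_{1:i}\sigma)$: applying countable additivity and dividing by $P(\operatorname{Cyl}(w_{1:i}))$ shows the $\texttt{<eos>}$ mass together with the continuation masses sum to $1$.

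Finally I would substitute these ratios into the factorized expression for $p_{\mathcal{L}}(\texttt{<bos>} w_{1:n} \texttt{<eos>})$. The continuation factors telescope to $P(\operatorname{Cyl}(w_{1:n}))/P(\operatorname{Cyl}(\varepsilon))$, and multiplying by the stopping factor $P(\{w_{1:n}\})/P(\operatorname{Cyl}(w_{1:n}))$ leaves $P(\{w_{1:n}\})/P(\operatorname{Cyl}(\varepsilon))$; since $\operatorname{Cyl}(\varepsilon) = \Sigma^{*}$ has measure $1$, this is exactly $p(w_{1:n})$. The main obstacle is not the telescoping algebra but the measure-theoretic bookkeeping that makes prefix masses meaningful over an infinite language: one must verify that each cylinder is measurable and that its splitting into the singleton plus the one-token extensions is a bona fide countable disjoint union, so that additivity licenses the normalization step. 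Once the completed space $2^{\Sigma^{*}}$ is in hand this becomes routine, as every sum involved is dominated by the total mass $1$.
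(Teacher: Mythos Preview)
Your proposal is correct and follows essentially the same argument as the paper: install a probability space on $\Sigma^*$, pass to the completion so that cylinders and singletons are measurable, define the conditionals as ratios $P(\operatorname{Cyl}(w_{1:i+1}))/P(\operatorname{Cyl}(w_{1:i}))$ and $P(\{w_{1:i}\})/P(\operatorname{Cyl}(w_{1:i}))$, handle zero-mass prefixes by emitting $\texttt{<eos>}$ deterministically, and conclude by telescoping. If anything, your version is slightly more careful than the paper's, since you explicitly check that the conditionals form a valid categorical distribution via the disjoint decomposition of $\operatorname{Cyl}(w_{1:i})$.
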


\section{Proof of Proposition \ref{prop: correspondence of dyck_k language and language generation process}}\label{app: proof of prop 2}

\begin{proposition}[Restated]
For any length $n$ and $\texttt{Dyck}_k$ language generation process $p_{\texttt{Dyck}_k}(\cdot; q, r, \boldsymbol{\pi})$, there exists $\epsilon_n$ such that if $\boldsymbol{\pi} > 0$ then 
\begin{equation}
\begin{aligned}
&p_{\texttt{Dyck}_k}(\texttt{<bos>} w_{1:n} \texttt{<eos>};q, r, \boldsymbol{\pi}) \\
&\begin{cases}
\geq \epsilon_n & \text{if } w_{1:n} \in \texttt{Dyck}_k \\
= 0 & \text{if } w_{1:n} \notin \texttt{Dyck}_k \\
\end{cases}
\end{aligned}
\end{equation}
holds. 

\begin{proof}
When $w_{1:n} \in \texttt{Dyck}_k$, 
\begin{equation}
\begin{aligned}
&p_{\texttt{Dyck}_k}(\texttt{<bos>} w_{1:n} \texttt{<eos>}) \\
&= p_{\texttt{Dyck}_k}(\texttt{<eos>} \mid \texttt{<bos>} w_{1:n}) \\
&\qquad \cdot p_{\texttt{Dyck}_k}(\texttt{<bos>} w_{1:n})\\
& \qquad \vdots \\
&= p_{\texttt{Dyck}_k}(\texttt{<eos>} \mid \texttt{<bos>} w_{1:n}) \\
& \qquad \cdot \left(\prod_{j=1}^n p_{\texttt{Dyck}_k}(w_j \mid \texttt{<bos>} w_{1:j-1}) \right) \\
& \qquad \cdot p_{\texttt{Dyck}_k}(\texttt{<bos>})\\
& \geq (1-r) \cdot  \left(\min \{r, 1-q, q\pi_\mathrm{min}\}\right)^n \, (=: \epsilon_n) 
\end{aligned}
\end{equation}
where $\pi_\mathrm{min} = \min \left(\{\pi_t | 1 \leq t \leq k\}\right)$.

On the other hand, when $w_{1:n} \notin \texttt{Dyck}_k$, either $\operatorname{d}(w_{1:n}) > 0$ or $w_{1:n}$ has some invalid prefixes. If $\operatorname{d}(w_{1:n}) > 0$, $p_{\texttt{Dyck}_k}(\texttt{<eos>} \mid \texttt{<bos>} w_{1:n}) = 0$, indicating $p(\texttt{<bos>} w_{0:n} \texttt{<eos>} ) = 0$. If $w_{1:n}$ has some incorrect prefixes, regarding the shortest prefix $w_{1:j}$, either $w_{j}$ is an invalid closed 
bracket or a token other than brackets. In both cases, $p_{\texttt{Dyck}_k}(w_j \mid \texttt{<bos>} w_{1:j-1}) = 0$ holds, indicating $p_{\texttt{Dyck}_k}(\texttt{<bos>} w_{1:n} \texttt{<eos>} ) = 0$.
\end{proof}
\end{proposition}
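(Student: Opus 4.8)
The plan is to treat the two cases of the statement separately, exploiting the chain-rule factorization of the joint probability guaranteed by Proposition~\ref{prop:existence of language process} together with Definition~\ref{def: dyck_k language generation process}. Since $p_{\texttt{Dyck}_k}(\texttt{<bos>}\mid\varepsilon)=1$, the joint probability collapses to
\begin{equation}
p_{\texttt{Dyck}_k}(\texttt{<bos>} w_{1:n}\texttt{<eos>}) = \left(\prod_{i=1}^{n} p_{\texttt{Dyck}_k}(w_i\mid \texttt{<bos>} w_{1:i-1})\right)\cdot p_{\texttt{Dyck}_k}(\texttt{<eos>}\mid \texttt{<bos>} w_{1:n}).
\end{equation}
Every factor on the right is a value of either $p_0$ or $p_1$, so the entire argument reduces to deciding which of these factors can vanish and how small the nonvanishing ones can be.

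For the first case, suppose $w_{1:n}\in\texttt{Dyck}_k$. Then every prefix is a legal Dyck prefix, so each emitted token receives positive conditional probability: an open bracket $\langle_t$ contributes $r\pi_t$ or $q\pi_t$ according to whether the current depth is $0$ or positive, a matching closed bracket $\rangle_{t_\mathrm{valid}}$ contributes $1-q$ at positive depth, and the terminal $\texttt{<eos>}$ is requested at depth $0$, contributing $1-r$. Using the hypothesis $\boldsymbol{\pi}>0$ to set $\pi_{\mathrm{min}}=\min_t \pi_t>0$, each of the $n$ internal factors is bounded below by $c:=\min\{r\pi_{\mathrm{min}},\,q\pi_{\mathrm{min}},\,1-q\}>0$, whence $p_{\texttt{Dyck}_k}(\texttt{<bos>} w_{1:n}\texttt{<eos>})\geq (1-r)\,c^{\,n}=:\epsilon_n$, a bound depending only on $n$ and the fixed parameters $q,r,\boldsymbol{\pi}$ but not on the particular string.

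For the second case, suppose $w_{1:n}\notin\texttt{Dyck}_k$; I would locate a single factor that is exactly $0$. If every prefix of $w_{1:n}$ is a legal Dyck prefix yet $\operatorname{d}(\texttt{<bos>} w_{1:n})>0$, then $\texttt{<eos>}$ is requested at positive depth, where $p_1(\texttt{<eos>})=0$, so the terminal factor vanishes. Otherwise some prefix is already illegal, and for the \emph{shortest} such prefix $w_{1:j}$ the offending token $w_j$ is a closing bracket emitted either at depth $0$ (so $p_0(\rangle_t)=0$) or with a type $t\neq t_\mathrm{valid}$ at positive depth (so $p_1(\rangle_t)=0$). In each situation the corresponding conditional is $0$, forcing the whole product to be $0$.

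The main obstacle is the bookkeeping in the second case: one must argue exhaustively that every way a string can fall outside $\texttt{Dyck}_k$ pins down exactly one conditional in the factorization that is zero, which requires correctly identifying the first point of violation and matching each failure mode to the corresponding zero entry of $p_0$ or $p_1$ from Definition~\ref{def: dyck_k language generation process}. By contrast, the first case is routine once the uniform per-step lower bound $c$ has been isolated, so the bulk of the care goes into the failure-mode analysis.
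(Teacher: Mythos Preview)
Your proposal is correct and follows essentially the same approach as the paper: chain-rule factorization, a uniform per-step lower bound for strings in $\texttt{Dyck}_k$, and locating a single zero conditional for strings outside $\texttt{Dyck}_k$. The only minor differences are that your constant $c=\min\{r\pi_{\min},\,q\pi_{\min},\,1-q\}$ differs from the paper's $\min\{r,\,1-q,\,q\pi_{\min}\}$ (yours is the safer choice, since a factor $r\pi_t$ need not dominate $r$), and the paper's failure-mode enumeration also includes the case where the offending token $w_j$ is a non-bracket symbol ($\texttt{<bos>}$ or $\texttt{<eos>}$), which you should add to make your case analysis truly exhaustive.
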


\section{Proof of Theorem \ref{theorem: transformers with bos recognize dyck_k}}\label{app: dyck recognition with bos}
In this section, we present a constructive proof that Transformers without positional encoding can recognize the $\texttt{Dyck}_k$ language using the BOS token. We restate Theorem \ref{theorem: transformers with bos recognize dyck_k} for convenience. 

\begin{theorem}[Restated, Transformers with a starting token, $\texttt{Dyck}_k$ recognition]
For all $k$, there exists a 5-layer $O(\log k)$-width causal Transformer without positional encoding that recognizes the $\texttt{Dyck}_k$ language. Each layer incorporates both the residual connection and the layer normalization. 
This network is followed by a fully-connected layer and a sign function to output an acceptance signal.
\end{theorem}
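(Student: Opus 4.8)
The plan is to implement the five-layer pipeline of the sketch, reducing membership in $\texttt{Dyck}_k$ to the conjunction of two conditions evaluated at the last position: $w_{1:n}\in\operatorname{Pre}(\texttt{Dyck}_k)$ and $\operatorname{d}(w_{0:n})=0$. Every token keeps the fixed representation $(\mathbf{t}_i,o_i,s_i,1,\mathbf{0})$, so all scratch quantities live in the $\mathbf{0}$-block and the width stays $O(\log k)$, namely $\lceil\log_2 k\rceil+O(1)$. The only genuinely non-local requirement is prefix validity, which I would localize into a position-wise predicate $Q(w_{0:i})$ and then globalize by a conjunction over $i$.

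First I build the pseudo positional code (Layer 1). Using the starting signal $s_i$ I set the query/key maps so that each query at position $i$ scores the $\texttt{<bos>}$ key by the constant $a$ and every non-$\texttt{<bos>}$ key by $0$; under causal masking the softmax mass on $\texttt{<bos>}$ is $e^{a}/(e^{a}+i)$, whence $i e^{-a}$ is recoverable. Placing the pair $(1,\,i e^{-a})$ into two scratch coordinates and applying $\operatorname{LN}_{\mathrm{RMS}}$ rescales it to a fixed multiple of $(\cos\phi(i),\sin\phi(i))$ with $\phi(i)=\tan^{-1}(i e^{-a})$; since $\phi$ is strictly increasing, this is a faithful, order-preserving positional embedding. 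Layers 2 and 3 count depth: uniform attention over $0,\dots,i$ with value $o_j$ returns the normalized depth $\operatorname{d}(w_{0:i})/(i+1)$, which I convert to the absolute depths $\operatorname{d}(w_{0:i})$ and $\operatorname{d}(w_{0:i})+1$ (the depth the enclosing opener must carry) using the position code from Layer 1, each then re-encoded as a circular angle $\theta(\cdot)$ so that equality of depths becomes alignment of unit vectors.

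In Layer 4 each closed bracket at position $i$ locates its candidate opener. I design the attention logit as a depth term plus a position term: the depth term, formed from the $\theta$-embeddings, is large only when a key has depth $\operatorname{d}(w_{0:i})+1$, while the position term $\cos(\phi(i)-\phi(j))$ increases with $j<i$, so among depth-matched keys the most recent wins, which in any valid prefix is exactly the immediately enclosing opener. Reading back its type $\mathbf{t}_j$ and comparing it with the query's own $\mathbf{t}_i$ inside the feed-forward layer yields $Q(w_{0:i})$, set to \texttt{True} precisely when $w_i$ is an opener, when $i=0$, or when $w_i$ is a closer matched to an opener of identical type; this is correct whenever $w_{1:i-1}\in\operatorname{Pre}(\texttt{Dyck}_k)$. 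Layer 5 then aggregates: a uniform attention computes $\bigwedge_{i=1}^{n}Q(w_{0:i})$ by thresholding the averaged indicator (a single \texttt{False} drops it below the all-\texttt{True} level), certifying $w_{1:n}\in\operatorname{Pre}(\texttt{Dyck}_k)$, while in parallel the recovered $\operatorname{d}(w_{0:n})$ is tested against $0$. The head $f_{\mathrm{rec}}$ outputs a positive value iff both checks pass, so $\operatorname{sgn}(f_{\mathrm{rec}}(\mathcal{T}(w_{0:n})_n))$ decides membership.

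I expect Layer 4 to be the main obstacle. Softmax only approximates the hard ``nearest depth-matched opener'' selection, and since $\phi(i)=\tan^{-1}(i e^{-a})\to\pi/2$ the consecutive position angles crowd together as $i$ grows, so no fixed logit scale keeps the attention concentrated uniformly in the length $n$ --- exactly the regime in which \citet{hahn-2020-theoretical}'s impossibility for Lipschitz-bounded models bites. The way out, as in \citet{yao-etal-2021-self} and \citet{chiang-cholak-2022-overcoming}, is that $\operatorname{LN}_{\mathrm{RMS}}$ need not be Lipschitz-bounded: its effective Lipschitz constant can be $O(n)$, which lets the feed-forward layer amplify the vanishing angular gap between the intended opener and its competitors back to an $\Omega(1)$ margin before the ReLU threshold. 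I would therefore quantify this gap as a function of $a$ and the minimal depth and position separations, show that it remains positive at every length once rescaled through the normalization, and conclude that the residual softmax mass on wrong types can never flip the sign of $Q$ or of $f_{\mathrm{rec}}$. Establishing this single length-uniform margin is the delicate quantitative core; once the opener selection is effectively one-hot, the remaining steps are exact linear-and-threshold arithmetic.
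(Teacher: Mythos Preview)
Your proposal is correct and follows essentially the same five-layer construction as the paper, correctly singling out the Layer-4 softmax-to-hardmax approximation as the crux and resolving it via the same $\operatorname{LN}_{\mathrm{RMS}}$ amplification (which the paper makes explicit as a ReLU ``recovering function'' that restores exact target values once attention on the intended token exceeds $2/3$). The only tactical divergence is in Layer 5: rather than averaging the $Q$-indicators (which leaves only a $1/n$ margin between the all-\texttt{True} and one-\texttt{False} cases), the paper feeds the signed $q_i$ values directly as attention \emph{scores}, so the head concentrates on any failing position if one exists and on $\texttt{<bos>}$ otherwise, producing an $\Omega(1)$ readout gap without further length-dependent amplification.
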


\begin{proof}
As shown in the proof sketch of Theorem \ref{theorem: transformers with bos recognize dyck_k}, each layer performs the following operations. Note that $w_{0:i}$ corresponds to $\texttt{<bos>} w_{1:i}$.

\begin{description}
    \item[First layer] creates pseudo positional encoding $(\cos\phi(i), \sin\phi(i))$ at position $i$, where $\phi(i) = \tan^{-1}(i/\exp(a))$ and $a$ is an attention score on $\texttt{<bos>}$.
    
    \item[Second and third layers] count depth $\operatorname{d}(w_{0:i})$ and $\operatorname{d}(w_{0:i})+1$, respectively. These computations are achieved by constructing a value matrix that outputs $1$ for open brackets and $-1$ for closed brackets in a specific dimension.

    \item[Fourth layer] calculates a propositional variable $Q(w_{0:i})$ as follows: 
    \begin{equation}
    \begin{aligned}
    &Q(w_{0:i}) =\begin{cases}
        \texttt{True} & \text{if } w_{1:i} \in\operatorname{Pre}(\texttt{Dyck}_k) \\
        \texttt{False} & \text{otherwise}
    \end{cases}
    \end{aligned}.
    \end{equation}
    Note that $Q(w_{0:i})$ is guaranteed to return the correct value only when $i=0$ or $w_{1:{i-1}}$ is a prefix for $\texttt{Dyck}_k$. 
    
    \item[Fifth layer] calculates (i) whether $w_{1:n}$ is a prefix for $\texttt{Dyck}_k$ with $\bigwedge_{i=1}^n Q(w_{0:i})$ and (ii) whether $\operatorname{d}(w_{0:i}) = 0$ or not.
\end{description}

We show the specific implementations for each layer in the subsequent subsections.

Note that we explicitly represent the layer number to which each variable or parameter belongs as a superscript. For instance, $W_V^{(2)}$ represents the value matrix that belongs to the second attention layer. In addition, we use concise notation $\operatorname{d}_i$ instead of $\operatorname{d}(w_{0:i})$. Moreover, we frequently use omitted representations for vectors or matrices, where the omitted dimensions of the transformation matrices are zero-padded. For instance, let 
\begin{equation}
    \mathbf{y}_i = \begin{bmatrix}
        \mathbf{a}_i \\
        \mathbf{b}_i \\
        c_i \\
        \mathbf{0}
    \end{bmatrix}
    \begin{matrix*}[l]
    \} d_a \text{ dim.}\\
    \} d_b \text{ dim.}\\
    \} 1 \text{ dim.}\\
    \} d_0 \text{ dim.}
    \end{matrix*}
\end{equation}
be an example of an input vector. In this case, if we use omitted representations 
\begin{align}
     & \mathbf{y}_i = \begin{bmatrix}
        \mathbf{a}_i \\
        \vdots \\
        c_i \\
        \vdots
    \end{bmatrix},\\
    & W = \begin{bmatrix}
    \mathbf{w}_{11}^\top & \cdots & w_{12} & \cdots \\
    W_{21} & \cdots & \mathbf{w}_{22} & \cdots \\
    \vdots & & \vdots & 
    \end{bmatrix}
    \begin{matrix*}[l]
    \} 1 \text{ dim.}\\
    \} d_w \text{ dim.}\\
    \, 
    \end{matrix*}, 
\end{align}
then, the matrix-vector product $W \mathbf{y}_i$ corresponds to the following computation:
\begin{equation}
\begin{aligned}
&W \mathbf{y}_i \\
&= \begin{bmatrix}
\mathbf{w}_{11}^\top & \cdots & w_{12} & \cdots \\
W_{21} & \cdots & \mathbf{w}_{22}  & \cdots \\
\vdots &  & \vdots & 
\end{bmatrix}
\begin{bmatrix}
    \mathbf{a}_i \\
    \vdots \\
    c_i \\
    \vdots
\end{bmatrix} \\
&= \begin{matrix}
\begin{bmatrix}
\makebox[1.8em]{$\mathbf{w}_{11}^\top$} & \makebox[1.8em]{$\mathbf{0}^\top$} & \makebox[1.8em]{$w_{12}$} & \makebox[1.8em]{$\mathbf{0}^\top$}  \\
W_{21}& O & \mathbf{w}_{22} &  O  \\
\end{bmatrix} \\
\begin{matrix}
\overset{\underbrace{\hphantom{W_{21}}}_{d_a \text{ dim.}}}{{\hphantom{W_{21}}}} & 
\overset{\underbrace{\hphantom{\mathbf{0}^\top}}_{d_a \text{ dim.}}}{{\hphantom{\mathbf{0}^\top}}} & 
\overset{\underbrace{\hphantom{\mathbf{w}_{22}}}_{1 \text{ dim.}}}{{\hphantom{\mathbf{w}_{22}}}} &
\overset{\underbrace{\hphantom{\mathbf{0}^\top}}_{d_0 \text{ dim.}}}{{\hphantom{\mathbf{0}^\top}}}
\end{matrix} 
\end{matrix} \begin{bmatrix}
    \mathbf{a}_i \\
    \mathbf{b}_i \\
    c_i \\
    \mathbf{0}
\end{bmatrix}
\begin{matrix*}[l]
    \} d_a \text{ dim.}\\
    \} d_b \text{ dim.}\\
    \} 1 \text{ dim.}\\
    \} d_0 \text{ dim.}
\end{matrix*} \\
& = \begin{bmatrix}
    \mathbf{w}_{11}^\top \mathbf{a}_i + w_{12}c_i \\
    W_{21} \mathbf{a}_i + c_i \mathbf{w}_{22} \\
    \mathbf{0}
\end{bmatrix}\begin{matrix*}[l]
    \} 1 \text{ dim.}\\
    \} d_w \text{ dim.}\\
    \, 
    \end{matrix*}.
\end{aligned}
\end{equation}

\subsection{First layer}\label{app: dyck recognition with bos, subsec: first layer}
In the first layer, the following positional encoding is created.
\begin{equation}
\mathbf{p}_i = 
\begin{bmatrix}
\cos \phi(i) \\
\sin \phi(i)
\end{bmatrix} \in \mathbb{R}^2,
\end{equation}
where $\phi(i) = \tan^{-1} \left(\frac{i}{\exp(a)}\right)$ and $a \in \mathbb{R}$ is a constant.

\subsubsection*{First layer ---Attention layer}
We omit the unnecessary dimensions of input vector $\mathbf{x}_i^{(1)}$ in this layer as follows:
\begin{equation}
\mathbf{x}_i^{(1)} = \begin{bmatrix}
\vdots \\
s_i \\
1 \\
\vdots 
\end{bmatrix}.
\end{equation}
Set the parameters $W_Q^{(1)}, W_K^{(1)}, W_V^{(1)}\in \mathbb{R}^{d_\mathrm{model} \times d_{\mathrm{model}}}$ as follows:
\begin{align}
&W_Q^{(1)}  = 
\begin{bmatrix}
\cdots & 0 & 1 & \cdots \\
 & \vdots & \vdots &
\end{bmatrix} ,\\
& W_K^{(1)} = 
\begin{bmatrix}
\cdots & a & 0 & \cdots \\
 & \vdots & \vdots &
\end{bmatrix},  \\
&W_V^{(1)} =\begin{bmatrix}
 & \vdots & \vdots & \\
\cdots & 1 & 0 & \cdots \\
\cdots & -1 & 1 &\cdots \\
 & \vdots & \vdots &
\end{bmatrix}.
\end{align}

Then, we obtain 
\begin{align}
& W_Q^{(1)} \mathbf{x}_{i_q}^{(1)}= \begin{bmatrix}
1 \\
\vdots
\end{bmatrix},\\
& W_K^{(1)} \mathbf{x}_{i_k}^{(1)}= \begin{bmatrix}
s_{i_k}\cdot a  \\
\vdots
\end{bmatrix},\\
& W_V^{(1)} \mathbf{x}_{i_k}^{(1)}=
\begin{bmatrix}
\vdots \\
s_{i_k}\\
1 - s_{i_k} \\
\vdots
\end{bmatrix},\\
& \left\langle W_K^{(1)} \mathbf{x}_{i_k}^{(1)}, W_Q^{(1)} \mathbf{x}_{i_q}^{(1)}\right\rangle = s_{i_k}\cdot a.
\end{align}

Therefore, $\mathbf{a}^{(1)}_i$ becomes

\begin{equation}
\begin{aligned}
\mathbf{a}^{(1)}_i &=
\frac{\exp(a)}{\exp(a) + i} W_V^{(1)}\mathbf{x}_{0}^{(1)}\\
& \qquad  +  \sum_{j=1}^i \frac{1}{\exp(a) + i}  W_V^{(1)}\mathbf{x}_{j}^{(1)}
\\
&= 
\begin{bmatrix}
\vdots \\
\frac{\exp(a)}{\exp(a) + i}  \\
0  \\
\vdots
\end{bmatrix}
+ \sum_{j=1}^i
\begin{bmatrix}
\vdots \\
0 \\
\frac{1}{\exp(a) + i}  \\
\vdots 
\end{bmatrix} \\
&= 
\begin{bmatrix}
\vdots \\
\frac{\exp(a)}{\exp(a) + i}  \\
\frac{i}{\exp(a) + i}  \\
\vdots 
\end{bmatrix}
\end{aligned}
\end{equation}

Finally, considering the residual connection, we obtain
\begin{equation}
\begin{aligned}
\mathbf{h}_i^{(1)} &= \mathbf{x}_i^{(1)} + \begin{bmatrix}
\vdots \\
\frac{\exp(a)}{\exp(a) + i}  \\
\frac{i}{\exp(a) + i}  \\
\vdots 
\end{bmatrix} \\
& = \begin{bmatrix}
\mathbf{t}_i \\
o_i \\
s_i \\
1 \\ 
\frac{\exp(a)}{\exp(a) + i} \\
\frac{i}{\exp(a) + i}  \\
\mathbf{0} \\
\end{bmatrix}.
\end{aligned}
\end{equation}

\subsubsection*{First layer --- Feed-forward network layer}
We omit the unnecessary dimensions of input vector $\mathbf{h}_i^{(1)}$ in this layer as follows:
\begin{equation}
\mathbf{h}_i^{(1)} = \begin{bmatrix}
\vdots \\
\frac{\exp(a)}{\exp(a) + i} \\
\frac{i}{\exp(a) + i}  \\
\vdots 
\end{bmatrix}.
\end{equation}
Set the parameters $W_1^{(1)}, W_2^{(1)}\in \mathbb{R}^{d_\mathrm{model} \times d_{\mathrm{model}}}$ and $\boldsymbol{\beta}^{(1)}, \boldsymbol{\gamma}^{(1)} \in \mathbb{R}^{d_\mathrm{model}}$ as follows:
\begin{align}
&W_1^{(1)} = \begin{bmatrix}
\cdots & 1 & 0 & \cdots \\
\cdots & 0 & 1 & \cdots \\
\cdots & \mathbf{0} & \mathbf{0} & \cdots 
\end{bmatrix}, \\
&W_2^{(1)} = \begin{bmatrix}
\vdots & \vdots & \vdots \\
1 & 0 & \mathbf{0}^\top \\
0 & 1 & \mathbf{0}^\top \\
\vdots & \vdots & \vdots
\end{bmatrix}, \\
&\boldsymbol{\beta}^{(1)} = \mathbf{0}, \\
&\boldsymbol{\gamma}^{(1)} = \sqrt{\frac{1}{d_\mathrm{model}}}\mathbf{1}.
\end{align}

Then, the output of the FFN becomes
\begin{equation}
\begin{aligned}
&W_2^{(1)} \left[ \operatorname{LN}_{\mathrm{RMS}} \left(W_1^{(1)} \mathbf{h}_i^{(1)}\right)\right]_+ \\
&= W_2^{(1)} \left[ \operatorname{LN}_{\mathrm{RMS}} \left(\begin{bmatrix}
\frac{\exp(a)}{\exp(a) + i}  \\
\frac{i}{\exp(a) + i}  \\
\mathbf{0}
\end{bmatrix}\right)\right]_+ \\
&= W_2^{(1)}
\begin{bmatrix}
\frac{\exp(a)}{\sqrt{\exp(a)^2 + i^2}}  \\
\frac{i}{\sqrt{\exp(a)^2 + i^2}}  \\
\mathbf{0}
\end{bmatrix}_+ \\
&=\begin{bmatrix}
\vdots & \vdots & \vdots \\
1 & 0 & \mathbf{0}^\top \\
0 & 1 & \mathbf{0}^\top \\
\vdots & \vdots & \vdots
\end{bmatrix}   
\begin{bmatrix}
\cos \phi(i)\\
\sin \phi(i)\\
\mathbf{0}
\end{bmatrix}  \\
&\,\,\,\,\, \left(\text{ from } \frac{\sin \phi(i)}{\cos \phi(i)} = \tan \phi(i) = \frac{i}{\exp(a)}\right)\\
&= \begin{bmatrix}
\vdots \\
\cos \phi(i)\\
\sin \phi(i)\\
\vdots 
\end{bmatrix}.
\end{aligned}
\end{equation}

Finally, considering the residual connection, we obtain
\begin{equation}
\begin{aligned}
\mathbf{x}_i^{(2)}& = \mathbf{h}_i^{(1)} +  \begin{bmatrix}
\vdots \\
\cos \phi(i)\\
\sin \phi(i)\\
\vdots 
\end{bmatrix} \\
& = \begin{bmatrix}
\mathbf{t}_i \\
o_i \\
s_i \\
1 \\ 
\frac{\exp(a)}{\exp(a) + i} \\
\frac{i}{\exp(a) + i}  \\
\cos \phi(i)\\
\sin \phi(i)\\
\vdots 
\end{bmatrix}.
\end{aligned}
\end{equation}

\subsection{Second layer}\label{app: dyck recognition with bos, subsec: second layer}
In the second layer, the following vector that indicates the depth $\operatorname{d}(w_{0:i})$ is calculated:
\begin{equation}
\begin{bmatrix}
\cos \theta(\operatorname{d}(w_{0:i})) \\
\sin \theta(\operatorname{d}(w_{0:i}))
\end{bmatrix},
\end{equation}
where $\theta(\operatorname{d}) = \tan^{-1}(\operatorname{d} / \exp(a))$.

\subsubsection*{Second layer --- Attention layer}
We omit the unnecessary dimensions of input vector $\mathbf{x}_i^{(2)}$ in this layer as follows:
\begin{equation}
\mathbf{x}_i^{(2)} = \begin{bmatrix}
\vdots \\
o_i \\
s_i \\
1 \\ 
\vdots
\end{bmatrix}.
\end{equation}

Set the parameters $W_Q^{(2)}, W_K^{(2)}, W_V^{(2)}\in \mathbb{R}^{d_\mathrm{model} \times d_{\mathrm{model}}}$ as follows:
\begin{align}
W_Q^{(2)} &= 
\begin{bmatrix}
\cdots & 0 & 0 & 1 & \cdots \\
 & \vdots & \vdots & \vdots &
\end{bmatrix}, \\
W_K^{(2)} &= 
\begin{bmatrix}
\cdots & 0 & a & 0 & \cdots \\
 & \vdots & \vdots & \vdots &
\end{bmatrix}, \\
W_V^{(2)} &= 
\begin{bmatrix}
 & \vdots & \vdots & \vdots &  \\
\cdots & 0 & 1 & 0 & \cdots \\
\cdots & 1 & 0 & 0 & \cdots \\
 & \vdots & \vdots & \vdots &
\end{bmatrix}.
\end{align}
Then, we obtain 
\begin{align}
& W_Q^{(2)} \mathbf{x}_{i_q}^{(2)}= \begin{bmatrix}
1 \\
\vdots 
\end{bmatrix},\\
& W_K^{(2)} \mathbf{x}_{i_k}^{(2)}= \begin{bmatrix}
s_{i_k}\cdot a  \\
\vdots 
\end{bmatrix},\\
& W_V^{(2)} \mathbf{x}_{i_k}^{(2)}=
\begin{bmatrix}
\vdots  \\
s_{i_k} \\
o_{i_k} \\
\vdots 
\end{bmatrix}, \\
& \left\langle W_K^{(2)} \mathbf{x}_{i_k}^{(2)}, W_Q^{(2)} \mathbf{x}_{i_q}^{(2)}\right\rangle = s_{i_k}\cdot a.
\end{align}

Therefore, the output of the attention layer $\mathbf{a}^{(2)}_i$ becomes

\begin{equation}
\begin{aligned}
\mathbf{a}^{(2)}_i &=
\frac{\exp(a)}{\exp(a) + i} W_V^{(2)}\mathbf{x}_{0}^{(2)}\\
& \qquad  +  \sum_{j=1}^i \frac{1}{\exp(a) + i}  W_V^{(2)}\mathbf{x}_{j}^{(2)}
\\
&= 
\begin{bmatrix}
\vdots \\
\frac{\exp(a)}{\exp(a) + i}  \\
0  \\
\vdots 
\end{bmatrix}
+ 
\sum_{j=1}^i\begin{bmatrix}
\vdots  \\
0 \\
\frac{o_i}{\exp(a) + i}  \\
\vdots 
\end{bmatrix} \\
&= 
\begin{bmatrix}
\vdots \\
\frac{\exp(a)}{\exp(a) + i}  \\
\frac{\operatorname{d}_i}{\exp(a) + i}  \\
\vdots 
\end{bmatrix}
\end{aligned}
\end{equation}

Finally, considering the residual connection, we obtain
\begin{equation}
\begin{aligned}
\mathbf{h}_i^{(2)} &= \mathbf{x}_i^{(2)} + \begin{bmatrix}
\vdots \\
\frac{\exp(a)}{\exp(a) + i}  \\
\frac{\operatorname{d}_i}{\exp(a) + i}  \\
\vdots 
\end{bmatrix} \\
& =  \begin{bmatrix}
\mathbf{t}_i \\
o_i \\
s_i \\
1 \\ 
\vdots \\
\cos \phi(i)\\
\sin \phi(i)\\
\frac{\exp(a)}{\exp(a) + i}  \\
\frac{\operatorname{d}_i}{\exp(a) + i}  \\
\mathbf{0} \\
\end{bmatrix}
\end{aligned}
\end{equation}

\subsubsection*{Second layer --- Feed-forward network layer}
We omit the unnecessary dimensions of input vector $\mathbf{h}_i^{(2)}$ in this layer as follows:
\begin{equation}
\mathbf{h}_i^{(2)} = \begin{bmatrix}
\vdots \\
\frac{\exp(a)}{\exp(a) + i} \\
\frac{\operatorname{d}_i}{\exp(a) + i}  \\
\vdots 
\end{bmatrix}.
\end{equation}

Set the parameters $W_1^{(2)}, W_2^{(2)}\in \mathbb{R}^{d_\mathrm{model} \times d_{\mathrm{model}}}$ and $\boldsymbol{\beta}^{(2)}, \boldsymbol{\gamma}^{(2)} \in \mathbb{R}^{d_\mathrm{model}}$ as follows:
\begin{align}
W_1^{(2)} &= 
\begin{bmatrix}
\cdots & 1 & 0 & \cdots \\
\cdots & -1 & 0 & \cdots \\
\cdots & 0 & 1 & \cdots \\
\cdots & 0 & -1 & \cdots \\
& \vdots & \vdots & 
\end{bmatrix},\\
W_2^{(2)} &= 
\begin{bmatrix}
\vdots & \vdots & \vdots & \vdots &  \\
1 & 0 & 0 & 0 & \cdots \\
0 & 0 & 1 & -1 & \cdots \\
\vdots & \vdots & \vdots & \vdots &
\end{bmatrix},\\
\boldsymbol{\beta}^{(2)} &= \mathbf{0}, \\
\boldsymbol{\gamma}^{(2)} &= \sqrt{\frac{2}{d_\mathrm{model}}}\mathbf{1}.
\end{align}
Then, the output of the feed-forward network becomes
\begin{equation}
\begin{aligned}
& W_2^{(2)} \left[ \operatorname{LN}_{\mathrm{RMS}} \left(W_1^{(2)} \mathbf{h}_i^{(2)}\right)\right]_+ \\
&= W_2^{(2)} \left[ \operatorname{LN}_{\mathrm{RMS}} \left(\begin{bmatrix}
\frac{\exp(a)}{\exp(a) + i}  \\
-\frac{\exp(a)}{\exp(a) + i}  \\
\frac{\operatorname{d}_i}{\exp(a) + i}  \\
-\frac{\operatorname{d}_i}{\exp(a) + i}  \\
\vdots \\
\end{bmatrix}\right)\right]_+ \\
&= \begin{bmatrix}
\vdots & \vdots & \vdots & \vdots &  \\
1 & 0 & 0 & 0 & \cdots \\
0 & 0 & 1 & -1 & \cdots \\
\vdots & \vdots & \vdots & \vdots &
\end{bmatrix}
\begin{bmatrix}
\frac{\exp(a)}{\sqrt{\operatorname{d}_i^2 + \exp(a)^2}}  \\
-\frac{\exp(a)}{\sqrt{\operatorname{d}_i^2 + \exp(a)^2}}  \\
\frac{\operatorname{d}_i}{\sqrt{\operatorname{d}_i^2 + \exp(a)^2}}  \\
-\frac{\operatorname{d}_i}{\sqrt{\operatorname{d}_i^2 + \exp(a)^2}}  \\
\vdots \\
\end{bmatrix}_+ \\
&= \begin{bmatrix}
\vdots \\
\cos \theta(\operatorname{d}_i)    \\
\left[\sin \theta(\operatorname{d}_i) \right]_+ - \left[-\sin \theta(\operatorname{d}_i) \right]_+  \\
\vdots \\
\end{bmatrix}  \\
&\left(\text{ from } \frac{\sin \theta(\operatorname{d}_i)}{\cos \theta(\operatorname{d}_i)} = \tan \theta(\operatorname{d}_i) = \frac{\operatorname{d}_i}{\exp(a)}\right)\\
&=  \begin{bmatrix}
\vdots \\
\cos \theta(\operatorname{d}_i)   \\
\sin \theta(\operatorname{d}_i)  \\
\vdots
\end{bmatrix}
\end{aligned}
\end{equation}

Finally, considering the residual connection, we obtain
\begin{equation}
\begin{aligned}
\mathbf{x}_i^{(3)} &= \mathbf{h}_i^{(2)} + \begin{bmatrix}
\vdots \\
\cos \theta(\operatorname{d}_i)  \\
\sin \theta(\operatorname{d}_i)  \\
\vdots
\end{bmatrix} \\
& =\begin{bmatrix}
\mathbf{t}_i \\
o_i \\
s_i \\
1 \\ 
\vdots \\
\cos \phi(i)\\
\sin \phi(i)\\
\vdots \\
\cos \theta(\operatorname{d}_i)\\
\sin \theta(\operatorname{d}_i)\\
\mathbf{0} \\
\end{bmatrix} .
\end{aligned}
\end{equation}

\subsection{Third layer}\label{app: dyck recognition with bos, subsec: third layer}
The third layer counts depth $\operatorname{d}(w_{0:i}) + 1$ in addition to $\operatorname{d}(w_{0:i})$ that is counted in the second layer. This is because the depth of the closed bracket is smaller by $1$ than the corresponding open bracket. For instance, the depths calculated for $``\langle_1\rangle_1"$ are $1$ for $``\langle_1"$ and $0$ for $``\rangle_1"$.

The way to construct parameters is largely the same as that of the second layer. Specifically, we slightly modify the value matrix: we use
\begin{align}
W_V^{(3)} &= 
\begin{bmatrix}
 & \vdots & \vdots & \vdots &  \\
\cdots & 0 & 1 & 0 & \cdots \\
\cdots & 1 & \exp(-a) & 0 & \cdots \\
 & \vdots & \vdots & \vdots &   
\end{bmatrix}
\end{align}
instead of 
\begin{align}
W_V^{(2)} &= 
\begin{bmatrix}
 & \vdots & \vdots & \vdots &  \\
\cdots & 0 & 1 & 0 & \cdots \\
\cdots & 1 & 0 & 0 & \cdots \\
 & \vdots & \vdots & \vdots &   
\end{bmatrix}.
\end{align}

Then, we obtain

\begin{equation}
\begin{aligned}
\mathbf{a}^{(3)}_i &=
\frac{\exp(a)}{\exp(a) + i} W_V^{(3)}\mathbf{x}_{0}^{(3)}\\
& \qquad  +  \sum_{j=1}^i \frac{1}{\exp(a) + i}  W_V^{(3)}\mathbf{x}_{j}^{(3)}
\\
&= 
\begin{bmatrix}
\vdots \\
\frac{\exp(a)}{\exp(a) + i}  \\
\frac{1}{\exp(a) + i}  \\
\vdots 
\end{bmatrix}
+ \sum_{j=1}^i
\begin{bmatrix}
\vdots  \\
0 \\
\frac{o_i}{\exp(a) + i}  \\
\vdots 
\end{bmatrix} \\
&= 
\begin{bmatrix}
\vdots  \\
\frac{\exp(a)}{\exp(a) + i}  \\
\frac{\operatorname{d}_i+1}{\exp(a) + i}  \\
\vdots 
\end{bmatrix}.
\end{aligned}
\end{equation}

Therefore, using the subsequent feed-forward network layer, we obtain
\begin{equation}
\begin{aligned}
\mathbf{x}_i^{(4)} =
\begin{bmatrix}
\mathbf{t}_i \\
o_i \\
s_i \\
1 \\ 
\vdots \\
\cos \phi(i)\\
\sin \phi(i)\\
\vdots \\ 
\cos \theta(\operatorname{d}_i)\\
\sin \theta(\operatorname{d}_i)\\
\vdots \\ 
\cos \theta(\operatorname{d}_i+1)\\
\sin \theta(\operatorname{d}_i+1)\\
\mathbf{0} \\
\end{bmatrix}.
\end{aligned}
\end{equation}

\subsection{Fourth layer}\label{app: dyck recognition with bos, subsec: fourth layer}

The last two layers determine whether the input string belongs to the $\texttt{Dyck}_k$ language, leveraging the position vectors and depth vectors computed so far. Note that the necessary and sufficient condition for a string $w_{1:n}$ to belong to the $\texttt{Dyck}_k$ language is that the following two conditions are simultaneously satisfied.
\begin{description}
    \item[Condition (i)] $w_{1:n} \in \operatorname{Pre}(\texttt{Dyck}_k)$.
    \item[Condition (ii)] $\operatorname{d}(w_{1:n}) = 0$.
\end{description}

We can check \textbf{Condition (i)} by calculating $\bigwedge_{i=0}^n Q(w_{0:i})$, where $Q(w_{0:i})$ is a propositional variable that is guaranteed to return the correct values only if $i=0$ or $w_{0:i-1} \in \operatorname{Pre}(\texttt{Dyck}_k)$. Specifically, 
\begin{equation}
Q(w_{0:i}) =\begin{cases}
    \texttt{True} & \text{if } w_{1:i} \in\operatorname{Pre}(\texttt{Dyck}_k) \\
    \texttt{False} & \text{otherwise}
\end{cases}.
\end{equation}
If $\bigwedge_{i=0}^n Q(w_{0:i}) = \texttt{True}$ --- for all $i$, $Q(w_{0:i}) = \texttt{True}$ --- all propositional variables are guaranteed to return the correct values, indicating $w_{1:n}$ is a prefix for $\texttt{Dyck}_k$. Otherwise, among the propositional variables that return $\texttt{False}$, the propositional variable at the smallest index $j$ is guaranteed to return the correct value because all preceding variables return $\texttt{True}$, indicating that $w_{1:n}$ is not a prefix for $\texttt{Dyck}_k$. In contrast, \textbf{Condition (ii)} can be easily checked using $\sin \theta(\operatorname{d}_i)$.

Therefore, the fourth layer calculates the value that corresponds to the propositional variable $Q(w_{0:i})$.

\subsubsection*{Fourth layer --- Attention layer}
In the attention layer, each closed bracket at position $i$ fetches the bracket type $\mathbf{t}$ at the largest index among $\{0\} \cup \{j \leq i \mid o_j =1 \wedge \operatorname{d}_j = \operatorname{d}_i + 1\}$. 

Before presenting the specific parameters, we first outline the method for calculating the attention scores in two steps: (i) assign high attention scores to the indices  $\{0\} \cup \{j \leq i \mid o_j =1 \wedge \operatorname{d}_j = \operatorname{d}_i + 1\}$; that is, extract a starting token and depth-matched open brackets and (ii) within those tokens, assign higher attention scores to tokens closer to the query, thereby focusing on the token with the largest index. Figure \ref{fig:attention score} illustrates this calculation, where the first step corresponds to the term $\operatorname{T}^{\mathrm{depth}}$ and the second step corresponds to the term $\operatorname{T}^{\mathrm{pos}}$.

We then show the specific parameters that achieve the desired operation. 
We omit the unnecessary dimensions of input vector $\mathbf{x}_i^{(4)}$ in this layer as follows:
\begin{equation}
\begin{aligned}
\mathbf{x}_i^{(4)} =
\begin{bmatrix}
\mathbf{t}_i \\
o_i \\
s_i \\
1 \\ 
\vdots \\
\cos \phi(i)\\
\sin \phi(i)\\
\vdots \\ 
\cos \theta(\operatorname{d}_i)\\
\sin \theta(\operatorname{d}_i)\\
\vdots \\ 
\cos \theta(\operatorname{d}_i+1)\\
\sin \theta(\operatorname{d}_i+1)\\
\vdots
\end{bmatrix}.
\end{aligned}
\end{equation}

Set the parameters $W_Q^{(4)}, W_K^{(4)}, W_V^{(4)}\in \mathbb{R}^{d_\mathrm{model} \times d_{\mathrm{model}}}$ as follows (Note that in some cases, the transposed matrices are described to accommodate the limited space):
\begin{align}
&W_Q^{(4)}= \begin{bmatrix}
C_1^{(4)} W_Q^{ \mathrm{depth}} \\
W_Q^{\mathrm{pos}} \\
C_1^{(4)} \mathbf{w}_Q^{ \mathrm{open}\top} \\
\vdots 
\end{bmatrix}, \\
&W_K^{(4)}= \begin{bmatrix}
C_2^{(4)} W_K^{ \mathrm{depth}} \\
C_2^{(4)}W_K^{\mathrm{pos}} \\
C_2^{(4)}\mathbf{w}_K^{ \mathrm{open}\top} \\
\vdots 
\end{bmatrix}, \\
&W_V^{(4)\top}= \begin{bmatrix}
\cdots & I & \cdots \\
\cdots & \mathbf{0}^\top & \cdots \\
\cdots & \mathbf{0}^\top & \cdots \\
\cdots & \mathbf{0}^\top & \cdots \\
 & \vdots & \\
\cdots & \mathbf{0}^\top & \cdots \\
\cdots & \mathbf{0}^\top & \cdots \\
 & \vdots & \\
\cdots & \mathbf{0}^\top & \cdots \\
\cdots & \mathbf{0}^\top & \cdots \\
 & \vdots & \\
\cdots & \mathbf{0}^\top & \cdots \\
\cdots & \mathbf{0}^\top & \cdots \\
 & \vdots & \\
\end{bmatrix},
\end{align}
where $C_1^{(4)}$ and $C_2^{(4)}$ are positive constants, 
\begin{align}
&W_Q^{\mathrm{depth}\top} = \begin{bmatrix}
\mathbf{0} & \mathbf{0} & \mathbf{0} & \mathbf{0} \\
0 & 0 & 0 & 0 \\
0 & 0 & 0 & 0 \\
0 & 0 & 1 & 1 \\
\vdots & \vdots & \vdots & \vdots \\
0 & 0 & 0 & 0 \\
0 & 0 & 0 & 0 \\
\vdots & \vdots & \vdots  & \vdots \\
0 & 0 & 0 & 0 \\
0 & 0 & 0 & 0 \\
\vdots & \vdots & \vdots & \vdots \\
1 & 0 & -1 & 0 \\
0 & 1 & 0 & 0 \\
\vdots & \vdots & \vdots & \vdots 
\end{bmatrix}, \\
&W_Q^{\mathrm{pos}\top} = \begin{bmatrix}
\mathbf{0} & \mathbf{0} \\
0 & 0 \\
0 & 0 \\
0 & 0 \\
\vdots & \vdots \\
0 & 1 \\
1 & 0 \\
\vdots & \vdots  \\
0 & 0 \\
0 & 0\\
\vdots & \vdots \\
0 & 0 \\
0 & 0 \\
\vdots & \vdots 
\end{bmatrix}, 
\mathbf{w}_Q^{\mathrm{open}} = \begin{bmatrix}
\mathbf{0} \\
1 \\
-1 \\
1 \\
\vdots \\
0 \\
0 \\
\vdots  \\
0 \\
0\\
\vdots \\
0 \\
0 \\
\vdots 
\end{bmatrix}, \\
&W_K^{ \mathrm{depth}\top} = \begin{bmatrix}
\mathbf{0} & \mathbf{0} & \mathbf{0} & \mathbf{0} \\
0 & 0 & 0 & 1 \\
0 & 0 & 1 & 1 \\
0 & 0 & 0 & -1 \\
\vdots & \vdots & \vdots & \vdots \\
0 & 0 & 0 & 0 \\
0 & 0 & 0 & 0 \\
\vdots & \vdots & \vdots  & \vdots \\
1 & 0 & 0 & 0 \\
0 & 1 & 0 & 0 \\
\vdots & \vdots & \vdots & \vdots \\
0 & 0 & 0 & 0 \\
0 & 0 & 0 & 0 \\
\vdots & \vdots & \vdots & \vdots 
\end{bmatrix}, \\
&W_K^{\mathrm{pos}\top} = \begin{bmatrix}
\mathbf{0} & \mathbf{0} \\
0 & 0 \\
0 & 0 \\
0 & 0 \\
\vdots & \vdots \\
1 & 0 \\
0 & 1 \\
\vdots & \vdots  \\
0 & 0 \\
0 & 0\\
\vdots & \vdots \\
0 & 0 \\
0 & 0 \\
\vdots & \vdots 
\end{bmatrix}, 
\mathbf{w}_K^{\mathrm{open}} = \begin{bmatrix}
\mathbf{0} \\
0 \\
1 \\
0 \\
\vdots \\
0 \\
0 \\
\vdots  \\
0 \\
0\\
\vdots \\
0 \\
0 \\
\vdots 
\end{bmatrix}.
\end{align}

Then, we obtain
\begin{align}
& W_Q^{(4)} \mathbf{x}_{i_q}^{(4)}= 
\begin{bmatrix}
C_1^{(4)}W_Q^{ \mathrm{depth}} \mathbf{x}_{i_q}^{(4)} \\
W_Q^{ \mathrm{pos}} \mathbf{x}_{i_q}^{(4)} \\
C_1^{(4)}\mathbf{w}_Q^{\mathrm{open}\top} \mathbf{x}_{i_q}^{(4)} \\
\vdots 
\end{bmatrix}, \\
& W_K^{(4)} \mathbf{x}_{i_k}^{(4)} =
\begin{bmatrix}
C_2^{(4)}W_K^{ \mathrm{depth}} \mathbf{x}_{i_k}^{(4)} \\
C_2^{(4)}W_K^{ \mathrm{pos}} \mathbf{x}_{i_k}^{(4)} \\
C_2^{(4)}\mathbf{w}_K^{\mathrm{open}\top} \mathbf{x}_{i_k}^{(4)} \\
\vdots 
\end{bmatrix}, \\
& W_V^{(4)} \mathbf{x}_{i_k}^{(4)}=
\begin{bmatrix}
\vdots \\
\mathbf{t}_{i_k}\\
\vdots
\end{bmatrix}, \\
&\begin{aligned}
&\left\langle W_K^{(4)} \mathbf{x}_{i_k}^{(4)}, W_Q^{(4)} \mathbf{x}_{i_q}^{(4)}\right\rangle \\
&=C_2^{(4)}C_1^{(4)}\left\langle W_K^{\mathrm{depth}} \mathbf{x}_{i_k}^{(4)}, W_Q^{\mathrm{depth}} \mathbf{x}_{i_q}^{(4)}\right\rangle\\
&\quad + C_2^{(4)}\left\langle W_K^{\mathrm{pos}} \mathbf{x}_{i_k}^{(4)}, W_Q^{\mathrm{pos}} \mathbf{x}_{i_q}^{(4)}\right\rangle\\
&\quad + C_2^{(4)}C_1^{(4)}\mathbf{w}_K^{\mathrm{open}\top} \mathbf{x}_{i_k}^{(4)}\cdot \mathbf{w}_Q^{\mathrm{open}\top} \mathbf{x}_{i_q}^{(4)} \\
&=C_2^{(4)}\left(C_1^{(4)} \operatorname{T}^\mathrm{depth}_{i_q, i_k} + \operatorname{T}^\mathrm{pos}_{i_q, i_k} + C_1^{(4)}\operatorname{T}^\mathrm{open}_{i_q, i_k}\right),
\end{aligned}
\end{align}
where
\begin{align}
& \begin{aligned}
\operatorname{T}^\mathrm{depth}_{i_q, i_k} &= \left\langle W_K^{\mathrm{depth}} \mathbf{x}_{i_k}^{(4)}, W_Q^{\mathrm{depth}} \mathbf{x}_{i_q}^{(4)}\right\rangle\\
&=\cos(\theta(\operatorname{d}_{i_q} + 1)-\theta(\operatorname{d}_{i_k})) \\
& \quad + \left(1 - \cos\theta (\operatorname{d}_{i_q}+ 1))\right)\cdot s_{i_k} \\
& \quad + \left(o_{i_k} + s_{i_k} - 1\right) \\
& \begin{cases}
= 1 & \text{if } w_{i_k} = ``\texttt{<bos>}" \\
= 1 & \begin{aligned}\text{if } & w_{i_k} = ``\langle_\cdot" \, \\
&\wedge \,  d_{i_q} + 1 = d_{i_k} \end{aligned}\\
< 1 & \text{otherwise} 
\end{cases},
\end{aligned} \\
& \begin{aligned}
\operatorname{T}^\mathrm{pos}_{i_q, i_k} &= \left\langle W_K^{\mathrm{pos}} \mathbf{x}_{i_k}^{(4)}, W_Q^{\mathrm{pos}} \mathbf{x}_{i_q}^{(4)}\right\rangle\\
&= - \sin(\phi(i_q) - \phi(i_k)),
\end{aligned} \\
& \begin{aligned}
\operatorname{T}^\mathrm{open}_{i_q, i_k} &= \mathbf{w}_K^{\mathrm{open}\top} \mathbf{x}_{i_k}^{(4)}\cdot \mathbf{w}_Q^{\mathrm{open}\top} \mathbf{x}_{i_q}^{(4)} \\
&=(o_{i_q} - s_{i_q} + 1)\cdot s_{i_k}\\
&=\begin{cases}
2 & \begin{aligned}\text{if }&w_{i_q} = ``\rangle_\cdot" \, \\
&\, \wedge \, w_{i_k} = ``\texttt{<bos>}"\end{aligned}\\
0 & \text{otherwise}
\end{cases}.
\end{aligned}
\end{align}

Intuitively, $\operatorname{T}^\mathrm{depth}_{i_q, i_k}$ is a term that extracts the depth-matched open brackets and the BOS token, and $\operatorname{T}^\mathrm{pos}_{i_q, i_k}$ is a term that extracts the nearest token among them. Moreover, $\operatorname{T}^\mathrm{open}_{i_q, i_k}$ is a term that makes the query focus on the starting token only when the query is an open bracket. For example, the query $``\rangle_3"$ in the input string $``\texttt{<bos>} \langle_2 \langle_1 \rangle_1 \rangle_2 \langle_3 \rangle_3"$ fetches the nearest depth-matched open bracket $``\langle_3"$ as shown in Figure \ref{fig:attention score}.

\begin{figure}[h]
  \includegraphics[width=\columnwidth]{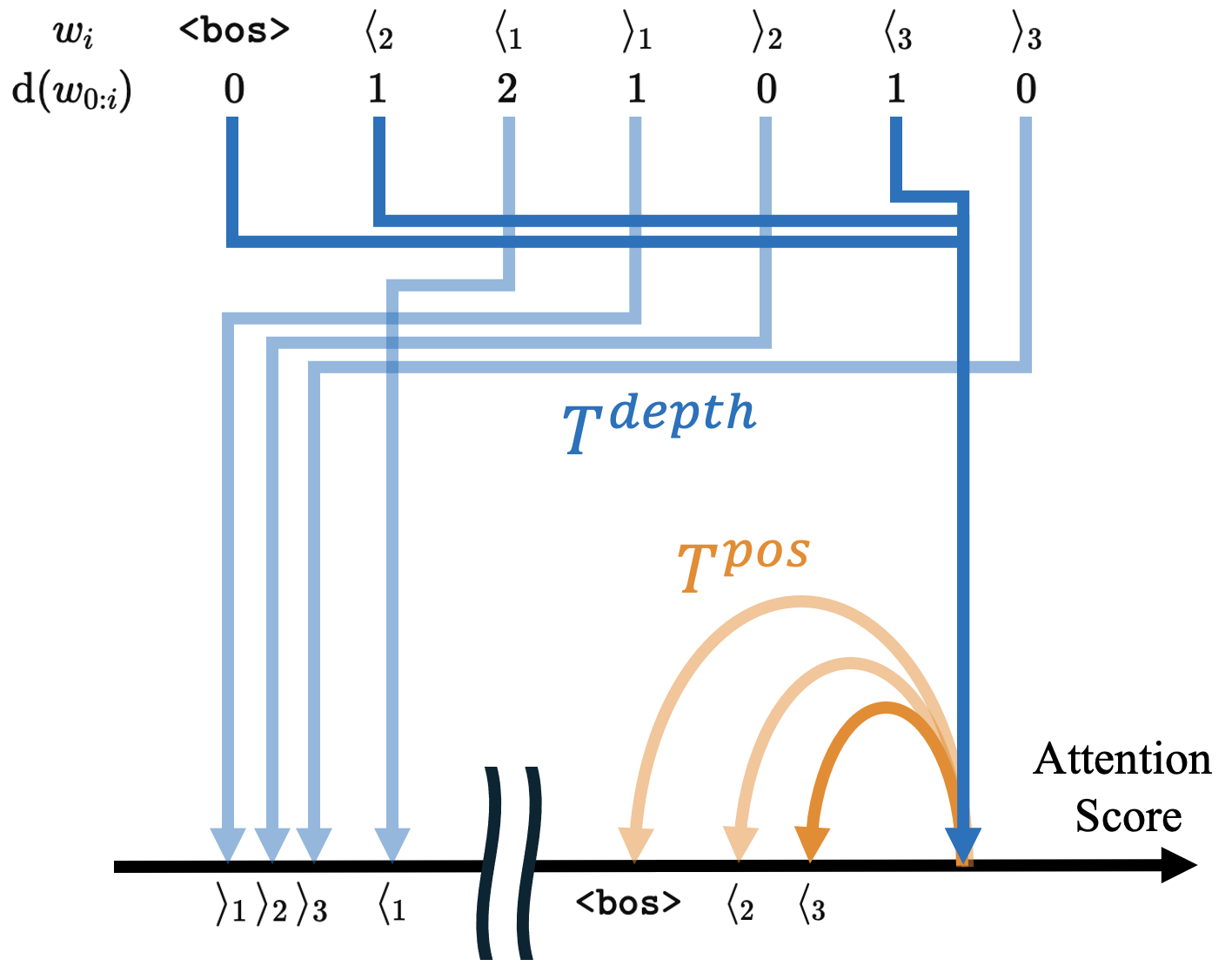}
  \caption{Illustration of the process where the query $``\rangle_3"$ in the input string $``\texttt{<bos>} \langle_2 \langle_1 \rangle_1 \rangle_2 \langle_3 \rangle_3"$ fetches the nearest depth-matched open bracket $``\langle_3"$. At first, using $\operatorname{T}^{\mathrm{depth}}$, only the depth-matched open brackets and $\texttt{<bos>}$ are extracted, and then, using $\operatorname{T}^{\mathrm{pos}}$, the nearest one among them is extracted.}
  \label{fig:attention score}
\end{figure}

Therefore, when the query is a closed bracket, given a sufficiently large constant $C_1^{(4)}$ that satisfies $C_1^{(4)}\cos(\theta(\operatorname{d}_{i_q}) - \theta(\operatorname{d}_{i_k})) > 1$ if $\operatorname{d}_{i_q} \neq \operatorname{d}_{i_k}$, 
\begin{align}
& \begin{aligned}
&\frac{1}{C_2^{(4)}}\left\langle W_K^{(4)} \mathbf{x}_{i_k}^{(4)}, W_Q^{(4)} \mathbf{x}_{i_q}^{(4)}\right\rangle \\
&= C_1^{(4)} \operatorname{T}^\mathrm{depth}_{i_q, i_k} + \operatorname{T}^\mathrm{pos}_{i_q, i_k} \\
&\begin{cases}
= C_1^{(4)} - \sin(\phi({i_q}) - \phi({i_k})) \\
\qquad \text{if } w_{i_k} = ``\langle_\cdot" \, \wedge \, \operatorname{d}_{i_q} + 1 = \operatorname{d}_{i_k}\\
< C_1^{(4)} - 1 \\
\qquad \text{otherwise}
\end{cases}
\end{aligned}
\end{align}
holds, indicating that given a sufficiently large constant $C_2^{(4)}$, the query can focus on the nearest token among the BOS token and depth-matched open brackets.

On the other hand, when the query is an open bracket, 
\begin{align}
& \begin{aligned}
&\frac{1}{C_2^{(4)}}\left\langle W_K^{(4)} \mathbf{x}_{i_k}^{(4)}, W_Q^{(4)} \mathbf{x}_{i_q}^{(4)}\right\rangle \\
&= \begin{cases}
C_1^{(4)} \operatorname{T}^\mathrm{depth}_{i_q, i_k} + \operatorname{T}^\mathrm{pos}_{i_q, i_k} + 2C_1^{(4)}  \\
\qquad \qquad \text{if } w_{i_k} = ``\texttt{<bos>}"\\
C_1^{(4)} \operatorname{T}^\mathrm{depth}_{i_q, i_k} + \operatorname{T}^\mathrm{pos}_{i_q, i_k}  \\
\qquad \qquad \text{otherwise } 
\end{cases} \\
&\begin{cases}
\geq 3C_1^{(4)} - 1 & \text{if } w_{i_k} = ``\texttt{<bos>}" \\
\leq C_1^{(4)} & \text{otherwise } 
\end{cases}
\end{aligned}
\end{align}
holds, indicating that given a sufficiently large constant $C_1^{(4)}$, the query can focus on the BOS token.

From the above, it is confirmed that the desired operations are performed correctly.

Thus, the output of the attention layer $\mathbf{a}^{(4)}_i$ becomes
\begin{equation}
\begin{aligned}
\mathbf{a}^{(4)}_i &= \sum_{j=0}^i \frac{1}{i+1}  W_V^{(4)}\mathbf{x}_{j}^{(4)}\\
&= 
\begin{bmatrix}
\vdots\\
\tilde{\mathbf{t}}_{i} \\
\vdots
\end{bmatrix},
\end{aligned}
\end{equation}
where $\tilde{\mathbf{t}}_{i}$ is the bracket-type vector $\overline{\mathbf{t}}_{i}$ of the nearest depth-matched open bracket when $o_i = -1$ and $w_i$ contains one or more such brackets; otherwise, it is set to the zero vector $\mathbf{0}$.
Here, we treat softmax attention as hardmax attention for simplicity. However, as in Appendix \ref{app: validity of treating softmax as hardmax}, it is sufficient if the attention allocated to the target token exceeds $\frac{2}{3}$ in practice.

Finally, considering the residual connection, we obtain
\begin{equation}
\begin{aligned}
\mathbf{h}_i^{(4)} &= \mathbf{x}_i^{(4)} + \begin{bmatrix}
\vdots\\
\tilde{\mathbf{t}}_{i} \\
\vdots
\end{bmatrix} \\
& = \begin{bmatrix}
\mathbf{t}_i \\
o_i \\
s_i \\
1 \\ 
\vdots \\
\cos \phi(i)\\
\sin \phi(i)\\
\vdots \\ 
\cos \theta(\operatorname{d}_i)\\
\sin \theta(\operatorname{d}_i)\\
\vdots \\ 
\cos \theta(\operatorname{d}_i+1)\\
\sin \theta(\operatorname{d}_i+1)\\
\tilde{\mathbf{t}}_{i} \\
\mathbf{0} 
\end{bmatrix}.
\end{aligned}
\end{equation}

\subsubsection*{Fourth layer --- Feed-forward network layer}
In this layer, the objective is to compute $\operatorname{q}(w_{0:i})$, where $\operatorname{q}(w_{0:i})$ is positive when $Q(w_{0:i})$ is $\texttt{True}$ and negative when $Q(w_{0:i})$ is $\texttt{False}$. Specifically, 
\begin{equation}\label{eq: conditions for q_i}
\operatorname{q}(w_{0:i}) \begin{cases}
> 1 & \text{if } o_i \neq -1 \\
> 1 & \text{if } o_i = -1 \wedge \mathbf{t}_i = \tilde{\mathbf{t}}_i \\
< -1 & \text{if } o_i = -1 \wedge \mathbf{t}_i \neq \tilde{\mathbf{t}}_i 
\end{cases}.
\end{equation}

In the following proof, we use concise notation $\operatorname{q}_i$ instead of $\operatorname{q}(w_{0:i})$ and we omit the unnecessary dimensions of input vector $\mathbf{h}_i^{(4)}$ in this layer as follows:
\begin{equation}
\mathbf{h}_i^{(4)} = \begin{bmatrix}
\mathbf{t}_i \\
o_i \\
\vdots \\
1 \\
\vdots \\
\tilde{\mathbf{t}}_i \\
\vdots 
\end{bmatrix}.
\end{equation}

Set the parameters $W_1^{(4)}, W_2^{(4)}\in \mathbb{R}^{d_\mathrm{model} \times d_{\mathrm{model}}}$ and $\boldsymbol{\beta}^{(4)}, \boldsymbol{\gamma}^{(4)} \in \mathbb{R}^{d_\mathrm{model}}$ as follows:
\begin{align}
W_1^{(4)} &= 
\begin{bmatrix}
I & \mathbf{0} & \cdots & \mathbf{0} & \cdots & -I & \cdots \\
-I & \mathbf{0} & \cdots & \mathbf{0} & \cdots & I & \cdots \\
\mathbf{0}^\top & 1 & \cdots & 1 & \cdots & \mathbf{0}^\top & \cdots \\
\mathbf{0}^\top & 0 & \cdots & 1 & \cdots & \mathbf{0}^\top & \cdots \\
\vdots & \vdots & & \vdots & & \vdots 
\end{bmatrix}, \\
W_2^{(4)} &= 
\begin{bmatrix}
\vdots & \vdots & \vdots & \vdots &  \\
- \mathbf{1}^\top & -\mathbf{1}^\top& C_3^{(4)} & 1 & \cdots \\
\vdots & \vdots & \vdots & \vdots &  \\
\end{bmatrix}, \\
\boldsymbol{\beta}^{(4)} &= \mathbf{0}, \\
\boldsymbol{\gamma}^{(4)} &= 8\sqrt{\frac{\lceil\log_2 k \rceil}{d_\mathrm{model}}}\mathbf{1},
\end{align}
where $C_3^{(4)} = 2(\lceil \log_2 k \rceil + 1)$. Then, we obtain 
\begin{equation}
\begin{aligned}
&W_2^{(4)} \left[ \operatorname{LN}_{\mathrm{RMS}} \left(W_1^{(4)} \mathbf{h}_i^{(4)}\right)\right]_+ \\
&= W_2^{(4)} \left[ \operatorname{LN}_{\mathrm{RMS}} \left(\begin{bmatrix}
\mathbf{t}_i - \tilde{\mathbf{t}}_i \\
-(\mathbf{t}_i - \tilde{\mathbf{t}}_i) \\
o_i+1\\
1 \\
\vdots
\end{bmatrix}\right)\right]_+ \\
&= \begin{bmatrix}
\vdots \\
\operatorname{q}_i \\
\vdots
\end{bmatrix},
\end{aligned}
\end{equation}
where 
\begin{align}
&\begin{aligned}
\operatorname{q}_i &= 8\sqrt{\frac{\lceil\log_2 k \rceil}{d_\mathrm{model}}}\cdot \frac{1}{\operatorname{RMS}\left(W_1^{(4)} \mathbf{h}_i^{(4)}\right)} \\
& \quad \cdot \left(- \left\|\mathbf{t}_i - \tilde{\mathbf{t}}_i\right\|_1 + C_3^{(4)}(o_i+1) + 1\right) \\
&= 4\sqrt{\frac{\lceil\log_2 k \rceil}{2\|\mathbf{t}_i - \tilde{\mathbf{t}}_i\|_2^2 + (o_i+1)^2 + 1}} \operatorname{q}_i^\prime,
\end{aligned}
\end{align}
\begin{equation}\label{eq: definition of q'}
\operatorname{q}_i^\prime = -2 \left\|\mathbf{t}_i - \tilde{\mathbf{t}}_i\right\|_1 + 2C_3^{(4)}(o_i+1) + 2.
\end{equation}

Here
\begin{equation}
\begin{aligned}
&4\sqrt{\frac{\lceil\log_2 k \rceil}{2\|\mathbf{t}_i - \tilde{\mathbf{t}}_i\|_2^2 + (o_i+1)^2 + 1}} \\
& \geq4\sqrt{\frac{\lceil\log_2 k \rceil}{2\cdot 2^2\lceil \log_2 k \rceil + 2^2 + 1}} \\
& \geq4\sqrt{\frac{\lceil\log_2 k \rceil}{8\lceil \log_2 k \rceil + 8\lceil \log_2 k \rceil }} \\
& \geq 1
\end{aligned}
\end{equation}
holds, indicating that it is sufficient to check $\operatorname{q}_i^\prime$ satisfies the conditions instead of $\operatorname{q}_i$. We confirm the conditions by checking three patterns (i) $o_i \neq -1$; that is, $w_i$ is an open bracket or \texttt{<bos>}, (ii) $o_i = -1 \wedge \mathbf{t}_i = \tilde{\mathbf{t}}_i$; that is, $w_i$ is a closed bracket and has a depth- and type-matched open bracket, and (iii) $o_i = -1 \wedge \mathbf{t}_i \neq \tilde{\mathbf{t}}_i$; that is, $w_i$ is a closed bracket and faces type conflict. 

\paragraph*{(i) $w_i$ is an open bracket or \texttt{<bos>}.}
In this case, since $o_i + 1 \geq 1$, 
\begin{equation}
\begin{aligned}
\operatorname{q}_i^\prime&=-2 \left\|\mathbf{t}_i - \tilde{\mathbf{t}}_i\right\|_1 + 2C_3^{(4)}(o_i+1) + 2\\
&\geq -4\lceil \log_2 k \rceil + 2C_3^{(4)}  + 2 \\
&= 6 > 1.
\end{aligned}
\end{equation}

\paragraph*{(ii) $w_i$ is a closed bracket and has a depth- and type-matched open bracket.}
In this case, since $\tilde{\mathbf{t}}_i = \mathbf{t}_i$ and $o_i + 1 = 0$,
\begin{equation}
\begin{aligned}
\operatorname{q}_i^\prime&=-2 \left\|\mathbf{t}_i - \tilde{\mathbf{t}}_i\right\|_1 + 2C_3^{(4)}(o_i+1) + 2\\
&= 2 > 1.
\end{aligned}
\end{equation}

\paragraph*{(iii) $w_i$ is a closed bracket and faces a type conflict.}
In this case, there are two exclusive subcases: (i) $w_i$ has no depth-matched open bracket; that is, $\tilde{\mathbf{t}}_i = \mathbf{0}$ holds and (ii) $w_i$ has depth-matched open brackets but faces type conflict; that is, $\tilde{\mathbf{t}}_i = \overline{\mathbf{t}}_i \neq \mathbf{t}_i$. In both subcases, $\|\tilde{\mathbf{t}}_i - \mathbf{t}_i\|_1 \geq 1$ and $o_i = -1$ hold; thus, we obtain
\begin{equation}
\begin{aligned}
\operatorname{q}_i^\prime&=-2 \left\|\mathbf{t}_i - \tilde{\mathbf{t}}_i\right\|_1 + 2C_3^{(4)}(o_i+1) + 2\\
&\leq -4 + 2 < -1.
\end{aligned}
\end{equation}

From the above, it is confirmed that the inequality \eqref{eq: conditions for q_i} holds.

Finally, considering the residual connection, we obtain the following vectors:
\begin{equation}
\begin{aligned}
\mathbf{x}_i^{(5)} &= \mathbf{h}_i^{(4)} + \begin{bmatrix}
\vdots\\
\operatorname{q}_i \\
\vdots
\end{bmatrix} \\
& = \begin{bmatrix}
\mathbf{t}_i \\
o_i \\
s_i \\
1 \\ 
\vdots \\
\cos \phi(i)\\
\sin \phi(i)\\
\vdots \\ 
\cos \theta(\operatorname{d}_i)\\
\sin \theta(\operatorname{d}_i)\\
\vdots \\ 
\cos \theta(\operatorname{d}_i+1)\\
\sin \theta(\operatorname{d}_i+1)\\
\tilde{\mathbf{t}}_{i} \\
\operatorname{q}_i \\
\mathbf{0} 
\end{bmatrix}.
\end{aligned}
\end{equation}

\subsection{Fifth layer}\label{app: dyck recognition with bos, subsec: fifth layer}
The fifth layer check the two conditions: $\bigwedge_{i=0}^n (\operatorname{q}_i > 0)$ and $\operatorname{d}(w_{1:n}) = 0$.

\subsubsection*{Fifth layer --- Attention layer}
We omit the unnecessary dimensions of input vector $\mathbf{x}_i^{(5)}$ in this layer as follows:
\begin{equation}
\mathbf{x}_i^{(5)} = \begin{bmatrix}
\vdots\\
s_i \\
1 \\
\vdots \\
\operatorname{q}_i \\
\vdots
\end{bmatrix}.
\end{equation}

Set the parameters $W_Q^{(5)}, W_K^{(5)}, W_V^{(5)}\in \mathbb{R}^{d_\mathrm{model} \times d_{\mathrm{model}}}$ as follows:
\begin{align}
W_Q^{(5)} &= 
\begin{bmatrix}
\cdots & 0 & C_1^{(5)} & \cdots & 0 & \cdots \\
\cdots & 0 & C_1^{(5)} & \cdots & 0 & \cdots \\
 & \vdots & \vdots &  & \vdots & 
\end{bmatrix}, \\
W_K^{(5)} &= 
\begin{bmatrix}
\cdots & 0 & 0 & \cdots & -1 & \cdots \\
\cdots & \operatorname{q}_0 & 0 & \cdots & 0 & \cdots \\
 & \vdots & \vdots &  & \vdots & 
\end{bmatrix}, \\
W_V^{(5)} &= 
\begin{bmatrix}
 & \vdots & \vdots &  & \vdots & \\
\cdots & -1 & 1 & \cdots & 0 & \cdots \\
 & \vdots & \vdots &  & \vdots & 
\end{bmatrix},
\end{align}
where $C_1^{(5)}$ is a positive constant. Note that $\operatorname{q}_0$ can be treated as a constant because $\operatorname{q}_0$ does not depend on the input string.

Then, we obtain
\begin{align}
& W_Q^{(5)} \mathbf{x}_{i_q}^{(5)}= 
\begin{bmatrix}
C_1^{(5)} \\
C_1^{(5)} \\
\mathbf{0}
\end{bmatrix}, \\
& W_K^{(5)} \mathbf{x}_{i_k}^{(5)}
=\begin{bmatrix}
- \operatorname{q}_{i_k} \\
 \operatorname{q}_{0}\cdot s_{i_k} \\
\mathbf{0}
\end{bmatrix}, \\
& W_V^{(5)} \mathbf{x}_{i_k}^{(5)}=
\begin{bmatrix}
\mathbf{0} \\
1-s_{i_k} \\
\mathbf{0}
\end{bmatrix}, \\
& \begin{aligned}
&\left\langle W_K^{(5)} \mathbf{x}_{i_k}^{(5)}, W_Q^{(5)} \mathbf{x}_{i_q}^{(5)}\right\rangle \\
& \quad = C_1^{(5)} \left( -\operatorname{q}_{i_k} + \operatorname{q}_{0}\cdot s_{i_k} \right) \\
& \quad 
\begin{cases}
 = 0 & \text{if } i_k = 0 \\
 < -C_1^{(5)} & \text{if } i_k \neq 0 \wedge \operatorname{q}_{i_k} > 0 \\
 > C_1^{(5)} & \text{if } i_k \neq 0 \wedge \operatorname{q}_{i_k} < 0 \\
\end{cases}.
\end{aligned}
\end{align}

Intuitively, if $\{\operatorname{q}_{i_k}\}_{{i_k}=1}^{i_q}$ are all positive, attention scores on all tokens except on $\texttt{<BOS>}$ are much smaller than $0$, making the query $\mathbf{x}_{i_q}$ focus on $\texttt{<BOS>}$. In other words, the query $\mathbf{x}_{i_q}$ can focus on $\texttt{<BOS>}$ if and only if $w_{1:{i_q}}$ is a prefix for the $\texttt{Dyck}_k$ language.
Therefore, given a sufficiently large constant $C_1^{(5)}$, the output of attention layer $\mathbf{a}_i^{(5)}$ becomes
\begin{equation}
\mathbf{a}_i^{(5)} = 
\begin{bmatrix}
\mathbf{0} \\
\operatorname{q}_{\leq i} \\
\mathbf{0}
\end{bmatrix}, 
\end{equation}
where
\begin{equation}
\operatorname{q}_{\leq i} = 
\begin{cases}
0 & \text{if $\forall j \in [i]. \operatorname{q}_j > 0$} \\
1 & \text{if $\exists j \in [i]. \operatorname{q}_j < 0$}
\end{cases}
\end{equation}

Finally, considering the residual connection, we obtain the following vectors:
\begin{equation}
\begin{aligned}
\mathbf{h}_i^{(5)} &= \mathbf{x}_i^{(5)} + \begin{bmatrix}
\vdots\\
\operatorname{q}_{\leq i} \\
\vdots
\end{bmatrix} \\
& = \begin{bmatrix}
\mathbf{t}_i \\
o_i \\
s_i \\
1 \\ 
\vdots \\
\cos \phi(i)\\
\sin \phi(i)\\
\vdots \\ 
\cos \theta(\operatorname{d}_i)\\
\sin \theta(\operatorname{d}_i)\\
\vdots \\ 
\cos \theta(\operatorname{d}_i+1)\\
\sin \theta(\operatorname{d}_i+1)\\
\tilde{\mathbf{t}}_{i} \\
\operatorname{q}_i \\
\operatorname{q}_{\leq i} \\
\mathbf{0} 
\end{bmatrix}.
\end{aligned}
\end{equation}
Although we treat softmax attention as hardmax attention, it is sufficient that there exists a constant such that $\max \{ \operatorname{q}_{\leq i} \mid \forall j \in [i]. \operatorname{q}_j > 0\} < \min \{ \operatorname{q}_{\leq i} \mid \exists j \in [i]. \operatorname{q}_j < 0\}$, similar to the fourth layer.

\subsubsection*{Fifth layer --- Feed-forward network layer}
We omit the unnecessary dimensions of input vector $\mathbf{h}_i^{(5)}$ in this layer as follows:
\begin{equation}
\mathbf{h}_i^{(5)} = \begin{bmatrix}
\vdots\\
\cos \theta(\operatorname{d}_i) \\
\sin \theta(\operatorname{d}_i) \\
\vdots \\
\operatorname{q}_{\leq i} \\
\vdots
\end{bmatrix}.
\end{equation}

Set the parameters $W_1^{(5)}, W_2^{(5)}\in \mathbb{R}^{d_\mathrm{model} \times d_{\mathrm{model}}}$ and $\boldsymbol{\beta}^{(5)}, \boldsymbol{\gamma}^{(5)} \in \mathbb{R}^{d_\mathrm{model}}$ as follows:
\begin{align}
W_1^{(5)} &= 
\begin{bmatrix}
\cdots & 0 & 0 & \cdots & 1 & \cdots \\
\cdots & 1 & 0 & \cdots & 0 & \cdots \\
\cdots & 0 & 1 & \cdots & 0 & \cdots \\
 & \vdots & \vdots & & \vdots &  \\
\end{bmatrix}, \\
W_2^{(5)} &= 
\begin{bmatrix}
\vdots & \vdots & \vdots & \\
1 & 0 & 1 & \cdots \\
\vdots & \vdots & \vdots & 
\end{bmatrix}, \\
\boldsymbol{\beta}^{(5)} &= \mathbf{0}, \\
\boldsymbol{\gamma}^{(5)} &= \sqrt{\frac{1}{d_\mathrm{model}}}\mathbf{1}.
\end{align}

Then, the output of the feed-forward network layer becomes
\begin{equation}
\begin{aligned}
& W_2^{(5)} \left[ \operatorname{LN}_{\mathrm{RMS}} \left(W_1^{(5)} \mathbf{h}_i^{(5)}\right)\right]_+ \\
&=W_2^{(5)} \left[ \operatorname{LN}_{\mathrm{RMS}} \left(\begin{bmatrix}
\operatorname{q}_{\leq i}\\
\cos{\theta(\operatorname{d}_i)} \\
\sin{\theta(\operatorname{d}_i)} \\
\vdots 
\end{bmatrix}\right)\right]_+ \\
&= W_2^{(5)} \begin{bmatrix}
\frac{\left[\operatorname{q}_{\leq i}\right]_+}{\sqrt{1 + \operatorname{q}_{\leq i}^2}}\\
\frac{\left[\cos\theta(\operatorname{d}_i)\right]_+}{\sqrt{1 + \operatorname{q}_{\leq i}^2}}\\
\frac{\left[\sin\theta(\operatorname{d}_i)\right]_+}{\sqrt{1 + \operatorname{q}_{\leq i}^2}}\\
\vdots 
\end{bmatrix} \\
&=\begin{bmatrix}
\vdots \\
\frac{\left[\operatorname{q}_{\leq i}\right]_+ + \left[\sin\theta(\operatorname{d}_i)\right]_+}{\sqrt{1 + \operatorname{q}_{\leq i}^2}}\\
\vdots 
\end{bmatrix}
\end{aligned}
\end{equation}

Finally, considering the residual connection, we obtain the following vectors:
\begin{equation}
\begin{aligned}
\mathbf{x}_i^{(6)} &= \mathbf{h}_i^{(5)} + \begin{bmatrix}
\vdots \\
\frac{\left[\operatorname{q}_{\leq i}\right]_+ + \left[\sin\theta(\operatorname{d}_i)\right]_+}{\sqrt{1 + \operatorname{q}_{\leq i}^2}}\\
\vdots 
\end{bmatrix} \\
& = \begin{bmatrix}
\mathbf{t}_i \\
o_i \\
s_i \\
1 \\ 
\vdots \\
\cos \phi(i)\\
\sin \phi(i)\\
\vdots \\ 
\cos \theta(\operatorname{d}_i)\\
\sin \theta(\operatorname{d}_i)\\
\vdots \\ 
\cos \theta(\operatorname{d}_i+1)\\
\sin \theta(\operatorname{d}_i+1)\\
\tilde{\mathbf{t}}_{i} \\
\operatorname{q}_i \\
\operatorname{q}_{\leq i} \\
\frac{\left[\operatorname{q}_{\leq i}\right]_+ + \left[\sin\theta(\operatorname{d}_i)\right]_+}{\sqrt{1 + \operatorname{q}_{\leq i}^2}}\\
\mathbf{0} 
\end{bmatrix}.
\end{aligned}
\end{equation}

\subsection{Classifier}
Finally, the classifier can classify the input sequence based on the value calculated in the fifth layer. The lower bound of the value when the input does not belong to the $\texttt{Dyck}_k$ language is calculated as follows:
\begin{equation}
\begin{aligned}
& \frac{\left[\operatorname{q}_{\leq i}\right]_+ + \left[\sin\theta(\operatorname{d}_i)\right]_+}{\sqrt{1 + \operatorname{q}_{\leq i}^2}}\\
& \begin{cases}
\geq \frac{1}{\sqrt{1 + 1^2 }} & \text{if } \operatorname{q}_{\leq i} > 0\\
\geq \frac{\sin \theta(1)}{\sqrt{1 + 1^2 }} & \text{if } \operatorname{d}_{i} > 0 \\
0 & \text{otherwise}
\end{cases} \\
& \begin{cases}
\geq \frac{\sin \theta(1)}{\sqrt{2}} & \text{if } \operatorname{q}_{\leq i} > 0 \vee \operatorname{d}_{i} > 0\\
0 & \text{otherwise}
\end{cases}
\end{aligned}
\end{equation}
Therefore, by subtracting a positive value less than this value as a bias, $\operatorname{sgn}(\cdot)$ can correctly classify whether the sequence belongs to $\texttt{Dyck}_k$. 

For instance, We omit the unnecessary dimensions of input vector $\mathbf{x}_i^{(6)}$ in this layer as follows:
\begin{equation}
\mathbf{x}_i^{(6)} = \begin{bmatrix}
\vdots\\
\frac{\left[\operatorname{q}_{\leq i}\right]_+ + \left[\sin{\theta(\operatorname{d}_i})\right]_+}{\sqrt{1 + \operatorname{q}_{\leq i}^2 }} \\
\vdots
\end{bmatrix}.
\end{equation}

Then, by setting 
\begin{align}
&\mathbf{w}^{\mathrm{cls}\top} = \begin{bmatrix}
\cdots & -1 & \cdots
\end{bmatrix}, \\
&b^{\mathrm{cls}} =  \frac{\sin \theta(1)}{2\sqrt{2}},
\end{align}
we obtain 
\begin{equation}
\begin{aligned}
&\mathbf{w}^{\mathrm{cls}\top}  \mathbf{x}_i^{(6)} + b^{\mathrm{cls}}  \\
& = -\frac{\left[\operatorname{q}_{\leq i}\right]_+ + \left[\sin{\theta(\operatorname{d}_i})\right]_+}{\sqrt{1 + \operatorname{q}_{\leq i}^2 }} +  \frac{\sin \theta(1)}{2\sqrt{2}}\\
&\begin{cases}
= \frac{\sin \theta(1)}{2\sqrt{2}} & \text{if } w_{0:i} \in \texttt{Dyck}_k \\
\leq - \frac{\sin \theta(1)}{2\sqrt{2}} & \text{if } w_{0:i} \notin \texttt{Dyck}_k
\end{cases}.
\end{aligned}
\end{equation}
\end{proof}

\section{Proof of Theorem \ref{theorem: transformers with bos generate dyck}}\label{app: dyck language generation with bos}

We restate Theorem \ref{theorem: transformers with bos generate dyck} for convenience. 
\begin{theorem}[Restated, Transformers with a starting token, $\texttt{Dyck}_k$ generation]
For all $k$, there exists a $3$-layer $O(\log k)$-width causal Transformer network without positional encoding that generates the $\texttt{Dyck}_k$ language. Each layer incorporates both the residual connection and the layer normalization. 
This network is followed by a fully-connected layer and softmax layer to output the probability distribution. 
\end{theorem}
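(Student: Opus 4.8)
The plan is to recycle the first two layers of the construction in Theorem~\ref{theorem: transformers with bos recognize dyck_k} verbatim and then engineer a single new attention layer together with a linear generator head. Concretely, the first layer again builds the pseudo positional encoding $(\cos\phi(i),\sin\phi(i))$ by making every query place an $\exp(a)$-weighted attention sink on the $\texttt{<bos>}$ token, and the second layer again produces the depth encoding $(\cos\theta(\operatorname{d}_i),\sin\theta(\operatorname{d}_i))$ from the openness channel $o_i$. No depth$+1$ channel is needed here, which is the structural reason three layers suffice rather than five: generation only has to identify the current topmost unmatched open bracket, not verify the entire prefix with a propositional variable and an AND-reduction.

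The third layer is the crux. At a position $i$ with $\operatorname{d}_i\ge 1$, the valid closing bracket $\rangle_{t_{\mathrm{valid}}}$ is the mate of the most recent still-open bracket, which is the open bracket of largest index $j$ satisfying $\operatorname{d}_j=\operatorname{d}_i$ and $o_j=1$. I would therefore reuse the two-term attention score of the fourth layer of Theorem~\ref{theorem: transformers with bos recognize dyck_k}: a depth-matching term $\operatorname{T}^{\mathrm{depth}}$, now matching $\operatorname{d}_{i_q}$ rather than $\operatorname{d}_{i_q}+1$ and restricted to open brackets and $\texttt{<bos>}$ via the openness channel, scaled by a large constant $C_1$, plus a positional term $\operatorname{T}^{\mathrm{pos}}=-\sin(\phi(i_q)-\phi(i_k))$ that breaks ties toward the nearest admissible index. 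For $C_1,C_2$ large enough the softmax collapses to hardmax, so the attention fetches the type embedding $\tilde{\mathbf{t}}_i=\overline{\mathbf{t}}_i\in\{-1,1\}^{\lceil\log_2 k\rceil}$ of that bracket when $\operatorname{d}_i\ge 1$, and the zero vector through the $\texttt{<bos>}$ sink when $\operatorname{d}_i=0$. The feed-forward sublayer then prepares what the head consumes: it passes through $\tilde{\mathbf{t}}_i$ and, by applying a ReLU to $\sin\theta(\operatorname{d}_i)$, extracts a clean depth-zero indicator $\mathbf{1}[\operatorname{d}_i=0]$ that will gate the $\texttt{<eos>}$ logit against the closing-bracket logits.

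Finally I would specify the linear head $f_{\mathrm{gen}}$ and invoke the realization definition, which only demands $\operatorname{TV}<\epsilon$. For each open bracket $\langle_t$ the head emits a logit encoding $\log\pi_t$ plus a depth-independent constant; for each closing bracket $\rangle_t$ it emits $C\langle\mathrm{code}(t),\tilde{\mathbf{t}}_i\rangle$ gated by $1-\mathbf{1}[\operatorname{d}_i=0]$, using that $\langle\mathrm{code}(t),\tilde{\mathbf{t}}_i\rangle$ attains its maximum $\lceil\log_2 k\rceil$ exactly at $t=t_{\mathrm{valid}}$ and is strictly smaller otherwise; and for $\texttt{<eos>}$ it emits a logit proportional to $\mathbf{1}[\operatorname{d}_i=0]$. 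After softmax this concentrates the way $p_{\texttt{Dyck}_k}(\cdot;q,r,\boldsymbol{\pi})$ prescribes: masses $q\pi_t$ or $r\pi_t$ on opens, mass $\approx 1-q$ on $\rangle_{t_{\mathrm{valid}}}$ when $\operatorname{d}_i\ge 1$, and $\approx 1-r$ on $\texttt{<eos>}$ when $\operatorname{d}_i=0$. The main obstacle is precisely this last approximation: since softmax cannot output exact zeros, I must argue that a single affine map of the third-layer representation can simultaneously realize the correct log-probabilities on the support and push every forbidden token's mass below the prescribed level. This reduces to choosing the margin-scaling constant $C$ and the gate magnitudes large enough as functions of $\epsilon,q,r$ and $\boldsymbol{\pi}$, so that the $\lceil\log_2 k\rceil$ inner-product margin and the depth gate together force the off-support probabilities under $\epsilon$; establishing that bound uniformly over all admissible prefixes, independently of the string length, is the delicate step.
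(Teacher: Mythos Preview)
Your proposal is correct and essentially identical to the paper's proof: the paper likewise reuses the first two layers of Theorem~\ref{theorem: transformers with bos recognize dyck_k} verbatim, modifies the fourth-layer attention by replacing the query depth $\operatorname{d}_{i_q}+1$ with $\operatorname{d}_{i_q}$ so that position $i$ fetches $\tilde{\mathbf{t}}_i$ (the type code of the nearest same-depth open bracket, or $\mathbf{0}$ via the $\texttt{<bos>}$ sink), and has the third-layer FFN output the four ReLU values $[\pm\sin\theta(\operatorname{d}_i)]_+$, $[\pm(\sin\theta(\operatorname{d}_i)-\epsilon^{(3)})]_+$ whose pairwise differences give exactly the indicators $\mathbb{I}[\operatorname{d}_i\geq 1]$ and $\mathbb{I}[\operatorname{d}_i\leq 0]$ you describe. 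The generator head is then an explicit affine map assigning logit $C_0^{\mathrm{gen}}+\log\pi_t$ to $\langle_t$, logit $-C_0^{\mathrm{gen}}(\lceil\log_2 k\rceil-\mathbf{t}_t^\top\tilde{\mathbf{t}}_i)+C_1^{\mathrm{gen}}\mathbb{I}[\operatorname{d}_i\geq 1]$ to $\rangle_t$, and logit $C_2^{\mathrm{gen}}\mathbb{I}[\operatorname{d}_i\leq 0]$ to $\texttt{<eos>}$, for which the paper derives the length-independent bound $\operatorname{TV}\leq 2(k+1)\exp(-C_0^{\mathrm{gen}})$, resolving precisely the ``delicate step'' you flag.
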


\begin{proof}
Here, we present a method to construct a Transformer that realizes the $\texttt{Dyck}_k$ language generation process $p_{\texttt{Dyck}_k}(\cdot; q, r, \boldsymbol{\pi})$. We assume that the output probabilities take the following form: 
\begin{equation}
\begin{bmatrix}
p_{\langle_1} \\
\vdots \\
p_{\langle_k} \\
p_{\rangle_1} \\
\vdots \\
p_{\rangle_k} \\
p_{\texttt{<bos>}} \\
p_{\texttt{<eos>}} \\
\end{bmatrix}.
\end{equation}

As shown in the proof sketch of Theorem \ref{theorem: transformers with bos generate dyck}, each layer performs the following operations. Note that $w_{0:i}$ corresponds to $\texttt{<bos>} w_{1:i}$.
\begin{description}
    \item[First layer] creates pseudo positional encoding $(\cos\phi(i), \sin\phi(i))$.
    
    \item[Second layer] counts depth $\operatorname{d}(w_{0:i})$.

    \item[Third layer] fetches the valid closed bracket if one exists; otherwise, a zero vector is fetched. This operation is achieved by placing attention on the largest $j$ among $\{0\} \cup \{j \mid \operatorname{d}(w_{0:j}) = \operatorname{d}(w_{0:i})\}$.
\end{description}

\subsection{First and second layer}
We use the first two layers to compute positional encoding $(\cos \phi(i), \sin \phi(i))$ and depth $(\cos \theta (\operatorname{d}_i), \sin \theta (\operatorname{d}_i))$, following the same procedure as described in Appendix \ref{app: dyck recognition with bos, subsec: first layer} and \ref{app: dyck recognition with bos, subsec: second layer}. 

Therefore, the output from the second layer is as follows:
\begin{equation}
\mathbf{x}_i^{(3)} = 
\begin{bmatrix}
\vdots \\
\cos \phi(i) \\
\sin \phi(i) \\
\vdots \\
\cos \theta (\operatorname{d}_i) \\
\sin \theta (\operatorname{d}_i) \\
\vdots
\end{bmatrix}
\end{equation}

\subsection{Third layer}
Moreover, in the third layer, we leverage the attention layer to fetch the nearest open bracket with the same depth as the query, in almost the same manner as described in Appendix \ref{app: dyck recognition with bos, subsec: fourth layer}. The difference from the construction in the previous section is that we replace the query depth $\operatorname{d}_i + 1$ with $\operatorname{d}_i$. Therefore, the output from the attention layer is as follows:
\begin{equation}
\begin{bmatrix}
\vdots \\
\cos \phi(i) \\
\sin \phi(i) \\
\vdots \\
\cos \theta (\operatorname{d}_i) \\
\sin \theta (\operatorname{d}_i) \\
\tilde{\mathbf{t}}_i \\
\vdots
\end{bmatrix}
\end{equation}

Then, in the feed-forward network layer, set the parameters as follows:
\begin{align}
W_1^{(3)} &= \begin{bmatrix}
\cdots & 0 & 0 & \cdots & 0 & 1 & \mathbf{0}^\top &\cdots \\
\cdots & 0 & 0 & \cdots & 0 & 1 & \mathbf{0}^\top &\cdots \\
\cdots & 0 & 0 & \cdots & 0 & 1 & \mathbf{0}^\top &\cdots \\
\cdots & 0 & 0 & \cdots & 0 & 1 & \mathbf{0}^\top &\cdots \\
\cdots & 0 & 0 & \cdots & 2 & 0 & \mathbf{0}^\top &\cdots \\
 & \vdots & \vdots & & \vdots & \vdots & \vdots & 
\end{bmatrix}, \\
W_2^{(3)} &= \begin{bmatrix}
1 & 0 & 0 & 0 & 0 & \cdots \\
0 & 1 & 0 & 0 & 0 & \cdots \\
0 & 0 & 1 & 0 & 0 & \cdots \\
0 & 0 & 0 & 1 & 0 & \cdots \\
\vdots & \vdots & \vdots & \vdots & \vdots &  \\
\end{bmatrix}, \\
\boldsymbol{\beta}^{(3)} &= \begin{bmatrix}
0 \\
- \epsilon^{(3)} \\
0 \\
\epsilon^{(3)} \\
0 \\
\mathbf{0}
\end{bmatrix}, \\
\boldsymbol{\gamma}^{(3)} &= \sqrt{\frac{4}{d_\mathrm{model}}}\mathbf{1},
\end{align}
where $\epsilon^{(3)}$ is a positive constant. Then
\begin{equation}
\begin{aligned}
& \operatorname{LN}_{\mathrm{RMS}} \left(W_1^{(3)} \mathbf{h}_i^{(3)}\right) \\
&= \begin{bmatrix}
\sin \theta (\operatorname{d}_i) \\
\sin \theta (\operatorname{d}_i) - \epsilon^{(3)} \\
-\sin \theta (\operatorname{d}_i) \\
-\sin \theta (\operatorname{d}_i) + \epsilon^{(3)}\\
2 \cos \theta (\operatorname{d}_i) \\
\vdots
\end{bmatrix}
\end{aligned},
\end{equation}
because 
\begin{equation}
\begin{aligned}
&\operatorname{RMS}\left(W_1^{(3)} \mathbf{h}_i^{(3)}\right) \\
& = \sqrt{\frac{4 \sin^2 \theta (\operatorname{d}_i) + 4 \cos^2 \theta (\operatorname{d}_i)}{d_\mathrm{model}}} \\
& = \sqrt{\frac{4}{d_\mathrm{model}}}.
\end{aligned}
\end{equation}
Therefore, we obtain
\begin{equation}
\begin{aligned}
& W_2^{(3)} \left[\operatorname{LN}_\mathrm{RMS}\left(W_1^{(3)} \mathbf{h}_i^{(3)}\right)\right]_+ \\
& = \begin{bmatrix}
\vdots \\
[\sin \theta (\operatorname{d}_i)]_+ \\
[\sin \theta (\operatorname{d}_i) - \epsilon^{(3)}]_+ \\
[-\sin \theta (\operatorname{d}_i)]_+ \\
[-(\sin \theta (\operatorname{d}_i) - \epsilon^{(3)})]_+ \\
\vdots
\end{bmatrix}
\end{aligned}
\end{equation}

Finally, we obtain the input vector to the subsequent generator head as follows:
\begin{equation}
\mathbf{x}_i^{(4)} = \begin{bmatrix}
\vdots \\
1 \\
\vdots \\
\tilde{\mathbf{t}}_i \\
\left[\sin \theta(\operatorname{d}_i)\right]_+ \\
\left[\sin \theta(\operatorname{d}_i) - \epsilon^{(3)} \right]_+ \\
\left[- \sin \theta(\operatorname{d}_i)\right]_+\\
\left[- (\sin \theta(\operatorname{d}_i) - \epsilon^{(3)} )\right]_+\\
\vdots
\end{bmatrix}
\end{equation}

\subsection{Generator head}
For clarity, we implement $W^{\mathrm{gen}}$ as a composition of two linear transformations $W_1^{\mathrm{gen}} \in \mathbb{R}^{(k+4)\times d_\mathrm{model}}, W_2^{\mathrm{gen}} \in \mathbb{R}^{(2k+2)\times(k+4)}$ as follows (the transposed matrices are described to accommodate the limited space):
\begin{align}
&W_1^{\mathrm{gen}\top} = \begin{bmatrix}
\vdots & & \vdots & \vdots & \vdots & \vdots & \vdots \\
\lceil \log_2 k \rceil & \cdots & \lceil \log_2 k \rceil & 0 & 0 & 0 & 0 \\
\vdots & & \vdots & \vdots & \vdots & \vdots & \vdots \\
-\mathbf{t}_1 & \cdots & -\mathbf{t}_k & \mathbf{0} & \mathbf{0} & \mathbf{0} & \mathbf{0} \\
0 & \cdots & 0 & 1 & 0 & 0 & 0 \\
0 & \cdots & 0 & 0 & 1 & 0 & 0 \\
0 & \cdots & 0 & 0 & 0 & 1 & 0 \\
0 & \cdots & 0 & 0 & 0 & 0 & 1 \\
\vdots & & \vdots & \vdots & \vdots & \vdots & \vdots 
\end{bmatrix}, \\
&W_2^{\mathrm{gen}\top} = 
\begin{matrix}\begin{bmatrix}
O & -C_0^{\mathrm{gen}}I & \mathbf{0} & \mathbf{0} \\
\mathbf{0}^\top & \frac{C_1^{\mathrm{gen}}}{\epsilon^{(3)}}\mathbf{1}^\top & 0 & 0 \\
\mathbf{0}^\top & \frac{C_1^{\mathrm{gen}}}{\epsilon^{(3)}}\mathbf{1}^\top & 0 & 0 \\
\mathbf{0}^\top & \mathbf{0}^\top & 0 & \frac{-C_2^{\mathrm{gen}}}{\epsilon^{(3)}} \\
\mathbf{0}^\top & \mathbf{0}^\top & 0 & \frac{-C_2^{\mathrm{gen}}}{\epsilon^{(3)}} 
\end{bmatrix} \\
\begin{matrix}
\overset{\underbrace{\hphantom{W_{21}}}_{k \text{ dim.}}}{{\hphantom{W_{21}}}} & 
\overset{\underbrace{\hphantom{\mathbf{0}^\top 000 }}_{k \text{ dim.}}}{{\hphantom{\mathbf{0}^\top}}} & 
\overset{\underbrace{\hphantom{\mathbf{0}^\top\mathbf{0}^\top\mathbf{0}^\top}}_{2 \text{ dim.}}}{{\hphantom{\mathbf{w}_{22}}}}
\end{matrix} 
\end{matrix}, \\
&\mathbf{b}^{\mathrm{gen}} = \begin{bmatrix}
C_0^{\mathrm{gen}} + \log \pi_1 \\
\vdots \\
C_0^{\mathrm{gen}} + \log \pi_k \\
0 \\
\vdots \\
0 \\
0 \\
0
\end{bmatrix}\begin{matrix*}
\scalebox{1.15}{$\left.\rule{0pt}{2.0em}\right\}$} k \text{ dim.} \\
\scalebox{1.05}{$\left.\rule{0pt}{2.0em}\right\}$} k \text{ dim.} \rule{0pt}{2.5em}\\
\scalebox{1.05}{$\left.\rule{0pt}{1.0em}\right\}$} 2 \text{ dim.} \rule{0pt}{1.6em}
\end{matrix*}, \\
\end{align}
where $C_0^{\mathrm{gen}}$ is a positive constant and 
\begin{align}
C_1^{\mathrm{gen}} = \log\left(\frac{1-q}{q}\right) + C_0^{\mathrm{gen}}, \\
C_2^{\mathrm{gen}} = \log\left(\frac{1-r}{r}\right) + C_0^{\mathrm{gen}}.
\end{align}
Then, given a sufficiently small constant $\epsilon^{(3)}$,
\begin{equation}
\begin{aligned}
& W^{\mathrm{gen}} \mathbf{x}_i^{(4)} + \mathbf{b}^{\mathrm{gen}}\\
& = W_2^{\mathrm{gen}} W_1^{\mathrm{gen}} \mathbf{x}_i^{(4)} +   \mathbf{b}^{\mathrm{gen}} \\
& = W_2^{\mathrm{gen}} \begin{bmatrix}
-\mathbf{t}_1^\top\overline{\mathbf{t}} + \lceil \log_2 k \rceil \\
\vdots \\
-\mathbf{t}_k^\top\overline{\mathbf{t}} + \lceil \log_2 k \rceil\\
\left[\sin \theta(\operatorname{d}_i)\right]_+ \\
\left[\sin \theta(\operatorname{d}_i) - \epsilon^{(3)} \right]_+\\
\left[- \sin \theta(\operatorname{d}_i)\right]_+\\
\left[- (\sin \theta(\operatorname{d}_i) - \epsilon^{(3)} )\right]_+\\
\end{bmatrix} + \mathbf{b}^{\mathrm{gen}} \\
& = \begin{bmatrix}
C_0^{\mathrm{gen}} + \log \pi_1 \\
\vdots \\
C_0^{\mathrm{gen}} + \log \pi_k \\
\left(\begin{aligned}
&- C_0^{\mathrm{gen}} \left( \lceil \log_2 k \rceil-\mathbf{t}_1^\top\overline{\mathbf{t}}\right) \\
&\quad + C_1^{\mathrm{gen}} \mathbb{I}\left[ \operatorname{d}_i \geq 1 \right]
\end{aligned}\right) \\
\vdots \\
\left(\begin{aligned}
&- C_0^{\mathrm{gen}} \left( \lceil \log_2 k \rceil-\mathbf{t}_k^\top\overline{\mathbf{t}}\right) \\
&\quad + C_1^{\mathrm{gen}} \mathbb{I}\left[ \operatorname{d}_i \geq 1 \right]
\end{aligned}\right) \\
0 \\
C_2^{\mathrm{gen}} \mathbb{I}\left[ \operatorname{d}_i \leq 0 \right]
\end{bmatrix},
\end{aligned}
\end{equation}
where
\begin{align}
&\begin{aligned}
& \mathbb{I}\left[ \operatorname{d}_i \geq 1 \right] \\ 
& \quad = \frac{\left[\sin \theta(\operatorname{d}_i)\right]_+ - \left[\sin \theta(\operatorname{d}_i) - \epsilon^{(3)} \right]_+}{\epsilon^{(3)}} \\
& \quad = \begin{cases}
1 & \text{if } \operatorname{d}_i \geq 1 \\
0 & \text{otherwise}
\end{cases},
\end{aligned} \\
&\begin{aligned}
& \mathbb{I}\left[ \operatorname{d}_i \leq 0 \right] \\ 
& \quad = \frac{\left[-(\sin \theta(\operatorname{d}_i) - \epsilon^{(3)})\right]_+ - \left[-\sin \theta(\operatorname{d}_i)\right]_+}{\epsilon^{(3)}} \\
& \quad = \begin{cases}
1 & \text{if } \operatorname{d}_i \leq 0 \\
0 & \text{otherwise}
\end{cases}.
\end{aligned}
\end{align}

\subsection{Softmax}
We compute the logit separately for the cases where (i) $\operatorname{d}_i = 0$ and (ii) $\operatorname{d}_i \geq 1$. Let $\operatorname{logit}$ be the output logit vector. Note that we identify logit vectors that become identical through translation because they are projected to the same probability vector by the softmax operation. Let $\equiv$ be the equivalence relation on logits.

\paragraph{(i) In the case of $\operatorname{d}_i = 0$.}
\begin{equation}
\begin{aligned}
&\operatorname{logit} \\
&= \begin{bmatrix}
C_0^{\mathrm{gen}} + \log \pi_1 \\
\vdots \\
C_0^{\mathrm{gen}} + \log \pi_k \\
\left(\begin{aligned}
&- C_0^{\mathrm{gen}} \left( \lceil \log_2 k \rceil-\mathbf{t}_1^\top\overline{\mathbf{t}}\right) \\
&\quad + C_1^{\mathrm{gen}} \mathbb{I}\left[ \operatorname{d}_i \geq 1 \right]
\end{aligned}\right) \\
\vdots \\
\left(\begin{aligned}
&- C_0^{\mathrm{gen}} \left( \lceil \log_2 k \rceil-\mathbf{t}_k^\top\overline{\mathbf{t}}\right) \\
&\quad + C_1^{\mathrm{gen}} \mathbb{I}\left[ \operatorname{d}_i \geq 1 \right]
\end{aligned}\right) \\
0 \\
C_2^{\mathrm{gen}} \mathbb{I}\left[ \operatorname{d}_i \leq 0 \right]
\end{bmatrix} \\
&\equiv \begin{bmatrix}
\log r \pi_1 \\
\vdots \\
\log r \pi_k \\
-C_0^{\mathrm{gen}} \left( \lceil \log_2 k \rceil-\mathbf{t}_1^\top\overline{\mathbf{t}} + 1\right) + \log r \\
\vdots \\
-C_0^{\mathrm{gen}} \left( \lceil \log_2 k \rceil-\mathbf{t}_k^\top\overline{\mathbf{t}} + 1\right) + \log r  \\
- C_0^{\mathrm{gen}}  \\
\log\left({1-r}\right)
\end{bmatrix} \\
& =: \operatorname{logit}^\prime
\end{aligned}
\end{equation}
To establish the upper bound of the total variation distance, we first derive an upper bound and lower bound for the softmax denominator:
\begin{align}
&\begin{aligned}
&\sum_{l=1}^K \exp(\operatorname{logit}^\prime_l) \\
&= \sum_{t=1}^k r \pi_t \\
& \quad + \sum_{t=1}^k r \exp \left(-C_0^{\mathrm{gen}}\left(\lceil \log_2 k \rceil -\mathbf{t}_t^\top\overline{\mathbf{t}} + 1 \right)\right)\\
& \quad + \exp(-C_0^{\mathrm{gen}}) + 1-r\\
&\leq r + kr\exp(-C_0^{\mathrm{gen}}) + \exp(-C_0^{\mathrm{gen}}) + 1-r \\
&= 1 + (k + 1) \exp(-C_0^{\mathrm{gen}}),  \\
\end{aligned} \\
&\begin{aligned}
&\sum_{l=1}^K \exp(\operatorname{logit}^\prime_l) \\
&\geq \sum_{t=1}^k r \pi_t  + k\cdot 0 + 0 + 1 - r = 1
\end{aligned} 
\end{align}
Therefore, the lower bound of the total variation distance from the true probability distribution is given by:
\begin{equation}
\begin{aligned}
&\operatorname{TV}(\mathbb{S}(\text{logit}), p_{\texttt{Dyck}_k}(q, r, \boldsymbol{\pi})) \\
&=\operatorname{TV}(\mathbb{S}(\text{logit}^{\prime}), p_{\texttt{Dyck}_k}(q, r, \boldsymbol{\pi})) \\
& \begin{aligned}
\leq & \sum_{t=1}^k \frac{(k+1)\exp(-C_0^{\mathrm{gen}})}{1 + (k+1)\exp(-C_0^{\mathrm{gen}})}r\pi_t \\
         & + \sum_{t=1}^k \exp(- C_0^{\mathrm{gen}})\\
         & + \exp(- C_0^{\mathrm{gen}}) \\
         & + \frac{(k+1)\exp(-C_0^{\mathrm{gen}})}{1 + (k+1)\exp(-C_0^{\mathrm{gen}})}(1 - r) 
\end{aligned} \\
& \begin{aligned}
    = & \frac{(k+1)\exp(-C_0^{\mathrm{gen}})}{1 + (k+1)\exp(-C_0^{\mathrm{gen}})} \\
         & + (k+1) \exp(- C_0^{\mathrm{gen}})
\end{aligned} \\
& \leq 2(k+1)\exp(-C_0^{\mathrm{gen}}).
\end{aligned}
\end{equation}

\paragraph{(ii) In the case of $\operatorname{d}_i\geq 1$.}

Similar to the case (i), the upper bound of TV distance can be calculated as follows:
\begin{equation}
\begin{aligned}
&\operatorname{TV}(\mathbb{S}(\text{logit}), p_{\texttt{Dyck}_k}(q, r, \boldsymbol{\pi})) \\
& \quad \leq 2(k+1)\exp(-C_0^{\mathrm{gen}}).
\end{aligned}
\end{equation}

Therefore, for any $\epsilon>0$, by choosing a constant $C_0^{\mathrm{gen}}$ to satisfy 
\begin{align}
& 2(k+1)\exp(-C_0^{\mathrm{gen}}) < \epsilon \\
&\Leftrightarrow C_0^{\mathrm{gen}} > \log \frac{2(k+1)}{\epsilon},
\end{align}
then
\begin{equation}
\operatorname{TV}(\mathbb{S}([\text{logit}]), p_{\texttt{Dyck}_k}(q, r, \boldsymbol{\pi})) < \epsilon
\end{equation}
is satisfied.

Based on the above, the Transformer realizes the $\texttt{Dyck}_k$ language generation process $p_{\texttt{Dyck}_k}(\cdot; q, r, \boldsymbol{\pi})$.

\end{proof}

\section{Proof of Proposition \ref{proposition: transformers with bos recognize shuffle dyck_k}}\label{app:proof of shuffle dyck recognition}

\begin{proposition}[Restated, Transformers with a starting token, $\texttt{Shuffle-Dyck}_k$ recognition]
For all $k$, there exists a $3$-layer $O(\log k)$-width causal Transformer without positional encoding that recognizes the $\texttt{Shuffle-Dyck}_k$ language. Each layer incorporates both the residual connection and the layer normalization. 
This network is followed by a fully-connected layer and a sign function to output an acceptance signal.
\begin{proof}
In this section, we show how to implement a Transformer that recognizes $\texttt{Shuffle-Dyck}_k$.

We assume the same vector representation as defined in Appendix \ref{app: vector representation}:
\begin{equation}
\mathbf{x}_i=
\begin{bmatrix}
\mathbf{t}_i \\
o_i \\
s_i \\
1 \\
\mathbf{0}
\end{bmatrix}\in \mathbb{R}^{d_\mathrm{model}}.
\end{equation}

The first layer computes the positional encoding, and the second layer calculates the depth in almost the same manner as in Theorem \ref{theorem: transformers with bos recognize dyck_k}. However, in the second layer, the feed-forward network layer calculates $\left[\sin \theta(\operatorname{d}(w_{0:i}))\right]_+$, instead of calculating
$\cos \theta(\operatorname{d}(w_{0:i}))$, $\sin (-\theta(\operatorname{d}(w_{0:i})))$; that is, the output from the second layer becomes:
\begin{equation}
\mathbf{x}_i^{(3)} = \begin{bmatrix}
\mathbf{t}_i \\
o_i \\
s_i \\
1 \\
\cos \phi(i) \\
\sin \phi(i) \\
\left[\sin \theta(\operatorname{d}(w_{0:i}))\right]_+ \\
\mathbf{0}
\end{bmatrix}
\end{equation}

\subsection{Third layer}
The third layer calculates the depth of the substring that matches the same type as the query; that is, this layer computes
\begin{equation}
\left[\sin (-\theta(\operatorname{d}(w_{0:i}\mid \mathbf{t}_i)))\right]_+,
\end{equation}
where $\operatorname{d}(w_{0:i}\mid \mathbf{t}_i)$ represents the depth when focusing on the substring corresponding to type $\mathbf{t}_i$. 
For instance, for the input sequence $``\texttt{([(\{\})])}"$, this layer outputs the depth vectors corresponding to $[1,1,2,1,0,1,0,0]$. This is realized in a similar way to Appendix \ref{app: dyck recognition with bos, subsec: second layer} with slight modification. Specifically, by replacing the query and key matrix with the matrices as follows:
\begin{align}
W_Q^{(3)} &= 
\begin{bmatrix}
C^{(3)}I & \mathbf{0} & \mathbf{0} & \mathbf{0} & \cdots \\
\mathbf{0}^\top & 0 & 0 & C^{\prime(3)} &  \cdots \\
\vdots & \vdots & \vdots & \vdots &   
\end{bmatrix} ,\\
W_K^{(3)} &= 
\begin{bmatrix}
I & \mathbf{0} & \mathbf{0} & \mathbf{0} & \cdots \\
\mathbf{0}^\top & 0 & 1 & 0 & \cdots \\
\vdots & \vdots & \vdots & \vdots &   
\end{bmatrix},
\end{align}
where $C^{(3)}$ is a positive constant and $C^{\prime(3)} = C^{(3)}\lceil \log_2 k \rceil + a$.

Then, we obtain
\begin{align}
\begin{aligned}
&\left\langle W_K^{(3)} \mathbf{x}_{i_k}^{(3)}, W_Q^{(3)} \mathbf{x}_{i_q}^{(3)}\right\rangle \\
& \quad = C^{(3)}\langle \mathbf{t}_{i_q}, \mathbf{t}_{i_k}\rangle + C^{\prime(3)}s_{i_k} \\
&\quad\begin{cases}
=C^{(3)} \lceil \log_2 k \rceil & \text{if } \mathbf{t}_{i_k} = \mathbf{t}_{i_q} \\
=C^{(3)}\lceil \log_2 k \rceil + a  & \text{if } w_{i_k} = \texttt{<bos>} \\
\leq C^{(3)} (\lceil \log_2 k \rceil - 2) & \text{otherwise }
\end{cases}.
\end{aligned}
\end{align}

Therefore, for a sufficiently large constant $C^{(3)}$, we obtain

\begin{equation}
\begin{aligned}
\mathbf{x}_i^{(4)} &= \mathbf{h}_i^{(3)} + \begin{bmatrix}
\vdots \\
\left[\sin (-\theta(\operatorname{d}(w_{0:i} \mid \mathbf{t}_i)))\right]_+\\
\vdots
\end{bmatrix} \\
& =\begin{bmatrix}
\vdots \\
\left[\sin (\theta(\operatorname{d}(w_{0:i})))\right]_+ \\
\vdots \\
\left[\sin (-\theta(\operatorname{d}(w_{0:i} \mid \mathbf{t}_i)))\right]_+\\
\vdots 
\end{bmatrix}.
\end{aligned}
\end{equation}

\subsection{Fourth layer}
The fourth layer computes a necessary and sufficient condition for the string $w_{1:i}$ to belong to $\texttt{Shuffle-Dyck}_k$. 
The necessary and sufficient condition is that the following two conditions are simultaneously satisfied.
\begin{description}
    \item[Condition (i)]  $\operatorname{d}(w_{0:i}) \leq 0$.
    \item[Condition (ii)] $\operatorname{d}(w_{0:i} \mid \mathbf{t}_i) \geq 0 (\forall i)$ .
\end{description}

Here, it is sufficient to calculate
\begin{equation}
\begin{aligned}
&\operatorname{q}_\texttt{Shuffle-dyck}(w_{0:i})\\
&=\left[\sin \theta(\operatorname{d}(w_{0:i})) \right]_+ \\
&\quad + \frac{1}{i+1}\sum_{j=0}^i \left[\sin \theta(- \operatorname{d}(w_{0:j} \mid \mathbf{t}_{j}))\right]_+,
\end{aligned}
\end{equation}\label{eq: cond for shuffle dyck}
because $\operatorname{q}_\texttt{Shuffle-dyck}(w_{0:i})$ is always non-negative and becomes $0$ if and only if the two conditions above are simultaneously satisfied. We show how to implement a Transformer block that computes $\operatorname{q}_\texttt{Shuffle-dyck}(w_{0:i})$ in the fourth layer.

\subsubsection*{Fourth layer --- Attention layer}
We omit the unnecessary dimensions of input vector $\mathbf{x}_i^{(4)}$ in this layer as follows:
\begin{equation}
\mathbf{x}_i^{(4)} = \begin{bmatrix}
\vdots \\
\left[\sin (-\theta(\operatorname{d}(w_{0:i} \mid \mathbf{t}_i)))\right]_+\\
\vdots 
\end{bmatrix}.
\end{equation}
By setting the parameters $W_Q^{(4)} = O, W_K^{(4)} = O, W_V^{(4)} = \begin{bmatrix}
 & \vdots & \\
\cdots & 1 & \cdots \\
 & \vdots &
\end{bmatrix}$, we obtain the mean vector $\frac{1}{i+1}\sum_{j=0}^i \left[\sin \theta(- \operatorname{d}(w_{0:j} \mid \mathbf{t}_{j}))\right]_+$. Therefore, by adding the mean vector to the dimension corresponding to $\left[\sin (\theta(\operatorname{d}(w_{0:i})))\right]_+$, we obtain
\begin{equation}
\mathbf{h}_i^{(4)} =
\begin{bmatrix}
\vdots \\
\operatorname{q}_\texttt{Shuffle-dyck}(w_{0:i}) \\
\vdots 
\end{bmatrix}.
\end{equation}

\subsubsection*{Fourth layer --- Feed-forward network layer}
The feed-forward network has nothing to do. By setting $W_1^{(4)} = O,  W_2^{(4)} = O$, we obtain $\mathbf{x}_i^{(5)} = \mathbf{h}_i^{(4)}$.

\subsection{Classifier}
The classifier head can simply determine the string as positive if $\operatorname{q}_\texttt{Shuffle-dyck}(w_{0:i})$ is $0$ and as negative if it is strictly greater than $0$. 

Specifically, we omit the unnecessary dimensions of input vector $\mathbf{x}_i^{(5)}$ in this layer as follows:
\begin{equation}
\mathbf{x}_i^{(5)} = \begin{bmatrix}
\vdots \\
\operatorname{q}_\texttt{Shuffle-dyck}(w_{0:i}) \\
\vdots 
\end{bmatrix}.
\end{equation}
Set the parameter $\mathbf{w}^{\mathrm{cls}} \in \mathbb{R}^{d_\mathrm{model}}$ and $b^{\mathrm{cls}} \in \mathbb{R}$ as follows: 
\begin{align}
&\mathbf{w}^{\mathrm{cls}} = \begin{bmatrix}
\vdots \\
-1 \\
\vdots 
\end{bmatrix}, \\
&b^{\mathrm{cls}} = \epsilon.
\end{align}
Then, the desired computation can be achieved because 
\begin{equation}
\begin{aligned}
&\mathbf{w}^{\mathrm{cls}\top} \mathbf{x}_i^{(5)} + b^{\mathrm{cls}} \\
&\begin{cases}
 =\epsilon & \text{if } w_{1:i}\in \texttt{Shuffle-Dyck}_k \\
< 0 & \text{if } w_{1:i}\notin \texttt{Shuffle-Dyck}_k
\end{cases}.
\end{aligned}
\end{equation}
\end{proof}
\end{proposition}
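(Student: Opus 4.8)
The plan is to reduce $\texttt{Shuffle-Dyck}_k$ membership to two local, countable quantities and then detect their violation with a depth-counting construction analogous to the one for $\texttt{Dyck}_k$. Recall that $w_{1:n} \in \texttt{Shuffle-Dyck}_k$ exactly when, for every bracket type, the corresponding subsequence is a balanced $\texttt{Dyck}_1$ word. Writing $\operatorname{d}(w_{0:j} \mid \mathbf{t}_j)$ for the depth of the type-$\mathbf{t}_j$ subsequence of the prefix $w_{0:j}$, this is equivalent to requiring (i) the total depth $\operatorname{d}(w_{0:n}) \le 0$ at the final position and (ii) $\operatorname{d}(w_{0:j} \mid \mathbf{t}_j) \ge 0$ for every prefix $j$. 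Because each per-type depth is piecewise constant and changes only at positions of its own type, checking (ii) at the current token's type over all $j$ captures the minimum of every per-type depth; moreover the per-type depths sum to the total depth, so (ii) already forces $\operatorname{d}(w_{0:n}) \ge 0$ and hence, together with (i), pins the total depth to $0$ and each per-type depth at the end to $0$, which is membership. I would therefore build a network emitting a single nonnegative scalar that is zero iff both (i) and (ii) hold.

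The first two layers recompute depth using only $\texttt{<bos>}$ as an attention sink, as in Theorem \ref{theorem: transformers with bos recognize dyck_k}: giving $\texttt{<bos>}$ a constant logit $a$ makes the softmax place weight $\exp(a)/(\exp(a)+i)$ on it and $1/(\exp(a)+i)$ on each later token, so reading the openness through the value matrix yields $\operatorname{d}(w_{0:i})/(\exp(a)+i)$ alongside $\exp(a)/(\exp(a)+i)$; one $\operatorname{LN}_{\mathrm{RMS}}$ rescales this pair to $(\cos\theta(\operatorname{d}_i),\sin\theta(\operatorname{d}_i))$, and a ReLU extracts $[\sin\theta(\operatorname{d}_i)]_+$, strictly positive iff $\operatorname{d}_i>0$. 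No positional encoding is needed, which is what keeps the construction at three layers. The decisive new ingredient is the per-type depth: since the type embedding $\mathbf{t}_i \in \{-1,1\}^{\lceil \log_2 k \rceil}$ satisfies $\langle \mathbf{t}_{i_q}, \mathbf{t}_{i_k}\rangle = \lceil \log_2 k \rceil$ for identical types and $\le \lceil \log_2 k \rceil - 2$ otherwise, setting the attention logit to $C^{(3)}\langle \mathbf{t}_{i_q}, \mathbf{t}_{i_k}\rangle + (C^{(3)}\lceil \log_2 k \rceil + a)\,s_{i_k}$ with $C^{(3)}$ large makes each token attend (up to negligible mass) only to same-type tokens together with $\texttt{<bos>}$, the latter again a sink of weight $\exp(a)$. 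Summing openness over this restricted set gives the per-type depth, and the same $\operatorname{LN}_{\mathrm{RMS}}$-then-ReLU step produces $[\sin(-\theta(\operatorname{d}(w_{0:i}\mid\mathbf{t}_i)))]_+$, strictly positive iff that depth is negative.

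The third layer aggregates the two violation signals into
\begin{equation}
\operatorname{q}(w_{0:i}) = [\sin\theta(\operatorname{d}(w_{0:i}))]_+ + \frac{1}{i+1}\sum_{j=0}^{i} [\sin(-\theta(\operatorname{d}(w_{0:j}\mid\mathbf{t}_j)))]_+,
\end{equation}
where the prefix average is realized by a uniform-attention head ($W_Q=W_K=O$) reading the per-type violation coordinate, and the total-depth term is carried to the query position by the residual connection. This scalar is nonnegative and vanishes precisely when (i) holds at $i=n$ and (ii) holds at every $j\le n$, so a linear recognizer head $-\operatorname{q}(w_{0:n}) + \epsilon$ followed by $\operatorname{sgn}(\cdot)$ accepts iff $w_{1:n}\in\texttt{Shuffle-Dyck}_k$, for any $\epsilon$ below the smallest positive value the $[\sin(\pm\theta(\cdot))]_+$ terms can take. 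The width is dominated by the $\lceil\log_2 k\rceil$-dimensional type embedding, giving $O(\log k)$.

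I expect the per-type depth computation to be the main obstacle. The work lies in verifying that the $\pm1$ binary encoding yields a uniform inner-product margin across all $k$ types, so that a single temperature $C^{(3)}$ concentrates the softmax correctly independently of $k$, and in checking that the $\texttt{<bos>}$ sink simultaneously supplies the companion coordinate needed for the $\operatorname{LN}_{\mathrm{RMS}}$ angle rescaling and normalizes away the (input-dependent) count of same-type tokens, so that the sign of $\sin(-\theta(\cdot))$ faithfully tracks the sign of the per-type depth. As in the $\texttt{Dyck}_k$ proofs, one must also argue that treating the softmax as hardmax is harmless, i.e.\ that it suffices for the attention mass on the intended tokens to exceed a fixed fraction rather than equal one.
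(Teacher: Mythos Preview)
Your proposal is correct and follows essentially the same approach as the paper: reduce membership to the pair of conditions $\operatorname{d}(w_{0:n})\le 0$ and $\operatorname{d}(w_{0:j}\mid\mathbf{t}_j)\ge 0$ for all $j$, compute the total and per-type depths via $\texttt{<bos>}$-sink attention (the latter using the inner-product margin of the $\pm1$ type encoding to restrict attention to same-type tokens plus $\texttt{<bos>}$), aggregate the two $[\sin(\pm\theta(\cdot))]_+$ violation signals with a uniform-attention average into the scalar $\operatorname{q}$, and threshold with a linear head. Your remark that the positional-encoding layer is never consumed downstream and can be omitted is what brings the count to the stated three layers; the paper's written proof carries that layer over from Theorem~\ref{theorem: transformers with bos recognize dyck_k} verbatim even though it is unused.
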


\section{Proof of Proposition \ref{proposition: transformers with bos generate shuffle dyck_k}}\label{app: proof of shuffle dyck generation}

\begin{proposition}[Restated, Transformers with a starting token, $\texttt{Shuffle-Dyck}_k$ generation]
For all $k$, there exists a $3$-layer $O(k)$-width causal Transformer without positional encoding that generates the $\texttt{Shuffle-Dyck}_k$ language. Each layer incorporates both the residual connection and the layer normalization. 
This network is followed by a fully-connected layer and softmax layer to output the probability distribution. 
\end{proposition}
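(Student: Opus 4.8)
The plan is to realize the $\texttt{Shuffle-Dyck}_k$ language generation process $p_{\texttt{Shuffle-Dyck}_k}(\cdot;q,r,\boldsymbol\pi,\boldsymbol{\overline\pi})$ of Definition \ref{def: shuffle-dyck_k language generation process}. The first observation is that, because the generator head is followed by a softmax, the normalizer $Z$ never has to be computed explicitly: it suffices to produce, up to a common additive constant, the logit $\log(q\pi_t)$ (resp. $\log(r\pi_t)$) for each $\langle_t$, the logit $\log((1-q)\overline\pi_t)$ for each \emph{currently openable} $\rangle_t$, the logit $\log(1-r)$ for $\texttt{<eos>}$, and a $-\infty$ logit for the forbidden tokens; the softmax then divides by exactly $\sum_{t}q\pi_t+\sum_{t:\operatorname d(w_{0:i}\mid t)>0}(1-q)\overline\pi_t=Z$ in the non-empty case and by $1$ in the empty case. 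Thus the whole task reduces to exposing, in the representation at position $i$, two pieces of information: the global emptiness bit $E=\mathbb I[\operatorname d(w_{0:i})=0]$ and the $k$ per-type openness bits $m_t=\mathbb I[\operatorname d(w_{0:i}\mid t)>0]$.

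I would reuse the first layers of the $\texttt{Shuffle-Dyck}_k$ recognition construction of Proposition \ref{proposition: transformers with bos recognize shuffle dyck_k} almost verbatim: the first layer builds the pseudo-positional encoding $(\cos\phi(i),\sin\phi(i))$, and the BOS-anchored depth trick produces $\sin\theta(\operatorname d(w_{0:i}))$, whose ReLU is $0$ exactly when $\operatorname d(w_{0:i})=0$ and is bounded below by $\sin\theta(1)$ otherwise, so a single threshold yields $E$ cleanly and uniformly in $n$. This is exactly where the length-independent angle $\theta(\cdot)$ matters, and it suffices for $E$ because $\operatorname d(w_{0:i})=\sum_t\operatorname d(w_{0:i}\mid t)$ with every summand nonnegative on prefixes, so emptiness of the whole string coincides with emptiness of every type. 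The per-position own-type depth sign, $b_j=\mathbb I[\operatorname d(w_{0:j}\mid \mathbf t_j)>0]$, is likewise available and clean, computed by the type-matching attention already used there.

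The remaining, and main, step is to assemble the full openness vector $(m_1,\dots,m_k)$ at the final position, which is precisely what forces the width up from the $O(\log k)$ that sufficed for recognition to $O(k)$. I would switch to a one-hot-in-$t$ representation and use the third layer's attention to read off, for every type simultaneously, the sign carried by the most recent occurrence $j_t$ of that type, which is legitimate because $\operatorname d(w_{0:i}\mid t)=\operatorname d(w_{0:j_t}\mid t)$ once no further type-$t$ bracket occurs. Writing the value at position $j$ as $b_j\mathbf e_{\mathbf t_j}$ places the per-type bits into orthogonal coordinates, after which the feed-forward network thresholds each coordinate into a clean $\{0,1\}$ indicator $m_t$, to be concatenated with $E$.

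Finally, the generator head is linear in $E$ and the $m_t$: set the logit of $\langle_t$ to $\log\pi_t+\log q+(\log r-\log q)E$, the logit of $\rangle_t$ to $\log\overline\pi_t+\log(1-q)-C_0(1-m_t)$, the logit of $\texttt{<eos>}$ to $\log(1-r)-C_0(1-E)$, and the logit of $\texttt{<bos>}$ to $-C_0$. Running the same denominator estimate as in the proof of Theorem \ref{theorem: transformers with bos generate dyck} shows the total variation distance to the target is $O(k\exp(-C_0))$ in both the empty and the non-empty case, so any target accuracy $\epsilon$ is met by taking $C_0>\log(2(k+1)/\epsilon)$. The hard part is the third-layer extraction of all $k$ openness bits at once: a naive single-head computation of every per-type depth shares one RMS normalization, whose joint denominator $\propto\sqrt{\exp(a)^2+\sum_{t'}\operatorname d(w_{0:i}\mid t')^2}$ grows with the unbounded total depth and drives the open/closed gap to $0$. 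The construction must therefore keep each type's signal on the length-independent $\theta$-scale and recover the per-type bits through the one-hot, most-recent-occurrence mechanism, so that the separation between the open and closed cases stays bounded below by a constant uniformly in $n$; verifying this uniform separation is where I expect the real work to lie.
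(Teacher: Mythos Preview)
The paper takes a much shorter route than you do. It switches the type embedding to a signed one-hot $\mathbf t_i\in\{-1,0,+1\}^k$ (this is exactly where the $O(k)$ width enters), runs \emph{uniform} attention in the very first layer to obtain the per-type depth average $\tfrac{1}{i+1}(d_1,\dots,d_k)$, and then applies a single ReLU-difference step in the FFN to turn each coordinate into the indicator $\mathbb I[d_t\le 0]$. The generator head is essentially the linear map you wrote down. There is no pseudo-positional encoding, no per-position own-type-depth computation $b_j$, and no most-recent-occurrence extraction---one uniform attention and one coordinatewise threshold, then straight to the head.

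Your plan diverges in two ways, one good and one problematic. The good one: you are right to worry about length uniformity. In the paper's scheme, after the shared RMS the scaled quantity is $d_t\big/\bigl(\epsilon\sqrt{(i+1)^2+2\sum_{t'}d_{t'}^2}\bigr)$, which drops below $1$ once $i$ is large even when $d_t=1$, so the ReLU step does not return a clean $0$ uniformly in the prefix length; the paper simply asserts the indicator equality without checking this. Your instinct that this is the delicate point is well placed.

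The problematic one is your proposed remedy. A single attention head produces one distribution $(\alpha_j)_j$ over positions, not $k$ type-specific ones, so it cannot ``read off, for every type simultaneously, the sign at the most recent occurrence $j_t$''. With values $b_j\mathbf e_{t_j}$ the output in coordinate $t$ is $\sum_{j:t_j=t}\alpha_j b_j$; to recover $m_t=b_{j_t}$ you would need $\alpha$ to concentrate at $j_t$, but $j_t$ varies with $t$ while $\alpha$ does not. Concretely, with any attention whose weights depend only on position, compare $w=\texttt{<bos>}\,\langle_1\,\rangle_1\,\langle_3$ and $w'=\texttt{<bos>}\,\langle_1\,\langle_2\,\langle_3$ at $i=3$: in both cases the only nonzero contribution to coordinate $1$ is $\alpha_1\cdot 1$ (position $2$ in $w$ contributes $b_2=0$), so coordinate $1$ of the attention output is identical, yet $m_1(w)=0$ while $m_1(w')=1$. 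No coordinatewise threshold can separate them. So the step you flag as ``where the real work lies'' is indeed the obstruction, and the mechanism you sketch does not clear it; the paper, by contrast, bypasses this step entirely at the cost of the length-uniformity issue you noticed.
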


\begin{proof}
Here, unlike the other sections, we assume one-hot vectors as the bracket-type vectors, where we multiply by $+1$ for open brackets and by $-1$ for closed brackets. 
For instance, $``\langle_2"$ is mapped into $\begin{bmatrix}0 & 1 & \cdots & 0\end{bmatrix}^\top$ and $``\rangle_1"$ is mapped into $\begin{bmatrix}-1 & 0 & \cdots & 0\end{bmatrix}^\top$. In addition, we prepare an $O(k)$-dimensional zero vector that acts as a memory.
Therefore, the input vector without positional encoding $\mathbf{x}_i^{(1)}$ becomes as follows:
\begin{equation}
\mathbf{x}_i^{(1)}=
\begin{bmatrix}
\mathbf{t}_i \\
\mathbf{0}
\end{bmatrix}
\begin{matrix*}[l]
\} k \text{ dim.}\\
\} (d_{\mathrm{model}} - k) \text{ dim.}
\end{matrix*}\in \mathbb{R}^{d_{\mathrm{model}}}.
\end{equation}

\subsection{First layer}

\subsubsection*{First layer --- Attention layer}
In the first layer, using uniform attention, the query at position $i$ computes the mean vector of $\{\mathbf{t}_j\}_{j=0}^i$; that is, the output of the attention layer becomes:
\begin{equation}
\mathbf{h}_i^{(1)} = \begin{bmatrix}
\mathbf{t}_{i} \\
\frac{1}{i+1} \sum_{j=0}^i \mathbf{t}_{j} \\
\mathbf{0}
\end{bmatrix}.
\end{equation}

\subsubsection*{First layer --- Feed-forward network layer}
We omit the unnecessary dimensions of input vector $\mathbf{h}_i^{(1)}$ in this layer as follows:
\begin{equation}
\mathbf{h}_i^{(1)} = \begin{bmatrix}
\mathbf{t}_{i} \\
\frac{1}{i+1} \sum_{j=0}^i \mathbf{t}_{j} \\
\vdots 
\end{bmatrix}.
\end{equation}
Set the parameters $W_1^{(1)}, W_2^{(1)}\in \mathbb{R}^{d_\mathrm{model} \times d_{\mathrm{model}}}$ and $\boldsymbol{\beta}^{(1)}, \boldsymbol{\gamma}^{(1)} \in \mathbb{R}^{d_\mathrm{model}}$ as follows:
\begin{align}
&W_1^{(1)} = \begin{bmatrix}
O & -I & \cdots \\
O & -I & \cdots\\
I & O & \cdots\\
\vdots & \vdots & 
\end{bmatrix}, \\
&W_2^{(1)} = \begin{bmatrix}
\vdots & \vdots & \vdots &  \\
-I & I & O & \cdots \\
\vdots & \vdots & \vdots & 
\end{bmatrix}, \\
&\boldsymbol{\beta}^{(1)} = \begin{bmatrix}
\mathbf{0} \\
\mathbf{1} \\
\mathbf{0} \\
\vdots
\end{bmatrix}, \\
&\boldsymbol{\gamma}^{(1)} = \frac{1}{\epsilon\sqrt{d_\mathrm{model}}}\mathbf{1}.
\end{align}
Then, we obtain
\begin{equation}
\begin{aligned}
& W_2^{(1)} \left[ \operatorname{LN}_{\mathrm{RMS}} \left(W_1^{(1)} \mathbf{h}_i^{(1)}\right)\right]_+ \\
&= W_2^{(1)} \left[ \operatorname{LN}_{\mathrm{RMS}} \left(\begin{bmatrix}
-\frac{1}{i+1} \sum_{j=0}^i \mathbf{t}_{j} \\
-\frac{1}{i+1} \sum_{j=0}^i \mathbf{t}_{j} \\
\mathbf{t}_{i} \\
\vdots 
\end{bmatrix}\right)\right]_+ \\
&= W_2^{(1)}  
\begin{bmatrix}
-\frac{\sum_{j=0}^i \mathbf{t}_{j}}{\epsilon\sqrt{(i + 1)^2 + 2\|\sum_{j=0}^i \mathbf{t}_{j}\|_2^2}} \\
\mathbf{1} -\frac{\sum_{j=0}^i \mathbf{t}_{j}}{\epsilon\sqrt{(i + 1)^2 + 2\|\sum_{j=0}^i \mathbf{t}_{j}\|_2^2}} \\
\frac{\mathbf{t}_{i}}{\epsilon\sqrt{(i + 1)^2 + 2\|\sum_{j=0}^i \mathbf{t}_{j}\|_2^2}} \\
\vdots
\end{bmatrix}_+ \\
&= \begin{bmatrix}
\vdots \\
\mathbb{I}[\operatorname{d}(w_{0:i} \mid  t=1) \leq 0] \\
\vdots \\
\mathbb{I}[\operatorname{d}(w_{0:i} \mid  t=k) \leq 0] \\
\vdots
\end{bmatrix},
\end{aligned}
\end{equation}
where 
\begin{equation}
\begin{aligned}
& \mathbb{I}[\operatorname{d}(w_{0:i} \mid  t=t') \leq 0] \\
& = -\left[- \frac{\sum_{j=0}^i t_{j, t'}}{\epsilon\sqrt{(i + 1)^2 + 2\|\sum_{j=0}^i \mathbf{t}_{j}\|_2^2}}\right]_+ \\
& \qquad + \left[1 - \frac{\sum_{j=0}^i t_{j, t'}}{\epsilon\sqrt{(i + 1)^2 + 2\|\sum_{j=0}^i \mathbf{t}_{j}\|_2^2}}\right]_+ \\
& = \begin{cases}
1 & \text{if } \operatorname{d}(w_{0:i} \mid  t=t') \leq 0 \\
0 & \text{otherwise}
\end{cases}.
\end{aligned} 
\end{equation}

Finally, considering the residual connection, we obtain
\begin{equation}
\begin{aligned}
\mathbf{x}_i^{(2)}& = \mathbf{h}_i^{(1)} +  \begin{bmatrix}
\vdots \\
\mathbb{I}[\operatorname{d}(w_{0:i} \mid  t=1) \leq 0] \\
\vdots \\
\mathbb{I}[\operatorname{d}(w_{0:i} \mid  t=k) \leq 0] \\
\vdots
\end{bmatrix} \\
& = \begin{bmatrix}
\mathbf{t}_i \\
\frac{1}{i+1} \sum_{j=0}^i \mathbf{t}_{j} \\
\mathbb{I}[\operatorname{d}(w_{0:i} \mid  t=1) \leq 0] \\
\vdots \\
\mathbb{I}[\operatorname{d}(w_{0:i} \mid  t=k) \leq 0] \\
\vdots 
\end{bmatrix}.
\end{aligned}
\end{equation}

\subsection{Generator head}
We omit the unnecessary dimensions of input vector $\mathbf{x}_i^{(2)}$ in this layer as follows:
\begin{equation}
\mathbf{x}_i^{(2)} = \begin{bmatrix}
\vdots \\
\mathbf{m}(w_{0:i}) \\
\vdots
\end{bmatrix},
\end{equation}
where 
\begin{equation}
\mathbf{m}(w_{0:i}) = \begin{bmatrix}
\mathbb{I}[\operatorname{d}(w_{0:i} \mid  t=1) \leq 0] \\
\vdots \\
\mathbb{I}[\operatorname{d}(w_{0:i} \mid  t=k) \leq 0] \\
\end{bmatrix}.
\end{equation}

Set the parameters $W^{\mathrm{gen}} \in \mathbb{R}^{(2k+2) \times d_{\mathrm{model}}}, \mathbf{b}^{\mathrm{gen}} \in \mathbb{R}^{(2k+2)}$ as follows:
\begin{align}
&W^{\mathrm{gen}} = \begin{bmatrix}
\cdots & O & \cdots \\
\cdots & -C^{\mathrm{gen}} I & \cdots \\
\cdots & \mathbf{0}^\top & \cdots \\
\cdots & C^{\mathrm{gen}} \mathbf{1}^\top & \cdots
\end{bmatrix}\begin{matrix*}
\} k \text{ dim.}\\
\} k \text{ dim.}\\
\} 1 \text{ dim.}\\
\} 1\text{ dim.}
\end{matrix*}, \\
& \mathbf{b}^{\mathrm{gen}} = \begin{bmatrix}
\log \boldsymbol{\pi} \\
\log \left(\frac{1-q}{q}\right)\boldsymbol{\overline{\pi}} \\
-C^{\mathrm{gen}} \\
\log \left(\frac{1-r}{r}\right) - kC^{\mathrm{gen}} 
\end{bmatrix}\begin{matrix*}
\} k \text{ dim.}\\
\scalebox{0.9}{$\Big\}$} k \text{ dim.}\\
\} 1 \text{ dim.}\\
\scalebox{0.9}{$\Big\}$} 1 \text{ dim.}
\end{matrix*}.
\end{align}
Then, we obtain
\begin{equation}
\begin{aligned}
&\mathrm{logit} \\
&= W^{\mathrm{gen}}\mathbf{x}_i^{(2)} + \mathbf{b}^{\mathrm{gen}} \\
&= \begin{bmatrix}
\log \boldsymbol{\pi} \\
\log \left(\frac{1-q}{q}\right)\boldsymbol{\overline{\pi}} - C^{\mathrm{gen}} \mathbf{m}(w_{0:i})\\
- C^{\mathrm{gen}} \\
\log \left(\frac{1-r}{r}\right) - C^{\mathrm{gen}}(k - \mathbf{1}^\top \mathbf{m}(w_{0:i}))\\
\end{bmatrix}.
\end{aligned}
\end{equation}

\subsubsection*{Softmax}
Similar to the Appendix \ref{app: dyck language generation with bos}, it is possible to show that the $\texttt{Shuffle-Dyck}_k$ language generation process can be approximated with arbitrary precision. Here, for clarity, we treat $-C^{\mathrm{gen}}$ as a masking operation in the softmax function and show how it realizes the language generation process.

\paragraph{(i) in case that $\forall t'. \operatorname{d}(w_{0:i} \mid  t=t') = 0 $.}
Since $\mathbf{m}(w_{0:i}) = \mathbf{1}$, 
\begin{equation}
\begin{aligned}
&\mathrm{logit} \\
& = \begin{bmatrix}
\log \boldsymbol{\pi} \\
\log \left(\frac{1-q}{q}\right) \boldsymbol{\overline{\pi}} - C^{\mathrm{gen}} \mathbf{1} \\
- C^{\mathrm{gen}} \\
\log \left(\frac{1-r}{r}\right)
\end{bmatrix}.
\end{aligned}
\end{equation}
Therefore, 
\begin{equation}
\begin{aligned}
&\mathbb{S}(\mathrm{logit}) \\
& \simeq \mathbb{S}\left(\begin{bmatrix}
\log \boldsymbol{\pi} \\
\texttt{masked} \\
\texttt{masked} \\
\log \left(\frac{1-r}{r}\right)
\end{bmatrix}\right) \\
& =\mathbb{S}\left(\begin{bmatrix}
\log r\boldsymbol{\pi} \\
\texttt{masked} \\
\texttt{masked} \\
\log \left(1-r\right)
\end{bmatrix}\right),
\end{aligned}
\end{equation}
indicating that the $\texttt{Shuffle-Dyck}_k$ language generation process is realized.

\paragraph{(i) in case that $\exists t'. \operatorname{d}(w_{0:i} \mid  t=t') > 0 $.}
Since $\mathbf{1}^\top \mathbf{m}(w_{0:i}) \leq k - 1$,
\begin{equation}
\begin{aligned}
&\mathrm{logit} \\
& = \begin{bmatrix}
\log \boldsymbol{\pi} \\
\log \left(\frac{1-q}{q}\right) \boldsymbol{\overline{\pi}} - C^{\mathrm{gen}} \mathbf{m}(w_{0:i})\\
- C^{\mathrm{gen}} \\
\log \left(\frac{1-r}{r}\right) - C^{\mathrm{gen}} (k - \mathbf{1}^\top \mathbf{m}(w_{0:i}))
\end{bmatrix}.
\end{aligned}
\end{equation}
Therefore, we obtain
\begin{equation}
\begin{aligned}
&\mathbb{S}(\mathrm{logit}) \\
& \simeq \mathbb{S}\left(\begin{bmatrix}
\log \boldsymbol{\pi} \\
\log \left(\frac{1-q}{q}\right) \boldsymbol{\overline{\pi}} - C^{\mathrm{gen}} \mathbf{m}(w_{0:i}) \\
\texttt{masked} \\
\texttt{masked}
\end{bmatrix}\right) \\
& = \mathbb{S}\left(\begin{bmatrix}
\log q\boldsymbol{\pi} \\
\log (1-q) \boldsymbol{\overline{\pi}} - C^{\mathrm{gen}} \mathbf{m}(w_{0:i}) \\
\texttt{masked} \\
\texttt{masked}
\end{bmatrix}\right).
\end{aligned}
\end{equation}
In addition, the $t$-th element of ${\log \boldsymbol{\overline{\pi}} - C^{\mathrm{gen}} \mathbf{m}(w_{0:i})}$ is masked if and only if the depth of type $t$ is $0$ or less than $0$, indicating that the $\texttt{Shuffle-Dyck}_k$ language generation process is also realized in this case.

\end{proof}

\section{Proof of Proposition \ref{proposition: Networks with sub-polynomial width, cannot generate Shuffle Dyck}}\label{app: proof of prop 3}

\begin{proposition}[Restated]
There is no network whose width grows strictly slower than $k/\log k$ that generates $\texttt{Shuffle-Dyck}_k$; that is, if 
\begin{equation}
\lim_{k\rightarrow\infty} \frac{d_{\mathrm{model}}}{k/ \log k} = 0
\end{equation}
holds, then there exists $k_0$ such that for any $k \geq k_0$, $d_\mathrm{model}$-width networks cannot generate $\texttt{Shuffle-Dyck}_k$. For example, $\sqrt{k}$ grows strictly slower than $k/\log k$.
\end{proposition}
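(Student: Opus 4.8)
The plan is to prove the contrapositive by a counting argument: any width-$d_{\mathrm{model}}$ network that generates $\texttt{Shuffle-Dyck}_k$ must be able to report, at the final position of a prefix, \emph{which} bracket types currently admit a valid closing bracket, and there are $2^k$ possibilities. Since a fixed linear generator head turns this into a sign-pattern decision over $k$ hyperplanes, and $k$ hyperplanes in $\mathbb{R}^{d_{\mathrm{model}}}$ cut space into only polynomially-in-$k$ many regions, we will obtain $2^k \le \sum_{i=0}^{d_{\mathrm{model}}}\binom{k}{i}$, which forces $d_{\mathrm{model}}=\Omega(k/\log k)$.

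\textbf{Reduction to sign patterns.} For each subset $S\subseteq[k]$ I would take the prefix $w^S=\texttt{<bos>}\,\langle_{t_1}\cdots\langle_{t_{|S|}}$ that opens exactly one bracket of each type in $S$ in increasing order. Each $w^S$ has positive probability under $p_{\texttt{Shuffle-Dyck}_k}(\cdot;q,r,\boldsymbol{\pi},\boldsymbol{\overline{\pi}})$ and satisfies $\operatorname{d}(w^S\mid t)=1$ for $t\in S$ and $\operatorname{d}(w^S\mid t)=0$ otherwise, so the valid closing brackets at the last position are exactly $\{\rangle_t:t\in S\}$. Write $\mathbf{h}_S=\mathcal{T}(w^S)_{|S|}\in\mathbb{R}^{d_{\mathrm{model}}}$. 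Fix a small constant $\epsilon$ and a head $f_{\mathrm{gen}}$ realizing the process to within total variation $\epsilon$. The logits of $\rangle_t$ and $\langle_t$ are fixed affine functionals of $\mathbf{h}$, so their difference $D_t(\mathbf{h})$ is affine, and since the softmax denominator is common to all tokens at a fixed prefix, $p_{\mathcal{T}}(\rangle_t\mid w^S)/p_{\mathcal{T}}(\langle_t\mid w^S)=\exp(D_t(\mathbf{h}_S))$.

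\textbf{From approximation to a uniform threshold.} The true ratio $p(\rangle_t)/p(\langle_t)$ equals the positive constant $(1-q)\overline{\pi}_t/(q\pi_t)$ when $t\in S$ and equals $0$ when $t\notin S$, so the $\operatorname{TV}<\epsilon$ guarantee forces $\exp(D_t(\mathbf{h}_S))$ above a positive constant when $t\in S$ and below a small constant when $t\notin S$. Choosing $\epsilon$ below a gap controlled by $\min_t (1-q)\overline{\pi}_t/(q\pi_t)$ (finite and $k$-independent since $\pi_t,\overline{\pi}_t>0$) yields a single threshold $\tau$ with $D_t(\mathbf{h}_S)>\tau \iff t\in S$, uniformly over all $t$ and all $S$. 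Hence the $k$ hyperplanes $\{D_t(\mathbf{h})=\tau\}_{t=1}^{k}$ realize all $2^k$ sign patterns on the points $\{\mathbf{h}_S\}_{S\subseteq[k]}$.

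\textbf{Counting.} I would then invoke the hyperplane-partition bound \citep{On_the_Partition_of_Space_by_Hyperplanes_2023}: $k$ hyperplanes in $\mathbb{R}^{d_{\mathrm{model}}}$ create at most $\sum_{i=0}^{d_{\mathrm{model}}}\binom{k}{i}$ regions, bounding the number of realizable sign patterns. Thus $2^k\le\sum_{i=0}^{d_{\mathrm{model}}}\binom{k}{i}\le (ek/d_{\mathrm{model}})^{d_{\mathrm{model}}}$, giving $k\le d_{\mathrm{model}}\log_2(ek/d_{\mathrm{model}})=O(d_{\mathrm{model}}\log k)$ and hence $d_{\mathrm{model}}=\Omega(k/\log k)$; if $d_{\mathrm{model}}/(k/\log k)\to 0$ this fails for all large $k$. (In fact $\sum_{i=0}^{d}\binom{k}{i}<2^k$ whenever $d<k$, so the inequality even forces $d_{\mathrm{model}}\ge k$, and the weaker $\Omega(k/\log k)$ bound suffices for the claim.) The main obstacle is the second step, namely converting the \emph{soft} total-variation guarantee into a \emph{hard}, type-uniform linear separation: one must verify that a single threshold $\tau$ serves every type $t$ and every subset $S$ simultaneously, and that the argument remains valid even though $f_{\mathrm{gen}}$ may depend on $\epsilon$ while the points $\mathbf{h}_S$ and the width $d_{\mathrm{model}}$ depend only on the fixed network $\mathcal{T}$.
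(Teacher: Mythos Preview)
Your approach is correct and shares the paper's high-level strategy---build $2^k$ prefixes with distinct sets of admissible closing brackets, argue the generator head must realize all $2^k$ sign patterns via hyperplanes in $\mathbb{R}^{d_{\mathrm{model}}}$, then invoke the region-count bound---but the implementation differs and is in fact cleaner. The paper compares closed-bracket logits \emph{pairwise}, using the $\binom{k}{2}$ hyperplanes $\{(\mathbf{w}_{\rangle_{t_1}}-\mathbf{w}_{\rangle_{t_0}})^\top\mathbf{x}=0\}$; it then shows only $\binom{k}{\lfloor k/2\rfloor}\gtrsim 2^k/\sqrt{k}$ of the resulting regions must be nonempty and needs several auxiliary lemmas (Stirling bounds, growth estimates for $G(d,\binom{k}{2})$) to squeeze out $d_{\mathrm{model}}=\Omega(k/\log k)$. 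You instead compare each closed bracket to its \emph{own} open bracket, giving only $k$ affine hyperplanes $\{D_t(\mathbf{h})=\tau\}$; the inequality $\sum_{i\le d_{\mathrm{model}}}\binom{k}{i}\ge 2^k$ then forces $d_{\mathrm{model}}\ge k$ outright, stronger than stated and with no asymptotic calculus.

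The obstacle you flag---turning the soft TV guarantee into a hard uniform threshold---is real but routine: for fixed $k$ and parameters, take $\epsilon$ small relative to $\min_t\{q\pi_t,(1-q)\overline{\pi}_t,r\pi_t\}$ and the required $\tau$ exists; that $f_{\mathrm{gen}}$ varies with $\epsilon$ is harmless because the region bound applies to \emph{any} $k$ hyperplanes in $\mathbb{R}^{d_{\mathrm{model}}}$, while the points $\mathbf{h}_S$ are fixed by $\mathcal{T}$. One minor slip: the gap $\min_t(1-q)\overline{\pi}_t/(q\pi_t)$ need not be $k$-independent (e.g.\ when $\pi_t=1/k$), but this is irrelevant since the counting argument is run at each fixed $k$.
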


We provide the proof sketch first. Then, we show some lemmas in Section \ref{app: proof of prop 3, subsec: preliminaries} and give a proof of Proposition \ref{proposition: Networks with sub-polynomial width, cannot generate Shuffle Dyck} in Section \ref{app: proof of prop 3, subsec: proof}.

\begin{proof}[Proof sketch]
We give a proof by contradiction. Consider the $2^k$ different input strings: concerning the $l \in [2^k]$-th input, when the $t$-th bit of the binary representation of $l$ is $1$, we add an open bracket of type $t$. For example, when $k=2$, we consider the following $2^2$ inputs:
\begin{equation}
\begin{aligned}
00 & \mapsto \texttt{<bos>}, \\
01 & \mapsto \texttt{<bos>} \langle_1, \\
10 & \mapsto \texttt{<bos>} \langle_2, \\
11 & \mapsto \texttt{<bos>} \langle_1 \langle_2 .\\
\end{aligned}
\end{equation}
Then, it is necessary to satisfy the following $2^k$ constraints to generate $\texttt{Shuffle-Dyck}_k$ correctly: concerning the $l$-th constraint, if the $t_0$ and $t_1$-th bit of the binary representation of $l$ are $0$ and $1$, respectively, the $t_1$-th logit is strictly greater than the $t_0$-th logit. However, there is no linear transformation $\mathbb{R}^{d_\mathrm{model}} \rightarrow \mathbb{R}^k$ that satisfies the constraints above.
\end{proof}
In this section, we explicitly express the dependence of $d_\mathrm{model}$ on $k$, denoting it by $d(k)$ for clarity. Additionally, we occasionally use the concise notation $e$ instead of $\exp$ to conserve space.

\subsection{Preliminary Lemmas}\label{app: proof of prop 3, subsec: preliminaries}
\begin{definition}[Subspace]
Given a vector set $\{\mathbf{w}_{t}\}_{t=1}^{k} \subset \mathbb{R}^{d(k)}$. Let $\operatorname{bin}(l) = l_1 \cdots l_k$ be the binary representation of an integer $l \in [2^k]$ and $\operatorname{bin}_t(l) = l_t$ be the $t$-th bit of $\operatorname{bin}(l)$. Then, we define subspace $R(l) (\subset \mathbb{R}^{d(k)})$ as follows:
\begin{equation}
R(l) = \left\{\mathbf{x} \left|
\begin{aligned}
&(\mathbf{w}_{t_1} - \mathbf{w}_{t_0})^\top \mathbf{x} > 0 \\
&\forall t_0 \in \operatorname{id}_0(l), t_1 \in \operatorname{id}_1(l) 
\end{aligned}
\right.\right\},
\end{equation}
where
\begin{align}
&\operatorname{id}_0(l) = \{t \mid  \operatorname{bin}_t(l) = 0\},\\
&\operatorname{id}_1(l) = \{t \mid  \operatorname{bin}_t(l) = 1\}.
\end{align}
In addition, we say $R(l)$ and $R(l')$ are distinct if $R(l) \cap R(l') = \emptyset$. Moreover, we say the subspace set $\{R(\cdot)\}$ is distinct if for any two subspaces are distinct.
\end{definition}
Intuitively, for $\mathbf{x} \in R(l)$, $\mathbf{w}_{t_1}^\top \mathbf{x} > \mathbf{w}_{t_0}^\top \mathbf{x}$ holds, indicating the logit for type-$t_1$ is greater than that for type-$t_0$.

\begin{lemma}\label{lemma: distinct condition 1}
$R(l)$ and $R(l')$ are distinct if there exists $t\neq t'$ such that $t \in \operatorname{id}_0(l) \wedge t' \in \operatorname{id}_1(l) \wedge t \in \operatorname{id}_1(l') \wedge t' \in \operatorname{id}_0(l')$.
\begin{proof}
Since $t \in \operatorname{id}_0(l) \wedge t' \in \operatorname{id}_1(l)$, 
\begin{equation}
R(l) \subset \left\{\mathbf{x} \left| (\mathbf{w}_{t'} - \mathbf{w}_{t})^\top \mathbf{x} > 0 \right.\right\}
\end{equation}
holds. In contrast, since $t \in \operatorname{id}_1(l') \wedge t' \in \operatorname{id}_0(l')$, 
\begin{equation}
R(l') \subset \left\{\mathbf{x} \left| (\mathbf{w}_{t} - \mathbf{w}_{t'})^\top \mathbf{x} > 0 \right.\right\}
\end{equation}
holds, indicating $R(l) \cap R(l') = \emptyset$.
\end{proof}
\end{lemma}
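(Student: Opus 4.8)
The plan is to prove the implication directly by establishing $R(l)\cap R(l')=\emptyset$, which is precisely the definition of the two regions being distinct. The key observation driving the argument is that the single pair of indices $(t,t')$ supplied by the hypothesis occupies \emph{opposite} roles in the two labelings: for $l$ the index $t$ is a $0$-index and $t'$ a $1$-index, whereas for $l'$ the assignment is reversed. Rather than reasoning about the full conjunction of inequalities defining each region, I would extract from each definition only the one inequality associated with this particular pair, which already suffices to separate the two regions.

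First I would instantiate the defining constraint of $R(l)$ at $t_0=t\in\operatorname{id}_0(l)$ and $t_1=t'\in\operatorname{id}_1(l)$; since the hypothesis guarantees this pair is admissible, every $\mathbf{x}\in R(l)$ must satisfy $(\mathbf{w}_{t'}-\mathbf{w}_{t})^\top\mathbf{x}>0$, i.e.\ $R(l)$ lies in the open half-space on which this linear functional is positive. Symmetrically, instantiating the constraint of $R(l')$ at $t_0=t'\in\operatorname{id}_0(l')$ and $t_1=t\in\operatorname{id}_1(l')$ shows every $\mathbf{x}\in R(l')$ satisfies $(\mathbf{w}_{t}-\mathbf{w}_{t'})^\top\mathbf{x}>0$. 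Writing $g(\mathbf{x})=(\mathbf{w}_{t'}-\mathbf{w}_{t})^\top\mathbf{x}$ for the single scalar linear functional, these two inclusions read $R(l)\subseteq\{\mathbf{x}\mid g(\mathbf{x})>0\}$ and $R(l')\subseteq\{\mathbf{x}\mid g(\mathbf{x})<0\}$, the latter because $(\mathbf{w}_{t}-\mathbf{w}_{t'})^\top\mathbf{x}>0$ is exactly $g(\mathbf{x})<0$. As these are the two opposite open half-spaces of a single functional, they are disjoint, so the smaller sets are disjoint as well, giving $R(l)\cap R(l')=\emptyset$.

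There is no genuine obstacle here: the entire content of the lemma is the sign flip arising from the swapped $0/1$ roles of the pair $(t,t')$, and once the right constraint is isolated from each conjunction the separation is immediate. The only point requiring care is the bookkeeping — verifying that the four membership conditions in the hypothesis really do make the pair admissible in each region's defining set (this also forces $t\neq t'$, since $\operatorname{id}_0$ and $\operatorname{id}_1$ are disjoint) — and observing that no property of the embedding vectors $\{\mathbf{w}_t\}$ beyond their being fixed is needed, so the argument is uniform in $k$.
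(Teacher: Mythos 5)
Your proposal is correct and follows essentially the same argument as the paper: both extract from each region's defining conjunction the single inequality associated with the pair $(t, t')$, note that the swapped $0/1$ roles flip the sign of the same linear functional, and conclude that $R(l)$ and $R(l')$ lie in opposite open half-spaces, hence are disjoint.
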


\begin{lemma}\label{lemma: distinct condition 2}
$R(l)$ and $R(l')$ are distinct if $l \neq l' \wedge \#_1(l) = \#_1(l')$ holds, where $\#_1(l)$ is the number of ones in $\operatorname{bin}(l)$.
\begin{proof}
Since $l \neq l'$, there exists a digit $t$ such that $l_t \neq l'_t $. Without loss of generality, we can assume that $l_t = 1 \wedge l'_t = 0$. In addition, since $l$ and $l'$ have same number of ones, there exists $t'$ such that $l_{t'} = 0 \wedge l'_{t'} = 1$, indicating that $R(l)$ and $R(l')$ are distinct from Lemma \ref{lemma: distinct condition 1}.
\end{proof}
\end{lemma}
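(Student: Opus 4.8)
The plan is to deduce this directly from Lemma \ref{lemma: distinct condition 1}, whose hypothesis asks for a pair of distinct coordinates that are ``crossed'' between $l$ and $l'$: one coordinate lying in $\operatorname{id}_0(l) \cap \operatorname{id}_1(l')$ and another lying in $\operatorname{id}_1(l) \cap \operatorname{id}_0(l')$. So the whole task reduces to producing such a crossed pair from the two hypotheses $l \neq l'$ and $\#_1(l) = \#_1(l')$.

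First I would use $l \neq l'$ to locate a coordinate where the two binary strings disagree. Relabelling the two disagreement directions if necessary, I may assume there is a position $t$ with $\operatorname{bin}_t(l) = 1$ and $\operatorname{bin}_t(l') = 0$, i.e. $t \in \operatorname{id}_1(l) \cap \operatorname{id}_0(l')$. This gives one half of the crossed pair.

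Next I would use the equal-weight hypothesis to force a disagreement in the opposite direction. Partitioning the coordinates according to the pair $(\operatorname{bin}_s(l), \operatorname{bin}_s(l'))$, set $A = \operatorname{id}_1(l) \cap \operatorname{id}_0(l')$ and $B = \operatorname{id}_0(l) \cap \operatorname{id}_1(l')$. Counting ones in each string gives $\#_1(l) = |\operatorname{id}_1(l) \cap \operatorname{id}_1(l')| + |A|$ and $\#_1(l') = |\operatorname{id}_1(l) \cap \operatorname{id}_1(l')| + |B|$, so $\#_1(l) = \#_1(l')$ yields $|A| = |B|$. Since $l \neq l'$ forces $|A| + |B| > 0$, both $A$ and $B$ are nonempty; in particular there is a coordinate $t' \in B = \operatorname{id}_0(l) \cap \operatorname{id}_1(l')$. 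Moreover $t' \neq t$, because $A \subseteq \operatorname{id}_1(l)$ and $B \subseteq \operatorname{id}_0(l)$ are disjoint while $t \in A$ and $t' \in B$.

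Finally I would invoke Lemma \ref{lemma: distinct condition 1} on the pair $\{t, t'\}$: the coordinate $t'$ satisfies $t' \in \operatorname{id}_0(l)$ and $t' \in \operatorname{id}_1(l')$, while $t$ satisfies $t \in \operatorname{id}_1(l)$ and $t \in \operatorname{id}_0(l')$, which is precisely the hypothesis of Lemma \ref{lemma: distinct condition 1} (identifying its indices with $t'$ and $t$ respectively). Hence $R(l) \cap R(l') = \emptyset$. I expect no real obstacle here; the only substantive step is the short counting identity yielding $|A| = |B|$, and everything else is bookkeeping before appealing to Lemma \ref{lemma: distinct condition 1}.
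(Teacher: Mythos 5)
Your proposal is correct and follows essentially the same route as the paper's proof: locate one disagreement coordinate $t$ with $\operatorname{bin}_t(l)=1$, $\operatorname{bin}_t(l')=0$, use the equal-weight hypothesis to produce an oppositely-oriented coordinate $t'$, and invoke Lemma~\ref{lemma: distinct condition 1}. The only difference is that you spell out the counting step ($|A|=|B|$ and the disjointness giving $t\neq t'$) that the paper leaves implicit, which is a harmless refinement rather than a different argument.
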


\begin{lemma}\label{lemma: stirling approximation}
For any integer $m \geq 1$, 
\begin{equation}
e^{\frac{11}{12}} m^{m+\frac{1}{2}} e^{-m} \leq m!\leq e^{\frac{13}{12}} m^{m+\frac{1}{2}} e^{-m}
\end{equation}
holds.
\begin{proof}
From the results in \citet{785bbcf5-e12d-3b4f-9b4c-913b654991d7}, 
\begin{equation}
\frac{m!e^m}{m^{m+\frac{1}{2}}} = C e^{r_m},
\end{equation}
holds for $m \geq 1$, where $C \in \left(e^{\frac{11}{12}}, e^{\frac{12}{13}}\right)$ and $r_m \in \left(\frac{1}{12m+1}, \frac{1}{12m}\right)$.
Therefore
\begin{equation}
\begin{aligned}
\frac{m!e^m}{m^{m+\frac{1}{2}}} &\in \left(e^{\left(\frac{11}{12} + \frac{1}{12m+1}\right)}, e^{\left(\frac{12}{13} + \frac{1}{12m}\right)}\right) \\
& \subseteq \left(e^{\frac{11}{12}}, e^{\frac{13}{12}}\right),
\end{aligned}
\end{equation}
indicating that the inequality holds for $m \geq 1$.
\end{proof}
\end{lemma}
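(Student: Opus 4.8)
The plan is to give a self-contained derivation of the two-sided factorial bound rather than only quoting the refined Stirling estimate of \citet{785bbcf5-e12d-3b4f-9b4c-913b654991d7}. Set $d_m = \ln(m!) - (m+\tfrac12)\ln m + m$, so that, after taking logarithms, the claim is exactly $\tfrac{11}{12} \le d_m \le \tfrac{13}{12}$ for all $m \ge 1$. First I would study the consecutive differences. A direct computation gives $d_m - d_{m+1} = (m+\tfrac12)\ln\tfrac{m+1}{m} - 1$; substituting $t = \tfrac{1}{2m+1}$, so that $\tfrac{m+1}{m} = \tfrac{1+t}{1-t}$ and $m+\tfrac12 = \tfrac{1}{2t}$, and using $\ln\tfrac{1+t}{1-t} = 2\sum_{j\ge0}\tfrac{t^{2j+1}}{2j+1}$, yields the clean series $d_m - d_{m+1} = \sum_{j\ge1}\tfrac{t^{2j}}{2j+1}$.

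From this series I would read off everything I need. Positivity of every term shows $d_m$ is strictly decreasing. Majorizing the series by a geometric one gives $d_m - d_{m+1} < \tfrac13\sum_{j\ge1}t^{2j} = \tfrac{t^2}{3(1-t^2)} = \tfrac{1}{12m(m+1)} = \tfrac{1}{12}\big(\tfrac1m - \tfrac{1}{m+1}\big)$, which telescopes; hence $d_m$ is Cauchy and converges to a limit $L$, with the tail bound $d_m - L < \tfrac{1}{12m}$. For the matching lower estimate I would keep only the leading term, $d_m - d_{m+1} > \tfrac{t^2}{3} = \tfrac{1}{3(2m+1)^2}$, and verify the elementary quadratic inequality $\tfrac{1}{3(2m+1)^2} > \tfrac{1}{12m+1} - \tfrac{1}{12(m+1)+1}$ for $m \ge 1$ (both sides reduce to comparing $144m^2+144m+36$ with $144m^2+168m+13$); summing telescopes to $d_m - L > \tfrac{1}{12m+1}$.

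It then remains to pin down the limit precisely enough, and the key simplification is to avoid Wallis' product entirely. Evaluating at $m=1$ gives $d_1 = \ln(1!) - \tfrac32\ln 1 + 1 = 1$, and the bounds just established read $\tfrac{1}{13} < d_1 - L < \tfrac{1}{12}$, whence $L = d_1 - (d_1 - L) \in \big(1 - \tfrac1{12},\, 1 - \tfrac1{13}\big) = \big(\tfrac{11}{12}, \tfrac{12}{13}\big)$. Writing $C = e^{L}$ and $r_m = d_m - L$ reproduces exactly the decomposition $\tfrac{m!\,e^m}{m^{m+1/2}} = C e^{r_m}$ with $C \in (e^{11/12}, e^{12/13})$ and $r_m \in (\tfrac{1}{12m+1}, \tfrac{1}{12m})$. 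Finally I exponentiate: since $d_m = L + r_m > \tfrac{11}{12}$ I obtain the lower factorial bound, and since $d_m = L + r_m < \tfrac{12}{13} + \tfrac{1}{12} = \tfrac{157}{156} < \tfrac{13}{12}$ I obtain the upper one, uniformly in $m \ge 1$.

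The main obstacle is the lower half of the telescoping estimate: the upper bound flows effortlessly from the geometric majorant, but recovering the sharp constant $\tfrac{1}{12m+1}$ requires the delicate quadratic comparison above, and one must check it holds for every $m \ge 1$. Everything else---the logarithmic series, monotone convergence, the $m=1$ evaluation, and the final exponentiation---is routine.
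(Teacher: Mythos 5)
Your proposal is correct, and it takes a genuinely different route from the paper: the paper disposes of the lemma in two lines by quoting Robbins (1955) for the decomposition $\frac{m!\,e^m}{m^{m+1/2}} = Ce^{r_m}$ with $C\in\left(e^{11/12},e^{12/13}\right)$ and $r_m\in\left(\tfrac{1}{12m+1},\tfrac{1}{12m}\right)$, then combines the two bounds exactly as you do in your final exponentiation step, whereas you reconstruct the cited estimate from scratch. Your computations all check out: with $d_m=\ln(m!)-(m+\tfrac12)\ln m+m$ and $t=\tfrac{1}{2m+1}$ one indeed gets $d_m-d_{m+1}=\sum_{j\ge1}\tfrac{t^{2j}}{2j+1}$; the geometric majorant gives $d_m-d_{m+1}<\tfrac{1}{12}\left(\tfrac1m-\tfrac{1}{m+1}\right)$, hence monotone convergence to $L$ with $d_m-L<\tfrac{1}{12m}$; and your quadratic comparison is right, since $\tfrac{1}{3(2m+1)^2}>\tfrac{1}{12m+1}-\tfrac{1}{12m+13}$ reduces to $144m^2+168m+13>144m^2+144m+36$, i.e.\ $24m>23$, valid for all $m\ge1$, which telescopes to $d_m-L>\tfrac{1}{12m+1}$. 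The genuinely clever ingredient is the $m=1$ evaluation $d_1=1$, which yields $L\in\left(\tfrac{11}{12},\tfrac{12}{13}\right)$ directly: because the lemma only needs crude two-sided bounds on the constant, you never have to identify $C=\sqrt{2\pi}$ via Wallis' product, and this is precisely what makes your argument self-contained where Robbins' note goes further than the lemma requires. The trade-off is clear: the paper's citation is short but opaque, while your version costs a page of elementary series estimates and buys full verifiability. One small simplification is available at the end: instead of computing $\tfrac{12}{13}+\tfrac{1}{12}=\tfrac{157}{156}<\tfrac{13}{12}$, you could note that $d_m$ is decreasing, so $d_m\le d_1=1<\tfrac{13}{12}$ immediately.
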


\begin{lemma}\label{lemma: necessary number of distinct subspace}
For $k > 2$ and an integer set $\left[2^k\right] = \{0, \cdots, 2^k-1\}$. Then, at least $\left\lfloor \sqrt{2}^k\right\rfloor$-size distinct subspace set is necessary to make $R(l) \subset \mathbb{R}^{d_\mathrm{model}}$ non-empty for any $l\in \left[2^k\right]$.
\begin{proof}
In case that $k$ is an even number, 
\begin{equation}
\left| \left\{ l \left| \#_1(l) = \frac{k}{2}\right. \right\}  \right|= \binom{k}{k/2}, 
\end{equation}
On the other hand, in case that $k$ is an odd number, 
\begin{equation}
\left| \left\{ l \left| \#_1(l) = \frac{k-1}{2}  \right.\right\} \right|= \binom{k}{(k-1)/2}.
\end{equation}
When $k$ is even, at least $\binom{k}{k/2}$-size distinct subspace set is necessary because from Lemma \ref{lemma: distinct condition 2}, $\{R(l)\}_{l \in \left\{ l' \left| \#_1(l') = \frac{k}{2}\right. \right\}}$ is distinct.
Here,
\begin{align}
& \begin{aligned}
& \binom{k}{k/2} \\
& = \frac{k !}{(k/2)! (k/2)!} \\
& \geq \frac{e^{\frac{11}{12}} k^{k+\frac{1}{2}} e^{-k}}{\left(e^{\frac{13}{12}} (\frac{k}{2})^{\frac{k}{2}+\frac{1}{2}} e^{-\frac{k}{2}}\right)^2} \text{ (Lemma \ref{lemma: stirling approximation})} \\
& = 2e^{-\frac{37}{144}}\frac{2^k}{\sqrt{k}} \\
& \geq \sqrt{2}^k\frac{\sqrt{2}^k}{\sqrt{k}} \geq \sqrt{2}^k
\end{aligned} 
\end{align}
holds, indicating that $\left\lfloor \sqrt{2}^k\right\rfloor$-size distinct subspace set is necessary. Similarly, when $k$ is odd,
\begin{align}
& \begin{aligned}
& \binom{k}{(k-1)/2} \\
& \geq \sqrt{2}^k\frac{\sqrt{2}^k}{(k+1)/\sqrt{k}} \geq \sqrt{2}^k
\end{aligned}
\end{align}
holds, leading to the same result. 
\end{proof}
\end{lemma}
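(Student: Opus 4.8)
The plan is to exhibit, inside the collection $\{R(l)\}_{l\in[2^k]}$, a subcollection that Lemma \ref{lemma: distinct condition 2} already certifies to be pairwise distinct (i.e. pairwise disjoint) and whose cardinality is at least $\lfloor\sqrt{2}^k\rfloor$. Since the hypothesis is that every $R(l)$ is nonempty, such a subcollection is precisely a distinct subspace set of the required size, so its existence is forced; this establishes the lower bound. The whole argument thus reduces to (a) choosing the right subcollection and (b) counting it.

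First I would fix the ``balanced'' layer $\mathcal{M}=\{\,l\in[2^k] : \#_1(l)=m\,\}$ with $m=\lfloor k/2\rfloor$. For any two distinct $l,l'\in\mathcal{M}$ we have $\#_1(l)=\#_1(l')$, so Lemma \ref{lemma: distinct condition 2} gives $R(l)\cap R(l')=\emptyset$; hence $\{R(l)\}_{l\in\mathcal{M}}$ is a distinct subspace set, and its size is exactly the central (or near-central) binomial coefficient $\binom{k}{m}$. It therefore remains only to show $\binom{k}{\lfloor k/2\rfloor}\ge\sqrt{2}^k$, after which integrality of the binomial coefficient upgrades this to $\binom{k}{m}\ge\lfloor\sqrt{2}^k\rfloor$ and the claim follows.

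The last step is the only computational one, and I would carry it out by splitting on the parity of $k$ and invoking the Stirling-type bounds of Lemma \ref{lemma: stirling approximation}. For even $k$, write $\binom{k}{k/2}=k!/((k/2)!)^2$ and bound $k!$ from below and $(k/2)!$ from above; the $k^{k}$ and $e^{-k}$ factors cancel, leaving an estimate of the form $\binom{k}{k/2}\ge c\,2^{k}/\sqrt{k}$ for an explicit constant $c>0$. Since $2^{k}/\sqrt{k}=\sqrt{2}^k\cdot(\sqrt{2}^k/\sqrt{k})$ and $\sqrt{2}^k\ge\sqrt{k}$ for $k>2$, this is at least $\sqrt{2}^k$ once one verifies $c\,\sqrt{2}^k/\sqrt{k}\ge1$ for every $k>2$. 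The odd case is handled identically with $m=(k-1)/2$ and $\binom{k}{(k-1)/2}$, the only change being a slightly weaker polynomial factor $(k+1)/\sqrt{k}$ in place of $\sqrt{k}$, which still leaves the bound above $\sqrt{2}^k$.

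The step I expect to demand the most care is precisely this constant-tracking. Because a lower bound on a binomial coefficient can never exceed its true value, one must combine the two inequalities of Lemma \ref{lemma: stirling approximation} in the correct direction — a lower bound on the numerator $k!$ and an upper bound on the squared denominator $((k/2)!)^2$ — and then confirm that the surviving constant keeps the expression $\ge\sqrt{2}^k$ uniformly for all $k>2$, not merely asymptotically. Everything else, namely the disjointness and the cardinality count of the balanced layer, is immediate from Lemma \ref{lemma: distinct condition 2}.
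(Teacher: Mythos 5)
Your proposal is correct and follows essentially the same route as the paper's proof: select the balanced layer $\{l : \#_1(l)=\lfloor k/2\rfloor\}$, invoke Lemma \ref{lemma: distinct condition 2} for pairwise distinctness, and lower-bound the central binomial coefficient via the Stirling estimates of Lemma \ref{lemma: stirling approximation}, splitting on the parity of $k$. The constant-tracking you flag as the delicate step is exactly what the paper carries out, obtaining $\binom{k}{k/2}\geq 2e^{-\frac{37}{144}}\,2^k/\sqrt{k}\geq \sqrt{2}^k$ in the even case and the analogous bound with the factor $(k+1)/\sqrt{k}$ in the odd case.
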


\begin{lemma}\label{lemma: inequality}
For any $x > 1$,
\begin{equation}
x^{\frac{1}{x}} < 1 + \frac{\log x + 1}{x}
\end{equation}
holds.

\begin{proof}
\begin{equation}
\begin{aligned}
x^{\frac{1}{x}} &= \exp\left(\frac{1}{x}\log x \right) \\
&= \sum_{p=0}^\infty \frac{\left(\frac{1}{x}\log x\right)^p}{p!} \\
&= 1 + \frac{1}{x}\log x + \sum_{p=2}^\infty \frac{\left(\frac{1}{x}\log x\right)^p}{p!} \\
&\leq 1 + \frac{1}{x}\log x + \left(\frac{1}{x}\log x\right)^2\sum_{p=2}^\infty \frac{1}{p!} \\
&< 1 + \frac{1}{x}\log x + \frac{1}{x}\cdot 1 \cdot (e-2) \\
&< 1 + \frac{\log x + 1}{x}.
\end{aligned}
\end{equation}
\end{proof}
\end{lemma}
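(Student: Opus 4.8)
The plan is to rewrite $x^{1/x} = \exp\!\left(\tfrac{\log x}{x}\right)$ and control the exponential through its power series. Setting $u := \tfrac{\log x}{x}$, note that $1 + \tfrac{\log x + 1}{x} = 1 + u + \tfrac{1}{x}$, so the target inequality is equivalent to $\exp(u) < 1 + u + \tfrac{1}{x}$. Expanding $\exp(u) = 1 + u + \sum_{p=2}^{\infty} \tfrac{u^p}{p!}$, it therefore suffices to show that the tail $\sum_{p \ge 2} \tfrac{u^p}{p!}$ is strictly below $\tfrac{1}{x}$.

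First I would record two elementary facts about $u$ valid on the range $x > 1$: that $u > 0$, since both $\log x$ and $x$ are positive, and, more importantly, that $u < 1$, which follows from the standard bound $\log x < x$. The inequality $u < 1$ is precisely what licenses the factorization estimate $u^p = u^2 \cdot u^{p-2} \le u^2$ for every $p \ge 2$, giving $\sum_{p \ge 2} \tfrac{u^p}{p!} \le u^2 \sum_{p \ge 2} \tfrac{1}{p!} = u^2 (e - 2)$, where I evaluate the constant using $\sum_{p \ge 0} \tfrac{1}{p!} = e$.

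It then remains to show $u^2 (e-2) < \tfrac{1}{x}$. Since $e - 2 < 1$, it is enough to prove $u^2 < \tfrac{1}{x}$, which unwinds to $(\log x)^2 < x$. This last inequality is the one genuinely non-trivial ingredient and the step I expect to be the main obstacle; the remainder is a routine power-series estimate. I would dispatch it by the substitution $y = \log x$ (so $x = e^y$, $y > 0$), reducing the claim to $y^2 < e^y$, equivalently $y < e^{y/2}$ after taking square roots of positive quantities. The three-term lower bound $e^{y/2} \ge 1 + \tfrac{y}{2} + \tfrac{y^2}{8}$ then settles it, because $1 + \tfrac{y}{2} + \tfrac{y^2}{8} - y = \tfrac{1}{8}\big((y-2)^2 + 4\big) > 0$. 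Chaining the estimates yields $x^{1/x} \le 1 + u + u^2(e-2) < 1 + u + \tfrac{1}{x}$, which is the desired bound.
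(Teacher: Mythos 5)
Your proof is correct and takes essentially the same route as the paper's: write $x^{1/x}=\exp(u)$ with $u=\tfrac{\log x}{x}$, expand the series, bound the tail by $u^2(e-2)$ using $0<u<1$, and reduce to $u^2<\tfrac{1}{x}$ together with $e-2<1$. The only difference is that you explicitly isolate and prove the ingredient $(\log x)^2 < x$ (via the substitution $y=\log x$ and the bound $e^{y/2}\geq 1+\tfrac{y}{2}+\tfrac{y^2}{8}>y$), whereas the paper's step $u^2 \leq \tfrac{1}{x}\cdot 1$ uses this fact tacitly --- so your write-up is, if anything, slightly more complete.
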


We cite Lemma \ref{lemma: partition by hyperplanes} stated in \citet{On_the_Partition_of_Space_by_Hyperplanes_2023}. Note that we modify the statement to align with this paper.
\begin{lemma}[\citet{On_the_Partition_of_Space_by_Hyperplanes_2023}]\label{lemma: partition by hyperplanes}
Let $G(d_\mathrm{model}, m)$ be the maximum number of regions that are separated by $m$ hyperplanes in $\mathbb{R}^{d(k)}$. Then, 
\begin{equation}
G(d(k), m) = \sum_{d=0}^{d(k)} \binom{m}{d},
\end{equation}
where 
\begin{equation}
\binom{m}{d} = \begin{cases}
    \frac{m !}{d ! (m - d)!} & \text{if } d \leq m \\
    0 & \text{if } d > m
\end{cases}.
\end{equation}
\end{lemma}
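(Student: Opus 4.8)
The plan is to prove this classical Schläfli-type counting formula by a double induction on the ambient dimension $n := d(k)$ and the number of hyperplanes $m$. Concretely, I would show that the maximal region count satisfies the recurrence
\begin{equation}
G(n, m) = G(n, m-1) + G(n-1, m-1),
\end{equation}
and then verify that the closed form $\sum_{d=0}^{n}\binom{m}{d}$ obeys the same recurrence with the same boundary data, so the two sequences coincide. Throughout I would use the convention already fixed in the statement, namely $\binom{m}{d}=0$ whenever $d>m$ (and similarly $\binom{m}{-1}=0$), which makes the reindexing step below clean.

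First I would dispose of the base cases. With $m=0$ hyperplanes the whole space is a single region, and $\sum_{d=0}^{n}\binom{0}{d}=\binom{0}{0}=1$. In dimension $n=0$ the space is a single point that no hyperplane can subdivide, so $G(0,m)=1=\binom{m}{0}=\sum_{d=0}^{0}\binom{m}{d}$. These anchor both inductions.

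The heart of the argument is the inductive step in $m$. Given $m-1$ hyperplanes already placed in $\mathbb{R}^n$, I would add a new hyperplane $H_m$ and count the newly created regions. The key geometric observation is that adding $H_m$ splits an existing region into two precisely when $H_m$ meets that region's interior, and the number of regions so split equals the number of connected components into which $H_m$ itself is cut by its traces $\{H_i \cap H_m\}_{i<m}$. Each nonempty trace $H_i\cap H_m$ is a flat of dimension $n-2$, i.e.\ a hyperplane inside the $(n-1)$-dimensional affine space $H_m \cong \mathbb{R}^{n-1}$, and there are at most $m-1$ of them; by the inductive hypothesis applied inside $H_m$ they cut it into at most $G(n-1,m-1)$ pieces. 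Since each such piece crosses no old hyperplane, it lies in the interior of a single old region and splits exactly that region into two, adding one region apiece. This yields $G(n,m)\le G(n,m-1)+G(n-1,m-1)$, and placing the hyperplanes in general position (so that every intersection is transverse and the lower-dimensional arrangements are themselves maximal) attains equality. Verifying the closed form is then routine: Pascal's rule $\binom{m}{d}=\binom{m-1}{d}+\binom{m-1}{d-1}$ applied termwise, followed by a shift of index in the second sum (using $\binom{m-1}{-1}=0$), gives
\begin{equation}
\sum_{d=0}^{n}\binom{m}{d}=\sum_{d=0}^{n}\binom{m-1}{d}+\sum_{d=0}^{n-1}\binom{m-1}{d},
\end{equation}
which is exactly $G(n,m-1)+G(n-1,m-1)$ under the inductive hypothesis.

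The main obstacle will be the clean justification of the splitting count: one must argue carefully that each connected component of $H_m$ minus the union of the previous hyperplanes lies in the closure of a unique old region and increases the total by exactly one, and then that general position can be realized so that the bounds in every recursive dimension are attained simultaneously. Once that geometric bookkeeping is pinned down, the algebraic verification via Pascal's rule and the induction close the argument immediately.
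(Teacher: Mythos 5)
Your proposal is correct, but note that the paper itself offers no proof of this lemma at all: it is imported verbatim as a known result, citing \citet{On_the_Partition_of_Space_by_Hyperplanes_2023}, with only a remark that the statement has been rephrased to match the paper's notation. What you have written out is the standard Schl\"afli-style argument, and it is sound. The recurrence $G(n,m)=G(n,m-1)+G(n-1,m-1)$ is justified exactly as you say, and the one point you flag as the ``main obstacle'' is in fact routine once you invoke convexity: every region of a hyperplane arrangement is an intersection of open half-spaces, hence open and convex, so for any old region $R$ meeting $H_m$ the set $H_m\cap R$ is convex, hence connected, hence contained in a single component of $H_m\setminus\bigcup_{i<m}H_i$; conversely each such component avoids all old hyperplanes and so sits inside exactly one old region, which it splits into exactly two pieces (again by convexity, $R\setminus H_m$ has precisely two components). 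This gives a bijection between components of $H_m$ minus its traces and newly created regions, yielding the upper bound, and a generic arrangement (e.g.\ one in general position, obtainable by small perturbation) makes all the induced arrangements on each hyperplane simultaneously generic, so equality is attained at every level of the induction. Your base cases are fine, though $n=0$ is slightly degenerate (a hyperplane in $\mathbb{R}^0$ is empty); using $n=1$, where $m$ points cut the line into $m+1=\binom{m}{0}+\binom{m}{1}$ intervals, avoids any convention-chasing. In short, your argument is a complete and correct self-contained proof of a result the paper only cites, which is arguably a strengthening of the paper's presentation rather than a deviation from it.
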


\begin{lemma}\label{lemma: upper bound of distinct subspace set size}
For any function $d(k):\mathbb{N}\rightarrow\mathbb{N}$ that grows strictly slower than $k / \log k$; that is, $\lim_{k \rightarrow \infty} \frac{d(k)}{k / \log k} = 0$, $\log G\left(d(k), \binom{k}{2}\right)$ is a sub-linear function.

\begin{proof}
Since $d(k)$ grows strictly slower than $k / \log k$, there exists $k_0$ such that for any $k > k_0$, $d(k) < \frac{k^2}{2}$ holds. We assume $k > k_0$ for the remainder.
\begin{equation}
\begin{aligned}
& G\left(d(k), \binom{k}{2}\right) \\
& = \sum_{d=0}^{d(k)} \binom{\binom{k}{2}}{d} \text{ (Lemma \ref{lemma: partition by hyperplanes})}\\
& \leq \sum_{d=0}^{d(k)} \binom{k^2}{d} \\
& \leq \sum_{d=0}^{d(k)} \binom{k^2}{d(k)} \left(\text{because } d(k) < \frac{k^2}{2}\right)\\
& \leq 2d(k) \frac{k^2 (k^2 - 1) \cdots (k^2 - d(k) + 1)}{d(k)!}\\
& \leq  \frac{2d(k)k^{2d(k)}}{e^{\frac{11}{12}}d(k)^{d(k)+\frac{1}{2}} e^{-d(k)}}\text{ (Lemma \ref{lemma: stirling approximation})}\\
& \leq 2\sqrt{d(k)} e^{d(k)}\left(\frac{k^2}{d(k)}\right)^{\frac{d(k)}{k^2}k^2} \\
& \leq 2\sqrt{d(k)} e^{d(k)}\left(1 + \frac{\log\left(\frac{k^2}{d(k)}\right) + 1}{\frac{k^2}{d(k)}}\right)^{k^2} \\
& \qquad \qquad \qquad \qquad \qquad \text{ (Lemma \ref{lemma: inequality})}\\
& \leq 2\sqrt{d(k)} e^{d(k)}\left(1 + \frac{\tilde{d}(k)}{k^2}\right)^{\frac{k^2}{\tilde{d}(k)}\tilde{d}(k)} \\
& = 2\sqrt{d(k)} e^{d(k)+ \tilde{d}(k)},
\end{aligned}
\end{equation}
where
\begin{equation}
\begin{aligned}
\tilde{d}(k) &= d(k)\left(\log \left(\frac{k^2}{d(k)}\right)+1\right).
\end{aligned}
\end{equation}
Therefore, 
\begin{equation}
\begin{aligned}
& \frac{\log G\left(d(k), \binom{k}{2}\right)}{k} \\
& \leq \frac{\log \left(2\sqrt{d(k)} e^{d(k)+ \tilde{d}(k)}\right)}{k} \\
& \leq \frac{\log 2\sqrt{d(k)} + \tilde{d}{(k)} + \tilde{d}{(k)}}{k} \\
& = \frac{\log 2\sqrt{d(k)} + 2 d(k)\left(\log\left(\frac{k^2}{d(k)}\right) + 1\right)}{k}\\
& \leq \frac{\log 2\sqrt{d(k)} + d(k)\cdot6 \log k}{k}\\
& \leq \frac{\log 2\sqrt{d(k)}}{k} + 6\frac{d(k)}{k / \log k} \\
& \underset{k\rightarrow \infty}{\longrightarrow} 0,
\end{aligned}
\end{equation}
indicating $\log G\left(d(k), \binom{k}{2}\right)$ is a sub-linear function.
\end{proof}
\end{lemma}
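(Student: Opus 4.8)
The plan is to establish directly that $\frac{1}{k}\log G\!\left(d(k),\binom{k}{2}\right)\to 0$ as $k\to\infty$, which is precisely the claim that $\log G\!\left(d(k),\binom{k}{2}\right)$ is sublinear. I would begin by unfolding $G$ via Lemma \ref{lemma: partition by hyperplanes}, writing $G\!\left(d(k),\binom{k}{2}\right)=\sum_{d=0}^{d(k)}\binom{\binom{k}{2}}{d}$, and then coarsening $\binom{k}{2}\le k^2$ so that each summand is at most $\binom{k^2}{d}$. The hypothesis $d(k)=o(k/\log k)$ supplies a threshold $k_0$ beyond which $d(k)<k^2/2$; for such $k$ the coefficients $\binom{k^2}{d}$ are increasing in $d$ over the summation range, so the sum is dominated by $2d(k)\binom{k^2}{d(k)}$. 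This collapses the problem to estimating a single binomial coefficient.

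The second step is to convert $\binom{k^2}{d(k)}$ into a clean exponential form. I would use $\binom{k^2}{d(k)}\le k^{2d(k)}/d(k)!$ and apply the factorial lower bound of Lemma \ref{lemma: stirling approximation} to $d(k)!$, which after absorbing the constant $e^{-11/12}$ yields a bound of the shape $2\sqrt{d(k)}\,e^{d(k)}\left(k^2/d(k)\right)^{d(k)}$. The only genuinely delicate factor that remains is $\left(k^2/d(k)\right)^{d(k)}$.

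The crux of the argument, and the place where I expect the real work to lie, is controlling $\left(k^2/d(k)\right)^{d(k)}$ so that its exponent can be compared against $k$. The trick is to set $x=k^2/d(k)>1$ and observe that $\left(k^2/d(k)\right)^{d(k)}=\left(x^{1/x}\right)^{k^2}$, since $x\cdot d(k)=k^2$. Lemma \ref{lemma: inequality} then gives $x^{1/x}<1+\frac{\log x+1}{x}$, and with $\tilde d(k):=d(k)\bigl(\log(k^2/d(k))+1\bigr)$ one checks $\frac{\log x+1}{x}=\frac{\tilde d(k)}{k^2}$, so the factor is at most $\left(1+\tilde d(k)/k^2\right)^{k^2}\le e^{\tilde d(k)}$. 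Combining everything gives $G\!\left(d(k),\binom{k}{2}\right)\le 2\sqrt{d(k)}\,e^{d(k)+\tilde d(k)}$.

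Finally I would take logarithms and divide by $k$. Using $d(k)\le\tilde d(k)$ to fold the $d(k)$ term into $\tilde d(k)$, together with the crude bound $\log(k^2/d(k))+1\le 3\log k$ valid for large $k$, the quotient is at most $\frac{\log(2\sqrt{d(k)})}{k}+6\,\frac{d(k)}{k/\log k}$. The first term vanishes because $\log(2\sqrt{d(k)})=O(\log k)$, and the second vanishes precisely by the standing hypothesis $\lim_{k\to\infty} d(k)/(k/\log k)=0$. Hence the quotient tends to $0$, proving sublinearity. The main subtleties to verify are that all estimates hold uniformly past the threshold $k_0$ where $d(k)<k^2/2$, and that $x>1$ so that Lemma \ref{lemma: inequality} is applicable.
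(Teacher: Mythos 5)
Your proposal is correct and follows essentially the same route as the paper's own proof: the same chain $\binom{k}{2}\le k^2$, domination of the sum by $2d(k)\binom{k^2}{d(k)}$ via $d(k)<k^2/2$, the Stirling bound, the substitution $x=k^2/d(k)$ with Lemma \ref{lemma: inequality} to get the $e^{\tilde d(k)}$ factor, and the final division by $k$ using $\log(k^2/d(k))+1\le 3\log k$. You even correct a cosmetic slip in the paper by writing $\bigl(1+\tilde d(k)/k^2\bigr)^{k^2}\le e^{\tilde d(k)}$ as an inequality rather than an equality.
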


\begin{lemma}\label{lemma: G is sub-exponential}
When $d(k)$ scales strictly slower than $k/\log k$, for any $\nu > 0$, 
\begin{equation}
\lim_{k\rightarrow \infty} \frac{G\left(d(k), \binom{k}{2}\right)}{\exp(\nu k)} = 0
\end{equation}
holds.

\begin{proof}
From Lemma \ref{lemma: upper bound of distinct subspace set size}, since $d(k)$ grows strictly slower than $k/\log k$, $\log G\left(d(k), \binom{k}{2}\right)$ grows sub-linearly. Therefore, for any $\nu > 0$, there exists $k_0$ such that for any $k > k_0$, 
\begin{equation}
\frac{\log G\left(d(k), \binom{k}{2}\right)}{k}< \frac{\nu}{2}
\end{equation}
holds; thus, for any $\nu$ and $k > k_0$, 
\begin{equation}
\begin{aligned}
&\lim_{k\rightarrow \infty} \frac{G\left(d(k), \binom{k}{2}\right)}{\exp(\nu k)} \\
& = \lim_{k\rightarrow \infty} \exp\left(\log G\left(d(k), \binom{k}{2}\right) - \nu k\right) \\
& = \lim_{k\rightarrow \infty} \exp\left(\left(\frac{\log G\left(d(k), \binom{k}{2}\right)}{k} - \nu\right) k\right) \\
& \leq \lim_{k\rightarrow \infty} \exp\left(- \frac{\nu}{2} k\right) \\
& = 0.
\end{aligned}
\end{equation}
\end{proof}
\end{lemma}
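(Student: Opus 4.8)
The plan is to deduce this limit directly from Lemma~\ref{lemma: upper bound of distinct subspace set size}, which already establishes the harder quantitative fact that $\log G\left(d(k), \binom{k}{2}\right)$ grows sub-linearly in $k$ whenever $d(k)$ scales strictly slower than $k/\log k$. Since the present statement asserts only that $G\left(d(k), \binom{k}{2}\right)$ is dominated by every exponential $\exp(\nu k)$, it amounts to exponentiating that sub-linear growth estimate; no new combinatorial work is needed.

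Concretely, I would proceed as follows. First invoke Lemma~\ref{lemma: upper bound of distinct subspace set size} to obtain $\lim_{k\to\infty} \frac{\log G\left(d(k), \binom{k}{2}\right)}{k} = 0$. Then fix an arbitrary $\nu > 0$: by the definition of this limit, there exists $k_0$ such that for all $k > k_0$ we have $\frac{\log G\left(d(k), \binom{k}{2}\right)}{k} < \nu/2$. The key step is to rewrite the target ratio as a single exponential,
\begin{equation}
\frac{G\left(d(k), \binom{k}{2}\right)}{\exp(\nu k)} = \exp\!\left(\left(\frac{\log G\left(d(k), \binom{k}{2}\right)}{k} - \nu\right)k\right),
\end{equation}
so that for $k > k_0$ the exponent is at most $-\tfrac{\nu}{2}k$, yielding the bound $\exp\!\left(-\tfrac{\nu}{2}k\right)$, which tends to $0$ as $k\to\infty$. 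Since $\nu$ was arbitrary, the claimed limit follows.

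There is essentially no obstacle at this final stage: all the analytic difficulty (the binomial and Stirling estimates from Lemma~\ref{lemma: stirling approximation}, the hyperplane-counting identity of Lemma~\ref{lemma: partition by hyperplanes}, and the elementary bound of Lemma~\ref{lemma: inequality}) has already been absorbed into Lemma~\ref{lemma: upper bound of distinct subspace set size}. The only point requiring minor care is the order of quantifiers, namely that the threshold $k_0$ depends on $\nu$; but because we only need the limit separately for each fixed $\nu$, this dependence causes no difficulty.
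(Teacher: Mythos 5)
Your proposal is correct and follows essentially the same route as the paper's own proof: both invoke Lemma~\ref{lemma: upper bound of distinct subspace set size} to get sub-linear growth of $\log G\left(d(k), \binom{k}{2}\right)$, extract a threshold $k_0$ with $\frac{\log G\left(d(k), \binom{k}{2}\right)}{k} < \frac{\nu}{2}$, and rewrite the ratio as $\exp\left(\left(\frac{\log G\left(d(k), \binom{k}{2}\right)}{k} - \nu\right)k\right) \leq \exp\left(-\frac{\nu}{2}k\right) \rightarrow 0$. Your explicit remark about the quantifier order (that $k_0$ may depend on $\nu$) is a small clarification beyond what the paper states, but otherwise the arguments coincide.
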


\subsection{Main proof}\label{app: proof of prop 3, subsec: proof}
\begin{proof}
We derive a contradiction by assuming the existence of a $d(k)$-width network and a generator head $f^{\mathrm{gen}}: \mathbb{R}^{d(k)} \rightarrow \mathbb{R}^{2k+2}$ that generates $\texttt{Shuffle-dyck}_k$. Denote the matrix of the generator head by 
\begin{equation}
W^{\mathrm{gen}} = \begin{bmatrix}
\vdots \\
\mathbf{w}_1^\top \\
\vdots \\
\mathbf{w}_k^\top \\
\vdots
\end{bmatrix}\begin{matrix*}
\scalebox{1.6}{\}}k \text{ dim.}\\
\scalebox{1.05}{$\left.\rule{0pt}{2.4em}\right\}$} k \text{ dim.} \\
\scalebox{1.6}{\}}2 \text{ dim.} \rule{0pt}{1.3em}
\end{matrix*} \in \mathbb{R}^{(2k+2)\times d(k)}.
\end{equation}

Take into account the $2^k$ vectors $\{\mathbf{x}_l \}_{l\in [2^k]}$ corresponding to the input strings described in the proof sketch; that is, $\operatorname{bin}_t(l) = 0$ means the type $t$ is closed, while $\operatorname{bin}_t(l) = 1$ means the type $t$ is unclosed. Here, the generator head satisfies 
\begin{equation}
\mathbf{w}_{t_0}^\top \mathbf{x}_{l} > \mathbf{w}_{t_1}^\top \mathbf{x}_{l},
\end{equation}
for any $l \in \left[2^k \right]$ and for any $t_1 \in \operatorname{id}_1(l), t_0 \in \operatorname{id}_0(l)$. This is because the logit for the unclosed type must be greater than that for the closed type.

Consider the subspace set defined by $\{\mathbf{w}_{t}\}_{t=1}^{k}$, since $\mathbf{x}_{l} \in R(l)$, from Lemma \ref{lemma: necessary number of distinct subspace}, there exists at least $\left\lfloor \sqrt{2}^k\right\rfloor$-size distinct subspace set.

However, the generator head can create at most $G\left(d(k), \binom{k}{2}\right)$-size separated regions in $\mathbb{R}^{d(k)}$, leading a contradiction: the number of separable regions increases strictly slower than the necessary size of distinct subspace set from Lemma \ref{lemma: G is sub-exponential}.
\end{proof}

\section{Proof of Proposition \ref{proposition: create pseudo starting signal without bos}}\label{app: proof of lemma create pseudo starting signal without bos}

\begin{proposition}[Restated]
Assume that there exists a linear subspace such that the embeddings are distinct from each other and have a constant $2$-norm. Then, there exists a Transformer block without a starting token that creates a pseudo starting signal $\hat{s}_i$ for any string $w_{1:n}$ whose first two tokens are different, where
\begin{equation}
\hat{s}_i = \begin{cases}
1 & \text{if } i = 1 \\
0 & \text{otherwise }
\end{cases}.
\end{equation}

Specifically, this block transforms the constants-padded vector $\hat{\mathbf{x}}_i$ as follows:
\begin{equation}
\hat{\mathbf{x}}_{i}  = 
\begin{bmatrix}
\mathbf{x}_{i} \\
\vdots \\
0 
\end{bmatrix} \mapsto \begin{bmatrix}
\mathbf{x}_{i} \\
\vdots \\
\hat{s}_i
\end{bmatrix}.
\end{equation}
\end{proposition}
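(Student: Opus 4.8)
The plan is to exploit the single structural asymmetry that causal masking creates at position $1$: the first token is the unique position whose attention window is a singleton, so it is forced to attend only to itself, whereas every position $i\ge 2$ can also look back at positions $1,\dots,i-1$. Because the embeddings in the assumed subspace all share the same $2$-norm $c$ and are pairwise distinct, self-attention is the strict \emph{argmax} of the inner-product scores; equivalently, after negating the scores the query attends to the strict \emph{argmin}. The assumption that the first two tokens differ guarantees that for \emph{every} $i\ge 2$ the prefix $\{1,\dots,i\}$ contains a token whose embedding differs from $\mathbf x_i$ (at least one of $w_1,w_2$ is unequal to $w_i$), so the minimizing inner product there is strictly below $c^2$, while at $i=1$ the minimizer is the token itself. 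This is the fact that singles out position $1$.

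Concretely, I would restrict $W_Q,W_K$ to the constant-norm subspace and scale by a large negative constant $-C$, and set $W_V=-I$ writing into a scratch block, so that in the sharpened (large-$C$) limit the attention output placed in the scratch coordinates is $-\mathbf x_{j^\ast(i)}$, where $j^\ast(i)=\arg\min_{j\le i}\langle\mathbf x_i,\mathbf x_j\rangle$. Keeping $\mathbf x_i$ in the residual stream, a single linear form recovers $u_i:=\mathbf b^\top(\mathbf x_i-\mathbf x_{j^\ast(i)})$ for a generically chosen $\mathbf b$. At $i=1$ the softmax is over a single key, so the attention output is \emph{exactly} $-\mathbf x_1$ and $u_1=0$ exactly; for $i\ge2$ the finitely many possible limiting values of $\mathbf x_i-\mathbf x_{j^\ast(i)}$ are nonzero, so choosing $\mathbf b$ to avoid their orthogonal complements and taking $C$ large enough forces $|u_i|\ge\mu$ for a fixed gap $\mu>0$.

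The feed-forward layer then realizes the one-dimensional tent $T(u)=[u+\epsilon]_+-2[u]_++[u-\epsilon]_+$ with $\epsilon<\mu$, normalized to peak $1$, and writes it into the $\hat s$ coordinate while leaving $\mathbf x_i$ untouched (only the $\hat s$ coordinate is written, and the scratch block may retain the intermediate $-\mathbf x_{j^\ast(i)}$, matching the stated output form). Since $T$ is supported on $(-\epsilon,\epsilon)$ with $u_1=0$ and $|u_i|>\epsilon$ for $i\ge2$, we obtain $\hat s_1=1$ and $\hat s_i=0$ exactly.

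The main obstacle is reconciling this exact $\{0,1\}$ output with the two softening effects of the architecture: the softmax is not literally hard, and the RMS layer normalization inside the feed-forward network rescales the pre-activations by an input-dependent factor. Both dissolve because of the discrete, gap-separated structure. For $i\ge2$ the three tent pre-activations $u_i+\epsilon,\,u_i,\,u_i-\epsilon$ share one sign, so either all or none survive the ReLU, and the combination $(1,-2,1)$ annihilates the resulting affine expression \emph{regardless of the positive RMS rescaling}, giving an exact $0$; the residual softmax error need only be small enough to keep $|u_i|>\epsilon$, which holds for large $C$. At $i=1$ the pre-activation vector is the fixed $(\epsilon,0,-\epsilon)$, whose RMS is an input-independent constant, so the peak can be normalized to exactly $1$ by absorbing that constant into $\boldsymbol\gamma$ (or $W_2$). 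Thus the only quantitative work is choosing $\mathbf b$ generically, fixing $\epsilon<\mu$, and taking $C$ large, after which correctness is exact rather than approximate.
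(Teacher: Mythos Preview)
Your high-level idea exploits the right asymmetry at position $1$, but the reduction to a single scalar $u_i=\mathbf{b}^\top(\mathbf{x}_i-\mathbf{x}_{j^\ast(i)})$ has a gap. The hypothesis (distinct, constant-norm embeddings) does not exclude ties in the inner products: already $\mathbf{x}_a=(1,0)$, $\mathbf{x}_b=(\cos\theta,\sin\theta)$, $\mathbf{x}_c=(\cos\theta,-\sin\theta)$ give $\langle\mathbf{x}_a,\mathbf{x}_b\rangle=\langle\mathbf{x}_a,\mathbf{x}_c\rangle$. When the argmin is attained by several distinct embeddings, the sharpened-attention output at a position $i\ge2$ is a convex combination $\sum_v\lambda_v\mathbf{x}_v$ over the tied embeddings, with weights $\lambda_v$ determined by their counts in $w_{1:i}$; as the string varies these weights are dense in the simplex, so the ``limiting values'' of $\mathbf{x}_i-(\text{output})$ fill an entire segment or polytope rather than a finite set. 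For any fixed $\mathbf{b}$, the value $\mathbf{b}^\top(\mathbf{x}_i-\text{output})$ then ranges over an interval that can contain $0$; the set of $\mathbf{b}$ whose orthogonal hyperplane meets such a segment has positive measure, and the constraints coming from different query/argmin configurations can be mutually incompatible. So the ``generic $\mathbf{b}$'' argument does not yield a uniform gap $|u_i|\ge\mu$, and the tent can misfire at $i\ge2$.

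The paper sidesteps both this issue and the need for any score sharpening: it uses \emph{uniform} attention (all scores zero) to place $\frac{1}{i}\sum_{j\le i}\mathbf{x}_j$ exactly in a scratch block, and then detects whether $\mathbf{x}_i$ equals this mean by feeding $\bigl(\mathbf{x}_i-\tfrac{1}{i}\sum_j\mathbf{x}_j,\;1\bigr)$ through the RMS layer normalization and reading off the normalized $1$-coordinate, which equals $1/\|W_1\mathbf{h}_i\|_2=1/\sqrt{1+\|\mathbf{x}_i-\text{mean}\|^2}$ and is exactly $1$ iff the difference vector vanishes. The constant-norm assumption is used only to show that $\mathbf{x}_i=\tfrac{1}{i}\sum_j\mathbf{x}_j$ forces $\mathbf{x}_1=\cdots=\mathbf{x}_i$, hence (given $w_1\neq w_2$) occurs only at $i=1$. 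Replacing your scalar $u_i$ and tent by this RMS-based zero-vector test would close the gap; the essential point is that RMS compares the full difference vector to zero rather than projecting it through a single linear functional.
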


\begin{proof}
Assume the extended input representation $\hat{\mathbf{x}}_{i} \in \mathbb{R}^{d_\mathrm{model}^\prime}$, where $d_\mathrm{model}^\prime = 2d_\mathrm{model} + 2$, instead of the original representation $\mathbf{x}_{i} \in \mathbb{R}^{d_\mathrm{model}}$ as follows:
\begin{equation}
\hat{\mathbf{x}}_{i}  = 
\begin{bmatrix}
\mathbf{x}_{i} \\
\mathbf{0} \\
1 \\
0
\end{bmatrix}
\begin{matrix*}[l]
\} d_\mathrm{model} \text{ dim.}\\
\} d_\mathrm{model} \text{ dim.}\\
\} 1 \text{ dim.}\\
\} 1 \text{ dim.}
\end{matrix*}.
\end{equation}

The attention layer, leveraging the uniform attention, transforms the vector into
\begin{equation}
\begin{bmatrix}
\mathbf{x}_{i} \\
\frac{1}{i}\sum_{j=1}^i \mathbf{x}_{j} \\
1 \\
0
\end{bmatrix}.
\end{equation}

Then, in the feed-forward network layer, the first linear transformation calculates $\mathbf{x}_{i} - \frac{1}{i}\sum_{j=1}^i \mathbf{x}_{j}$. The norm of this vector is $0$ if and only if $\mathbf{x}_{i} = \frac{1}{i}\sum_{j=1}^i \mathbf{x}_{j}$. Thanks to the subsequent layer normalization, the larger the $2$-norm of the vector $\mathbf{x}_{i} - \frac{1}{i}\sum_{j=1}^i \mathbf{x}_{j}$ becomes, the smaller the transformed value of the constant $1$ becomes. This allows the subsequent ReLU activations and the linear transformation to implement the conditional branch. We then show the specific implementation.

Set the parameters $W_1, W_2 \in \mathbb{R}^{d_\mathrm{model}^\prime \times d_\mathrm{model}^\prime}$ and $\boldsymbol{\beta}, \boldsymbol{\gamma} \in \mathbb{R}^{d_\mathrm{model}^\prime}$ as follows:
\begin{align}
&W_1 = \begin{bmatrix}
I & -I & \mathbf{0} & \mathbf{0} \\
O & O & \mathbf{0} & \mathbf{0} \\
\mathbf{0}^\top & \mathbf{0}^\top & 1 & 0 \\
\mathbf{0}^\top & \mathbf{0}^\top & 0 & 0 
\end{bmatrix}, \\
& W_2 = \begin{bmatrix}
O & O & \mathbf{0} & \mathbf{0} \\
O & O & \mathbf{0} & \mathbf{0} \\
\mathbf{0}^\top & \mathbf{0}^\top & 0 & 0 \\
\mathbf{0}^\top & \mathbf{0}^\top & \frac{1}{\epsilon} & 0
\end{bmatrix}, \\
& \boldsymbol{\beta} = 
\begin{bmatrix}
\mathbf{0} \\
\mathbf{0} \\
- 1 + \epsilon \\
0
\end{bmatrix}, \\
& \boldsymbol{\gamma} = \sqrt{\frac{1}{d_\mathrm{model}^\prime}}\mathbf{1},
\end{align}
where $\epsilon$ is a positive constant.

Then, the output of the RMS layer normalization is given by 
\begin{equation}
\begin{aligned}
&\operatorname{LN}_{\mathrm{RMS}}\left(W_1 \mathbf{h}_i \right) \\
&=\operatorname{LN}_{\mathrm{RMS}}\left(
\begin{bmatrix}
\mathbf{x}_{i} - \frac{1}{i}\sum_{j=1}^i \mathbf{x}_{j}\\
\mathbf{0} \\
1 \\
0
\end{bmatrix} \right) \\
&= \frac{1}{\|W_1 \mathbf{h}_i\|_2}
\begin{bmatrix}
\mathbf{x}_{i} - \frac{1}{i}\sum_{j=1}^i \mathbf{x}_{j}\\
\mathbf{0} \\
1 \\
0
\end{bmatrix} + \begin{bmatrix}
\mathbf{0} \\
\mathbf{0} \\
-1 + \epsilon \\
0
\end{bmatrix} \\
&= \frac{1}{\|W_1 \mathbf{h}_i\|_2}
\begin{bmatrix}
\mathbf{x}_{i} - \frac{1}{i}\sum_{j=1}^i \mathbf{x}_{j}\\
\mathbf{0}\\
1 - \|W_1 \mathbf{h}_i\|_2\left(1 - \epsilon\right) \\
0
\end{bmatrix}.
\end{aligned}
\end{equation}

Therefore, the output of the feed-forward network layer is given by
\begin{equation}
\begin{aligned}
&W_2 \left[\frac{1}{\|W_1 \mathbf{h}_i\|_2}
\begin{bmatrix}
\mathbf{x}_{i} - \frac{1}{i}\sum_{j=1}^i \mathbf{x}_{j}\\
\mathbf{0} \\
1 - \|W_1 \mathbf{h}_i\|_2\left(1 - \epsilon\right) \\
0
\end{bmatrix}\right]_+ \\
&= \begin{bmatrix}
\mathbf{0} \\
\mathbf{0} \\
0 \\
\frac{1}{\epsilon}\left[\frac{1}{\|W_1 \mathbf{h}_i\|_2} - 1 + \epsilon\right]_+
\end{bmatrix} \\
&= \begin{bmatrix}
\mathbf{0} \\
\mathbf{0} \\
0 \\
\mathbb{I}\left[\mathbf{x}_i = \frac{1}{i}\sum_{j=1}^i \mathbf{x}_j\right]
\end{bmatrix} 
\end{aligned}
\end{equation}

The reason why the last equality holds is explained below: since
\begin{equation}
\begin{aligned}
\|W_1 \mathbf{h}_i\|_2^2 &= \left\|\mathbf{x}_{i} - \frac{1}{i}\sum_{j=1}^i \mathbf{x}_{j}\right\|_2^2 +  1^2\\
&\begin{cases}
= 1 & \text{if } \mathbf{x}_{i} = \frac{1}{i}\sum_{j=1}^i \mathbf{x}_{j} \\
> 1 & \text{if } \mathbf{x}_{i} \neq \frac{1}{i}\sum_{j=1}^i \mathbf{x}_{j} \\
\end{cases},
\end{aligned}
\end{equation}
the entry $1$ is transformed to $1$ if $\mathbf{x}_{i} = \frac{1}{i}\sum_{j=1}^i \mathbf{x}_{j}$; otherwise, the entry becomes less than $1$. Therefore, given a sufficiently small constant $\epsilon$,
\begin{equation}
\begin{aligned}
&\frac{1}{\|W_1 \mathbf{h}_i\|_2} - 1 + \epsilon\\
&\begin{cases}
= \epsilon & \text{if }\mathbf{x}_{i} = \frac{1}{i}\sum_{j=1}^i \mathbf{x}_{j} \\
< 0 & \text{otherwise}
\end{cases}
\end{aligned}
\end{equation}
holds, indicating 
\begin{equation}
\frac{1}{\epsilon} \left[\frac{1}{\|W_1 \mathbf{h}_i\|_2} - 1 + \epsilon\right]_+ = \mathbb{I}\left[\mathbf{x}_i = \frac{1}{i}\sum_{j=1}^i \mathbf{x}_j\right].
\end{equation}

Finally, we give a proof of the following proposition: $\mathbf{x}_{1} \neq \mathbf{x}_{2} \Leftrightarrow \mathbf{x}_{i} \neq \frac{1}{i}\sum_{j=1}^i \mathbf{x}_{j}$ (for $i\geq 2$).
\begin{equation}\begin{aligned}
\mathbf{x}_{i} = \frac{1}{i}\sum_{j=1}^i \mathbf{x}_{j} &\Longrightarrow \left\langle \mathbf{x}_{i}, \frac{1}{i}\sum_{j=1}^i \mathbf{x}_{j}\right\rangle = 1 \\ 
&\Longrightarrow \frac{1}{i} \sum_{j=1}^i \left\langle\mathbf{x}_{i}, \mathbf{x}_{j}\right\rangle = 1 \\
&\Longrightarrow \forall j \in [i]. \mathbf{x}_i = \mathbf{x}_j.
\end{aligned}\end{equation}
This is because 
\begin{equation}
\begin{aligned}
&\frac{1}{i} \sum_{j=1}^i \left\langle\mathbf{x}_{i}, \mathbf{x}_{j}\right\rangle \leq \frac{1}{i} \sum_{j=1}^i \left\|\mathbf{x}_{i}\| \| \mathbf{x}_{j}\right\|  = 1
\end{aligned}
\end{equation}
holds for any $i$, and the equality holds if and only if $\forall j\in[i].\mathbf{x}_i = \mathbf{x}_j$ holds. On the other hand, the converse is straightforward. Therefore, 
\begin{align}
&\forall i \geq 2. \mathbf{x} \neq \frac{1}{i}\sum_{j=1}^i \mathbf{x}_{j} \\
&\Longleftrightarrow \forall i \geq 2. \exists j \in [i]. \mathbf{x}_i \neq \mathbf{x}_j \\
&\Longleftrightarrow \mathbf{x}_1 \neq \mathbf{x}_2, 
\end{align}
indicating that when $\mathbf{x}_1 \neq \mathbf{x}_2$, A Transformer block can create a pseudo starting signal $\hat{s}_i$ by itself.
\end{proof}

\section{Proof of Corollary \ref{corollary: transformers without bos recognize dyck_k}}\label{app: dyck recognition without bos}

Here, we present a method to construct a Transformer without positional encoding and the BOS token that recognizes the $\texttt{Dyck}_k$ language for an input string whose first two characters are different. 

\begin{corollary}[Restated, Transformers without a starting token, $\texttt{Dyck}_k$ probabilistic recognition]
There exists a $9$-layer causal Transformer without a starting token that recognizes the $\texttt{Dyck}_k$ language with probability at least $1-1/k$.
\end{corollary}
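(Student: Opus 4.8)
The plan is to graft the pseudo starting-signal construction of Proposition~\ref{proposition: create pseudo starting signal without bos} onto the five-layer recognizer of Theorem~\ref{theorem: transformers with bos recognize dyck_k}, letting the manufactured signal $\hat{s}_i$ play exactly the role that the genuine $\texttt{<bos>}$ signal $s_i$ played there. First I would run the block of Proposition~\ref{proposition: create pseudo starting signal without bos} to produce $\hat{s}_i$, which equals $1$ precisely at position $i=1$ and $0$ elsewhere whenever $w_1 \neq w_2$. Since $\hat{s}$ is a single scalar coordinate marking one distinguished position, I would then re-run the constructions of the Theorem~\ref{theorem: transformers with bos recognize dyck_k} proof almost verbatim, everywhere substituting $\hat{s}_i$ for $s_i$ and treating position $1$ as the anchor in place of position $0$.

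Most of the pipeline transfers with only a reindexing. The first-layer pseudo positional encoding places the attention score $a$ on the unique token carrying the starting signal, so with $\hat{s}$ the query at position $i$ attends to position $1$ with weight $\exp(a)/(\exp(a)+(i-1))$, yielding $\phi(i)=\tan^{-1}((i-1)/\exp(a))$, a harmless shift. The depth counters of the second and third layers are unaffected in their essential output, because the depth is read off as the ratio of two coordinates sharing the same denominator, so $\theta(\operatorname{d}_i)$ and $\theta(\operatorname{d}_i+1)$ still come out correctly; note that position $1$ now contributes its own openness $o_1=1$ to the net count, which is exactly what is wanted. The final classifier is likewise unchanged: it checks $\bigwedge_i(\operatorname{q}_i>0)$ together with $\operatorname{d}(w_{1:n})=0$.

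The quantitative $1-1/k$ is governed solely by the applicability of Proposition~\ref{proposition: create pseudo starting signal without bos}, which requires the first two tokens to differ. Under the $\texttt{Dyck}_k$ language generation process, $w_1$ is always an open bracket $\langle_{t_1}$ with $t_1\sim\boldsymbol{\pi}$, and the failure event $w_2=\langle_{t_1}$ has conditional probability $q\pi_{t_1}$; averaging over $t_1$ gives $q\sum_t \pi_t^2 = q\|\boldsymbol{\pi}\|_2^2$, which in the symmetric case $\boldsymbol{\pi}=(1/k,\dots,1/k)$ equals $q/k < 1/k$. Hence the recognizer is correct on every input whose first two tokens differ, an event of probability at least $1-1/k$, which is the claimed bound.

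The hard part, and the reason the layer budget rises from five to nine, is that the pseudo-start sits on a genuine open bracket rather than on a neutral sentinel. In Theorem~\ref{theorem: transformers with bos recognize dyck_k} the $\texttt{<bos>}$ token had type $\mathbf{0}$, so an over-closing bracket (one driving the depth negative) attended to $\texttt{<bos>}$, fetched the zero type, and was rejected through a type mismatch $\tilde{\mathbf{t}}_i \neq \mathbf{t}_i$, giving $\operatorname{q}_i<0$. With $\hat{s}$ riding on a real $\langle_{t_1}$, an over-closing bracket of type $t_1$ can instead fetch $\mathbf{t}_1=\mathbf{t}_i$ and be falsely accepted (for instance $\langle_1\rangle_1\rangle_1\langle_1$, which has final depth $0$ yet an invalid prefix). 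So I must add explicit layers that detect negative prefix depth directly, e.g. aggregating the indicator $[\sin(-\theta(\operatorname{d}_i))]_+$ over all prefixes in the same attention-based manner used for $\bigwedge_i Q(w_{0:i})$, and fold a ``some prefix has $\operatorname{d}_i<0$'' flag into the rejection condition. Verifying that this negative-depth detector composes correctly with the re-anchored retrieval and propositional-variable layers, and that the whole arrangement fits within nine layers, is where I expect the real work to lie; the probability estimate itself is a one-line computation.
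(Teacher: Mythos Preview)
Your route differs from the paper's. After layer~1 produces $\hat{s}_i$, the paper does \emph{not} anchor at position~$1$ and patch the resulting hole with a negative-depth detector. Instead it spends layers~2--4 to manufacture, at every position $i$, the full representation $\mathbf{x}_{i-1}$ of the \emph{previous} token, with the feed-forward network at $i=1$ overwriting $(\mathbf{t}_{i-1},o_{i-1},s_{i-1})$ to the neutral BOS values $(\mathbf{0},0,1)$. A synthetic position~$0$ carrying a true sentinel is thereby restored, and in the remaining layers the key/value matrices act on this shifted stream while the query matrices act on $\mathbf{x}_i$. The type-matching of Theorem~\ref{theorem: transformers with bos recognize dyck_k}'s fourth layer then works verbatim: an over-closing bracket again fetches type $\mathbf{0}$ and is rejected, so no separate negative-depth check is needed. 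The price is that attention from position~$n$ now sees only $\operatorname{q}_0,\dots,\operatorname{q}_{n-1}$ (position~$j$ carries $\operatorname{q}_{j-1}$), so $\operatorname{q}_n$ must be computed in its own layer and conjoined locally---this is why the paper spends two layers on $\operatorname{q}_{i-1}$ and $\operatorname{q}_i$, and the total comes to nine.

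Your alternative is viable, but one step you call harmless is not automatic. With the anchor at position~$1$, that position receives attention weight $\exp(a)/(\exp(a)+(i-1))$ while every other position receives $1/(\exp(a)+(i-1))$. Since $o_1=1$, the depth numerator becomes $\exp(a)+\sum_{j\ge 2}o_j=\exp(a)+\operatorname{d}_i-1$ rather than $\operatorname{d}_i$, and after layer normalization one obtains $\tan\theta=1+(\operatorname{d}_i-1)/\exp(a)$, not $\operatorname{d}_i/\exp(a)$. This is repairable by the same trick the third layer of Theorem~\ref{theorem: transformers with bos recognize dyck_k} uses (have the value matrix emit $o_{i_k}+(\exp(-a)-1)\hat{s}_{i_k}$ in the depth coordinate), but it does not fall out of the reindexing you describe. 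Your probability computation is correct; the paper itself writes $r/k$ where the generation process actually gives $q/k$ for the collision, but either way the $1-1/k$ bound holds.
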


\begin{proof}
Here, for clarity, we omit the specific implementation except that of the fourth layer. Instead, we outline the construction.

Firstly, using the starting token created by Proposition \ref{proposition: create pseudo starting signal without bos}, create pseudo positional encoding, which allows the Transformer to compute the same representation $\mathbf{x}_{i-1}$ as used in Theorem \ref{theorem: transformers with bos recognize dyck_k}.
The proof of Theorem \ref{theorem: transformers with bos recognize dyck_k} does not require the query to assign an attention score on itself; thus, it is possible to calculate $\operatorname{q}(w_{0:i})$ by making the query matrix focus on $\mathbf{x}_i^{(\ell)}$ and the key/value matrices focus on $\mathbf{x}_{i-1}^{(\ell)}$. 
Moreover, by focusing solely on $\mathbf{x}_{i-1}^{(\ell)}$, it is possible to compute $\bigwedge_{j=0}^{i-1}Q(w_{0:j-1})$ in the same manner as described in Appendix \ref{app: dyck recognition with bos}. Finally, to check whether the input string is a prefix for $\texttt{Dyck}_k$, it is sufficient to compute $Q(w_{0:i}) \wedge \bigwedge_{j=0}^{i-1}Q(w_{0:j-1})$.

Specifically, the following nine layers can recognize the $\texttt{Dyck}_k$ language.

\begin{description}
    \item[First layer] creates a pseudo starting signal $\hat{s}_i$ using Proposition \ref{proposition: create pseudo starting signal without bos}.
    \item[Second and third layers] create vectors corresponding $\phi(i-1)$ and $\phi(i)$, respectively.
    \item[Fourth layer] computes the same representation $\mathbf{x}_{i-1}$ as in Appendix \ref{app: vector representation}.
    \item[Fifth and sixth layers] create vectors corresponding $\operatorname{d}(w_{0:i-1})$ and $\operatorname{d}(w_{0:i})+1$, respectively.
 
    \item[Seventh and eighth layers] compute $\operatorname{q}_{i-1}$ and $\operatorname{q}_{i-1}$, which correspond to the propositional variables $Q(w_{0:i-1})$ and $Q(w_{0:i})$, respectively.

    \item[Ninth layer] computes $Q(w_{0:i}) \wedge \bigwedge_{j=0}^{i-1}Q(w_{0:j}) \wedge \operatorname{d}(w_{0:i}) + 1= 1$.
\end{description}

\subsection{How to compute $\mathbf{x}_{i-1}$}
The attention layer in the fourth layer, leveraging the positional encoding, computes
\begin{align}
\mathbf{h}_i^{(4)} &=
\begin{bmatrix}
\vdots  \\
\hat{s}_i \\
\vdots  \\
\mathbf{t}_{i_\mathrm{prev}} \\
o_{i_\mathrm{prev}} \\
\vdots 
\end{bmatrix}, 
\end{align}
where 
\begin{align}
& \mathbf{t}_{i_\mathrm{prev}}  = \begin{cases}
    \mathbf{t}_{1} & \text{if } i = 1 \\
    \mathbf{t}_{i-1} & \text{if } i > 1 
\end{cases}, \\
& o_{i_\mathrm{prev}}  = \begin{cases}
    \mathbf{o}_{1} & \text{if } i = 1 \\
    \mathbf{o}_{i-1} & \text{if } i > 1 
\end{cases}. 
\end{align}
In the subsequent feed-forward network layer, set the parameters $W_1^{(4)}, W_2^{(4)}\in \mathbb{R}^{d_\mathrm{model} \times d_{\mathrm{model}}}$ and $\boldsymbol{\beta}^{(4)}, \boldsymbol{\gamma}^{(4)} \in \mathbb{R}^{d_\mathrm{model}}$ as follows:
\begin{align}
W_1^{(4)} &= 
\begin{bmatrix}
\cdots & \mathbf{0} & \cdots & I & \mathbf{0} & \cdots \\
\cdots & \mathbf{0} & \cdots & -I & \mathbf{0} & \cdots \\
\cdots & 0 & \cdots & \mathbf{0}^\top & 1 & \cdots \\
\cdots & 0 & \cdots & \mathbf{0}^\top & -1 & \cdots \\
\cdots & C^{(4)} & \cdots & \mathbf{0}^\top & 0 & \cdots \\
 & \vdots & & \vdots & \vdots & 
\end{bmatrix},\\
W_2^{(4)} &= 
\begin{bmatrix}
\vdots & \vdots & \vdots & \vdots & \vdots &  \\
I & -I & \mathbf{0} & \mathbf{0} & \mathbf{0} & \cdots \\
\mathbf{0}^\top & \mathbf{0}^\top & 1 & -1 & 0 & \cdots \\
\mathbf{0}^\top & \mathbf{0}^\top & 0 & 0 & 1 & \cdots \\
\vdots & \vdots & \vdots & \vdots & \vdots & 
\end{bmatrix},\\
\boldsymbol{\beta}^{(4)} &= \mathbf{0}, \\
\boldsymbol{\gamma}^{(4)} &= \frac{1}{{\sqrt{d_\mathrm{model}}}}\begin{bmatrix}
\sqrt{2\left(\lceil \log_2 k \rceil+1\right)} \mathbf{1} \\
\sqrt{2\left(\lceil \log_2 k \rceil+1\right)} \mathbf{1} \\
\sqrt{2\left(\lceil \log_2 k \rceil+1\right)} \\
\sqrt{2\left(\lceil \log_2 k \rceil+1\right)} \\
1 \\
\vdots 
\end{bmatrix}.
\end{align}

Given a sufficiently large constant $C^{(4)}$, since we obtain 
\begin{equation}
\begin{aligned}
&\operatorname{LN}_\mathrm{RMS} \left(
\begin{bmatrix}
\mathbf{t}_{i_\mathrm{prev}} \\
-\mathbf{t}_{i_\mathrm{prev}} \\
o_{i_\mathrm{prev}} \\
-o_{i_\mathrm{prev}} \\
C^{(4)} \hat{s}_i \\
\vdots 
\end{bmatrix}
\right) \\
&=\begin{cases}
\begin{bmatrix}
\mathbf{0} \\
\mathbf{0} \\
0 \\
0 \\
1 \\
\vdots 
\end{bmatrix} & \text{if } \hat{s}_i = 1 \\
\begin{bmatrix}
\mathbf{t}_{i_\mathrm{prev}} \\
-\mathbf{t}_{i_\mathrm{prev}} \\
o_{i_\mathrm{prev}} \\
-o_{i_\mathrm{prev}} \\
0 \\
\vdots 
\end{bmatrix} & \text{otherwise }
\end{cases},
\end{aligned}
\end{equation}
the output of the RMS layer normalization is given by:
\begin{equation}
\begin{aligned}
&\operatorname{LN}_\mathrm{RMS} \left(W_1^{(4)} \mathbf{h}_i^{(4)}\right)= \begin{bmatrix} 
\mathbf{t}_{i-1} \\
-\mathbf{t}_{i-1} \\
o_{i-1} \\
-o_{i-1} \\
s_{i-1} \\
\vdots
\end{bmatrix}.
\end{aligned}
\end{equation}

Therefore, the output of the feed-forward network layer is given by:
\begin{equation}
\begin{aligned}
& W_2^{(4)}\left[\operatorname{LN}_\mathrm{RMS} \left(W_1^{(4)} \mathbf{h}_i^{(4)}\right)\right]_+ \\
&= W_2^{(4)}\begin{bmatrix}
[\mathbf{t}_{i-1}]_+ \\
[-\mathbf{t}_{i-1}]_+  \\
[o_{i-1}]_+  \\
[-o_{i-1}]_+  \\
s_{i-1} \\
\vdots 
\end{bmatrix} = \begin{bmatrix}
\vdots \\
\mathbf{t}_{i-1}\\
o_{i-1}  \\
s_{i-1} \\
\vdots 
\end{bmatrix},
\end{aligned}
\end{equation}
indicating that $\mathbf{x}_{i-1}$ can be computed correctly.

Finally, since the probability of outputting the same type of open bracket as the first one is $\frac{r}{k}$, the first two characters are different with at least a probability of $1 - \frac{1}{k}$, which completes the proof.
\end{proof}

\section{Proof of Corollary \ref{corollary: transformers without bos generate dyck_k}}\label{app: dyck language generation without bos}
Here, we present a method to construct a Transformer without positional encoding and $\texttt{<bos>}$ that realizes the $\texttt{Dyck}_k$ language generation process $p_{\texttt{Dyck}_k}(\cdot; q, r, \boldsymbol{\pi})$. 

\begin{corollary}[Restated, Transformers without a starting token, $\texttt{Dyck}_k$ subset generation]
There exists a $7$-layer causal Transformer without a starting token that can generate a subset of $\texttt{Dyck}_k$ where the first two characters are different; that is, the Transformer can generate all possible subsequent sequences when there is an input string whose first two characters are different.
\end{corollary}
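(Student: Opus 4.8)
The plan is to combine the pseudo-starting-signal construction of Proposition~\ref{proposition: create pseudo starting signal without bos} with the generation construction of Theorem~\ref{theorem: transformers with bos generate dyck}, using the same ``shift-by-one'' trick employed in the recognition corollary (Corollary~\ref{corollary: transformers without bos recognize dyck_k}). The key observation is that the $3$-layer construction of Theorem~\ref{theorem: transformers with bos generate dyck} relies on the $\texttt{<bos>}$ token only to (i) seed the pseudo positional encoding $(\cos\phi(i),\sin\phi(i))$ via the attention score $a$ on the starting token and (ii) serve as the depth-matched fallback target in the third layer. Once a pseudo starting signal $\hat{s}_i$ is available, both roles can be recovered, so the whole generation pipeline goes through with a constant number of additional layers.

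\textbf{First I would} invoke Proposition~\ref{proposition: create pseudo starting signal without bos} as the first layer to produce $\hat{s}_i$, which under the stated assumption equals $1$ exactly at position $i=1$ and $0$ elsewhere, provided the first two tokens of $w_{1:n}$ differ. This is where the restriction to the subset of $\texttt{Dyck}_k$ with distinct first two characters enters: the pseudo signal is only guaranteed correct on such strings, which is precisely the subset described in the statement. Next I would use $\hat{s}_i$ in place of $s_i$ to construct the positional encoding $(\cos\phi(i),\sin\phi(i))$ and, following the shift-by-one mechanism of Appendix~\ref{app: dyck recognition without bos}, build the vector $\mathbf{x}_{i-1}$ and the depth representations $(\cos\theta(\operatorname{d}_i),\sin\theta(\operatorname{d}_i))$. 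The remaining layers replicate the third layer of Theorem~\ref{theorem: transformers with bos generate dyck}: fetch the valid open bracket at the nearest depth-matched position (falling back to the pseudo-start position when no such bracket exists) and feed the result into the same generator head, whose softmax output realizes $p_{\texttt{Dyck}_k}(\cdot;q,r,\boldsymbol{\pi})$ with arbitrary total-variation precision.

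\textbf{The hard part will be} ensuring that the attention-based fallback in the third layer behaves correctly without a genuine $\texttt{<bos>}$ token. In the $\texttt{<bos>}$ construction, the starting token provides a canonical depth-$0$ anchor that the query always attends to when no depth-matched open bracket is present; with only a pseudo signal $\hat{s}_i$ concentrated at position~$1$, I must verify that the token at position~$1$ can play this anchoring role and that its depth/openness values interact correctly with the attention-score terms $\operatorname{T}^{\mathrm{depth}}$, $\operatorname{T}^{\mathrm{pos}}$, and $\operatorname{T}^{\mathrm{open}}$ from Appendix~\ref{app: dyck recognition with bos, subsec: fourth layer}. Counting layers, the pipeline uses one layer for the pseudo signal, two for $\phi(i-1)$ and $\phi(i)$, one to reconstruct $\mathbf{x}_{i-1}$, two for the depth computations, and one for the fetch-and-generate step, giving the claimed $7$-layer bound. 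The routine verifications—that the generator head still achieves the TV bound $2(k+1)\exp(-C_0^{\mathrm{gen}})$ and that the softmax logits split correctly across the $\operatorname{d}_i=0$ and $\operatorname{d}_i\geq 1$ cases—carry over unchanged from Theorem~\ref{theorem: transformers with bos generate dyck}, so the main effort is confined to checking the boundary behavior at position~$1$.
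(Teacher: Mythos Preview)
Your proposal is correct and follows essentially the same approach as the paper: the paper's proof also outlines a seven-layer pipeline consisting of one layer for the pseudo starting signal (Proposition~\ref{proposition: create pseudo starting signal without bos}), two layers for $\phi(i-1)$ and $\phi(i)$, one layer to reconstruct $\mathbf{x}_{i-1}$, two layers for the depths $\operatorname{d}(w_{0:i-1})$ and $\operatorname{d}(w_{0:i})$, and a final layer to fetch the nearest depth-matched open bracket, all via the same shift-by-one mechanism from Corollary~\ref{corollary: transformers without bos recognize dyck_k}. Your identification of the boundary behavior at position~$1$ as the delicate point is a reasonable observation that the paper's brief outline does not explicitly address.
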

\begin{proof}

We assume that the output probabilities take the following form, which is the same as \ref{app: dyck language generation with bos}: 
\begin{equation}
\begin{bmatrix}
p_{\langle_1} \\
\vdots \\
p_{\langle_k} \\
p_{\rangle_1} \\
\vdots \\
p_{\rangle_k} \\
p_{\texttt{<bos>}} \\
p_{\texttt{<eos>}} \\
\end{bmatrix}.
\end{equation}

We omit the specific implementation. Instead, we outline the construction for clarity as in Appendix \ref{app: dyck recognition without bos}. Similar to the Transformer without $\texttt{<bos>}$ that recognizes the $\texttt{Dyck}_k$ language, the query vector does not need to assign attention to itself; thus, by making the query matrix focus on $\mathbf{x}_{i}^{(\ell)}$ and the key/value matrices focus on $\mathbf{x}_{i-1}^{(\ell)}$, the desired behavior can be realized.

Specifically, the five layers described below generate the $\texttt{Dyck}_k$ language when the first two characters of the input string are different.

\begin{description}
    \item[First layer] creates a pseudo starting signal $\hat{s}_i$ using Proposition \ref{proposition: create pseudo starting signal without bos}.
    \item[Second and third layers] create vectors corresponding $\phi(i-1)$ and $\phi(i)$, respectively.
    \item[Fourth layer] computes the same representation $\mathbf{x}_{i-1}$ as in Appendix \ref{app: vector representation}.
    \item[Fifth and sixth layers] create vectors corresponding $\operatorname{d}(w_{0:i-1})$ and $\operatorname{d}(w_{0:i})$, respectively.
    \item[Seventh layer] enables each closed bracket to fetch the nearest depth-matched open bracket.
\end{description}

\end{proof}

\section{Validity of Treating Softmax Attention as Hardmax Attention}\label{app: validity of treating softmax as hardmax}
In our constructive proofs, we occasionally treat softmax attention as hardmax attention. In this section, we validate these theoretical results; that is, we show that if the vector fetched by hardmax attention is included in the finite set of candidates, the subsequent feed-forward network layer can transform the vector obtained by softmax attention into that obtained by hardmax attention when the assigned attention weight exceeds a certain threshold.
Here, we discuss the fourth attention layer described in Appendix \ref{app: dyck recognition with bos, subsec: fourth layer} as an example. 

\subsection{Threshold of attention strength}

\begin{lemma}\label{lemma: condition identifiable attention threshold}
Assume a vector set $\{\mathbf{y}_i\}_{i=1}^n \subset \{1, 0, -1\}^d$. Let $\tilde{\mathbf{y}}_{i, \mathrm{H}}$ and $\tilde{\mathbf{y}}_{i, \mathrm{S}}$ be the vectors obtained by hardmax attention and softmax attention among $\{\mathbf{y}_i\}_{i=1}^n$, respectively.
Then, regarding softmax attention, if a query assigns the attention greater than $\frac{2}{3}$ on the target token, the vector obtained by hardmax attention $\tilde{\mathbf{y}}_{i, \mathrm{H}}$ can be identified by referencing $\tilde{\mathbf{y}}_{i, \mathrm{S}}$.

\begin{proof}
When a query assigns greater than $\frac{2}{3}$ on the target token, there exists $\rho > \frac{2}{3}$ and ${\mathbf{y}}_{\mathrm{CH}}$ in the convex hull of $\{{\mathbf{y}}_{i}\}$ such that 
\begin{equation}
\tilde{\mathbf{y}}_{i, \mathrm{S}} = \rho \tilde{\mathbf{y}}_{i, \mathrm{H}} + (1-\rho) {\mathbf{y}}_{\mathrm{CH}}.
\end{equation}
Since absolute value of each elements in ${\mathbf{y}}_{\mathrm{CH}}$ is at most $1$, regarding the $l$-th element $(\tilde{\mathbf{y}}_{i, \mathrm{S}})_l$ of $\tilde{\mathbf{y}}_{i, \mathrm{S}}$, 
\begin{align}
&\rho (\tilde{\mathbf{y}}_{i, \mathrm{H}})_l - (1-\rho) \leq (\tilde{\mathbf{y}}_{i, \mathrm{S}})_l, \\
&(\tilde{\mathbf{y}}_{i, \mathrm{S}})_l \leq \rho (\tilde{\mathbf{y}}_{i, \mathrm{H}})_l + (1-\rho)
\end{align}
hold.
Therefore, when 
\begin{align}
&\begin{cases}
\rho \cdot (-1) + (1-\rho) < \rho \cdot 0 - (1-\rho) \\
\rho \cdot 0 + (1-\rho) < \rho \cdot 1 - (1-\rho)
\end{cases}\\
&\Longleftrightarrow \rho > \frac{2}{3}
\end{align}
is satisfied, the original values are identifiable. 
\end{proof}
\end{lemma}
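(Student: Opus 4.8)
The plan is to exploit two structural facts: softmax attention returns a \emph{convex combination} of the value vectors, while hardmax returns the single value vector of the highest-scoring token, and by hypothesis that token carries softmax weight exceeding $\frac{2}{3}$. First I would isolate the target token in the softmax average. Writing $\rho > \frac{2}{3}$ for the attention weight on the hardmax target, the remaining mass $1-\rho$ is spread over the other tokens, so
\begin{equation}
\tilde{\mathbf{y}}_{i,\mathrm{S}} = \rho\,\tilde{\mathbf{y}}_{i,\mathrm{H}} + (1-\rho)\,\mathbf{y}_{\mathrm{CH}},
\end{equation}
where $\mathbf{y}_{\mathrm{CH}}$ is the renormalized combination of the non-target value vectors and therefore lies in the convex hull of $\{\mathbf{y}_j\}$. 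Since every $\mathbf{y}_j \in \{-1,0,1\}^d$, each coordinate of $\mathbf{y}_{\mathrm{CH}}$ lies in $[-1,1]$.

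Next I would pass to a coordinate-wise analysis. For each coordinate $l$, the bound $|(\mathbf{y}_{\mathrm{CH}})_l| \le 1$ turns the decomposition into the two-sided estimate
\begin{equation}
\rho\,(\tilde{\mathbf{y}}_{i,\mathrm{H}})_l - (1-\rho) \;\le\; (\tilde{\mathbf{y}}_{i,\mathrm{S}})_l \;\le\; \rho\,(\tilde{\mathbf{y}}_{i,\mathrm{H}})_l + (1-\rho).
\end{equation}
Because $(\tilde{\mathbf{y}}_{i,\mathrm{H}})_l$ ranges only over $\{-1,0,1\}$, this produces exactly three candidate intervals for the observed value $(\tilde{\mathbf{y}}_{i,\mathrm{S}})_l$, and the entire lemma reduces to showing these three intervals are pairwise disjoint, so that a coordinatewise threshold on $\tilde{\mathbf{y}}_{i,\mathrm{S}}$ recovers $\tilde{\mathbf{y}}_{i,\mathrm{H}}$ unambiguously.

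The only place the constant $\frac{2}{3}$ enters is the disjointness check, which is also where I expect the whole content of the argument to sit: the spread $2(1-\rho)$ injected by the contamination term must be strictly smaller than the spacing $\rho$ between consecutive hardmax values. Concretely I would verify $\rho\cdot 0 + (1-\rho) < \rho\cdot 1 - (1-\rho)$, together with the symmetric inequality for the $0$/$-1$ pair, and observe that both are equivalent to $\rho > \frac{2}{3}$. Under this condition the interval associated with $(\tilde{\mathbf{y}}_{i,\mathrm{H}})_l = 1$ lies above $\frac{1}{3}$, the one for $0$ inside $(-\frac{1}{3},\frac{1}{3})$, and the one for $-1$ below $-\frac{1}{3}$; the hard part is really just this bookkeeping that pins the threshold at $\frac{2}{3}$, after which the conclusion is immediate. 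I would close by remarking that this three-way thresholding is implementable by the subsequent ReLU feed-forward layer, which is the operational reason the lemma is needed.
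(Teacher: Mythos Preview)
Your proposal is correct and follows essentially the same approach as the paper: the convex-combination decomposition, the coordinate-wise two-sided bound $\rho\,(\tilde{\mathbf{y}}_{i,\mathrm{H}})_l \pm (1-\rho)$, and the disjointness check reducing to $\rho > \tfrac{2}{3}$ are exactly what the paper does. Your additional remarks about the explicit thresholds $\pm\tfrac{1}{3}$ and the ReLU implementability are a slight elaboration beyond the paper's version but entirely in keeping with how the lemma is subsequently used.
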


\subsection{Recovering the original value with feed-forward network layer}
Here, we show how to implement the feed-forward network layer that recovers the vectors obtained by hardmax attention and realizes the computation in the fourth layer. From Lemma \ref{lemma: condition identifiable attention threshold}, although it is feasible if attention weight is greater than $\frac{2}{3}$, we set this threshold to $\frac{4}{5}$ as an example.

Intuitively, we implement a function similar to a step function using the ReLU activations to recover vectors that include errors produced by the prior softmax attention. Specifically, since the  element of the $\left[\mathbf{t}_i - \mathbf{\tilde{t}}_i\right]_+$ and $o_i + 1$ take values of $0, 1, 2$, we implement the recovering function $\operatorname{Recov}(y)$ as follows:
\begin{equation}\label{eq: recovering function}
\begin{aligned}
&\operatorname{Recov}(y)\\
&=\left[\frac{y}{\epsilon}-\frac{9}{20\epsilon}\right]_+ +\left[\frac{y}{\epsilon}-\left(1+\frac{9}{20\epsilon}\right)\right]_+ \\
&\quad+\left[\frac{y}{\epsilon}-\frac{19}{15\epsilon}\right]_+-\left[\frac{y}{\epsilon}-\left(1+\frac{19}{15\epsilon}\right)\right]_+
\end{aligned}.
\end{equation}
The behavior of this function is described in Figure \ref{fig:recovering function}.

\begin{figure}[h]  
    \includegraphics[width=\columnwidth]{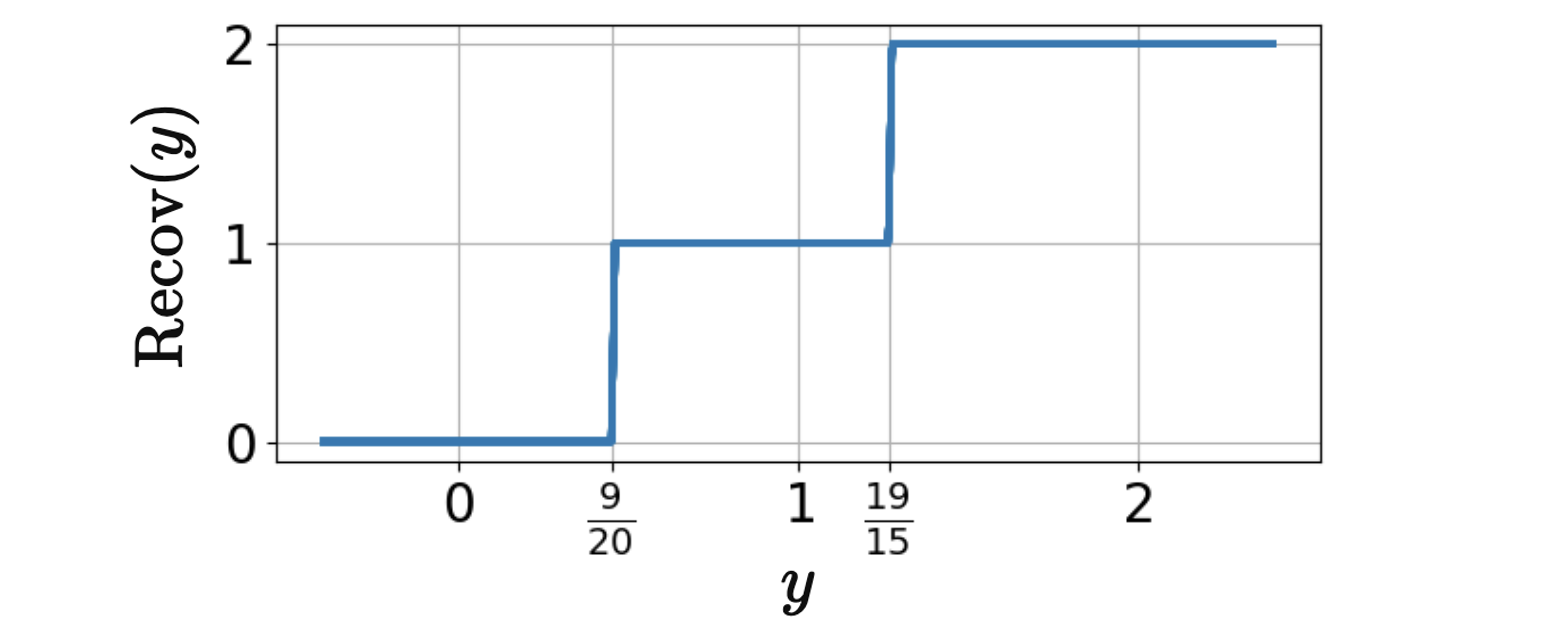}
    \caption{Illustration of the recovering function.}
    \label{fig:recovering function}
\end{figure}

Then, we show the specific implementation that realizes the recovering function.
We omit the unnecessary dimensions of input vector $\mathbf{h}_i^{(4)}$ in this layer as follows:
\begin{equation}
\mathbf{h}_i^{(4)} = \begin{bmatrix}
\mathbf{t}_i \\
o_i \\
\vdots \\
1 \\
\vdots \\
\tilde{\mathbf{t}}_i \\
\vdots 
\end{bmatrix}.
\end{equation}

Set the parameter $W_1^{(4)} \in \mathbb{R}^{d_\mathrm{model} \times d_\mathrm{model}}$ and $\boldsymbol{\beta}^{(4)},\boldsymbol{\gamma}^{(4)} \in \mathbb{R}^{d_\mathrm{model}}$ as follows:
\begin{align}
&W_1^{(4)} = 
\begin{bmatrix}
I & \mathbf{0} & \cdots & \mathbf{0} & \cdots & -I & \cdots \\
I & \mathbf{0} & \cdots & \mathbf{0} & \cdots & -I & \cdots \\
I & \mathbf{0} & \cdots & \mathbf{0} & \cdots & -I & \cdots \\
I & \mathbf{0} & \cdots & \mathbf{0} & \cdots & -I & \cdots \\
-I & \mathbf{0} & \cdots & \mathbf{0} & \cdots & I & \cdots \\
-I & \mathbf{0} & \cdots & \mathbf{0} & \cdots & I & \cdots \\
-I & \mathbf{0} & \cdots & \mathbf{0} & \cdots & I & \cdots \\
-I & \mathbf{0} & \cdots & \mathbf{0} & \cdots & I & \cdots \\
\mathbf{0}^\top & 1 & \cdots & 1 & \cdots & \mathbf{0}^\top & \cdots \\
\mathbf{0}^\top & 1 & \cdots & 1 & \cdots & \mathbf{0}^\top & \cdots \\
\mathbf{0}^\top & 1 & \cdots & 1 & \cdots & \mathbf{0}^\top & \cdots \\
\mathbf{0}^\top & 1 & \cdots & 1 & \cdots & \mathbf{0}^\top & \cdots \\
\mathbf{0}^\top & 0 & \cdots & C & \cdots & \mathbf{0}^\top & \cdots \\
\mathbf{0}^\top & 0 & \cdots & C & \cdots & \mathbf{0}^\top & \cdots \\
\vdots & \vdots & & \vdots & & \vdots & 
\end{bmatrix}, \\
&\boldsymbol{\beta}^{(4)} = \begin{bmatrix}
- \frac{9}{20\epsilon}\mathbf{1} \\
- (1 + \frac{9}{20\epsilon})\mathbf{1} \\
- \frac{19}{15\epsilon} \mathbf{1}\\
- (1 + \frac{19}{15\epsilon})\mathbf{1} \\
- \frac{9}{20\epsilon}\mathbf{1} \\
- (1 + \frac{9}{20\epsilon})\mathbf{1} \\
- \frac{19}{15\epsilon} \mathbf{1}\\
- (1 + \frac{19}{15\epsilon})\mathbf{1} \\
- \frac{9}{20\epsilon} \\
- (1 + \frac{9}{20\epsilon})\\
- \frac{19}{15\epsilon} \\
- (1 + \frac{19}{15\epsilon})\\
- \frac{9}{20\epsilon} \\
- (1 + \frac{9}{20\epsilon})\\
\mathbf{0}
\end{bmatrix}, \\
&\boldsymbol{\gamma}^{(4)} = \frac{1}{\epsilon}\sqrt{\frac{2C^2}{d_{\mathrm{model}}}}\begin{bmatrix}
\mathbf{1} \\
\mathbf{1} \\
\mathbf{1} \\
\mathbf{1} \\
\mathbf{1} \\
\mathbf{1} \\
\mathbf{1} \\
\mathbf{1} \\
1 \\
1 \\
1 \\
1 \\
\frac{1}{C} \\
\frac{1}{C} \\
\mathbf{0}
\end{bmatrix},
\end{align}
where $\epsilon$ is a positive constant that satisfies $\epsilon < \frac{1}{10}$.
Then, we obtain 
\begin{equation}
\begin{aligned}
&\operatorname{LN}_{\mathrm{RMS}} \left(W_1^{(4)} \mathbf{h}_i^{(4)}\right) \\
&= \operatorname{LN}_{\mathrm{RMS}} \left(\begin{bmatrix}
\mathbf{t}_i - \tilde{\mathbf{t}}_i \\
\mathbf{t}_i - \tilde{\mathbf{t}}_i \\
\mathbf{t}_i - \tilde{\mathbf{t}}_i \\
\mathbf{t}_i - \tilde{\mathbf{t}}_i \\
-(\mathbf{t}_i - \tilde{\mathbf{t}}_i )\\
-(\mathbf{t}_i - \tilde{\mathbf{t}}_i )\\
-(\mathbf{t}_i - \tilde{\mathbf{t}}_i )\\
-(\mathbf{t}_i - \tilde{\mathbf{t}}_i )\\
o_i+1\\
o_i+1\\
o_i+1\\
o_i+1\\
C \\
C \\
\vdots 
\end{bmatrix}\right)
\end{aligned}
\end{equation}
Here, 
\begin{align}
& \sup \frac{1}{\operatorname{RMS}\left(W_1^{(4)} \mathbf{h}_i^{(4)}\right)} = \sqrt{\frac{d_{\mathrm{model}}}{2C^2}},\\
& \begin{aligned}
& \inf \frac{1}{\operatorname{RMS}\left(W_1^{(4)} \mathbf{h}_i^{(4)}\right)} \\
& \,\,\,\,= \sqrt{\frac{d_{\mathrm{model}}}{8\cdot 2^2\lceil \log_2k \rceil + 4\cdot 2^2 + 2\cdot C^2}} \\
& \,\,\,\,= \sqrt{\frac{d_{\mathrm{model}}}{{2C^2}}}\left(1 + \frac{16\lceil \log_2k \rceil+ 8}{C^2}\right)^{-\frac{1}{2}} \\
& \,\,\,\,\geq \sqrt{\frac{d_{\mathrm{model}}}{{2C^2}}}\left(1 - \delta_C\right),
\end{aligned}
\end{align}
where $\delta_C = \frac{8\lceil \log_2k \rceil+ 4}{C^2}$.

Moreover, since 
\begin{equation}
\tilde{t}_{i, l} \in \begin{cases}
\left[-\frac{1}{5}, \frac{1}{5}\right] & \text{if } {t}_{i, l} = 0 \\
\left[\frac{3}{5}, 1\right] & \text{if } {t}_{i, l} = 1 \\
\left[-1, -\frac{3}{5}\right] & \text{if } {t}_{i, l} = -1\\
\end{cases},
\end{equation}

\begin{equation}
\begin{aligned}
&t_{i, l} - \tilde{t}_{i, l} \\
&\in \begin{cases}
\left[\frac{8}{5}, 2\right] & \text{if } {t}_{i, l} - \overline{t}_{i, l} = 2 \\
\left[\frac{3}{5}, \frac{6}{5}\right] & \text{if } {t}_{i, l} - \overline{t}_{i, l} = 1 \\
\left[-\frac{2}{5}, \frac{2}{5}\right] & \text{if } {t}_{i, l} - \overline{t}_{i, l} = 0 \\
\left[-\frac{6}{5}, -\frac{3}{5}\right] & \text{if } {t}_{i, l} - \overline{t}_{i, l} = -1 \\
\left[-2, -\frac{8}{5}\right] & \text{if } {t}_{i, l} - \overline{t}_{i, l} = -2 \\
\end{cases}.
\end{aligned}
\end{equation}
Therefore,
\begin{equation}
\begin{aligned}
&\sqrt{\frac{2C^2}{d_{\mathrm{model}}}}\cdot\frac{t_{i, l} - \tilde{t}_{i, l}}{\operatorname{RMS}\left(W_1^{(4)} \mathbf{h}_i^{(4)}\right)}\\
&\in \begin{cases}
\left[\frac{8(1-\delta_C)}{5}, 2\right] & \text{if } {t}_{i, l} - \overline{t}_{i, l} = 2 \\
\left[\frac{3(1-\delta_C)}{5}, \frac{6}{5}\right] & \text{if } {t}_{i, l} - \overline{t}_{i, l} = 1 \\
\left[-\frac{2}{5}, \frac{2}{5}\right] & \text{if } {t}_{i, l} - \overline{t}_{i, l} = 0 \\
\left[-\frac{6}{5}, -\frac{3(1-\delta_C)}{5}\right] & \text{if } {t}_{i, l} - \overline{t}_{i, l} = -1 \\
\left[-2, -\frac{8(1-\delta_C)}{5}\right] & \text{if } {t}_{i, l} - \overline{t}_{i, l} = -2 \\
\end{cases}.
\end{aligned}
\end{equation}
By setting $C$ to satisfy
\begin{equation}
\begin{aligned}
&\begin{cases}
\frac{6}{5} < \frac{8(1-\delta_C)}{5} \\
\frac{2}{5} < \frac{3(1-\delta_C)}{5} 
\end{cases} \\
&\Leftrightarrow \delta_C < \frac{1}{4} \\
&\Leftrightarrow C > 4\sqrt{2\lceil \log_2k \rceil+ 1},
\end{aligned}
\end{equation}
these five intervals become disjoint.
We proceed with our discussion under the assumption $C > 2\sqrt{6}\cdot \sqrt{2\lceil \log_2k \rceil+ 1} \Leftrightarrow \delta_C < \frac{1}{6}$ as an example. In this case, 
\begin{equation}
\begin{aligned}
&\sqrt{\frac{2C^2}{d_{\mathrm{model}}}}\cdot\frac{t_{i, l} - \tilde{t}_{i, l}}{\operatorname{RMS}\left(W_1^{(4)} \mathbf{h}_i^{(4)}\right)} \\
&\quad \in\begin{cases}
\left[\frac{4}{3}, 2\right] & \text{if } {t}_{i, l} - \overline{t}_{i, l} = 2 \\
\left[\frac{1}{2}, \frac{6}{5}\right] & \text{if } {t}_{i, l} - \overline{t}_{i, l} = 1 \\
\left[-\frac{2}{5}, \frac{2}{5}\right] & \text{if } {t}_{i, l} - \overline{t}_{i, l} = 0 \\
\left[-\frac{6}{5}, -\frac{1}{2}\right] & \text{if } {t}_{i, l} - \overline{t}_{i, l} = -1 \\
\left[-2, -\frac{4}{3}\right] & \text{if } {t}_{i, l} - \overline{t}_{i, l} = -2 \\
\end{cases}.
\end{aligned}
\end{equation}
Similarly, 
\begin{align}
& \begin{aligned}
&\sqrt{\frac{2C^2}{d_{\mathrm{model}}}}\cdot\frac{o_i + 1}{\operatorname{RMS}\left(W_1^{(4)} \mathbf{h}_i^{(4)}\right)} \\
&\qquad\begin{cases}
= 0 & \text{if } o_i + 1 = 0 \\
\in \left[\frac{5}{6}, 1\right] & \text{if } o_i + 1 = 1 \\
\in \left[\frac{5}{3}, 2\right] & \text{if } o_i + 1 = 2 \\
\end{cases}, \\
\end{aligned} \\
& \begin{aligned}
&\left(\sqrt{\frac{2C^2}{d_{\mathrm{model}}}}\cdot\frac{1}{C}\right)\cdot\frac{C}{\operatorname{RMS}\left(W_1^{(4)} \mathbf{h}_i^{(4)}\right)} \\
&\qquad \in \left[\frac{5}{6}, 1\right]
\end{aligned}
\end{align}
hold.
Therefore, by implementing the recovering function $\operatorname{Recov}(\cdot)$ defined in Equation \eqref{eq: recovering function}, the vectors obtained by hardmax attention are recovered. Specifically, by setting $W_2^{(4)} \in \mathbb{R}^{d_\mathrm{model} \times d_\mathrm{model}}$ 
\begin{equation}
\begin{aligned}
W_2^{(4)} &= \begin{bmatrix}
\vdots \\
\mathbf{w}_2^{(4)\top} \\
\vdots
\end{bmatrix},
\end{aligned}
\end{equation}
where 
\begin{equation}
\mathbf{w}_2^{(4)} = 2\begin{bmatrix}
-\mathbf{1} \\
\mathbf{1} \\
-\mathbf{1} \\
\mathbf{1} \\
-\mathbf{1} \\
\mathbf{1} \\
-\mathbf{1} \\
\mathbf{1} \\
2(\lceil \log_2 k \rceil + 1) \\
-2(\lceil \log_2 k \rceil + 1) \\
2(\lceil \log_2 k \rceil + 1) \\
-2(\lceil \log_2 k \rceil + 1) \\
1 \\
-1 \\
\mathbf{0}
\end{bmatrix},
\end{equation}
the desired vector is obtained; that is, the feed-forward network layer computes 
\begin{equation}
\begin{bmatrix}
\vdots \\
\operatorname{\tilde{q}} (w_{0:i}) \\
\vdots
\end{bmatrix},
\end{equation}
where 
\begin{equation}
\begin{aligned}
\operatorname{\tilde{q}} (w_{0:i}) =& -2\|\mathbf{t}_i - \overline{\mathbf{t}}_i\|_1 \\
& + 4(\lceil \log_2 k \rceil + 1)(o_i + 1) + 2,
\end{aligned}
\end{equation}
which the same expression as $\operatorname{q}'(w_{0:i})$ in Equation \eqref{eq: definition of q'}; that is,  $\operatorname{\tilde{q}} (w_{0:i})$ satisfies the conditions described in \eqref{eq: conditions for q_i}. This indicates that the hardmax attention is dispensable for our constructive proof.

\section{Rationale behind Architectural Modification}
Although the architecture adopted in \citet{yao-etal-2021-self} uses the conventional layer normalization, we adopt an architecture with the RMS layer normalization. This is not only because recent models such as  LLama \citep{touvron2023llamaopenefficientfoundation} and Llama 2 \citep{touvron2023llama2openfoundation} adopt the RMS layer normalization but also because we try to make our constructive proofs more concise.
In this section, we show that this change does not affect the critical aspects of our proofs; in other words, we give a proof that any transformation achievable with the RMS layer normalization can be achieved with the layer normalization. 

\begin{lemma}
For any feed-forward network with the RMS layer normalization and a hidden size of $d_{\mathrm{model}}$, there exists a feed-forward network with the layer normalization and a hidden size of $2 d_{\mathrm{model}}$ such that their outputs are identical.

\begin{proof}
Consider the feed-forward network layer with the RMS layer normalization parameterized by $W_1, W_2, \boldsymbol{\beta}$ and $\boldsymbol{\gamma}$, the output becomes
\begin{equation}
\begin{aligned}
&W_2[\operatorname{LN}_{\mathrm{RMS}}(W_1 \mathbf{x})]_+ \\
&= \frac{1}{\operatorname{RMS}\left(W_1 \mathbf{x}\right)} W_2 [\boldsymbol{\gamma} \odot (W_1 \mathbf{x}) + \boldsymbol{\beta}]_+.
\end{aligned}
\end{equation}

This output is realized by the feed-forward network layer with the layer normalization parameterized by $W_1^\prime = \begin{bmatrix}
W_1 \\
-W_1
\end{bmatrix}, W_2^\prime = \begin{bmatrix}
W_2 & O \\
\end{bmatrix},  \boldsymbol{\beta}^\prime = \begin{bmatrix} \boldsymbol{\beta} \\ \mathbf{0}\end{bmatrix}$ and $\boldsymbol{\gamma}^\prime = \begin{bmatrix} \boldsymbol{\gamma} \\ \mathbf{1}\end{bmatrix}$.
This is because 
\begin{equation}
\begin{aligned}
& W_2^\prime\left[\operatorname{LN}\left(W_1^\prime \mathbf{x}\right)\right]_+ \\
&= \frac{1}{\operatorname{RMS}\left(W_1^\prime \mathbf{x}\right)}\begin{bmatrix}
W_2 & O \\
\end{bmatrix} \begin{bmatrix}
[\boldsymbol{\gamma} \odot (W_1 \mathbf{x}) + \boldsymbol{\beta}]_+ \\
[ (-W_1 \mathbf{x})]_+
\end{bmatrix}. \\
&=\frac{1}{\operatorname{RMS}\left(W_1^\prime \mathbf{x}\right)} W_2 [\boldsymbol{\gamma} \odot (W_1 \mathbf{x}) + \boldsymbol{\beta}]_+.
\end{aligned}
\end{equation}
Here,
\begin{equation}
\begin{aligned}
&\operatorname{RMS}\left(W_1^\prime \mathbf{x}\right) \\
&= \sqrt{\frac{1}{2d_\mathrm{model}} (\|W_1 \mathbf{x}\|_2^2 + \|-W_1 \mathbf{x}\|_2^2)} \\
&= \sqrt{\frac{1}{d_\mathrm{model}}}\|W_1\mathbf{x}\|_2 \\
&= \operatorname{RMS}\left(W_1 \mathbf{x}\right),
\end{aligned}
\end{equation}
indicating that the two transformations produce the same outputs.
\end{proof}
\end{lemma}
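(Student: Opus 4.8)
The plan is to exploit the single structural difference between the two normalizations: conventional layer normalization subtracts the empirical coordinate-mean before rescaling, whereas RMS layer normalization does not. Writing conventional layer normalization as $\operatorname{LN}(\mathbf{y}) = \boldsymbol{\gamma}\odot(\mathbf{y}-\mu(\mathbf{y})\mathbf{1})/\sigma(\mathbf{y}) + \boldsymbol{\beta}$, where $\mu(\mathbf{y})$ and $\sigma(\mathbf{y})$ denote the empirical coordinate-mean and standard deviation, I first observe that whenever the pre-normalization vector $\mathbf{y}$ has zero coordinate-mean, the subtracted term vanishes and $\sigma(\mathbf{y})$ coincides with $\operatorname{RMS}(\mathbf{y})$. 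Hence the two maps agree identically on mean-zero inputs, and the whole task reduces to arranging that the input to the normalization layer is always mean-zero.

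First I would force this mean-zero condition by doubling the width and using an antisymmetric first projection $W_1' = \begin{bmatrix} W_1 \\ -W_1 \end{bmatrix}$, so that $W_1'\mathbf{x} = \begin{bmatrix} W_1\mathbf{x} \\ -W_1\mathbf{x} \end{bmatrix}$ has coordinates summing to zero for \emph{every} input $\mathbf{x}$, not merely in expectation. A one-line computation then gives $\operatorname{RMS}(W_1'\mathbf{x}) = \operatorname{RMS}(W_1\mathbf{x})$, since the squared norm and the dimension both double.

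Next I would route the original affine parameters into the top block and leave the bottom block untouched by setting $\boldsymbol{\gamma}' = \begin{bmatrix} \boldsymbol{\gamma} \\ \mathbf{1} \end{bmatrix}$ and $\boldsymbol{\beta}' = \begin{bmatrix} \boldsymbol{\beta} \\ \mathbf{0} \end{bmatrix}$. Applied to the mean-zero vector above, conventional layer normalization then reproduces exactly $\operatorname{LN}_{\mathrm{RMS}}(W_1\mathbf{x})$ in the top $d_{\mathrm{model}}$ coordinates and some irrelevant vector in the bottom $d_{\mathrm{model}}$ coordinates. After the ReLU I would discard the bottom block with the output projection $W_2' = \begin{bmatrix} W_2 & O \end{bmatrix}$, so the overall feed-forward output equals $W_2[\operatorname{LN}_{\mathrm{RMS}}(W_1\mathbf{x})]_+$, matching the RMS-normalized network exactly. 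The residual connection is identical in both architectures and plays no role.

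The computations are all elementary, so there is no genuinely hard step; the only thing to be careful about is the bookkeeping around the mean-zero reduction — namely verifying that the antisymmetric padding annihilates the mean for every input and that the RMS identity holds exactly rather than approximately, since an approximate match would break the claimed equality of outputs. Once this reduction is in place, the equality of the two feed-forward maps follows by direct substitution.
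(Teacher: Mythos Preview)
Your proposal is correct and follows essentially the same construction as the paper: the antisymmetric doubling $W_1'=\begin{bmatrix}W_1\\-W_1\end{bmatrix}$ together with $\boldsymbol{\gamma}'=\begin{bmatrix}\boldsymbol{\gamma}\\\mathbf{1}\end{bmatrix}$, $\boldsymbol{\beta}'=\begin{bmatrix}\boldsymbol{\beta}\\\mathbf{0}\end{bmatrix}$, and $W_2'=\begin{bmatrix}W_2&O\end{bmatrix}$ is exactly the paper's choice, and your verification via the mean-zero reduction (forcing $\mu=0$ so that $\sigma=\operatorname{RMS}$ and hence $\operatorname{LN}=\operatorname{LN}_{\mathrm{RMS}}$) is in fact a cleaner articulation of the same computation the paper carries out directly.
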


\section{Extension to Architecture with The QK Normalization}
The QK normalization \citep{pmlr-v202-dehghani23a} applies the layer normalization \citep{ba2016layernormalization} individually to both the query and key vectors to stabilize training. Specifically, concerning calculating attention scores, the QK normalization uses 
\begin{equation}
\langle \operatorname{LN}(W_Q \mathbf{x}_{i_q}), \operatorname{LN}(W_K \mathbf{x}_{i_k})\rangle
\end{equation}
instead of 
\begin{equation}
\langle W_Q \mathbf{x}_{i_q}, W_K \mathbf{x}_{i_k}\rangle, 
\end{equation}
where $\operatorname{LN}(\cdot)$ is the layer normalization \citep{ba2016layernormalization} parameterized by $\boldsymbol{\beta}, \boldsymbol{\gamma} \in \mathbb{R}^{d_\mathrm{model}}$. Specifically,
\begin{equation}
\operatorname{LN}(\mathbf{y}) = \boldsymbol{\gamma} \odot \frac{\mathbf{y} - \mu(\mathbf{y}) \mathbf{1}}{\sigma(\mathbf{y})}+\boldsymbol{\beta},
\end{equation}
where 
\begin{align}
&\mu(\mathbf{y}) = \frac{1}{d_\mathrm{model}}\sum_{d=1}^{d_\mathrm{model}} y_d, \\
&\sigma(\mathbf{y}) = \sqrt{\frac{1}{d_\mathrm{model}}\sum_{d=1}^{d_\mathrm{model}} (y_d - \mu(\mathbf{y}))^2}. 
\end{align}

In this section, we show in two steps that the QK normalization can be incorporated into our constructive proof:
\begin{enumerate}
    \item We give a proof that the layer normalization and the RMS layer normalization are equivalent when they are incorporated into the QK normalization regarding their expressive power. 
    \item We show that our theoretical results also hold even when the QK normalization with the RMS layer normalization is incorporated into the architecture.
\end{enumerate}

For clarity, denote the QK normalization with the layer normalization by $\texttt{QK-LN}$ and the QK normalization with the RMS layer normalization by $\texttt{QK-RMSLN}$.

\subsection{Equivalence of the layer normalization and the RMS layer normalization under the QK normalization}
We give a proof that for any attention layer with $\texttt{QK-LN}$, there exists an attention layer with $\texttt{QK-RMSLN}$ that produces the same output (Lemma \ref{lemma: convert QK-LN to QK-RMSLN}). Similarly, we also show that the converse holds: for any attention layer with $\texttt{QK-RMSLN}$, there exists an attention layer with $\texttt{QK-LN}$ that produces the same output (Lemma \ref{lemma: convert QK-RMSLN to QK-LN}). 
Note that it is sufficient to show the existence of a network that outputs the same attention scores.

\begin{lemma}\label{lemma: convert QK-LN to QK-RMSLN}
For any attention layer with $\texttt{QK-LN}$, there exists an attention layer with $\texttt{QK-RMSLN}$ that produces the same output for any given input.
\begin{proof}
Assume the attention layer with $\texttt{QK-LN}$ parameterized by $\boldsymbol{\beta}_Q, \boldsymbol{\gamma}_Q$, $\boldsymbol{\beta}_K, \boldsymbol{\gamma}_K$, 
\begin{align}
&W_Q = \begin{bmatrix}
\mathbf{w}_{Q,1}^\top \\
\vdots \\
\mathbf{w}_{Q,d_\mathrm{model}}^\top 
\end{bmatrix}, W_K = \begin{bmatrix}
\mathbf{w}_{K,1}^\top \\
\vdots \\
\mathbf{w}_{K,d_\mathrm{model}}^\top 
\end{bmatrix}. 
\end{align}
Then, the attention layer with $\texttt{QK-RMSLN}$ parameterized by $\boldsymbol{\beta}_Q^\prime = \boldsymbol{\beta}_Q, \boldsymbol{\gamma}_Q^\prime = \boldsymbol{\gamma}_Q, \boldsymbol{\beta}_K^\prime = \boldsymbol{\beta}_K, \boldsymbol{\gamma}_K^\prime = \boldsymbol{\gamma}_K$, 
\begin{align}
& W_Q^\prime = \begin{bmatrix}
\mathbf{w}_{Q,1}^\top - \left(\frac{1}{d_\mathrm{model}}\sum_{d=1}^{d_\mathrm{model}} \mathbf{w}_{Q,d}^\top\right)\\
\vdots \\
\mathbf{w}_{Q,d_\mathrm{model}}^\top - \left(\frac{1}{d_\mathrm{model}}\sum_{d=1}^{d_\mathrm{model}} \mathbf{w}_{Q,d}^\top\right)
\end{bmatrix}, \\ 
& W_K^\prime = \begin{bmatrix}
\mathbf{w}_{K,1}^\top - \left(\frac{1}{d_\mathrm{model}}\sum_{d=1}^{d_\mathrm{model}} \mathbf{w}_{K,d}^\top\right)\\
\vdots \\
\mathbf{w}_{K,d_\mathrm{model}}^\top - \left(\frac{1}{d_\mathrm{model}}\sum_{d=1}^{d_\mathrm{model}} \mathbf{w}_{K,d}^\top\right)
\end{bmatrix} 
\end{align}
produces the same attention scores. The reasons are detailed below:
\begin{equation}
\begin{aligned}
&W_Q^\prime \mathbf{x}_{i_q} \\
&= \begin{bmatrix}
\mathbf{w}_{Q,1}^\top - \left(\frac{1}{d_\mathrm{model}}\sum_{d=1}^{d_\mathrm{model}} \mathbf{w}_{Q,d}^\top\right)\\
\vdots \\
\mathbf{w}_{Q,d_\mathrm{model}}^\top - \left(\frac{1}{d_\mathrm{model}}\sum_{d=1}^{d_\mathrm{model}} \mathbf{w}_{Q,d}^\top\right)
\end{bmatrix}\mathbf{x}_{i_q} \\
&= \begin{bmatrix}
\mathbf{w}_{Q,1}^\top \mathbf{x}_{i_q}\\
\vdots \\
\mathbf{w}_{Q,d_\mathrm{model}}^\top \mathbf{x}_{i_q}
\end{bmatrix} - 
\begin{bmatrix}
\frac{1}{d_\mathrm{model}}\sum_{d=1}^{d_\mathrm{model}} \mathbf{w}_{Q,d}^\top \mathbf{x}_{i_q} \\
\vdots \\
\frac{1}{d_\mathrm{model}}\sum_{d=1}^{d_\mathrm{model}} \mathbf{w}_{Q,d}^\top \mathbf{x}_{i_q} 
\end{bmatrix}\\
&= \begin{bmatrix}
\mathbf{w}_{Q,1}^\top \mathbf{x}_{i_q}\\
\vdots \\
\mathbf{w}_{Q,d_\mathrm{model}}^\top \mathbf{x}_{i_q}
\end{bmatrix} -  \mathbf{1} \mu\left(\begin{bmatrix}
\mathbf{w}_{Q,1}^\top \mathbf{x}_{i_q}\\
\vdots \\
\mathbf{w}_{Q,d_\mathrm{model}}^\top \mathbf{x}_{i_q}
\end{bmatrix}\right). 
\end{aligned}
\end{equation}
Here, since $\operatorname{RMS}\left(\mathbf{y} - \mathbf{1}\mu(\mathbf{y})\right) = \sigma(\mathbf{y})$ holds,
\begin{equation}
\begin{aligned}
& \operatorname{LN}_{\mathrm{RMS}}(\mathbf{y} - \mathbf{1}\mu(\mathbf{y})) = \operatorname{LN}(\mathbf{y})
\end{aligned}
\end{equation}
holds for any $\mathbf{y} \in \mathbb{R}^{d_\mathrm{model}}$. Therefore, 
\begin{equation}
\begin{aligned}
\operatorname{LN}_{\mathrm{RMS}}(W_Q^\prime \mathbf{x}_{i_q}) = \operatorname{LN}(W_Q \mathbf{x}_{i_q})
\end{aligned}
\end{equation}
holds. Similarly, 
\begin{equation}
\operatorname{LN}_{\mathrm{RMS}}(W_K^\prime \mathbf{x}_{i_k}) = \operatorname{LN}(W_K \mathbf{x}_{i_k})
\end{equation}
also holds, indicating that 
\begin{equation}
\begin{aligned}
& \langle\operatorname{LN}_{\mathrm{RMS}}(W_Q^\prime \mathbf{x}_{i_q}), \operatorname{LN}_{\mathrm{RMS}}(W_K^\prime \mathbf{x}_{i_k})\rangle \\
& = \langle\operatorname{LN}(W_Q \mathbf{x}_{i_k}), \operatorname{LN}(W_K \mathbf{x}_{i_q})\rangle.
\end{aligned}
\end{equation}
\end{proof}
\end{lemma}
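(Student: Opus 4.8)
The plan is to exploit the single structural difference between $\texttt{QK-LN}$ and $\texttt{QK-RMSLN}$: the former subtracts the mean before dividing by the standard deviation, while the latter divides by the root-mean-square without centering. The key observation is that mean subtraction is a \emph{linear} operation, so it can be folded into the query and key projection matrices, and that once a vector is centered its root-mean-square coincides exactly with its standard deviation. Hence $\texttt{QK-RMSLN}$ applied to pre-centered projections reproduces $\texttt{QK-LN}$ exactly, with no approximation.

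First I would record the governing identity. Writing $P = I - \tfrac{1}{d_\mathrm{model}}\mathbf{1}\mathbf{1}^\top$ for the centering matrix, note that $P\mathbf{y} = \mathbf{y} - \mathbf{1}\mu(\mathbf{y})$ has zero mean, so $\operatorname{RMS}(P\mathbf{y})^2 = \tfrac{1}{d_\mathrm{model}}\sum_d (y_d - \mu(\mathbf{y}))^2 = \sigma(\mathbf{y})^2$. Consequently, for every $\mathbf{y} \in \mathbb{R}^{d_\mathrm{model}}$, $\operatorname{LN}_{\mathrm{RMS}}(P\mathbf{y}) = \boldsymbol{\gamma} \odot \tfrac{\mathbf{y} - \mathbf{1}\mu(\mathbf{y})}{\sigma(\mathbf{y})} + \boldsymbol{\beta} = \operatorname{LN}(\mathbf{y})$, since $\operatorname{LN}$ itself only ever sees the centered vector.

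Next I would absorb $P$ into the projections. Given a $\texttt{QK-LN}$ layer with query/key matrices $W_Q, W_K$ and parameters $\boldsymbol{\beta}_Q, \boldsymbol{\gamma}_Q, \boldsymbol{\beta}_K, \boldsymbol{\gamma}_K$, I construct a $\texttt{QK-RMSLN}$ layer with identical normalization parameters and identical value matrix but with modified projections $W_Q' = P W_Q$ and $W_K' = P W_K$; row-wise this is precisely the mean-subtraction $\mathbf{w}_{Q,d}^\top \mapsto \mathbf{w}_{Q,d}^\top - \tfrac{1}{d_\mathrm{model}}\sum_{d'} \mathbf{w}_{Q,d'}^\top$ stated in the lemma. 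Then $W_Q'\mathbf{x}_{i_q} = P(W_Q\mathbf{x}_{i_q})$, so applying the identity with $\mathbf{y} = W_Q\mathbf{x}_{i_q}$ yields $\operatorname{LN}_{\mathrm{RMS}}(W_Q'\mathbf{x}_{i_q}) = \operatorname{LN}(W_Q\mathbf{x}_{i_q})$, and symmetrically for the keys. Every attention score $\langle \operatorname{LN}_{\mathrm{RMS}}(W_Q'\mathbf{x}_{i_q}), \operatorname{LN}_{\mathrm{RMS}}(W_K'\mathbf{x}_{i_k})\rangle$ therefore matches its $\texttt{QK-LN}$ counterpart, and since the softmax weights and the value aggregation downstream are unchanged, the two layers produce identical output on every input.

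There is essentially no obstacle here beyond verifying the root-mean-square of a centered vector equals its standard deviation; the construction is exact and self-contained. The only point to state carefully is that left-multiplication by $P$ commutes with the subsequent $\operatorname{LN}_{\mathrm{RMS}}$ in the required way, which is immediate once the identity $\operatorname{RMS}(P\mathbf{y}) = \sigma(\mathbf{y})$ is established.
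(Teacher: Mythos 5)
Your proposal is correct and is essentially the paper's own argument: the paper likewise replaces $W_Q, W_K$ by their row-mean-subtracted versions (your $PW_Q, PW_K$ written row-wise), keeps $\boldsymbol{\beta}, \boldsymbol{\gamma}$ unchanged, and invokes the identity $\operatorname{RMS}(\mathbf{y} - \mathbf{1}\mu(\mathbf{y})) = \sigma(\mathbf{y})$ to conclude $\operatorname{LN}_{\mathrm{RMS}}(W_Q'\mathbf{x}) = \operatorname{LN}(W_Q\mathbf{x})$ and hence equal attention scores. Your phrasing via the centering matrix $P$ and the explicit remark that the value path is untouched are minor presentational improvements, not a different route.
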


\begin{lemma}\label{lemma: convert QK-RMSLN to QK-LN}
For any attention layer with $\texttt{QK-RMSLN}$, there exists an attention layer with $\texttt{QK-LN}$ that produces the same output for any given input.
\begin{proof}
Assume an attention layer with $\texttt{QK-RMSLN}$ parameterized by $\boldsymbol{\beta}_Q, \boldsymbol{\gamma}_Q$, $\boldsymbol{\beta}_K, \boldsymbol{\gamma}_K$, 
\begin{align}
&W_Q = \begin{bmatrix}
\mathbf{w}_{Q,1}^\top \\
\vdots \\
\mathbf{w}_{Q,d_\mathrm{model}}^\top 
\end{bmatrix}, &W_K = \begin{bmatrix}
\mathbf{w}_{K,1}^\top \\
\vdots \\
\mathbf{w}_{K,d_\mathrm{model}}^\top 
\end{bmatrix}. 
\end{align}
Then, $\texttt{QK-LN}$ parameterized by 
\begin{align}
& \boldsymbol{\beta}_Q^{\prime\prime} = \begin{bmatrix}
\boldsymbol{\beta}_Q \\
-\boldsymbol{\beta}_Q \\
\mathbf{0}
\end{bmatrix}, \boldsymbol{\gamma}_Q^{\prime\prime} = \sqrt{\frac{2}{3}}\begin{bmatrix}
\boldsymbol{\gamma}_Q \\
\boldsymbol{\gamma}_Q \\
\mathbf{1}
\end{bmatrix},\\
& \boldsymbol{\beta}_K^{\prime\prime} = \begin{bmatrix}
\boldsymbol{\beta}_K \\
\mathbf{0} \\
-\boldsymbol{\beta}_K
\end{bmatrix}, \boldsymbol{\gamma}_K^{\prime\prime} = \sqrt{\frac{2}{3}}\begin{bmatrix}
\boldsymbol{\gamma}_K \\
\mathbf{1} \\
\boldsymbol{\gamma}_K 
\end{bmatrix},\\
& W_Q^{\prime\prime} = \begin{bmatrix}
\mathbf{w}_{Q,1}^\top \\
\vdots \\
\mathbf{w}_{Q,d_\mathrm{model}}^\top \\
-\mathbf{w}_{Q,1}^\top \\
\vdots \\
-\mathbf{w}_{Q,d_\mathrm{model}}^\top \\
\mathbf{0}^\top \\
\vdots \\
\mathbf{0}^\top
\end{bmatrix}, W_K^{\prime\prime} = \begin{bmatrix}
\mathbf{w}_{K,1}^\top \\
\vdots \\
\mathbf{w}_{K,d_\mathrm{model}}^\top \\
\mathbf{0}^\top \\
\vdots \\
\mathbf{0}^\top \\
-\mathbf{w}_{K,1}^\top \\
\vdots \\
-\mathbf{w}_{K,d_\mathrm{model}}^\top
\end{bmatrix}
\end{align}
produces the same attention scores. The reasons are detailed below:
\begin{equation}
\begin{aligned}
& W_Q^{\prime\prime} \mathbf{x}_{i_q} = \begin{bmatrix}
\mathbf{w}_{Q,1}^\top\mathbf{x}_{i_q} \\
\vdots \\
\mathbf{w}_{Q,d_\mathrm{model}}^\top \mathbf{x}_{i_q}\\
-\mathbf{w}_{Q,1}^\top \mathbf{x}_{i_q} \\
\vdots \\
-\mathbf{w}_{Q,d_\mathrm{model}}^\top \mathbf{x}_{i_q}\\
0 \\
\vdots \\
0 
\end{bmatrix}.
\end{aligned}
\end{equation}
Since $\mu(W_Q^{\prime\prime} \mathbf{x}_{i_q}) = 0$, the results of applying the layer normalization and the RMS layer normalization to this vector are identical; that is, 
\begin{align}
&\begin{aligned}
& \operatorname{LN}(W_Q^{\prime\prime} \mathbf{x}_{i_q}) \\
& =\operatorname{LN}_\mathrm{RMS}(W_Q^{\prime\prime} \mathbf{x}_{i_q}) \\
& = \boldsymbol{\gamma}_Q^{\prime\prime} \odot \frac{1}{\sqrt{\frac{2\|W_Q \mathbf{x}_{i_q}\|_2^2}{3d_{\mathrm{model}}}}}\begin{bmatrix}
\mathbf{w}_{Q,1}^\top\mathbf{x}_{i_q} \\
\vdots \\
\mathbf{w}_{Q,d_\mathrm{model}}^\top \mathbf{x}_{i_q}\\
-\mathbf{w}_{Q,1}^\top \mathbf{x}_{i_q} \\
\vdots \\
-\mathbf{w}_{Q,d_\mathrm{model}}^\top \mathbf{x}_{i_q}\\
0 \\
\vdots \\
0 
\end{bmatrix} + \boldsymbol{\beta}_Q^{\prime\prime} \\
& = \begin{bmatrix}
\boldsymbol{\gamma}_Q \odot \frac{1}{\sqrt{\frac{\|W_Q \mathbf{x}_{i_q}\|_2^2}{d_{\mathrm{model}}}}}
\begin{bmatrix}
\mathbf{w}_{Q,1}^\top\mathbf{x}_{i_q} \\
\vdots \\
\mathbf{w}_{Q,d_\mathrm{model}}^\top \mathbf{x}_{i_q}
\end{bmatrix} + \boldsymbol{\beta}_Q \\
- \boldsymbol{\gamma}_Q \odot \frac{1}{\sqrt{\frac{\|W_Q \mathbf{x}_{i_q}\|_2^2}{d_{\mathrm{model}}}}}
\begin{bmatrix}
\mathbf{w}_{Q,1}^\top\mathbf{x}_{i_q} \\
\vdots \\
\mathbf{w}_{Q,d_\mathrm{model}}^\top \mathbf{x}_{i_q}
\end{bmatrix} - \boldsymbol{\beta}_Q \\
\begin{bmatrix}
0 \\
\vdots \\
0 
\end{bmatrix}
\end{bmatrix}\\
& = \begin{bmatrix}
\operatorname{LN}_{\mathrm{RMS}}(W_Q \mathbf{x}_{i_q}) \\
-\operatorname{LN}_{\mathrm{RMS}}(W_Q \mathbf{x}_{i_q}) \\
\mathbf{0}
\end{bmatrix}.
\end{aligned}
\end{align}
Similarly, 
\begin{align}
&\begin{aligned}
& \operatorname{LN}(W_K^{\prime\prime} \mathbf{x}_{i_k}) = \begin{bmatrix}
\operatorname{LN}_{\mathrm{RMS}}(W_K \mathbf{x}_{i_k}) \\
\mathbf{0} \\
-\operatorname{LN}_{\mathrm{RMS}}(W_K \mathbf{x}_{i_k}) 
\end{bmatrix},
\end{aligned}
\end{align}
indicating
\begin{equation}
\begin{aligned}
& \langle \operatorname{LN}(W_Q^{\prime\prime} \mathbf{x}_{i_q}), \operatorname{LN}(W_K^{\prime\prime} \mathbf{x}_{i_k})\rangle \\
& = \langle \operatorname{LN}_{\mathrm{RMS}}(W_Q \mathbf{x}_{i_q}), \operatorname{LN}_{\mathrm{RMS}}(W_K \mathbf{x}_{i_k})\rangle.
\end{aligned}
\end{equation}
\end{proof}
\end{lemma}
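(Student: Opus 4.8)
The plan is to exploit the single structural difference between $\operatorname{LN}$ and $\operatorname{LN}_{\mathrm{RMS}}$: standard layer normalization subtracts the mean before rescaling, whereas the RMS version does not. The key observation is that the two coincide on any mean-zero vector, since $\mu(\mathbf{y}) = 0$ forces $\operatorname{RMS}(\mathbf{y}) = \sigma(\mathbf{y})$ and hence $\operatorname{LN}(\mathbf{y}) = \operatorname{LN}_{\mathrm{RMS}}(\mathbf{y})$ for such $\mathbf{y}$ (with matching affine parameters). So to simulate $\texttt{QK-RMSLN}$ by $\texttt{QK-LN}$, I would first arrange for the pre-normalization query and key vectors to have zero mean, after which the standard layer normalization behaves exactly like the RMS one, and only then worry about recovering the original inner product.

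First I would embed the query transformation into a $3d_\mathrm{model}$-dimensional space by stacking $W_Q$ on top of $-W_Q$ and a zero block, i.e.\ take $W_Q^{\prime\prime} = [\,W_Q^\top,\ -W_Q^\top,\ O\,]^\top$. Because the first two blocks are antisymmetric and the third vanishes, $\mu(W_Q^{\prime\prime}\mathbf{x}_{i_q}) = 0$ for every input, so $\operatorname{LN}$ and $\operatorname{LN}_{\mathrm{RMS}}$ agree on this stacked vector. I would then choose the affine parameters $\boldsymbol{\gamma}_Q^{\prime\prime}, \boldsymbol{\beta}_Q^{\prime\prime}$ so that the normalized vector becomes $[\,\operatorname{LN}_{\mathrm{RMS}}(W_Q\mathbf{x}_{i_q});\ -\operatorname{LN}_{\mathrm{RMS}}(W_Q\mathbf{x}_{i_q});\ \mathbf{0}\,]$. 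The key side is handled symmetrically, except that the zero block is placed in a \emph{different} slot: $W_K^{\prime\prime} = [\,W_K^\top,\ O,\ -W_K^\top\,]^\top$, yielding $[\,\operatorname{LN}_{\mathrm{RMS}}(W_K\mathbf{x}_{i_k});\ \mathbf{0};\ -\operatorname{LN}_{\mathrm{RMS}}(W_K\mathbf{x}_{i_k})\,]$.

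The final step is to compute the attention score and confirm it is unchanged. Writing $\mathbf{a} = \operatorname{LN}_{\mathrm{RMS}}(W_Q\mathbf{x}_{i_q})$ and $\mathbf{b} = \operatorname{LN}_{\mathrm{RMS}}(W_K\mathbf{x}_{i_k})$, the dot product of $[\mathbf{a};\,-\mathbf{a};\,\mathbf{0}]$ and $[\mathbf{b};\,\mathbf{0};\,-\mathbf{b}]$ equals $\mathbf{a}^\top\mathbf{b} + (-\mathbf{a})^\top\mathbf{0} + \mathbf{0}^\top(-\mathbf{b}) = \mathbf{a}^\top\mathbf{b}$, which is precisely the original $\texttt{QK-RMSLN}$ score $\langle \operatorname{LN}_{\mathrm{RMS}}(W_Q\mathbf{x}_{i_q}),\ \operatorname{LN}_{\mathrm{RMS}}(W_K\mathbf{x}_{i_k})\rangle$. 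The complementary placement of the zero blocks between query and key is exactly what kills the cross terms, and this is the one design choice on which the whole argument hinges. Since the lemma only requires reproducing the attention weights (not the value path), this suffices.

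I expect the main obstacle to be bookkeeping rather than anything conceptual, namely pinning down the normalization constant. Because $\operatorname{RMS}$ averages over a tripled dimension while the squared norm of the stacked vector only doubles, one has $\operatorname{RMS}(W_Q^{\prime\prime}\mathbf{x}) = \sqrt{2/(3d_\mathrm{model})}\,\|W_Q\mathbf{x}\|_2 = \sqrt{2/3}\,\operatorname{RMS}(W_Q\mathbf{x})$, so the per-block scaling must be corrected by the factor $\sqrt{2/3}$ baked into $\boldsymbol{\gamma}_Q^{\prime\prime}$; getting this prefactor and the sign-flipped $\boldsymbol{\beta}_Q^{\prime\prime}$ right is what guarantees the block output is genuinely $\operatorname{LN}_{\mathrm{RMS}}(W_Q\mathbf{x})$ and not merely proportional to it.
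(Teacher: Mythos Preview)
Your proposal is correct and follows essentially the same construction as the paper: both stack the query and key maps into a $3d_{\mathrm{model}}$-dimensional space as $[W_Q;-W_Q;O]$ and $[W_K;O;-W_K]$ to force zero mean (so $\operatorname{LN}=\operatorname{LN}_{\mathrm{RMS}}$), fix the $\sqrt{2/3}$ scale and sign-flipped bias in the affine parameters, and then use the complementary zero-block placement to kill cross terms in the inner product. Your identification of the bookkeeping (the $\sqrt{2/3}$ prefactor and the block structure of $\boldsymbol{\beta}^{\prime\prime},\boldsymbol{\gamma}^{\prime\prime}$) matches the paper exactly.
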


\subsection{Incorporating the QK normalization with the RMS layer normalization to our constructive proof}
We use the attention layers for two purposes in our constructive proofs: (i) used to create positional vectors $(\cos \phi(i), \sin \phi(i))$ and depth vectors $(\cos \theta(\operatorname{d}), \sin \theta(\operatorname{d}))$ and (ii) used as an approximation of hardmax attention to focus on a single token. In the following sections, we show how to incorporate QK normalization into our constructive proofs.

\subsubsection*{(i) When used to create positional and depth vectors}
When the attention layers are used to create positional vectors or depth vectors, an attention score of $a$ is assigned to the BOS token and $0$ to other tokens. We then show that this operation can be implemented also in the architecture with the QK normalization.

We omit the unnecessary dimensions of input vector $\mathbf{x}_i^{(\ell)}$ in this layer as follows:
\begin{equation}
\mathbf{x}_i^{(\ell)} = \begin{bmatrix}
\vdots \\
s_i \\
1 \\
\vdots 
\end{bmatrix}.
\end{equation}
Then, the attention layer with $\texttt{QK-RMSLN}$ parameterized by 
\begin{align}
& \boldsymbol{\beta}_Q^{(\ell)} = \mathbf{0},  \boldsymbol{\gamma}_Q^{(\ell)} = \sqrt{\frac{1}{d_\mathrm{model}}}\mathbf{1}, \\
& \boldsymbol{\beta}_K^{(\ell)} = \mathbf{0}, \boldsymbol{\gamma}_K^{(\ell)} = a \sqrt{\frac{1}{d_\mathrm{model}}}\mathbf{1}, \\
& W_Q^{(\ell)} = 
\begin{bmatrix}
\cdots & 0 & 1 & \cdots \\
 & \vdots & \vdots & 
\end{bmatrix}, \\
& W_K^{(\ell)} = 
\begin{bmatrix}
\cdots & 1 & 0 & \cdots \\
 & \vdots & \vdots & 
\end{bmatrix}
\end{align}
produces the desired attention scores. This is because
\begin{align}
& \begin{aligned}
&\operatorname{LN}_{\mathrm{RMS}}\left(W_Q^{(\ell)} \mathbf{x}_{i_q}^{(\ell)}\right) \\
&\quad = \operatorname{LN}_{\mathrm{RMS}}\left(\begin{bmatrix}
1 \\
\mathbf{0}
\end{bmatrix}\right) \\
&\quad = \sqrt{\frac{1}{d_\mathrm{model}}}\mathbf{1}\odot \begin{bmatrix}
\sqrt{d_\mathrm{model}} \\
\mathbf{0}
\end{bmatrix} + \mathbf{0}\\
&\quad = \begin{bmatrix}
1 \\
\mathbf{0}
\end{bmatrix},
\end{aligned}\\
& \begin{aligned}
& \operatorname{LN}_{\mathrm{RMS}}\left(W_K^{(\ell)} \mathbf{x}_{i_k}^{(\ell)}\right) \\
&\quad = \operatorname{LN}_{\mathrm{RMS}}\left(\begin{bmatrix}
s_{i_k} \\
\mathbf{0}
\end{bmatrix}\right) \\
&\quad = a \sqrt{\frac{1}{d_\mathrm{model}}} \mathbf{1} \odot \begin{bmatrix}
\sqrt{d_\mathrm{model}} \cdot s_{i_k} \\
\mathbf{0}
\end{bmatrix} + \mathbf{0} \\
&\quad = \begin{bmatrix}
s_{i_k} \cdot a \\
\mathbf{0}
\end{bmatrix},
\end{aligned}
\end{align}
indicating 
\begin{equation}
\begin{aligned}
&\left\langle \operatorname{LN}_{\mathrm{RMS}}\left(W_K^{(\ell)} \mathbf{x}_{i_k}^{(\ell)}\right),\operatorname{LN}_{\mathrm{RMS}}\left(W_Q^{(\ell)} \mathbf{x}_{i_q}^{(\ell)}\right) \right\rangle \\
&\quad = s_{i_k}\cdot a \\
&\quad =\begin{cases}
a & \text{if } w_i = \texttt{<bos>} \\
0 & \text{otherwise}
\end{cases}.
\end{aligned}
\end{equation}

\subsubsection*{(ii) When used as an approximation of hardmax attention}
Here, we show that we can fix the $2$-norm of the query/key vectors to prevent changes in the attention scores. In our proofs, we only use the following values: $o_i, s_i, 1, \cos(\cdot), \sin(\cdot), \operatorname{q}_i$. Except for $\operatorname{q}_i$, by adding the complementary values described below to the new dimensions of the query/key vector, we can fix the $2$-norm. $o_i$ and $s_i$ are the complementary values, so are $\cos(\cdot)$ and $\sin(\cdot)$. This is because $o_i^2 + s_i^2 = 1$ and $\cos^2(\cdot)+\sin^2(\cdot) = 1$ hold. For example, if we set the attention parameters $W_Q, W_K$ to satisfy 
\begin{align}
& W_Q \mathbf{x}_{i_q} = \begin{bmatrix}
o_{i_q} \\
\cos \phi(i_q)  \\
\mathbf{0} 
\end{bmatrix}, W_K \mathbf{x}_{i_k} = \begin{bmatrix}
s_{i_k} \\
1 \\
\mathbf{0} 
\end{bmatrix},
\end{align}
by modifying them into 
\begin{align}
&W'_Q \mathbf{x}_{i_q} = \begin{bmatrix}
o_{i_q} \\
\cos \phi(i_q)  \\
s_{i_q} \\
\sin \phi(i_q)  \\
0 \\
\mathbf{0} 
\end{bmatrix},W'_K \mathbf{x}_{i_k} = \begin{bmatrix}
s_{i_k} \\
1 \\
0 \\
0 \\
o_{i_k} \\
\mathbf{0} 
\end{bmatrix},
\end{align}
we can fix the $2$-norm of the query/key vectors. This is because 
\begin{align}
&\begin{aligned}
&\|W'_Q \mathbf{x}_{i_q}\|_2^2 \\
&= o_{i_q}^2 + s_{i_q}^2 + \cos^2\phi(i_q) + \sin^2\phi(i_q) = 2,
\end{aligned} \\
&\begin{aligned}
&\|W'_K \mathbf{x}_{i_k}\|_2^2 \\
&= s_{i_k}^2 + o_{i_k}^2 + 1^2 = 2.
\end{aligned}
\end{align}
Then, by setting $\boldsymbol{\beta}_Q = \boldsymbol{\beta}_K = \mathbf{0}, \boldsymbol{\gamma}_Q = \frac{\|W'_Q \mathbf{x}_{i_q}\|_2}{\sqrt{d_\mathrm{model}}} \mathbf{1}$ and $\boldsymbol{\gamma}_K = \frac{\|W'_K \mathbf{x}_{i_k}\|_2}{\sqrt{d_\mathrm{model}}} \mathbf{1}$, we obtain
\begin{align}
&\begin{aligned}
&\operatorname{LN}_{\mathrm{RMS}}\left(W'_Q \mathbf{x}_{i_q}\right) \\
&\quad =\boldsymbol{\gamma}_Q\odot\frac{\sqrt{d_\mathrm{model}}}{\|W'_Q \mathbf{x}_{i_q}\|_2}\left(W'_Q \mathbf{x}_{i_q}\right) \\
&\quad  = W'_Q \mathbf{x}_{i_q}, 
\end{aligned} \\
&\begin{aligned}
&\operatorname{LN}_{\mathrm{RMS}}\left(W'_K \mathbf{x}_{i_k}\right) \\
&\quad =\boldsymbol{\gamma}_K\odot\frac{\sqrt{d_\mathrm{model}}}{\|W'_K \mathbf{x}_{i_k}\|_2}\left(W'_K \mathbf{x}_{i_k}\right) \\
&\quad  = W'_K \mathbf{x}_{i_k},
\end{aligned}
\end{align}
indicating that the attention layer with the QK normalization produces the same attention scores.

In contrast, $\operatorname{q}_i$ is used in the key vector defined in Appendix \ref{app: dyck recognition with bos, subsec: fifth layer}, and it is hard to fix the $2$-norm. However, by setting $\boldsymbol{\beta}_K = 0, \boldsymbol{\gamma}_K = \frac{1}{\sqrt{d_\mathrm{model}}} \mathbf{1}$, we obtain
\begin{equation}
\begin{aligned}
&\operatorname{LN}_{\mathrm{RMS}}\left(W_K^{(5)} \mathbf{x}_{i_k}^{(5)}\right) \\
&=\operatorname{LN}_{\mathrm{RMS}}\left(\begin{bmatrix}
- \operatorname{q}_{i_k} \\
 \operatorname{q}_{0}\cdot s_{i_k} \\
\mathbf{0}
\end{bmatrix}\right) \\
&= \begin{cases}
\begin{bmatrix}
- \frac{1}{\sqrt{2}}\\
\frac{1}{\sqrt{2}}\\
\mathbf{0} 
\end{bmatrix} & \text{if } i_k = 0 \\
\begin{bmatrix}
- 1\\
0 \\
\mathbf{0} 
\end{bmatrix} & \text{if } \operatorname{q}_{i_k} > 0 \\
\begin{bmatrix}
1 \\
0 \\
\mathbf{0} 
\end{bmatrix} & \text{if } \operatorname{q}_{i_k} < 0
\end{cases},
\end{aligned}
\end{equation}
which leads to the same result.

\section{Details of Experiments}\label{app: experiments}

\subsection{Full evaluation on $\texttt{Dyck}_k$}

\subsubsection*{Setup}
The $\texttt{Dyck}_k$ and $\texttt{Shuffle-Dyck}_k$ language datasets are generated by $p_{\texttt{Dyck}_k}\left(\cdot; q, r, \boldsymbol{\pi}\right)$ parameterized with $q=0.5, r=0.9, \boldsymbol{\pi} = \frac{1}{k} \mathbf{1}$ and $p_{\texttt{Shuffle-Dyck}_k}\left(\cdot; q, r, \boldsymbol{\pi}, \boldsymbol{\overline{\pi}}\right)$ parameterized with $q=0.3, r=0.97, \boldsymbol{\pi} = \frac{1}{k}\mathbf{1}, \boldsymbol{\overline{\pi}} = \frac{1}{k} \mathbf{1}$, respectively. Compared to $\texttt{Dyck}_k$, we set the smaller value for $q$ and the larger value for $r$ in the case of $\texttt{Shuffle-Dyck}_k$ for two reasons: (i) to avoid the situation where all types remain unclosed in the later positions, making the task trivial and (ii) to prevent the generation of an excessive number of short sequences due to the small $q$.

Following \citet{yao-etal-2021-self}, we set $n_\mathrm{max} = 700$ and $d_\mathrm{model} = 30$, and we truncated the sequences longer than $n_\mathrm{max}$. We generated $100,000$ sequences as training data, with an additional $10,000$ sequences (equivalent to $10$\% of the training data) used for both validation and test datasets. Note that for the test data, we create out-of-distribution (OOD) sequences with respect to length, generating sequences up to a maximum length of $1.2 \times n_\mathrm{max}$.

We conducted experiments by varying the presence of the BOS token ($\{\texttt{BOS}, \texttt{NoBOS}\}$), the presence of positional encoding ($\{\texttt{PE}, \texttt{NoPE}\}$), the number of brackets types ($\{1,2,4,8,16\}$ for $\texttt{Dyck}_k$ and $\{2,4,8,16\}$ for $\texttt{Shuffle-Dyck}_k$), and the number of layers ($\{1,2,3,4,5,6,7,8,9,10\}$). Here, each $10$-layer model has a size of $0.05$M parameters.

We set the learning rate candidates to $\{3\mathrm{e}{-3}, 3\mathrm{e}{-4}\}$ and evaluated the performance of the model that achieved the lowest validation loss. We report the average performance over $5$ runs with different random seeds.

\subsubsection*{Metric}
Following \citet{hewitt-etal-2020-rnns}, \citet{yao-etal-2021-self}, we evaluated the model performance using the conditional probability of outputting the correct closing brackets on test data. In addition, we also reported the TV distance from the true language generation process.

The test data contains sequences whose length is up to $1.2 \times n_\mathrm{max}$. We regard tokens at position $i \leq n_\mathrm{max}$ as in-distribution (ID) data and tokens at position $n_\mathrm{max} < i \leq 1.2 \times n_\mathrm{max}$ as out-of-distribution (OOD) data, thereby we evaluate the generalization ability with respect to sequence length.

Figure \ref{fig:closed_acc_all} and \ref{fig:closed_acc_all_shuffle} show the average test accuracy of generating the correct closed bracket on $\texttt{Dyck}_k$ and $\texttt{Shuffle-Dyck}_k$, respectively. Moreover, Figure \ref{fig:tv_all} and \ref{fig:tv_all_shuffle} show the average test TV distance on $\texttt{Dyck}_k$ and $\texttt{Shuffle-Dyck}_k$, respectively.

\begin{table}
  \centering
  \begin{tabular}{lc}
    \hline
    \textbf{Hyperparameter} & \textbf{Value} \\
    \hline
    Model parameters & \\
    \hline
    Number of attention heads & $1$ \\
    Embedding dimension $d_\mathrm{model}$ & $30$ \\
    Use of bias terms & \texttt{False} \\
    \makecell{Affine Transformation in the \\$\quad$RMS layer normalization} & \texttt{True} \\
    Window size & $1,024$ \\
    Activation function & ReLU \\
    \hline
    Training parameters \\
    \hline
    Dropout rate & $0.0$ \\
    Batch size & $16$ \\
    Learning rate & $\{3\mathrm{e}{-3}, 3\mathrm{e}{-4}\}$\\
    Gradient accumulation steps & $2$ \\
    Weight decay & $0.0$ \\
    Adam parameters $(\beta_1, \beta_2)$ & $(0.9, 0.999)$ \\
    Maximum iterations & $3,000$ \\
    Warmup iterations & $0$ \\
    Learning rate decay & $\texttt{False}$ \\
    \hline
  \end{tabular}
  \caption{Hyperparameter configuration for experiments on the $\texttt{Dyck}_k$ language.}
  \label{tab: experiment, dyck, hyperparameters}
\end{table}

\begin{figure*}[ht]
  \includegraphics[width=\linewidth]{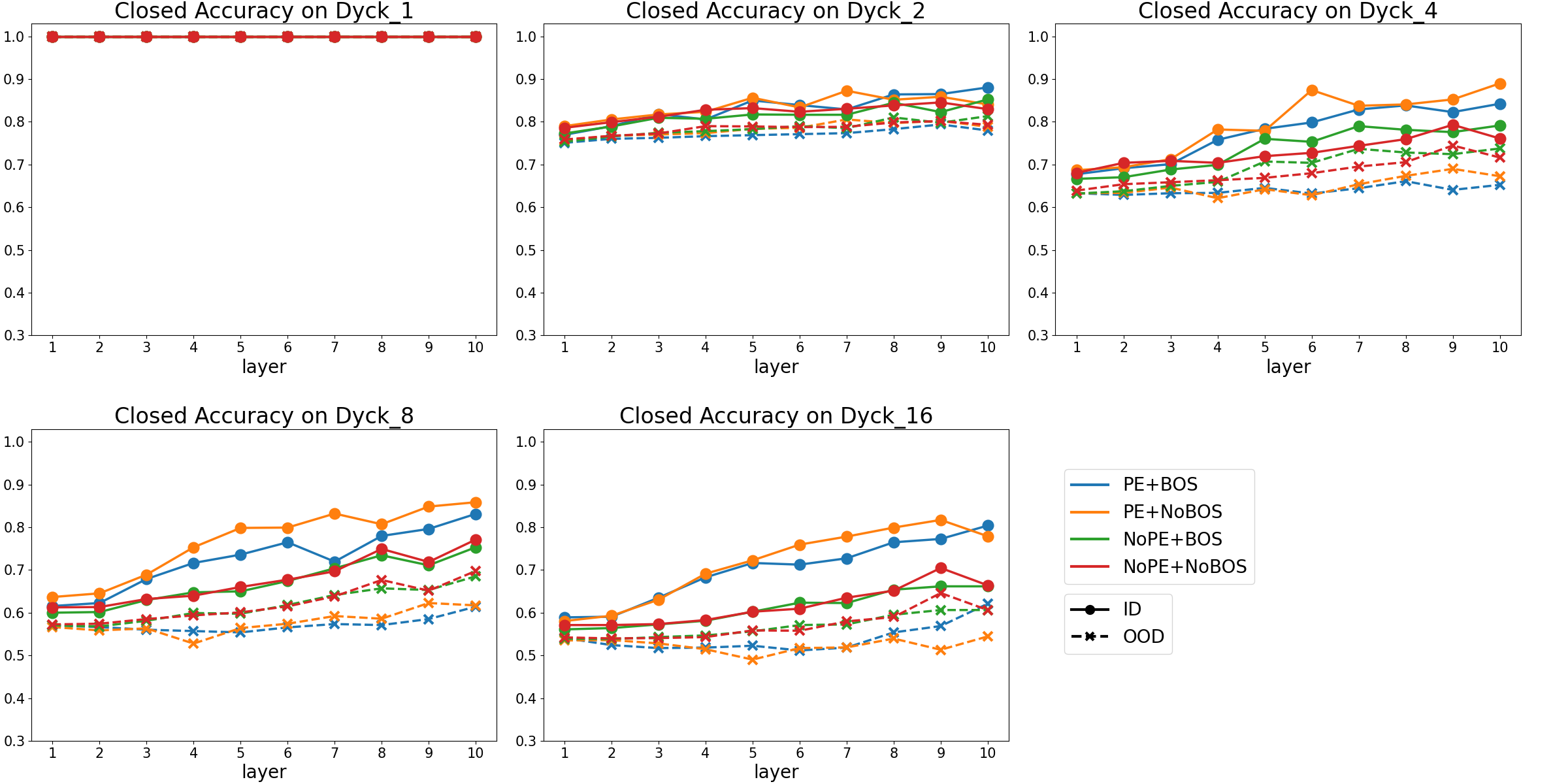}
  \caption{Test accuracy over $5$ runs of generating the correct closed brackets on $\texttt{Dyck}_k (k \in \{1 ,2 ,4 ,8, 16\})$. The solid lines represent the results for in-distribution data ($n \leq 700$), while the dashed lines represent the results for out-of-distribution data ($700 < n \leq 840$).}
  \label{fig:closed_acc_all}
\end{figure*}

\begin{figure*}[ht]
  \includegraphics[width=\linewidth]{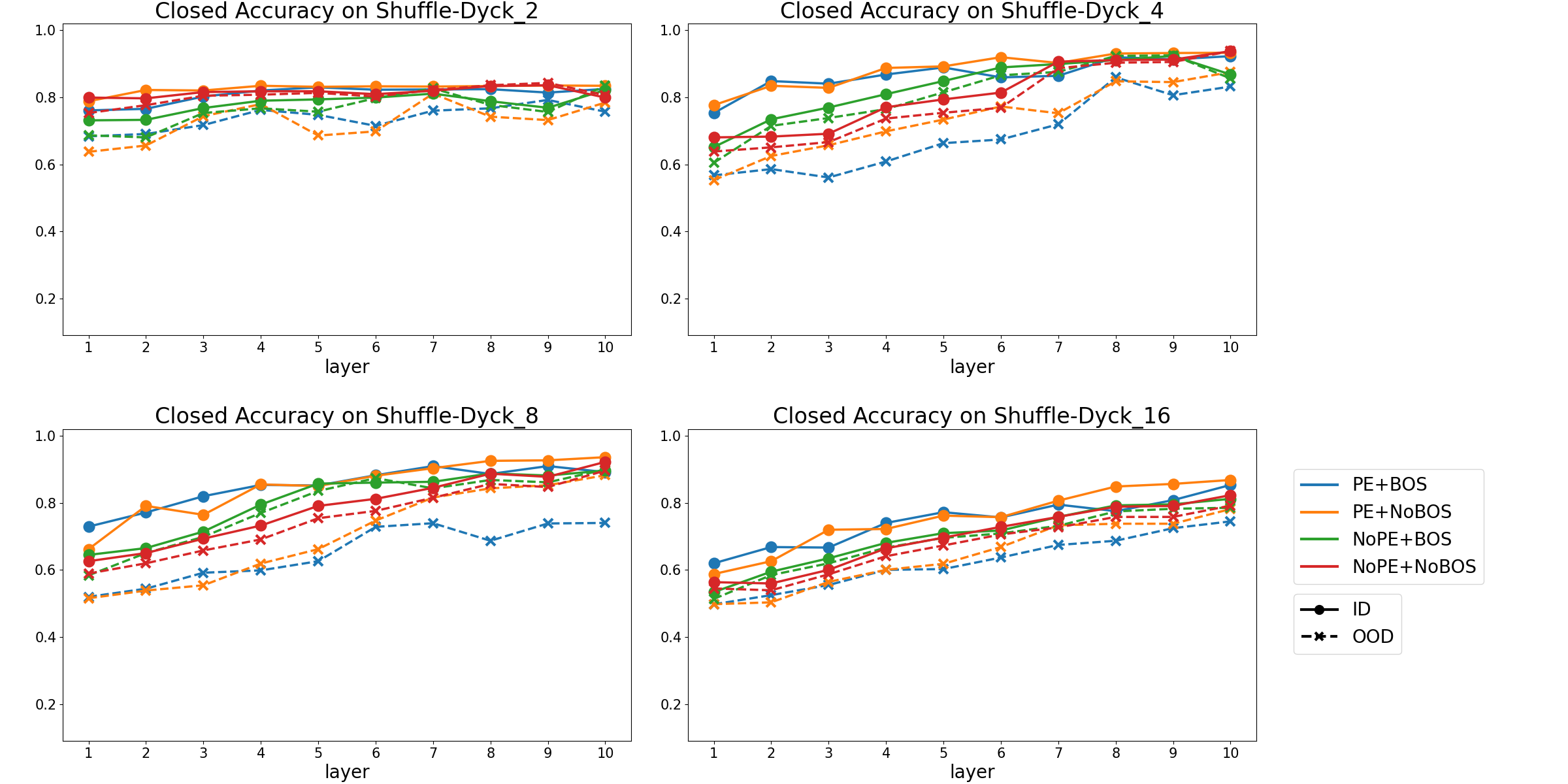}
  \caption{Test accuracy over $5$ runs of generating the correct closed brackets on $\texttt{Shuffle-Dyck}_k (k \in \{2 ,4 ,8, 16\})$. The solid lines represent the results for in-distribution data ($n \leq 700$), while the dashed lines represent the results for out-of-distribution data ($700 < n \leq 840$).}
  \label{fig:closed_acc_all_shuffle}
\end{figure*}

\begin{figure*}[ht]
  \includegraphics[width=\linewidth]{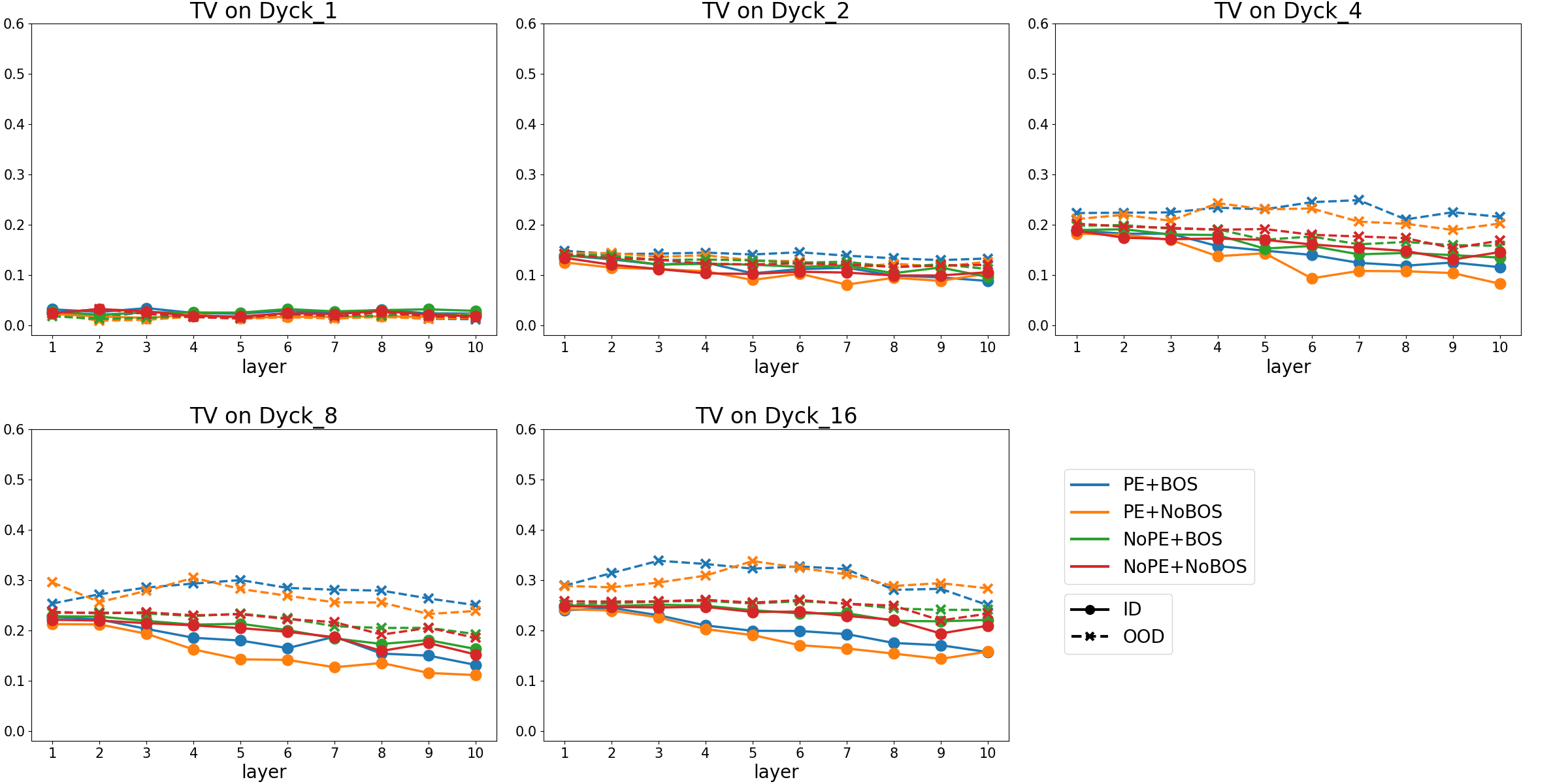}
  \caption{Average TV distance over $5$ runs on $\texttt{Dyck}_k (k \in \{1 ,2 ,4 ,8, 16\})$. The solid lines represent the results for in-distribution data ($n \leq 700$), while the dashed lines represent the results for out-of-distribution data ($700 < n \leq 840$).}
  \label{fig:tv_all}
\end{figure*}

\begin{figure*}[ht]
  \includegraphics[width=\linewidth]{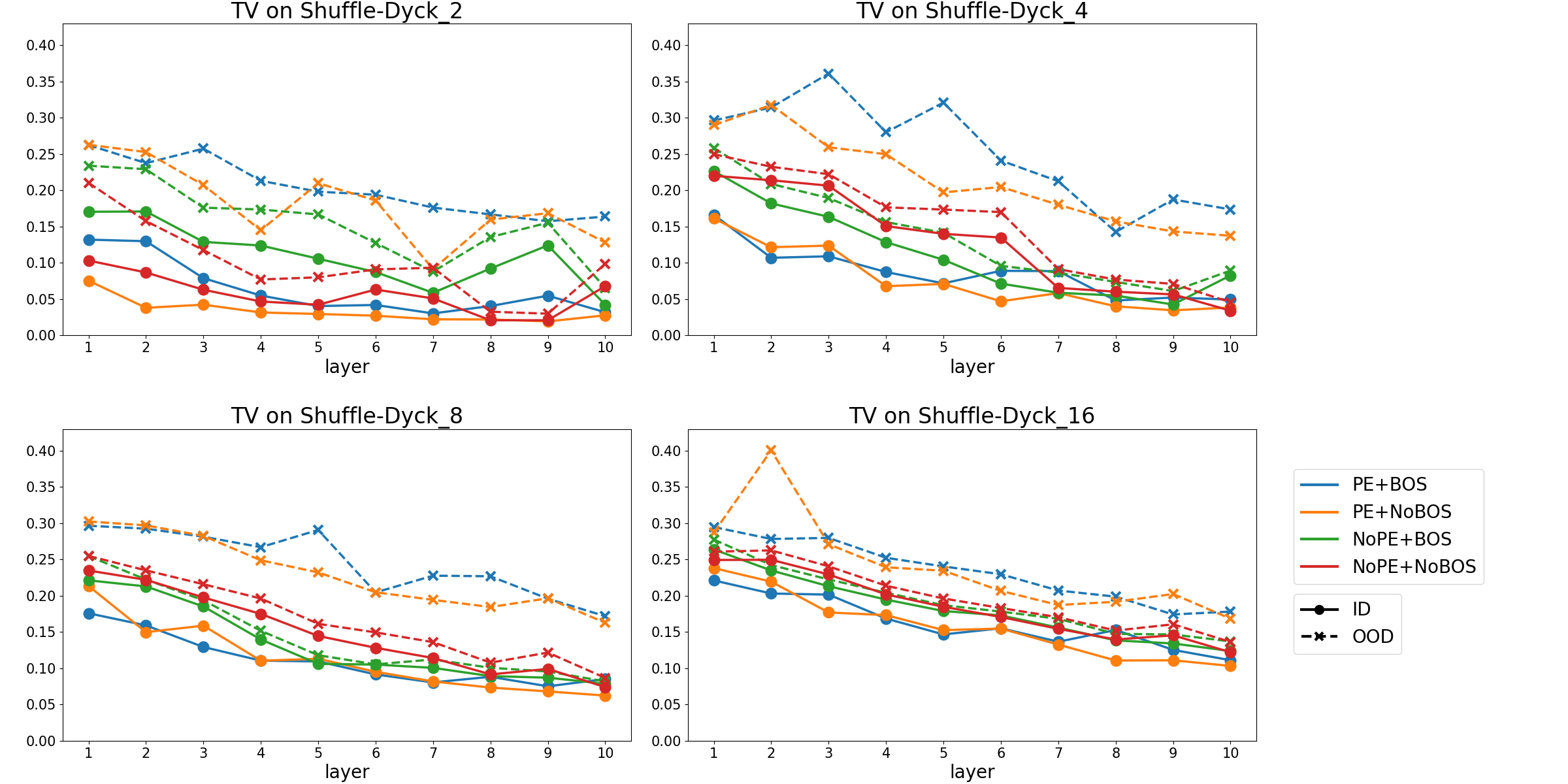}
  \caption{Average TV distance over $5$ runs on $\texttt{Shuffle-Dyck}_k (k \in \{2 ,4 ,8, 16\})$. The solid lines represent the results for in-distribution data ($n \leq 700$), while the dashed lines represent the results for out-of-distribution data ($700 < n \leq 840$).}
  \label{fig:tv_all_shuffle}
\end{figure*}

\subsection{Evaluation on natural language datasets}
In Section \ref{subsec:Experiment/Evaluation on natural language datasets}, we empirically investigated the effect of the layer normalization position on model performance using two natural language datasets, WikiText-103 \footnote{The WikiText-103 dataset is licensed under CC BY-SA 3.0, and we can freely use the content as long as we provide appropriate attribution. Our use of this dataset is consistent with the intended use. To the best of our knowledge, there is no specific step that checks whether personal information or offensive content is contained.} \citep{merity2016pointer}, a common English dataset that contains over $100$ million tokens extracted from the articles on Wikipedia, and OpenWebText \footnote{The OpenWebText is licensed under Creative Commons CC0 license, and we can freely use the content. Our use of this dataset is consistent with the intended use. To the best of our knowledge, there is no specific step that checks whether personal information or offensive content is contained.} \citep{Gokaslan2019OpenWeb}, a $30$GB of common English dataset that contains HTML pages whose URLs are shared on Reddit. Here, we provide detailed experimental settings and other experimental results.

We used the default split for WikiText-103: $103,227,021$ tokens from $28,475$ articles for training, $217,646$ tokens from $60$ articles for validation, and $245,569$ tokens from $60$ articles for test. In contrast, for OpenWebText, we used $0.5$\% of the total data for the validation set following the approach of \citet{fu2023hungry}, and similarly used $0.5$\% for the test set.

We implemented the architecture based on nanoGPT\footnote{nanoGPT(\url{https://github.com/karpathy/nanoGPT}) is licensed under MIT License, and we can freely use, copy, modify, publish, and distribute.}, which is a small version of GPT and incorporates the GPT-2 tokenizer in the tiktoken library \footnote{\url{https://github.com/openai/tiktoken}}. We add modifications to the position of the layer normalization. Regarding the hyperparameters, we used the default values except the values concerning the number of iterations: we modified the number of iterations to $20,000$, and accordingly, we also modified the number of iterations for learning-rate decay to $20,000$. Note that we adopt the QK normalization \citep{pmlr-v202-dehghani23a} to stabilize training. We use NVIDIA A100, and each experiment on WikiText-103 required approximately $40$ GPU hours, while each experiment on OpenWebText required approximately $100$ GPU hours. The values of the other hyperparameters are summarized in Table \ref{tab: experiment, hyperparameters}, and the decrease in training and validation loss is shown in Figure \ref{fig: app: experiments on natural language datasets}. 

\begin{table}
  \centering
  \begin{tabular}{lc}
    \hline
    \textbf{Hyperparameter} & \textbf{Value} \\
    \hline
    Model parameters & \\
    \hline
    Number of layers $L$ & $12$ \\
    Number of attention heads & $12$ \\
    Embedding dimension $d_\mathrm{model}$ & $768$ \\
    Use of bias terms & $\texttt{False}$ \\
    Window size & $1,024$ \\
    Activation function & gelu \\
    \hline
    Training parameters \\
    \hline
    Dropout rate & $0.0$ \\
    Batch size & $12$ \\
    Gradient accumulation steps & $40$ \\
    Learning rate & $6\mathrm{e}{-4}$ \\
    Minimum learning rate & $6\mathrm{e}{-5}$ \\
    Weight decay & $1\mathrm{e}{-1}$ \\
    Adam parameters $(\beta_1, \beta_2)$ & $(0.9, 0.95)$ \\
    Maximum iterations & $20,000$ \\
    Warmup iterations & $2,000$ \\
    Learning rate decay iterations & $20,000$ \\
    \hline
  \end{tabular}
  \caption{Hyperparameter configuration for experiments on natural language datasets.}
  \label{tab: experiment, hyperparameters}
\end{table}

\begin{figure*}[ht]
    \includegraphics[width=0.48\linewidth]{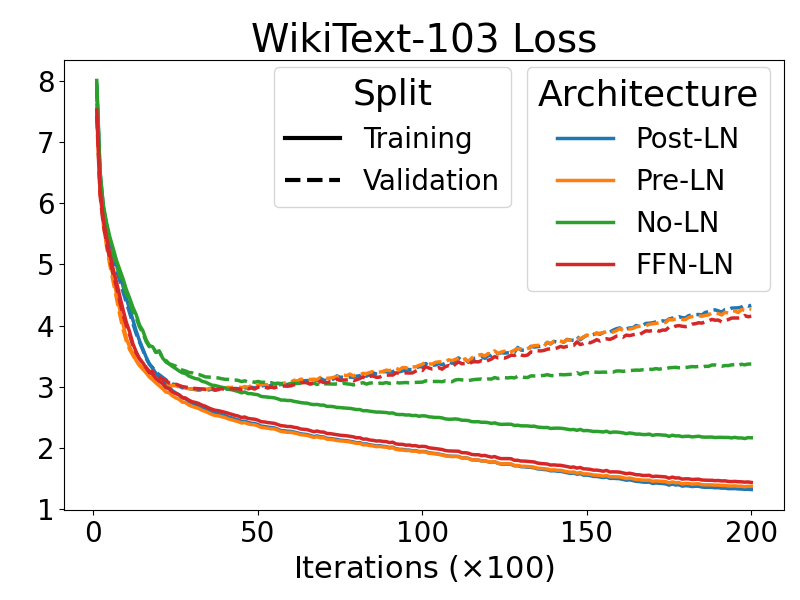} 
    \includegraphics[width=0.48\linewidth]{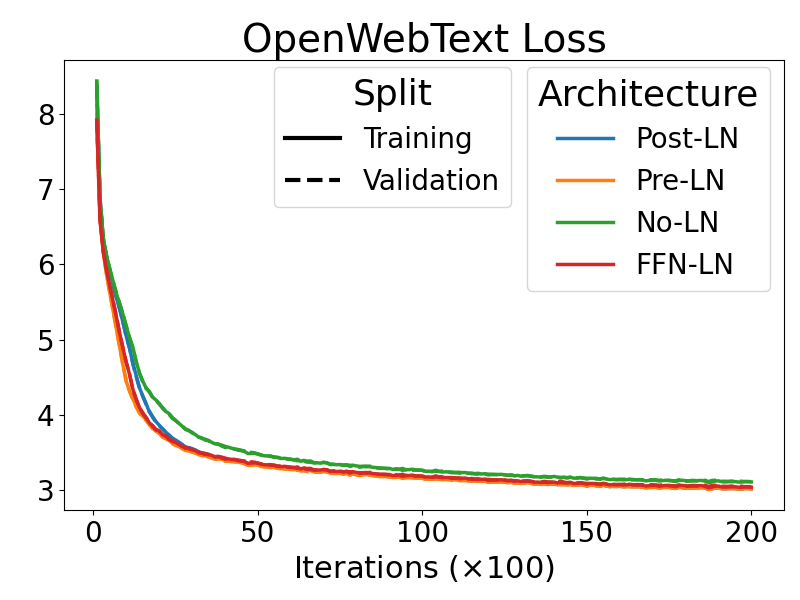}
    \caption{Results on natural language datasets. The transition of training and validation loss are reported.}
    \label{fig: app: experiments on natural language datasets}
\end{figure*}

\section{Further Discussion on Layer Normalization Position}\label{app: further discussion}

A common explanation for the reason why layer normalization leads to good performance is that layer normalization stabilizes the output distribution. Recently, some studies have investigated how the position of the layer normalization affects the model performance.

Most of the recent models such as Llama \citep{touvron2023llamaopenefficientfoundation}, Llama 2 \citep{touvron2023llama2openfoundation}, GPT-2 \citep{radford2019language}, and GPT-3 \citep{NEURIPS2020_1457c0d6} adopt $\texttt{Pre-LN}$, while the original Transformer architecurue \citep{NIPS2017_3f5ee243} and GPT \citep{radford2018improving} adopt $\texttt{Post-LN}$. There are some studies supporting that $\texttt{Pre-LN}$ outperforms $\texttt{Post-LN}$. However, there are also results indicating that $\texttt{Post-LN}$ can outperform $\texttt{Pre-LN}$ under specific conditions.

\citet{pmlr-v119-xiong20b} analyzed the layer normalization from the perspective of mean-field theory and showed that $\texttt{Pre-LN}$ provides more stable gradient after initialization compared to $\texttt{Post-LN}$. \citet{pmlr-v119-xiong20b} also empirically showed that $\texttt{Pre-LN}$, unlike $\texttt{Post-LN}$, does not require a warmup phase and significantly reduces training time.
In addition, \citet{wang-etal-2019-learning-deep} suggested that $\texttt{Post-LN}$ can have a higher risk of gradient vanishing and that in settings with a large number of layers, which are commonly seen in recent years, $\texttt{Pre-LN}$ outperforms $\texttt{Post-LN}$. In contrast, with respect to neural machine translation (NMT) task, \citet{nguyen-salazar-2019-transformers} showed that although $\texttt{Pre-LN}$ contributes to training stability and better performance in low-resource settings, $\texttt{Post-LN}$ shows superior performance in high-resource settings. Moreover, \citet{mao-etal-2023-exploring} demonstrated that for zero-shot machine translation, $\texttt{Post-LN}$ consistently outperforms $\texttt{Pre-LN}$. Furthermore, \citet{shleifer2021normformerimprovedtransformerpretraining} demonstrated that incorporating the layer normalization right before the second linear layer of the feed-forward network layer can effectively mitigate gradient explosion and vanishing, which are observed commonly in both $\texttt{Pre-LN}$ and $\texttt{Post-LN}$ setups.

Based on these results, we concluded that the optimal position of the layer normalization has not been established yet. Although the optimal position of the layer normalization remains unclear, in our experiments using the WikiText-103 and OpenWebText, we observed that the performance of $\texttt{Pre-LN}$, $\texttt{Post-LN}$, and $\texttt{FFN-LN}$ consistently outperformed $\texttt{No-LN}$. Therefore, we concluded that the architecture used in our proof $\texttt{FFN-LN}$ is competitive compared to other layer normalization positions, $\texttt{Pre-LN}$ and $\texttt{Post-LN}$.

\end{document}